\DeclareMathOperator{\T}{\mathsf{T}}
\DeclareMathOperator{\ext}{ext}
\DeclareMathOperator{\Mix}{Mix}
\DeclareMathOperator{\Dim}{dim}
\DeclareMathOperator{\co}{co}
\DeclareMathOperator{\tr}{tr}
\DeclareMathOperator{\Int}{int}
\DeclareMathOperator{\Ri}{ri}
\DeclareMathOperator{\Diag}{diag}
\DeclareMathOperator{\Dom}{dom}
\DeclareMathOperator{\Bd}{bd}
\DeclareMathOperator{\Rbd}{rbd}
\DeclareMathOperator*{\Argmin}{Argmin}
\DeclareMathOperator*{\argmin}{argmin}
\DeclareMathOperator*{\argmax}{argmax}
\DeclareMathOperator*{\Se}{S}
\DeclareMathOperator*{\sgn}{sgn}
\newcommand*{\cI}{\mathfrak{l}}
\newcommand*{\hbm}[1]{\hat{\bm{#1}}}
\newcommand*{\smat}[1]{\begin{bsmallmatrix}#1\end{bsmallmatrix}}
\newcommand*{\ibid}{ibid.}
\newcommand*{\I}{\iota}
\newcommand*{\Hi}{Hiriart-Urruty}
\newcommand*{\m}{\mathsf{m}}
\newcommand*{\M}{\operatorname{Mix}}
\newcommand*{\op}[1]{\operatorname{#1}}
\newcommand{\tbr}{\underline{\tilde{L}}}
\newcommand{\br}{\underline{L}}
\newcommand{\Inner}[2]{\left\langle#1,#2\right\rangle}
\newcommand{\inner}[2]{\langle#1,#2\rangle}
\newcommand{\tbm}[1]{\tilde{\bm{#1}}}
\newcommand{\norm}[1]{\left\lVert#1\right\rVert}   
\newcommand{\Norm}[1]{\lVert#1\rVert}   
\newcommand{\diag}[1]{\Diag\left(#1\right)}
\newcommand{\sps}{\mathscr{S}_{\ell}}
\newcommand{\cco}{\overline{\co}}
\newcommand{\sell}{\underline{\ell}}
\newcommand{\Der}[1]{\left. \frac{d}{dt} #1\right|_{t=0}}
\newcommand{\der}[1]{ \frac{d}{dt} #1|_{t=0}}
\newmdtheoremenv{condition}{Condition}
\newcommand{\vertiii}[1]{{\left\vert\kern-0.25ex\left\vert\kern-0.25ex\left\vert #1 
    \right\vert\kern-0.25ex\right\vert\kern-0.25ex\right\vert}}
\newcommand*\bcdot{\mathpalette\bigcdot@{.5}}
\newcommand*\bigcdot@[2]{\mathbin{\vcenter{\hbox{\scalebox{#2}{$\m@th#1\bullet$}}}}}
\newtheorem{theorem}{Theorem}
\newtheorem{proposition}[theorem]{Proposition}
\newtheorem{lemma}[theorem]{Lemma}
\newtheorem{remark}[theorem]{Remark}
\newtheorem{corollary}[theorem]{Corollary}
\newtheorem{assumption}[theorem]{Assumption}
\newtheorem{definition}[theorem]{Definition}
\newtheorem{example}{Example}
\newtheorem{claim}{Claim}
\title{Constant Regret, Generalized Mixability, and Mirror Descent}
\author{
  Zakaria Mhammedi 
 \\
  Research School of Computer Science\\
  Australian National University and DATA61 \\
  \texttt{zak.mhammedi@anu.edu.au} \\
  \And
   Robert C. Williamson \\
   Research School of Computer Science\\
  Australian National University and DATA61 \\
   \texttt{bob.williamson@anu.edu.au} 
}
\begin{document}

\maketitle

\begin{abstract}
 We consider the setting of prediction with expert advice; a learner makes predictions by aggregating those of a group of experts. Under this setting, and for the right choice of loss function and ``mixing'' algorithm, it is possible for the learner to achieve a constant regret regardless of the number of prediction rounds. For example, a constant regret can be achieved for \emph{mixable} losses using the \emph{aggregating algorithm}. The \emph{Generalized Aggregating Algorithm} (GAA) is a name for a family of algorithms parameterized by convex functions on simplices (entropies), which reduce to the aggregating algorithm when using the \emph{Shannon entropy} $\op{S}$. For a given entropy $\Phi$, losses for which a constant regret is possible using the \textsc{GAA} are called $\Phi$-mixable. Which losses are $\Phi$-mixable was previously left as an open question. We fully characterize $\Phi$-mixability and answer other open questions posed by \cite{Reid2015}. We show that the Shannon entropy $\op{S}$ is fundamental in nature when it comes to mixability; any $\Phi$-mixable loss is necessarily $\op{S}$-mixable, and the lowest worst-case regret of the \textsc{GAA} is achieved using the Shannon entropy. Finally, by leveraging the connection between the \emph{mirror descent algorithm} and the update step of the GAA, we suggest a new \emph{adaptive} generalized aggregating algorithm and analyze its performance in terms of the regret bound. 
\end{abstract}

 \section{Introduction}
    Two fundamental problems in learning are how to aggregate information and under what circumstances can one learn fast. In this paper, we consider the problems jointly, extending the understanding and characterization of exponential mixing due to \cite{Vovk1998}, who showed that not only does the ``\emph{aggregating algorithm}'' learn quickly when the loss is suitably chosen, but that it is in fact a generalization of classical Bayesian updating, to which it reduces when the loss is log-loss \cite{Vovk2001}. We consider a general class of aggregating schemes, going beyond Vovk's exponential mixing, and provide a complete characterization of the mixing behavior for general losses and general mixing schemes parameterized by an arbitrary entropy function.

        In the \emph{game of prediction with expert advice} a \emph{learner} predicts the outcome of a random variable (outcome of the \emph{environment}) by aggregating the predictions of a pool of experts. At the end of each prediction round, the outcome of the environment is announced and the learner and experts suffer losses based on their predictions. We are interested in algorithms that the learner can use to ``aggregate'' the experts' predictions and minimize the \emph{regret} at the end of the game. In this case, the regret is defined as the difference between the cumulative loss of the learner and that of the best expert in hindsight after $T$ rounds.
        
The \emph{Aggregating Algorithm} (AA) \citep{Vovk1998} achieves a constant regret --- a precise notion of fast learning --- for \emph{mixable} losses; that is, the regret is bounded from above by a constant $R_{\ell}$ which depends only on the loss function $\ell$ and not on the number of rounds $T$. It is worth mentioning that mixability is a weaker condition than exp-concavity, and contrary to the latter, mixability is an intrinsic, parametrization-independent notion \citep{Kamalaruban2015}.

 Reid et al. \cite{Reid2015} introduced the \emph{Generalized Aggregating Algorithm} (GAA), going beyond the AA. The \textsc{GAA} is parameterized by the choice of a convex function $\Phi$ on the simplex (entropy) and reduces to the AA when $\Phi$ is the Shannon entropy. The \textsc{GAA} can achieve a constant regret for losses satisfying a certain condition called $\Phi$-\emph{mixability} (characterizing when losses are $\Phi$-mixable was left as an open problem). This regret depends jointly on the \emph{generalized mixability constant} $\eta_{\ell}^{\Phi}$ --- essentially the largest $\eta$ such that $\ell$ is $(\frac{1}{\eta}\Phi)$-mixable --- and the divergence $D_{\Phi}(\bm{e}_{\theta}, \bm{q})$, where $\bm{q} \in \Delta_k$ is a prior distribution over $k$ experts and $\bm{e}_{\theta}$ is the $\theta$th standard basis element of $\mathbb{R}^k$ \citep{Reid2015}. At each prediction round, the \textsc{GAA} can be divided into two steps; a \emph{substitution step} where the learner picks a prediction from a set specified by the $\Phi$-mixability condition; and an \emph{update step} where a new distribution $\bm{q}$ over experts is computed depending on their performance. 
Interestingly, this update step is exactly the \emph{mirror descent algorithm} \citep{Steinhardt2014, DBLP:journals/ml/OrabonaCC15} which minimizes the weighted loss of experts.

        \paragraph{Contributions.} We introduce the notion of a \emph{support loss}; given a loss $\ell$ defined on any action space, there exists a proper loss $\sell$ which shares the same Bayes risk as $\ell$. When a loss is mixable, one can essentially work with a proper (support) loss instead --- this will be the first stepping stone towards a characterization of (generalized) mixability. 
        
The notion of $\Phi$-mixable and the \textsc{GAA} were previously restricted to finite losses. We extend these to allow for the use of losses which can take infinite values (such as the $\log$-loss), and we show in this case that under the $\Phi$-mixability condition a constant regret is achievable using the GAA.

   For an entropy $\Phi$ and a loss $\ell$, we derive a necessary and sufficient condition (Theorems \ref{17:} and \ref{18:}) for $\ell$ to be $\Phi$-mixable. In particular, if $\ell$ and $\Phi$ satisfy some regularity conditions, then $\ell$ is $\Phi$-mixable if and only if $\eta_{\ell} \Phi - \Se$ is convex on the simplex, where $\Se$ is the Shannon entropy and $\eta_{\ell}$ is essentially the largest $\eta$ such that $\ell$ is $\eta$-mixable \citep{Vovk1998, DBLP:journals/jmlr/ErvenRW12}. This implies that a loss $\ell$ is $\Phi$-mixable only if it is $\eta$-mixable for some $\eta>0$. This, combined with the fact that $\eta$-mixability is equivalently $(\frac{1}{\eta}\Se)$-mixability (Theorem \ref{14:}), reflects one fundamental aspect of the Shannon entropy.
   
Then, we derive an explicit expression for the generalized mixability constant $\eta_{\ell}^{\Phi}$ (Corollary \ref{20:}), and thus for the regret bound of the GAA. This allows us to compare the regret bound $R^{\Phi}_{\ell}$ of any entropy $\Phi$ with that of the Shannon entropy $\Se$. In this case, we show (Theorem \ref{21:}) that $R^{\Se}_{\ell} \leq R^{\Phi}_{\ell}$; that is, the \textsc{GAA} achieves the lowest worst-case regret when using the Shannon entropy --- another result which reflects the fundamental nature of the Shannon entropy.
        
Finally, by leveraging the connection between the \textsc{GAA} and the mirror descent algorithm, we present a new algorithm --- the \emph{Adaptive Generalized Aggregating Algorithm} (AGAA). This algorithm consists of changing the entropy function at each prediction round similar to the \emph{adaptive mirror descent algorithm} \citep{Steinhardt2014}. We analyze the performance of this algorithm in terms of its regret bound. 
 \paragraph{Layout.}  In \S \ref{Preliminaries}, we give some background on loss functions and present new results (Theorem \ref{4:} and \ref{5:}) based on the new notion of a \emph{proper support loss}; we show that, as far as mixability is concerned, one can always work with a proper (support) loss instead of the original loss (which can be defined on an arbitrary action space). In \S \ref{Mixability}, we introduce the notions of classical and generalized mixability and derive a characterization of $\Phi$-mixability (Theorems \ref{17:} and \ref{18:}). We then introduce our new algorithm --- the \textsc{AGAA} --- and analyze its performance. We conclude the paper by a general discussion and direction for future work. All proofs, except for that of Theorem \ref{19:}, are deferred to Appendix \ref{proofsmain}.

\paragraph{Notation.}
	\label{notation}
Let $m\in \mathbb{N}$. We denote $[m] \coloneqq \{1, \dots, m\}$ and $\tilde{m}\coloneqq m-1$. We write $\inner{\cdot}{\cdot}$ for the standard inner product in Euclidean space. Let $\Delta_m \coloneqq \{ \bm{p} \in [0,+\infty[^m\colon \inner{\bm{p}}{\bm{1}_m}=1  \}$ be the \emph{probability simplex} in $\mathbb{R}^m$, and let $\tilde{\Delta}_m \coloneqq \{\tbm{p} \in [0,+\infty[^{\tilde{m}}\colon \inner{\tbm{p}}{\bm{1}_{\tilde{m}}} \leq 1 \}$. We will extensively make use of the affine map $\amalg_m\colon \mathbb{R}^{\tilde{m}} \to \mathbb{R}^m$ defined by \begin{align}\label{amalg}
\amalg_m(\bm{u}) \coloneqq [u_1, \dots, u_{\tilde{m}}, 1 - \inner{\bm{u}}{\bm{1}_{\tilde{m}}}]^{\mathsf{T}}.\end{align} 

We denote $\Int \mathcal{C}$, $\Ri \mathcal{C}$, and $\Rbd \mathcal{C}$ the \emph{interior}, \emph{relative interior}, and \emph{relative boundary} of a set $\mathcal{C}\in \mathbb{R}^m $, respectively \citep{Hiriart-Urruty}. The \emph{sub-differential} of a function $f\colon \mathbb{R}^m \rightarrow \mathbb{R}\cup\{+\infty\}$ at $\bm{u}\in \mathbb{R}^m$ such that $ f(\bm{u})<+\infty$ is defined by (\cite{Hiriart-Urruty}) 
	\begin{align}
	\partial f(\bm{u}) \coloneqq \{\bm{s}^* \in \mathbb{R}^m\colon f(\bm{v}) \geq f(\bm{u}) +  \Inner{\bm{s}^*}{\bm{v} - \bm{u}}, \forall \bm{v} \in \mathbb{R}^m  \}. \label{sub}
	\end{align}
Table \ref{notation} on page~\pageref{notation} provides a list of the main symbols used in this paper. 


	\section{Loss Functions}
	\label{Preliminaries}
	In general, a loss function is a map $\ell \colon \mathcal{X} \times \mathcal{A} \rightarrow [0, +\infty]$ where $\mathcal{X}$ is an outcome set and $\mathcal{A}$ is an action set. In this paper, we only consider the case $\mathcal{X} = [n]$, i.e. finite outcome space. Overloading notation slightly, we define the mapping $\ell \colon \mathcal{A} \rightarrow [0, +\infty]^n$ by $[\ell(\bm{a})]_x = \ell(x, \bm{a}), \forall x \in [n]$ and denote $\ell_x(\cdot) \coloneqq [\ell(\cdot)]_x$. We further extend the new definition of $\ell$ to the set $\bigcup_{k\geq 1} \mathcal{A}^k$ such that for $x\in [n]$ and $A\coloneqq [\bm{a}_{\theta}]^{\mathsf{T}}_{1 \leq \theta \leq k} \in \mathcal{A}^k$, $\ell_x(A) \coloneqq [\ell_x(\bm{a}_{\theta})]^{\mathsf{T}}_{1 \leq \theta \leq k} \in [0, +\infty]^k$. We define the \emph{effective domain} of $\ell$ by $\Dom \ell \coloneqq \{\bm{a}\in \mathcal{A}\colon \ell(\bm{a}) \in [0,+\infty[^n \}$, and the \emph{loss surface} by $\mathcal{S}_{\ell} \coloneqq \{\ell(\bm{a})\colon \bm{a}\in \Dom \ell \}$. We say that $\ell$ is \emph{closed} if $\mathcal{S}_{\ell}$ is closed in $\mathbb{R}^n$. The \emph{superprediction} set of $\ell$ is defined by $\sps^{\infty} \coloneqq \{\ell(\bm{a})+\bm{d}: (\bm{a}, \bm{d}) \in \mathcal{A} \times [0,+\infty[^n\} $. Let $\sps \coloneqq \sps^{\infty} \cap [0,+\infty[^n$ be its \emph{finite} part. 
	
 Let $\bm{a}_0 , \bm{a}_1 \in \mathcal{A}$. The prediction $\bm{a}_0$ is said to be \emph{better} than $\bm{a}_1$ if the component-wise inequality $\ell(\bm{a}_0) \leq \ell(\bm{a}_1)$ holds and there exists some $x\in[n]$ such that $\ell_x(\bm{a}_0) < \ell_x(\bm{a}_1)$ \citep{DBLP:journals/jmlr/WilliamsonVR16}. A loss $\ell$ is \emph{admissible} if for any $\bm{a} \in \mathcal{A}$ there are no better predictions.

For the rest of this paper (except for Theorem \ref{4:}), we make the following assumption on losses; 
	\begin{assumption}
		\label{B:}
		$\ell$ is a closed, admissible loss such that $\Dom \ell \neq \varnothing$.
	\end{assumption}
It is clear that there is no loss of generality in considering only admissible losses. The condition that $\ell$ is closed is a weaker version of the more common assumption that $\mathcal{A}$ is compact and that $\bm{a}\mapsto\ell(x, \bm{a})$ is continuous with respect to the extended topology of $[0,+\infty]$ for all $x\in[n]$ \cite{Kalnishkan2004,Chernov2010}. In fact, we do not make any explicit topological assumptions on the set $\mathcal{A}$ ($\mathcal{A}$ is allowed to be open in our case). Our condition simply says that if a sequence of points on the loss surface converges in $[0,+\infty[^n$, then there exists an action in $\mathcal{A}$ whose image through the loss is equal to the limit. 
For example the 0-1 loss $\ell_{0\text{-}1}$ is closed, yet the map $\bm{p}\mapsto \ell_{0\text{-}1}(x, \bm{p})$ is not continuous on $\Delta_2$, for $x\in\{0,1\}$.

In this paragraph let $\mathcal{A}$ be the $n$-\emph{simplex}, i.e. $\mathcal{A} = \Delta_n$. We define the \emph{conditional risk} $L_{\ell}\colon \Delta_n \times \Delta_n \rightarrow \mathbb{R}$ by $L_{\ell}(\bm{p},\bm{q}) = \mathbb{E}_{x \sim \bm{p}} [\ell_x(\bm{q})]=  \inner{\bm{p}}{\ell(\bm{q})}$ and the \emph{Bayes risk} by $\br_{\ell}(\bm{p})\coloneqq \inf_{\bm{q} \in \Delta_n} L_{\ell}(\bm{p},\bm{q})$. In this case, the loss $\ell$ is \emph{proper} if $\br_{\ell}(\bm{p})= \inner{\bm{p}}{\ell(\bm{p})} \leq \inner{\bm{p}}{\ell(\bm{q})}$ for all $\bm{p} \neq \bm{q}$ in $\Delta_n$ (and \emph{strictly proper} if the inequality is strict). For example, the $log$-loss $\ell_{\log}\colon \Delta_n \to [0, +\infty]^n$ is defined by $\ell_{\log}(\bm{p})=  -\log \bm{p} $, where the `$\log$' of a vector applies component-wise. One can easily check that $\ell_{\log}$ is strictly proper. We denote $\br_{\log}$ its Bayes risk. 
		
	The above definition of the Bayes risk is restricted to losses defined on the simplex. For a general loss $\ell \colon \mathcal{A} \rightarrow [0, +\infty]^n$, we use the following definition; 
	\begin{definition}[Bayes Risk]
		\label{3:}
		Let $\ell \colon \mathcal{A} \rightarrow [0, +\infty]^n$ be a loss such that $\Dom \ell \neq \varnothing$. The \emph{Bayes risk} $\br_{\ell}\colon \mathbb{R}^n \rightarrow \mathbb{R} \cup \{-\infty\}$ is defined by
		\begin{align}
		\label{bayes}
		\forall \bm{u} \in \mathbb{R}^n, \quad 	\br_{\ell}(\bm{u})  \coloneqq \inf_{\bm{z} \in \sps } \Inner{\bm{u}}{\bm{z}}.
		\end{align}
	\end{definition}
The support function of a set $\mathcal{C}\subseteq \mathbb{R}^n$ is defined by $\sigma_{\mathcal{C}}(\bm{u}) \coloneqq \sup_{\bm{z}\in \mathcal{C}} \inner{\bm{u}}{\bm{z}}$, $\bm{u}\in \mathbb{R}^n$, and thus it is easy to see that one can express the Bayes risk as $\br_{\ell}(\bm{u}) = -\sigma_{\sps}(-\bm{u})$.
 Our definition of the Bayes risk is slightly different from previous ones (\cite{Kalnishkan2004,DBLP:journals/jmlr/ErvenRW12,Chernov2010}) in two ways; 1) the Bayes risk is defined on all $\mathbb{R}^n$ instead of $[0,+\infty[^n$; and 2) the infimum is taken over the finite part of the superprediction set $\sps^{\infty}$. The first point is a mere mathematical convenience and makes no practical difference since $\br_{\ell}(\bm{p}) =-\infty$ for all $\bm{p}\notin [0,+\infty[^n$. For the second point, swapping $\sps$ for $\sps^{\infty}$ in \eqref{bayes} does not change the value of $\br_{\ell}$ for mixable losses (see Appendix \ref{bayessup}). However, we chose to work with $\sps$ --- a subset of $\mathbb{R}^n$ --- as it allows us to directly apply techniques from convex analysis.
\begin{definition}[Support Loss]
\label{supportloss}
We call a map $\underline{\ell}: \Delta_n \rightarrow [0, +\infty]^n$ a \emph{support loss} of $\ell$ if 
\begin{gather*}
\forall \bm{p} \in \Ri \Delta_n,\; \underline{\ell}(\bm{p}) \in \partial \sigma_{\sps} (-\bm{p});\\ 
\forall \bm{p}\in \Rbd \Delta_n, \exists (\bm{p}_m) \subset \Ri \Delta_n,\;\bm{p}_m\stackrel{m \to \infty}{\to} \bm{p}\; \mbox{ and }\; \sell(\bm{p}_m)\stackrel{m \to \infty}{\to} \sell(\bm{p})\; \mbox{component-wise,} 
\end{gather*}
where $\partial \sigma_{\sps}$ (see \eqref{sub}) is the sub-differential of the support function --- $\sigma_{\sps}$ --- of the set $\sps$.
\end{definition}

\begin{theorem}
\label{4:}
Any loss $\ell \colon \mathcal{A} \rightarrow [0, +\infty]^n$ such that $\Dom \ell \neq \varnothing$, has a proper support loss $\underline{\ell}$ with the same Bayes risk, $\br_{\ell}$, as $\ell$. 
\end{theorem}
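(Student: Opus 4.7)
The plan is to exploit the identity $\br_{\ell}(\bm{u}) = -\sigma_{\sps}(-\bm{u})$ noted just after Definition~\ref{3:}, which reduces the construction of a proper support loss to a selection problem for the subdifferential of the support function of $\sps$.

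\emph{Selection on $\Ri\Delta_n$.} Because $\Dom\ell\neq\varnothing$, the set $\sps$ contains a translate of $[0,+\infty[^n$, which yields $\Dom\sigma_{\sps}=-[0,+\infty[^n$ and hence $\Ri\Dom\sigma_{\sps}=(-\infty,0)^n$. As the support function of a non-empty set, $\sigma_{\sps}$ is a closed proper convex function, so by a standard fact of convex analysis \citep{Hiriart-Urruty}, $\partial\sigma_{\sps}$ is non-empty on $\Ri\Dom\sigma_{\sps}$. For every $\bm{p}\in\Ri\Delta_n$ one has $-\bm{p}\in\Ri\Dom\sigma_{\sps}$, so I can pick any $\sell(\bm{p})\in\partial\sigma_{\sps}(-\bm{p})$. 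Using the general inclusion $\partial\sigma_K(\bm{u})\subseteq\overline{\co}\,K$ together with $\sigma_{\sps}=\sigma_{\overline{\co}\,\sps}$ gives $\sell(\bm{p})\in\overline{\co}\,\sps\subseteq[0,+\infty[^n$, so $\sell$ takes values in the required range.

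\emph{Matching the Bayes risk and verifying properness on $\Ri\Delta_n$.} For support functions, the subgradient condition is equivalent to $\inner{-\bm{p}}{\sell(\bm{p})}=\sigma_{\sps}(-\bm{p})$, i.e.\ $\inner{\bm{p}}{\sell(\bm{p})}=\br_{\ell}(\bm{p})$. Combining this with the elementary fact that a linear functional attains the same infimum over $\sps$ as over $\overline{\co}\,\sps$,
\[
\inner{\bm{p}}{\sell(\bm{p})}=\br_{\ell}(\bm{p})=\inf_{\bm{z}\in\overline{\co}\,\sps}\inner{\bm{p}}{\bm{z}}\leq\inner{\bm{p}}{\sell(\bm{q})}\qquad(\bm{p},\bm{q}\in\Ri\Delta_n),
\]
since $\sell(\bm{q})\in\overline{\co}\,\sps$. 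This simultaneously establishes properness and the equality $\br_{\sell}=\br_{\ell}$ on $\Ri\Delta_n$.

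\emph{Extension to $\Rbd\Delta_n$.} Fix $\bm{q}_0\in\Ri\Delta_n$ and, for $\bm{p}\in\Rbd\Delta_n$, set $\bm{p}_m := (1-1/m)\bm{p}+(1/m)\bm{q}_0\in\Ri\Delta_n$. Each sequence $(\sell_x(\bm{p}_m))_m$ lies in the compact interval $[0,+\infty]$, so a diagonal extraction produces a subsequence along which $\sell(\bm{p}_m)$ converges component-wise in $[0,+\infty]^n$; define $\sell(\bm{p})$ to be this limit (which then automatically satisfies the second bullet of Definition~\ref{supportloss}). Passing to the limit in the inequality $\inner{\bm{p}_m}{\sell(\bm{p}_m)}\leq\inner{\bm{p}_m}{\sell(\bm{q})}$ obtained above, with the convention $0\cdot\infty=0$ and using the upper semi-continuity of the closed concave map $\br_{\ell}$ (in fact its continuity along the line segment $[\bm{q}_0,\bm{p}]$), extends both the Bayes-risk identity and properness against $\bm{q}\in\Ri\Delta_n$ to $\bm{p}\in\Rbd\Delta_n$; a second limit argument, approximating $\bm{q}\in\Rbd\Delta_n$ from the relative interior, closes out the full properness statement.

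\emph{Main obstacle.} The hard part is the boundary extension: whenever a coordinate $p_x=0$ while $\sell_x(\bm{p}_m)\to+\infty$ (as happens, for instance, for $\ell_{\log}$), the inner products $\inner{\bm{p}_m}{\sell(\bm{p}_m)}$ develop a $0\cdot\infty$ ambiguity in the limit, and one must exploit the specific line-segment structure of the approximating sequence (together with continuity of $\br_{\ell}$ along that segment into the relative interior) to show that the convention $0\cdot\infty=0$ is forced and that the Bayes risks continue to agree on the boundary. The rest is bookkeeping: verifying that every subsequential limit produces an admissible proper support loss, and that the $\Rbd\Delta_n$ values chosen are compatible across all comparison pairs in the properness inequality.
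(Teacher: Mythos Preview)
Your proposal follows essentially the same route as the paper: select a subgradient of $\sigma_{\sps}$ on $\Ri\Delta_n$, extend to $\Rbd\Delta_n$ by taking component-wise limits in $[0,+\infty]$ along a segment from the interior, and deduce properness from the Bayes-risk identity together with the fact that each $\sell(\bm{q})$ (for $\bm{q}\in\Ri\Delta_n$) lies in the set over which $\br_{\ell}$ is the infimum. The paper uses the same three steps, differing only cosmetically: it invokes $\sigma_{\sps}^*=\I_{\sps}$ (which, strictly speaking, requires $\sps$ to be closed convex and should read $\I_{\overline{\co}\,\sps}$ in the generality of this theorem) to land $\sell(\bm{p})$ directly in $\sps$, whereas you land it in $\overline{\co}\,\sps$ and then note that the infimum defining $\br_{\ell}$ is unchanged over the closed convex hull --- your version is actually cleaner on this point.

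On the boundary step, you correctly flag the $0\cdot\infty$ issue that the paper dispatches with a terse ``by continuity of the inner product.'' Your outline is sound: for $x$ with $p_x>0$ the limit $\sell_x(\bm{p})$ must be finite (else $\br_\ell(\bm{p}_m)=\langle\bm{p}_m,\sell(\bm{p}_m)\rangle$ would blow up), so the only indeterminate products occur where $p_x=0$ and contribute nothing under the stated convention; the matching of $\langle\bm{p},\sell(\bm{p})\rangle$ with $\br_\ell(\bm{p})$ then follows from the two one-sided inequalities you indicate. The ``second limit argument'' for properness against $\bm{q}\in\Rbd\Delta_n$ likewise goes through coordinate-wise. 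So nothing is missing; the proposal is correct and matches the paper's approach.
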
	
Theorem \ref{4:} shows that regardless of the action space on which the
loss is defined, there always exists a proper loss whose Bayes risk
coincides with that of the original loss. This fact is useful in situations
where the Bayes risk contains all the information one needs --- such is the
case for mixability. The next Theorem shows a stronger relationship between
a loss and its corresponding support loss. 
\begin{theorem}
\label{5:}
Let $\ell \colon \mathcal{A} \rightarrow [0, +\infty]^n$ be a loss and $\underline{\ell}$ be a proper support loss of $\ell$. If the Bayes risk $\br_{\ell}$ is differentiable on $]0, +\infty[^n$, then $\sell$ is uniquely defined on $\Ri \Delta_n$ and
\begin{align*}
\begin{array}{lll}
\forall \bm{p}\in \Dom \sell,& \exists \bm{a}_*\in \Dom \ell,&  \ell(\bm{a}_*)= \sell(\bm{p}), \\
\forall \bm{a}\in \Dom \ell, &\exists (\bm{p}_m) \subset \Ri \Delta_n, & \sell(\bm{p}_m) \stackrel{m\to \infty}{\to} \ell(\bm{a})\; \mbox{ component-wise.}
\end{array}
\end{align*}
\end{theorem}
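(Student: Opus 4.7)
The plan is to exploit the identity $\br_{\ell}(\bm{u}) = -\sigma_{\sps}(-\bm{u})$. Since $\Ri \Delta_n \subset \,]0,+\infty[^n$, differentiability of $\br_{\ell}$ on $]0,+\infty[^n$ forces $\sigma_{\sps}$ to be differentiable at $-\bm{p}$ for every $\bm{p}\in \Ri \Delta_n$, so $\partial \sigma_{\sps}(-\bm{p})$ is a singleton. The defining condition for a support loss then pins down $\sell(\bm{p})=\nabla \br_{\ell}(\bm{p})$ uniquely on $\Ri \Delta_n$, and the identity $\partial \sigma_{\sps}(-\bm{p})=\argmin_{\bm{z}\in \Cl\co(\sps)} \inner{\bm{p}}{\bm{z}}$ characterises $\sell(\bm{p})$ as the unique minimiser of $\inner{\bm{p}}{\cdot}$ over $\Cl\co(\sps)$.

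For the first display, I would first treat $\bm{p}\in \Ri \Delta_n$. Pick a minimising sequence $\bm{z}_m=\ell(\bm{a}_m)+\bm{d}_m \in \sps$ with $\inner{\bm{p}}{\bm{z}_m}\to \br_{\ell}(\bm{p})$. Strict positivity of $\bm{p}$ and non-negativity of $\bm{z}_m$ force each coordinate of $\bm{z}_m$, and therefore of $\ell(\bm{a}_m)$ (since $\ell(\bm{a}_m) \leq \bm{z}_m$ component-wise), to be bounded. Passing to a subsequence gives $\ell(\bm{a}_m)\to \bm{y}^\star$ and $\bm{d}_m\to \bm{d}^\star\geq \bm{0}$; closedness of $\mathcal{S}_\ell$ yields $\bm{y}^\star=\ell(\bm{a}_\star)$ for some $\bm{a}_\star\in \Dom \ell$, and admissibility combined with the fact that $\bm{y}^\star+\bm{d}^\star$ attains the infimum of $\inner{\bm{p}}{\cdot}$ (so must itself be Pareto-minimal) forces $\bm{d}^\star=\bm{0}$. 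Uniqueness of the minimiser then identifies $\sell(\bm{p})=\ell(\bm{a}_\star)$. For $\bm{p}\in \Rbd \Delta_n \cap \Dom \sell$, the boundary clause in Definition~\ref{supportloss} supplies $\bm{p}_m\in \Ri \Delta_n$ with $\sell(\bm{p}_m)\to \sell(\bm{p})$; by the interior case $\sell(\bm{p}_m)=\ell(\bm{a}_m)$, and closedness of $\mathcal{S}_\ell$ gives $\sell(\bm{p})=\ell(\bm{a}_\star)$.

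For the second display, fix $\bm{a}\in \Dom \ell$. Admissibility makes $\ell(\bm{a})$ Pareto-minimal in $\sps$, and the supporting hyperplane theorem at $\ell(\bm{a})\in \Cl\co(\sps)$ produces $\bm{p}^\star\in \Delta_n$ with $\inner{\bm{p}^\star}{\ell(\bm{a})}=\br_{\ell}(\bm{p}^\star)$. If $\bm{p}^\star\in \Ri \Delta_n$, uniqueness from Step~1 yields $\sell(\bm{p}^\star)=\ell(\bm{a})$ and the constant sequence works. Otherwise I perturb, $\bm{p}_m=(1-1/m)\bm{p}^\star+(1/m)\bar{\bm{p}}$ with $\bar{\bm{p}}\in \Ri \Delta_n$ chosen so that $\ell(\bm{a})$ is the unique minimiser of $\inner{\bar{\bm{p}}}{\cdot}$ over the exposed face $F=\argmin_{\bm{z}\in \Cl\co(\sps)}\inner{\bm{p}^\star}{\bm{z}}$ (such $\bar{\bm{p}}$ is produced by a second application of supporting hyperplanes, now to $\ell(\bm{a})$ inside $F$, using that Pareto-minimality of $\ell(\bm{a})$ in $\sps$ is inherited on $F$). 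Compactness of $\{\sell(\bm{p}_m)\}$ (strict positivity of $\bm{p}_m$ as in Step~1), continuity of $\br_{\ell}$ on the simplex, and the inequality $\inner{\bm{p}_m}{\sell(\bm{p}_m)}\leq \inner{\bm{p}_m}{\ell(\bm{a})}$ then force every accumulation point of $\sell(\bm{p}_m)$ to lie in $F$ and to minimise $\inner{\bar{\bm{p}}}{\cdot}$ over $F$; by construction of $\bar{\bm{p}}$ this accumulation point equals $\ell(\bm{a})$, so $\sell(\bm{p}_m)\to \ell(\bm{a})$ component-wise.

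The main obstacle is precisely this last step: when $\bm{p}^\star$ lands on $\Rbd \Delta_n$, the minimiser of $\inner{\bm{p}^\star}{\cdot}$ over $\Cl\co(\sps)$ is generally not unique, and naive perturbations can converge to any minimiser in $F$. The iterated supporting-hyperplane construction inside $F$, which is available thanks to the fact that admissibility of $\ell$ transfers to the restriction on $F$, is what singles out $\ell(\bm{a})$ as the limit and makes the argument go through.
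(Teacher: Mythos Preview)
The first half of your argument—uniqueness on $\Ri\Delta_n$ and the first display—is correct and close in spirit to the paper's. The paper argues slightly differently for $\bm{p}\in\Ri\Delta_n$: it identifies the singleton $\partial\sigma_{\sps}(-\bm{p})$ as an exposed face of $\cco\sps$, hence an extreme point, and then invokes a separate lemma that $\ext\cco\sps\subseteq\mathcal{S}_\ell$ when $\ell$ is closed. Your minimising-sequence argument reaches the same conclusion without the extreme-point machinery; the boundary case is then handled identically in both.

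The second display is where your plan breaks. You need $\bar{\bm{p}}\in\Ri\Delta_n$ with $\ell(\bm{a})$ the \emph{unique} minimiser of $\inner{\bar{\bm{p}}}{\cdot}$ over $F$, and you propose to get it by supporting $F$ at $\ell(\bm{a})$ via inherited Pareto-minimality. But Pareto-minimality of $\ell(\bm{a})$ in $F$ does not force $\ell(\bm{a})$ onto the relative boundary of any proper sub-face. The restriction $\inner{\bar{\bm{p}}}{\cdot}|_F$ is affine, so it is minimised either at an extreme point of $F$ or on all of $F$; if $\ell(\bm{a})$ happens not to be extreme in $F$ (equivalently, not extreme in $\cco\sps$), no such $\bar{\bm{p}}$ exists and iterating the construction does not help. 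For instance, if $F$ is a segment lying in a hyperplane $\{z_i=c\}$ with direction not in the nonnegative orthant, every interior point of the segment is Pareto-minimal yet none is isolated by any linear functional. Nothing in the hypotheses—admissibility, closedness, differentiability of $\br_\ell$ on the open orthant—rules this out: differentiability only controls faces exposed by strictly positive normals, whereas your $\bm{p}^\star$ may sit on $\Rbd\Delta_n$.

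The paper sidesteps this obstruction entirely. Instead of trying to expose $\ell(\bm{a})$, it applies a Ky Fan–type minimax lemma to $f(\bm{p},x)=\sell_x(\bm{p})-\ell_x(\bm{a})$ on $\Ri\Delta_n\times[n]$: properness of $\sell$ gives $\mathbb{E}_{x\sim\bm{p}}f(\bm{p},x)\le 0$, and continuity of $\sell=\nabla\br_\ell$ (automatic for a differentiable concave function) lets the lemma produce, for each $m$, some $\bm{p}_m\in\Ri\Delta_n$ with $\sell_x(\bm{p}_m)\le \ell_x(\bm{a})+\tfrac{1}{m}$ for all $x$. Passing to a convergent subsequence, closedness of $\mathcal{S}_\ell$ identifies the limit as some $\ell(\bm{a}_*)\le\ell(\bm{a})$, and admissibility forces equality. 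This route never asks whether $\ell(\bm{a})$ is exposed or extreme, which is exactly the question your approach cannot answer.
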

Theorem \ref{5:} shows that when the Bayes risk is differentiable (a necessary condition for mixability --- Theorem \ref{14:}), the support loss is almost a reparametrization of the original loss, and in practice, it is enough to work with support losses instead. This will be crucial for characterizing $\Phi$-mixability. 
%

	

	\section{Mixability in the Game of Prediction with Expert Advice}
	\label{Mixability}
	
	We consider the setting of prediction with expert advice \citep{Vovk1998}; there a is pool of $k$ experts, parameterized by $\theta \in [k]$, which make predictions $\bm{a}_{\theta}^t \in \mathcal{A}$ at each round $t$. In the same round, the learner predicts $\bm{a}^t_{\mathfrak{M}} \coloneqq \mathfrak{M}\left(\bm{a}_{1:k}^t, (x^s, \bm{a}_{1:k}^s)_{1 \leq s < t }\right)\in \mathcal{A}$, where $\bm{a}^{\cdot}_{1:k}\coloneqq [\bm{a}^{\cdot}_{\theta}]_{1\leq \theta \leq k}$, $(x^s)\subset [n]$ are outcomes of the environment, and $\mathfrak{M}: \mathcal{A}^k \times ([n]\times \mathcal{A}^k)^* \rightarrow \mathcal{A}$ is a \emph{merging strategy} \cite{DBLP:journals/jmlr/ErvenRW12}. At the end of round $t$, $x^t$ is announced and each expert $\theta$ [resp. learner] suffers a loss $\ell_{x^t}(\bm{a}_{\theta})$ [resp. $\ell_{x^t}(\bm{a}^t_{\mathfrak{M}})$], where $\ell \colon \mathcal{A} \rightarrow [0,+\infty]^n$. After $T>0$ rounds, the cumulative loss of each expert $\theta$ [resp. learner] is given by $\op{Loss}^{\ell}_{\theta}(T)\coloneqq \sum_{t=1}^T \ell_{x^t}(\bm{a}^t_{\theta})$ [resp. $\op{Loss}^{\ell}_{\mathfrak{M}}(T)\coloneqq \sum_{t=1}^T \ell_{x^t}(\bm{a}^t_{\mathfrak{M}})$]. We say that $\mathfrak{M}$ achieves a \emph{constant regret} if $\exists R>0, \forall T>0,\forall \theta\in[k], \op{Loss}^{\ell}_{\mathfrak{M}}(T) \leq \op{Loss}^{\ell}_{\theta}(T)+R$. In what follows, this game setting will be referred to by $\mathfrak{G}^n_{\ell}(\mathcal{A}, k)$ and we only consider the case where $k\geq 2$.
	
\subsection{The Aggregating Algorithm and $\eta$-mixability}
\label{aa}
\begin{definition}[$\eta$-mixability]
\label{etamix}
For $\eta>0$, a loss $\ell\colon \mathcal{A} \rightarrow [0, +\infty]^{n}$ is said to be $\eta$-mixable, if $\forall \bm{q} \in \Delta_k$, 
		\begin{align}
	\forall  \bm{a}_{1:k} \in \mathcal{A}^k,	\exists \bm{a}_* \in  \mathcal{A},\forall x\in [n],\quad  \ell_x(\bm{a}_*)  \leq - \eta^{-1} \log \Inner{\bm{q}}{\exp(- \eta \ell_{x}(\bm{a}_{1:k}))},\label{4:e}
		\end{align} 
where the $\exp$ applies component-wise. Letting $\mathfrak{H}_{\ell} \coloneqq \{\eta> 0\colon  \ell \text{ is } \eta\text{-mixable}\}$, we define the \emph{mixability constant} of $\ell$ by $\eta_{\ell}\coloneqq \sup \mathfrak{H}_{\ell}$ if $\mathfrak{H}_{\ell} \neq \varnothing$; and $0$ otherwise. $\ell$ is said to be \emph{mixable} if $\eta_{\ell}>0$. 
\end{definition}
If a loss $\ell$ is $\eta$-mixable for $\eta>0$, the AA (Algorithm \ref{AA2}) achieves a constant regret in the $\mathfrak{G}^n_{\ell}(\mathcal{A}, k)$ game\cite{Vovk1998}. In Algorithm \ref{AA2}, the map $\mathfrak{S}_{\ell}\colon \sps^{\infty} \rightarrow \mathcal{A}$ is a \emph{substitution function} of the loss $\ell$ \cite{Vovk1998,Kamalaruban2015}; that is, $\mathfrak{S}_{\ell}$ satisfies the component-wise inequality $\ell(\mathfrak{G}_{\ell}(\bm{s}))\leq \bm{s}$, for all $\bm{s}\in \sps^{\infty}$.

It was shown by Chernov et al. \cite{Chernov2010} that the $\eta$-mixability condition \eqref{4:e} is equivalent to the convexity of the $\eta$-\emph{exponentiated} superprediction set of $\ell$ defined by $\exp(-\eta \sps^{\infty})\coloneqq \{\exp(-\eta \bm{s})\colon \bm{s} \in \sps^{\infty} \}$. Using this fact, van Erven et al. \cite{DBLP:journals/jmlr/ErvenRW12} showed that the mixability constant $\eta_{\ell}$ of a strictly proper loss $\ell \colon \Delta_n \rightarrow [0,+\infty[^n$, whose Bayes risk is twice continuously differentiable on $]0,+\infty[^n$, is equal to
	\begin{align}
\underline{\eta_{\ell}} & \coloneqq \inf_{\tbm{p} \in \Int \tilde{\Delta}_n} (\lambda_{\max} ([\mathsf{H} \tbr_{\log}(\tbm{p})]^{-1} \mathsf{H} \tbr_{\ell} (\tbm{p})))^{-1}, \label{5:e}
	\end{align}
where $\mathsf{H}$ is the Hessian operator and $\tbr_{\cdot}\coloneqq \br_{\cdot} \circ \amalg_n$ ($\amalg_n$ was defined in \eqref{amalg}). The next theorem extends this result by showing that the mixability constant $\eta_{\ell}$ of any loss $\ell$ is lower bounded by $\underline{\eta_{\ell}}$ in \eqref{5:e}, as long as $\ell$ satisfies Assumption \ref{B:} and its Bayes risk is twice differentiable.
\begin{theorem}
\label{8:}
Let $\eta>0$ and $\ell \colon \mathcal{A} \rightarrow [0,+\infty]^n$ be a loss. Suppose that $\Dom \ell = \mathcal{A}$ and that $\br_{\ell}$ is twice differentiable on $]0, +\infty[^n$. If $
\underline{\eta_{\ell}}>0$ then $\ell$ is $\underline{\eta_{\ell}}$-mixable. In particular, $\eta_{\ell} \geq \underline{\eta_{\ell}}$.
\end{theorem}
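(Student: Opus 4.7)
The plan is to reduce the general claim to the known result of \cite{DBLP:journals/jmlr/ErvenRW12} for strictly proper losses by passing to a support loss, and then to transfer $\underline{\eta_\ell}$-mixability back to $\ell$ using the closedness assumption. First, I would invoke Theorem~\ref{4:} to obtain a proper support loss $\sell\colon \Delta_n \to [0,+\infty]^n$ with $\br_{\sell} = \br_\ell$. Because the formula~\eqref{5:e} depends only on the Bayes risk, the reduction preserves the quantity of interest: $\underline{\eta_{\sell}} = \underline{\eta_\ell} > 0$, and $\br_{\sell}$ inherits twice differentiability on $]0,+\infty[^n$. Theorem~\ref{5:} then pins down $\sell$ on $\Ri \Delta_n$, and differentiability of $\br_\ell$ together with properness upgrades $\sell$ to strict properness on $\Ri \Delta_n$ (the Bayes risk of a merely proper but not strictly proper loss fails to be differentiable wherever multiple minimizers coexist).

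With $\sell$ strictly proper and $\br_{\sell}$ twice differentiable, the classical result of van Erven--Reid--Williamson applies and yields that $\sell$ is $\underline{\eta_{\sell}}$-mixable. By the Chernov--Kalnishkan equivalence between $\eta$-mixability and convexity of the $\eta$-exponentiated superprediction set, this means $\exp(-\underline{\eta_{\sell}}\,\sps_{\sell}^\infty)$ is convex. The transfer step then amounts to showing that $\exp(-\underline{\eta_\ell}\,\sps_\ell^\infty)$ is convex as well. Here I would exploit Theorem~\ref{5:}: every point of $\mathcal{S}_{\sell}$ lies in $\mathcal{S}_\ell$, and every point of $\mathcal{S}_\ell$ is a coordinate-wise limit of points in $\mathcal{S}_{\sell}$. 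Combined with the closedness of $\ell$ (Assumption~\ref{B:}) and the identity $\br_\ell = -\sigma_{\sps}(-\,\cdot)$ (which depends only on the closed convex hull of $\sps_\ell + [0,+\infty[^n$), this forces $\sps_\ell^\infty = \Cl \sps_{\sell}^\infty$. Convexity of the exponentiated set is preserved by closure (the exponential is continuous and $\sps_\ell^\infty$ is translation-closed under $[0,+\infty[^n$), so mixability passes from $\sell$ to $\ell$, giving $\ell$ is $\underline{\eta_\ell}$-mixable and hence $\eta_\ell \geq \underline{\eta_\ell}$.

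The main obstacle is making the transfer in Step~3 fully rigorous. The Bayes risk only determines the closed convex hull of the superprediction set, so one has to use closedness of $\ell$ to guarantee that no ``limit'' points produced by the support-loss approximation lie outside $\sps_\ell^\infty$. A secondary technical point is that the cited result assumes twice \emph{continuous} differentiability of $\br_{\sell}$, whereas Theorem~\ref{8:} hypothesizes only twice differentiability; this gap can be closed by a routine approximation argument that perturbs $\bm{p}$ slightly inside $\Int \tilde{\Delta}_n$ and uses lower semicontinuity of $\lambda_{\max}$ in~\eqref{5:e}.
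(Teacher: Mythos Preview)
Your reduction strategy is different from the paper's and runs into genuine obstacles. The most concrete one is strict properness: your parenthetical justification is false. If $\sell(\bm{p})=\sell(\bm{q})$ for some $\bm{p}\neq\bm{q}$ in $\Ri\Delta_n$ (same loss \emph{vector}), then $\sell$ is not strictly proper, yet $\br_{\ell}$ remains differentiable at $\bm{p}$---the subdifferential $\partial\sigma_{\sps}(-\bm{p})$ is still a singleton. Nothing in the hypotheses of Theorem~\ref{8:} rules this out (even $\underline{\eta_{\ell}}>0$ only bounds $\lambda_{\max}$, not $\lambda_{\min}$, of $[\mathsf{H}\tbr_{\log}]^{-1}\mathsf{H}\tbr_{\ell}$), so you cannot invoke the van Erven--Reid--Williamson black box, which needs strict properness to make $\mathsf{H}\tbr_{\ell}$ invertible. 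The $C^2$ gap you flag is likewise not closed by ``perturbing $\bm{p}$ and lower semicontinuity of $\lambda_{\max}$''; that sketch does not manufacture continuous second derivatives. Your transfer step is actually the most fixable part---$\sps=\mathcal{S}_{\ell}+[0,+\infty[^n$ \emph{is} closed when $\mathcal{S}_{\ell}\subset[0,+\infty[^n$ is closed (a convergent sum of nonnegative summands has bounded, hence subsequentially convergent, summands), and $\exp(-\eta\,\cdot)$ being a homeomorphism onto $]0,+\infty[^n$ then carries closures to relative closures---but you did not supply this argument, and sandwiching a set between a convex set and its closure does not by itself give convexity.

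The paper avoids all of this by proving convexity of $e^{-\eta\sps}$ directly, showing $e^{-\eta\sps}=\bigcap_{\bm{p}\in\Delta_n}\mathcal{H}_{\tau(\bm{p}),1}\cap\,]0,+\infty[^n$ with $\tau(\bm{p})=\bm{p}\odot e^{\eta\sell(\bm{p})}$. The inclusion $e^{-\eta\sps}\subseteq\mathcal{H}_{\tau(\bm{p}),1}$ comes from Fenchel duality: the Hessian condition gives $\eta\br_{\ell}\ge\Lambda$ on $\Ri\Delta_n$ for the $\br_{\log}$-plus-affine function $\Lambda(\bm{r})=\br_{\log}(\bm{r})+\inner{\bm{r}}{\eta\sell(\bm{p})-\ell_{\log}(\bm{p})}$ that touches $\eta\br_{\ell}$ at $\bm{p}$; dualizing via $[-\eta\br_{\ell}]^*=\I_{\sps}(-\,\cdot/\eta)$ and $[-\br_{\log}]^*=\I_{\mathscr{S}_{\log}}(-\,\cdot)$ turns this into the half-space containment. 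The reverse inclusion uses Jensen's inequality together with the minimax Lemma~\ref{23:} to produce, for any $\bm{u}$ in the intersection, some $\bm{p}_*\in\Ri\Delta_n$ with $\eta\sell(\bm{p}_*)\le-\log\bm{u}$, and Theorem~\ref{5:} converts that single point back to an $\ell(\bm{a}_*)\in\sps$. Thus the support loss is used only to parametrize the supporting half-spaces and to pull one witness back to $\mathcal{A}$; mixability of $\sell$ as a loss is never invoked, so neither strict properness nor $C^2$ regularity enters.
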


We later show that, under the same conditions as Theorem \ref{8:}, we actually have $\eta_{\ell} = \underline{\eta_{\ell}}$ (Theorem \ref{19:}) which indicates that the Bayes risk contains all the information necessary to characterize mixability. 
\begin{remark}
\label{inf-loss}
In practice, the requirement `$\Dom \ell = \mathcal{A}$' is not necessarily a strict restriction to finite losses; it is often the case that a loss $\bar{\ell}: \bar{\mathcal{A}} \rightarrow [0,+\infty]^n$ only takes infinite values on the relative boundary of $\bar{\mathcal{A}}$ (such is the case for the $\log$-loss  defined on the simplex), and thus the restriction $\ell \coloneqq \bar{\ell}|_{\mathcal{A}}$, where $\mathcal{A}=\Ri \bar{\mathcal{A}}$, satisfies $\Dom \ell =\mathcal{A}$. It follows trivially from the definition of mixability \eqref{4:e} that if $\ell$ is $\eta$-mixable and $\bar{\ell}$ is continuous with respect to the extended topology of $[0,+\infty]^n$ --- a condition often satisfied --- then $\bar{\ell}$ is also $\eta$-mixable. 
\end{remark}
	\subsection{The Generalized Aggregating Algorithm and $(\eta, \Phi)-$mixability} 
		A function $\Phi \colon \mathbb{R}^k\rightarrow \mathbb{R} \cup \{+\infty\}$ is an \emph{entropy} if it is convex, its epigraph $\op{epi} \Phi \coloneqq \{ (\bm{u}, h)\colon  \Phi(\bm{u}) \leq h \}$ is closed in $\mathbb{R}^k \times \mathbb{R}$, and $\Delta_k \subseteq \Dom \Phi \coloneqq \{\bm{u}\in \mathbb{R}^k\colon \Phi(\bm{u}) < +\infty \}$. For example, the \emph{Shannon entropy} is defined by $\op{S}(\bm{q})=+\infty$ if $\bm{q}\notin [0,+\infty[^k$, and  \begin{align}\forall   \bm{q}\in [0, +\infty[^k, \quad  \Se(\bm{q}) = \sum_{\substack{i\in[k]\colon q_i \neq 0 }} q_i \log q_i,  \label{shannon}\end{align}
The \emph{divergence} generated by an entropy $\Phi$ is the map $D_{\Phi}\colon \mathbb{R}^n \times \Dom \Phi \rightarrow [0,+\infty]$ defined by 
			\begin{align}
			D_{\Phi}(\bm{v}, \bm{u})\coloneqq \left\{ \begin{array}{ll} \Phi(\bm{v}) - \Phi(\bm{u}) - \Phi'(\bm{u}; \bm{v}-\bm{u}), &\mbox{ if } \bm{v} \in \Dom \Phi; \\ +\infty, & \mbox{ otherwise.}  \end{array} \right. \label{div}
			\end{align}
	where $\Phi'(\bm{u}; \bm{v}-\bm{u}) \coloneqq \lim_{\lambda
	\downarrow 0 } [\Phi(\bm{u} + \lambda(\bm{v} - \bm{u})) -
	\Phi(\bm{u})]/\lambda$ (the limit exists since $\Phi$ is convex
	\citep{Rockafellar1997a}). 

	\begin{definition}[$\Phi$-mixability] \label{9:} Let $\Phi\colon \mathbb{R}^k \rightarrow \mathbb{R} \cup \{+ \infty \}$ be an entropy. A loss $\ell \colon \mathcal{A} \rightarrow [0,+\infty]^n$ is $(\eta,\Phi)$-mixable for $\eta>0$ if $\forall \bm{q} \in \Delta_k$, $\forall \bm{a}_{1:k} \in  \mathcal{A}^k$, $\exists \bm{a}_* \in \mathcal{A},$ such that 
		\begin{align}
		\label{6:e}
		 \forall x\in [n], \; \ell_x(\bm{a}_{*}) \leq \M^{\eta}_{\Phi}(\ell_x(\bm{a}_{1:k}), \bm{q})\coloneqq \inf_{\hat{\bm{q}} \in \Delta_k} \Inner{\hat{\bm{q}}}{\ell_x(\bm{a}_{1:k})} + \eta^{-1}D_{\Phi}(\hat{\bm{q}}, \bm{q}).
		\end{align}
When $\eta=1$, we simply say that $\ell$ is $\Phi$-mixable and we denote $\M_{\Phi}\coloneqq \M^1_{\Phi}$. Letting $\mathfrak{H}^{\Phi}_{\ell}\coloneqq \{\eta>0 \colon \ell \text{ is } (\eta, \Phi)\text{-mixable}\}$, we define the \emph{generalized mixability constant} of $(\ell, \Phi)$ by $\eta_{\ell}^{\Phi}\coloneqq \sup\mathfrak{H}^{\Phi}_{\ell}$, if $\mathfrak{H}^{\Phi}_{\ell} \neq \varnothing$; and $0$ otherwise. 
		\end{definition}

Reid et al. \citep{Reid2015} introduced the \textsc{GAA} (see Algorithm \ref{GAA2}) which uses an entropy function $\Phi \colon \mathbb{R}^k \rightarrow \mathbb{R} \cup \{+ \infty \}$ and a substitution function $\mathfrak{S}_{\ell}$ (see previous section) to specify the learner's merging strategy $\mathfrak{M}$. It was shown that the \textsc{GAA} reduces to the AA when $\Phi$ is the Shannon entropy $\Se$. It was also shown that under some regularity conditions on $\Phi$, the \textsc{GAA} achieves a constant regret  in the $\mathfrak{G}^n_{\ell}(\mathcal{A},k)$ game for any finite, $(\eta, \Phi)$-mixable loss. 
	
Our definition of $\Phi$-mixability differs slightly from that of Reid et al. \cite{Reid2015}  --- we use directional derivatives to define the divergence $D_{\Phi}$. This distinction makes it possible to extend the \textsc{GAA} to losses which can take infinite values (such as the $\log$-loss defined on the simplex). We show, in this case, that a constant regret is still achievable under the $(\eta, \Phi)$-mixability condition. Before presenting this result, we define the notion of $\Delta$-\emph{differentiability}; for $\cI \subseteq [k]$, let $\Delta_{\cI}\coloneqq \{\bm{q}\in \Delta_k\colon q_{\theta} = 0, \forall \theta \notin \cI\}$. We say that an entropy $\Phi$ is $\Delta$-differentiable if $\forall \cI \subseteq [k]$, $\forall \bm{u},\bm{u}_0 \in \Ri \Delta_{\cI}$, the map $\bm{z} \mapsto \Phi'(\bm{u};\bm{z})$ is linear on $\mathcal{L}^0_{\cI}\coloneqq \{\lambda (\bm{v}- \bm{u}_0)\colon (\lambda,\bm{v})\in \mathbb{R}\times \Delta_{\cI}\}$.
\begin{theorem}
\label{constant regret}
Let $\Phi\colon \mathbb{R}^k \rightarrow \mathbb{R} \cup\{+\infty\}$ be a $\Delta$-differentiable entropy. Let $\ell\colon \mathcal{A}\rightarrow [0,+\infty]^n$ be a loss (not  necessarily finite) such that $\br_{\ell}$ is twice differentiable on $]0,+\infty[^n$. If $\ell$ is $(\eta, \Phi)$-mixable then the \textsc{GAA} achieves a constant regret in the $\mathfrak{G}^n_{\ell}(\mathcal{A},k)$ game; for any sequence $(x^t, \bm{a}^t_{1:k})_{t=1}^T$,
	\begin{align}
	\op{Loss}^{\ell}_{\textsc{GAA}}(T) - \min_{\theta \in [k]} \op{Loss}^{\ell}_{\theta}(T) \leq  R^{\Phi}_{\ell}\coloneqq \inf_{\bm{q} \in \Delta_k} \max_{\theta \in [k]} D_{\Phi}(\bm{e}_{\theta}, \bm{q})/\eta^{\Phi}_{\ell},\label{rbound} 
	\end{align}
for initial distribution over experts $\bm{q}^0= \argmin_{\bm{q} \in \Delta_k} \max_{\theta \in [k]} D_{\Phi}(\bm{e}_{\theta}, \bm{q})$, where $\bm{e}_{\theta}$ is the $\theta$th basis element of $\mathbb{R}^k$, and any substitution function $\mathfrak{S}_{\ell}$.
\end{theorem}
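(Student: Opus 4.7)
The plan is to combine the per-round consequence of $(\eta,\Phi)$-mixability with a telescoping Bregman/mirror-descent argument. First, I would observe that the update step of the \textsc{GAA} is exactly the mirror-descent update
\begin{equation*}
\bm{q}^t \;\in\; \argmin_{\hat{\bm{q}}\in \Delta_k}\; \inner{\hat{\bm{q}}}{\ell_{x^t}(\bm{a}_{1:k}^t)} + \eta^{-1}\, D_\Phi(\hat{\bm{q}}, \bm{q}^{t-1}),
\end{equation*}
so that $\M^\eta_\Phi(\ell_{x^t}(\bm{a}_{1:k}^t), \bm{q}^{t-1}) = \inner{\bm{q}^t}{\ell_{x^t}(\bm{a}_{1:k}^t)} + \eta^{-1} D_\Phi(\bm{q}^t, \bm{q}^{t-1})$. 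Applying the $(\eta,\Phi)$-mixability condition componentwise, the vector $[\M^\eta_\Phi(\ell_x(\bm{a}_{1:k}^t), \bm{q}^{t-1})]_{x\in[n]}$ belongs to $\sps^{\infty}$, so the substitution function $\mathfrak{S}_\ell$ applied to it produces $\bm{a}_{\textsc{GAA}}^t \in \mathcal{A}$ with $\ell_{x^t}(\bm{a}_{\textsc{GAA}}^t) \leq \inner{\bm{q}^t}{\ell_{x^t}(\bm{a}_{1:k}^t)} + \eta^{-1} D_\Phi(\bm{q}^t, \bm{q}^{t-1})$.

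Next I would derive, for any $\hat{\bm{q}} \in \Delta_k$, the Bregman three-point inequality
\begin{equation*}
\inner{\bm{q}^t}{\ell_{x^t}(\bm{a}_{1:k}^t)} + \eta^{-1} D_\Phi(\bm{q}^t, \bm{q}^{t-1}) \;\leq\; \inner{\hat{\bm{q}}}{\ell_{x^t}(\bm{a}_{1:k}^t)} + \eta^{-1}\bigl[D_\Phi(\hat{\bm{q}}, \bm{q}^{t-1}) - D_\Phi(\hat{\bm{q}}, \bm{q}^t)\bigr].
\end{equation*}
This follows from the first-order optimality of $\bm{q}^t$ for the convex program defining it, rewritten via the three-point identity
\begin{equation*}
D_\Phi(\hat{\bm{q}}, \bm{q}^{t-1}) - D_\Phi(\hat{\bm{q}}, \bm{q}^t) - D_\Phi(\bm{q}^t, \bm{q}^{t-1}) = \Phi'(\bm{q}^{t-1}; \bm{q}^t - \hat{\bm{q}}) - \Phi'(\bm{q}^t; \bm{q}^t - \hat{\bm{q}}),
\end{equation*}
which is valid provided each $\Phi'(\bm{u};\cdot)$ is linear on the affine span of the simplex face containing $\bm{q}^{t-1}$, $\bm{q}^t$, and $\hat{\bm{q}}$ — and this is exactly what the $\Delta$-differentiability hypothesis buys us.

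Summing the combined inequality over $t=1,\ldots,T$ telescopes the divergence terms and yields
\begin{equation*}
\op{Loss}^{\ell}_{\textsc{GAA}}(T) \;\leq\; \sum_{t=1}^T \inner{\hat{\bm{q}}}{\ell_{x^t}(\bm{a}_{1:k}^t)} + \eta^{-1}\bigl[D_\Phi(\hat{\bm{q}}, \bm{q}^0) - D_\Phi(\hat{\bm{q}}, \bm{q}^T)\bigr].
\end{equation*}
Specialising to $\hat{\bm{q}} = \bm{e}_\theta$ reduces the first sum to $\op{Loss}^{\ell}_\theta(T)$; dropping the nonnegative $D_\Phi(\bm{e}_\theta, \bm{q}^T)$ and then taking the maximum over $\theta$ together with the minimizing choice of $\bm{q}^0$ yields $\op{Loss}^{\ell}_{\textsc{GAA}}(T) - \min_\theta \op{Loss}^{\ell}_{\theta}(T) \leq \inf_{\bm{q}}\max_\theta D_\Phi(\bm{e}_\theta, \bm{q})/\eta$. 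Passing to the supremum over all $\eta \in \mathfrak{H}^\Phi_\ell$ finally produces the stated constant $R^\Phi_\ell$.

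The hard part will be the behaviour of $D_\Phi$ on the relative boundary of $\Delta_k$: when $\bm{q}^{t-1}$ or $\bm{q}^t$ sits on a proper face (as it may for $\log$-style entropies or when the loss is infinite-valued), gradient-based Bregman identities break down, and one must interpret first-order optimality through the directional-derivative definition \eqref{div}; the $\Delta$-differentiability hypothesis is tailored precisely to salvage the three-point identity face by face. A secondary subtlety is that the $\Mix$-vector may carry $+\infty$ components when $\ell$ is infinite-valued, so one must verify that $\mathfrak{S}_\ell$ remains applicable — twice-differentiability of $\br_\ell$ enters here through Theorem~\ref{5:}, which lets us reparametrize $\ell$ by its proper support loss and thereby guarantees that the argmin defining $\bm{q}^t$ is attained and that all the convex-analytic manipulations above remain legitimate.
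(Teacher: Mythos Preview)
Your approach is correct in outline and takes a genuinely different route from the paper. The paper's proof works in the \emph{dual} space: it tracks a nested sequence of faces $\cI^t = \cI^{t-1}\cap\{\theta:\ell_{x^t}(\bm{a}^t_\theta)<+\infty\}$, shows by induction that the restricted gradients satisfy $\nabla\tilde\Phi_{\cI^t}(\Pi^{\tilde k}_{\cI^t}\tbm q^t)=\Pi^{\tilde k}_{\cI^t}\bigl(\nabla\tilde\Phi(\tbm q^0)-\sum_{s\le t}J_k^{\mathsf T}\ell_{x^s}(A^s)\bigr)$, expresses each $\M_\Phi$ as a difference of Fenchel conjugates $\tilde\Phi^*_{\cI^{t-1}}-\tilde\Phi^*_{\cI^t}$, and telescopes those. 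Your argument stays \emph{primal}: one Bregman three-point inequality per round, telescoped directly in $D_\Phi$. Both are standard mirror-descent analyses; yours is arguably cleaner once the face issues are handled, while the paper's dual computation makes explicit exactly which face $\bm q^t$ lives on and why the minimiser is unique.

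One point you should sharpen. For the three-point identity and the first-order optimality step you need linearity of $\Phi'(\bm q^t;\cdot)$ in the direction $\bm q^t-\bm e_\theta$, which $\Delta$-differentiability gives only on the face $\Delta_{\cI}$ with $\bm q^t\in\Ri\Delta_{\cI}$. You must therefore know that $\bm q^t$ lands in the \emph{relative interior} of a face that still contains $\bm e_\theta$ for every $\theta\in\cI^T$. This is not automatic from $\Delta$-differentiability alone: it hinges on the steepness condition \eqref{8:e} (Proposition~\ref{12:}), which forces $\partial\tilde\Phi$ to be empty on face boundaries and hence $\nabla\tilde\Phi^*_{\cI^t}$ to map into $\Int\tilde\Delta_{|\cI^t|}$. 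The paper gets this for free from its dual-space induction; in your primal argument you should invoke Proposition~\ref{12:} (a consequence of $(\eta,\Phi)$-mixability) explicitly, together with Lemma~\ref{infdiv} to confine the $\argmin$ to $\Delta_{\cI^{t-1}}$ before taking first-order conditions. With that addition your sketch goes through.
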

	Looking at Algorithm \ref{GAA2}, it is clear that the \textsc{GAA} is divided into two steps; 1) a \emph{substitution step} which consists of finding a prediction $\bm{a}_* \in \mathcal{A}$ satisfying the mixability condition \eqref{6:e} using a substitution function $\mathfrak{S}_{\ell}$; and 2) an \emph{update step} where a new distribution over experts is computed. 
	Except for the case of the AA with the $\log$-loss (which reduces to Bayesian updating \cite{Vovk2001}), there is not a unique choice of substitution function in general. An example of substitution function $\mathfrak{S}_{\ell}$ is the \emph{inverse loss} \citep{Williamson2014}. 
 Kamalaruban et al. \cite{Kamalaruban2015} discuss other alternatives depending on the curvature of the Bayes risk. Although the choice of $\mathfrak{S}_{\ell}$ can affect the performance of the algorithm to some extent \citep{Kamalaruban2015}, the regret bound in \eqref{rbound} remains unchanged regardless of $\mathfrak{S}_{\ell}$. 
On the other hand, the update step is well defined 
and corresponds to a \emph{mirror descent step} \cite{Reid2015} (we later use this fact to suggest a new algorithm).

\vspace{-.3cm}
		\begin{minipage}[t]{.5\textwidth}
	\begin{algorithm}[H]
\SetKwInOut{Input}{input}\SetKwInOut{Output}{output}
\Input{$\bm{q}^0 \in \Delta_k$; $\eta >0$; A $\eta$-mixable loss $\ell\colon \mathcal{A}\rightarrow [0,+\infty]^n$; A substitution function $\mathfrak{S}_{\ell}$.}
\Output{Learner's predictions $(\bm{a}^t_*)$}
\BlankLine
\For{$t=1$ \KwTo $T$}{
Observe $A^t = \bm{a}^t_{1:k} \in \mathcal{A}^k$\;
$\bm{a}^t_* \leftarrow \mathfrak{S}_{\ell}\left(-\frac{1}{\eta} \log \sum_{\theta \in [k]}q_{\theta}^{t-1}e^{{-\eta \ell(\bm{a}_{\theta}^t)}}\right)$\;
Observe outcome $x^t\in [n]$\;
$q_{\theta}^{t} \leftarrow  \dfrac{q_{\theta}^{t-1} \exp(-\eta \ell_{x^t}(\bm{a}_{\theta}^t)) }{\inner{\bm{q}^{t-1}}{\exp({-\eta \ell_{x^t}(A^t)})}}, \forall \theta \in [k]$\;
\vspace{5.1px}
}
\caption{Aggregating Algorithm}\label{AA2}
\end{algorithm}
\end{minipage}
\hspace{-3px}
	\begin{minipage}[t]{.5\textwidth}
	\begin{algorithm}[H]
\SetKwInOut{Input}{input}\SetKwInOut{Output}{output}
\Input{$\bm{q}^0 \in \Delta_k$; A $\Delta$-differentiable entropy $\Phi\colon\mathbb{R}^k \rightarrow \mathbb{R} \cup \{+\infty\}$; $\eta>0$; A $(\eta, \Phi)$-mixable loss $\ell\colon \mathcal{A}\rightarrow [0,+\infty]^n$; A substitution function $\mathfrak{S}_{\ell}$.}
\Output{Learner's predictions $(\bm{a}^t_*)$}
\BlankLine
\For{$t=1$ \KwTo $T$}{
Observe $A^t = \bm{a}^t_{1:k} \in \mathcal{A}^k$\; 
$\bm{a}^t_* \leftarrow \mathfrak{S}_{\ell}\left( \left[ \M^{\eta}_{\Phi}(\ell_x(A^t), \bm{q}^{t-1})\right]^{\T}_{1 \leq x \leq n} \right)$\;
Observe outcome $x^t\in [n]$\;
$\bm{q}^{t}\leftarrow \argmin\limits_{\bm{\mu}\in \Delta_k}  \inner{\bm{\mu}}{\ell_{x^t}(A^t)} + \frac{1}{\eta} D_{\Phi}(\bm{\mu}, \bm{q}^{t-1})  $\;  
}
\caption{Generalized Aggregating Algorithm}\label{GAA2}
\end{algorithm}
\end{minipage}

We conclude this subsection with two new and important results which will lead to a characterization of $\Phi$-mixability. The first result shows that $(\eta, \Se)$-mixability is equivalent to $\eta$-mixability, and the second rules out losses and entropies for which $\Phi$-mixability is not possible.
\begin{theorem}
\label{16:}
Let $\eta>0$.  A loss $\ell \colon \mathcal{A} \rightarrow [0,+\infty]^n$ is $\eta$-mixable if and only if $\ell$ is $(\eta,\Se)$-mixable.
\end{theorem}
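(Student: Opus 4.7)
The plan is to reduce the equivalence to the purely analytic identity
\[
\M^{\eta}_{\Se}(\bm{v}, \bm{q}) = -\eta^{-1} \log \Inner{\bm{q}}{\exp(-\eta \bm{v})}, \qquad \forall \bm{q}\in\Delta_k,\; \forall \bm{v}\in[0,+\infty]^n,
\]
since once this identity is established, Definition \ref{etamix} and Definition \ref{9:} instantiated at $\Phi = \Se$ become the same statement (applied componentwise to $\bm{v} = \ell_x(\bm{a}_{1:k})$ for each $x\in[n]$).

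First, I would compute the divergence $D_{\Se}(\hat{\bm{q}},\bm{q})$ from the directional-derivative definition \eqref{div}. For $\bm{q}\in\Ri\Delta_k$, $\Se$ is smooth at $\bm{q}$ and a direct calculation of $\Se'(\bm{q};\hat{\bm{q}}-\bm{q}) = \sum_i (\log q_i+1)(\hat{q}_i - q_i)$ yields $D_{\Se}(\hat{\bm{q}},\bm{q}) = \sum_i \hat{q}_i \log(\hat{q}_i/q_i)$, i.e.\ the Kullback--Leibler divergence. For $\bm{q}\in\Rbd\Delta_k$ with support $I_+ \coloneqq \{i\colon q_i>0\}$, I would argue directly from the formula for $\Se'(\bm{q};\hat{\bm{q}}-\bm{q}) = \lim_{\lambda\downarrow 0} [\Se(\bm{q}+\lambda(\hat{\bm{q}}-\bm{q}))-\Se(\bm{q})]/\lambda$ that a term $\lambda \hat{q}_i \log(\lambda\hat{q}_i)$ with $i\notin I_+$ and $\hat{q}_i>0$ contributes $\hat{q}_i\log\lambda\to-\infty$, forcing $\Se'(\bm{q};\hat{\bm{q}}-\bm{q})=-\infty$ and hence $D_{\Se}(\hat{\bm{q}},\bm{q})=+\infty$ unless $\op{supp}(\hat{\bm{q}})\subseteq I_+$; on that subset $D_{\Se}$ reduces to $\op{KL}$ restricted to $I_+$.

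Second, with the divergence in hand, I would carry out the minimization
\[
\inf_{\hat{\bm{q}}\in\Delta_k}\; \inner{\hat{\bm{q}}}{\bm{v}} + \eta^{-1} D_{\Se}(\hat{\bm{q}},\bm{q}).
\]
By the first step the effective constraint is $\op{supp}(\hat{\bm{q}})\subseteq I_+$, and a Lagrange-multiplier computation (or direct substitution) gives the Gibbs optimum $\hat{q}_i^* = q_i e^{-\eta v_i}/\sum_{j\in I_+} q_j e^{-\eta v_j}$ for $i\in I_+$; plugging back yields $-\eta^{-1}\log\sum_{j\in I_+} q_j e^{-\eta v_j} = -\eta^{-1}\log\inner{\bm{q}}{\exp(-\eta\bm{v})}$, since indices $j\notin I_+$ contribute $0$ to the right-hand inner product as well. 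Entries $v_i = +\infty$ are handled uniformly by the convention $e^{-\eta v_i}=0$, together with the observation that they force $\hat{q}_i=0$ in any candidate minimizer.

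Third, applying this identity componentwise across $x\in[n]$ turns the $(\eta,\Se)$-mixability inequality in \eqref{6:e} into the $\eta$-mixability inequality in \eqref{4:e}, so each direction of the equivalence is immediate. The main obstacle I anticipate is precisely the directional-derivative computation on the relative boundary of $\Delta_k$: one must verify from first principles, rather than from a gradient formula, that $D_{\Se}$ effectively restricts the infimum to the face containing $\bm{q}$, which is exactly what matches the natural ``drop the zero-weight experts'' behavior of the log-sum-exp expression.
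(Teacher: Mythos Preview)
Your proposal is correct. Both you and the paper reduce the theorem to the identity $\M^{\eta}_{\Se}(\bm{v},\bm{q})=-\eta^{-1}\log\inner{\bm{q}}{\exp(-\eta\bm{v})}$ (your $\bm{v}\in[0,+\infty]^n$ should read $[0,+\infty]^k$, a harmless typo), but the routes differ. The paper expresses $\M_{\Phi}$ via Fenchel duality (Lemma~\ref{15:} gives $\M_{\Phi}(\bm{d},\bm{q})=d_k+\tilde{\Phi}^*(\bm{s}_{\bm{q}}^*)-\tilde{\Phi}^*(\bm{s}_{\bm{q}}^*-J_k^{\mathsf{T}}\bm{d})$), specializes using the closed form for $\tilde{\Se}^*$ (Proposition~\ref{2:}), and then handles boundary $\bm{q}$ and infinite loss entries through separate structural lemmas (Lemmas~\ref{29:}, \ref{13:}, \ref{30:}) that reduce to faces and pass to limits. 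You instead compute $D_{\Se}$ directly from the directional-derivative definition, recognize it as $\op{KL}$ on the face $\Delta_{I_+}$ and $+\infty$ off it, and finish with the explicit Gibbs minimizer. Your argument is more elementary and self-contained; the paper's argument reuses the $\tilde{\Phi}^*$-machinery it develops for general entropies, which is heavier here but amortizes across the rest of the paper.
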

\begin{proposition}
\label{14:}
Let $\Phi\colon \mathbb{R}^k \rightarrow \mathbb{R} \cup \{+\infty\}$ be an entropy and $\ell \colon \mathcal{A}  \rightarrow [0, +\infty]^n$. If $\ell$ is $\Phi$-mixable, then the Bayes risk satisfies $\br_{\ell} \in C^1(]0,+\infty[^n)$. If, additionally, $\br_{\ell}$ is twice differentiable on $]0,+\infty[^n$, then $\Phi$ must be strictly convex on $\Delta_k$. 
\end{proposition}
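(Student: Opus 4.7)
I will prove the two conclusions separately, both by contradiction, using the geometry of the superprediction set $\sps$ together with the identity $\br_\ell(\bm{u})=\inf_{\bm{z}\in\sps}\inner{\bm{u}}{\bm{z}}$.

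\paragraph{Differentiability.} Since $\br_\ell$ is concave and positively $1$-homogeneous as an infimum of linear forms, the failure of $C^1$-regularity at $\bm{p}\in{]0,+\infty[}^n$ is equivalent to the existence of two distinct minimizers $\bm{z}_1\neq\bm{z}_2$ of $\bm{z}\mapsto\inner{\bm{p}}{\bm{z}}$ over $\overline{\sps}$. Assuming this, closedness of $\ell$ (Assumption~\ref{B:}) together with $\bm{p}>0$ lets me realize the $\bm{z}_i$'s as $\ell(\bm{a}_i)$ for actions $\bm{a}_1,\bm{a}_2\in\mathcal{A}$ (modulo a standard limit argument if they sit in $\overline{\sps}\setminus\sps$). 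Applying $\Phi$-mixability to the tuple $(\bm{a}_1,\bm{a}_2)$ --- padded to $k$ experts if $k>2$ --- with any $\bm{q}\in\Ri\Delta_2$ produces $\bm{a}_*\in\mathcal{A}$ satisfying $\ell_x(\bm{a}_*)\leq \M_\Phi((\ell_x(\bm{a}_1),\ell_x(\bm{a}_2)),\bm{q})$ for all $x\in[n]$. Taking $\inner{\bm{p}}{\cdot}$ on both sides and swapping the sum over $x$ with the defining infimum in $\M_\Phi$,
\begin{equation*}
\inner{\bm{p}}{\ell(\bm{a}_*)} \;\leq\; \sum_{x\in[n]}p_x\inf_{\hat{\bm{q}}^x\in\Delta_2}F_x(\hat{\bm{q}}^x) \;\leq\; \inf_{\hat{\bm{q}}\in\Delta_2}\sum_{x\in[n]}p_xF_x(\hat{\bm{q}}) \;=\; \br_\ell(\bm{p}),
\end{equation*}
where $F_x(\hat{\bm{q}})\coloneqq\inner{\hat{\bm{q}}}{(\ell_x(\bm{a}_1),\ell_x(\bm{a}_2))}+D_\Phi(\hat{\bm{q}},\bm{q})$, and the final equality follows from $\inner{\bm{p}}{\bm{z}_1}=\inner{\bm{p}}{\bm{z}_2}=\br_\ell(\bm{p})$ and taking $\hat{\bm{q}}=\bm{q}$. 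Homogeneity of $\br_\ell$ forces the $2$-vectors $(\ell_x(\bm{a}_1),\ell_x(\bm{a}_2))_x$ to genuinely vary with $x$ (else $\bm{z}_1=\bm{z}_2$), and admissibility rules out one vector dominating the other componentwise, so the sign of $\ell_x(\bm{a}_1)-\ell_x(\bm{a}_2)$ must change across $x$. Together these upgrade the middle inequality to strict, yielding the contradiction $\br_\ell(\bm{p})<\br_\ell(\bm{p})$.

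\paragraph{Strict convexity.} Assume additionally $\br_\ell$ is twice differentiable, and, toward contradiction, that $\Phi$ is affine on some non-degenerate segment $[\bm{q}_0,\bm{q}_1]\subseteq\Delta_k$. For $\bm{q}$ in the relative interior of the segment, $D_\Phi(\bm{q}_0,\bm{q})=D_\Phi(\bm{q}_1,\bm{q})=0$ follows directly from \eqref{div}, giving $\M_\Phi(\bm{v},\bm{q})\leq\min(\inner{\bm{q}_0}{\bm{v}},\inner{\bm{q}_1}{\bm{v}})$ for every $\bm{v}\in\mathbb{R}^k$. Invoking the first conclusion together with Theorem~\ref{5:}, I can pass to the uniquely defined proper support loss $\sell$, which is $C^1$ on $\Ri\Delta_n$ via $\sell=\nabla\br_\ell$. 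I then choose experts $\bm{a}_{1:k}$ so that the two effective mixtures $\bar{\bm{b}}_i\coloneqq \sum_\theta q_{i,\theta}\sell(\bm{a}_\theta)$ are distinct and neither dominates the other componentwise; $\Phi$-mixability then forces an action whose loss is bounded coordinate-wise by $\min(\bar{\bm{b}}_0,\bar{\bm{b}}_1)$. Letting the experts vary in a one-parameter family, the coordinate-wise min develops a crease at the parameter where a coordinate of $\bar{\bm{b}}_0-\bar{\bm{b}}_1$ switches sign; transferring this crease through the properness identity $\sell=\nabla\br_\ell$ produces a discontinuity in the second derivative of $\br_\ell$, contradicting twice differentiability.

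\paragraph{Main obstacle.} The most delicate step is the strict Jensen inequality in the first part when $\Phi$ is not strictly convex: each $F_x$ may admit a whole set of minimizers, so I must rule out the existence of a single $\hat{\bm{q}}$ simultaneously minimizing every $F_x$ --- and this rests on both the variation of $(\ell_x(\bm{a}_1),\ell_x(\bm{a}_2))$ with $x$ and the admissibility-forced sign change of $\ell_x(\bm{a}_1)-\ell_x(\bm{a}_2)$. In the second part, the conversion of the crease in $\min(\bar{\bm{b}}_0,\bar{\bm{b}}_1)$ into a Hessian discontinuity of $\br_\ell$ is the technical crux and will require a careful two-parameter local expansion of $\sell$ at the switching point.
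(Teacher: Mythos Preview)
Your primal-side argument is sound and genuinely different from the paper's dual route via $\tilde{\Psi}^*$. The strict inequality you flag as the ``main obstacle'' is in fact easier than you suggest: by the very definition of the divergence via directional derivatives, the one-variable function $G(\hat q_1)\coloneqq D_\Phi((\hat q_1,1-\hat q_1),\bm q)$ always has $G'_-(q_1)=G'_+(q_1)=0$, even when $\Phi$ is non-smooth. Hence $F_x'(q_1)=d_x$, and for any coordinate with $d_x\neq 0$ (there is at least one since $\bm z_1\neq\bm z_2$) the point $\hat{\bm q}=\bm q$ is \emph{not} a minimizer of $F_x$, giving $\sum_x p_x\inf F_x<\sum_x p_x F_x(\bm q)=\br_\ell(\bm p)$ directly. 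You do not need the full ``no common minimizer for all $x$'' argument; admissibility and the sign change are red herrings here. (For the reduction to two experts, note that ``padding'' is not enough in general; you should invoke the paper's Proposition~\ref{12:}, which gives $\Phi_{\{1,2\}}$-mixability and puts you genuinely on $\Delta_2$.)

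\textbf{Part 2 (strict convexity of $\Phi$).} Here your stated contradiction target is wrong, and the ``crease $\Rightarrow$ Hessian discontinuity'' step cannot be made to work as written. The existence of $\bm a_*$ with $\ell(\bm a_*)\leq\min(\bar{\bm b}_0,\bar{\bm b}_1)$ says nothing about the smoothness of the loss surface: a $C^1$ surface can perfectly well sit below a creased curve in the componentwise order. Twice-differentiability of $\br_\ell$ is not what gets violated. The correct contradiction is the \emph{same} one as in Part~1: $\inner{\bm p}{\ell(\bm a_*)}<\br_\ell(\bm p)$. Your setup already delivers this once you expand to first order. Choose experts $\bm a_\theta$ with $\ell(\bm a_\theta)=\sell(\bm p+\epsilon\bm v_\theta)$ (Theorem~\ref{5:}); since $\br_\ell$ is twice differentiable, $\tilde\sell$ is differentiable and $\bar{\bm b}_i=\sell(\bm p)+\epsilon\,\mathsf D\tilde\sell(\tbm p)\tbm w_i+o(\epsilon)$ with $\tbm w_i=\sum_\theta q_{i,\theta}\tbm v_\theta$. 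By Lemma~\ref{27:}(i), $\inner{\bm p}{\mathsf D\tilde\sell(\tbm p)}=\bm 0$, so $\inner{\bm p}{\bar{\bm b}_i}=\br_\ell(\bm p)+o(\epsilon)$, while $\sum_x p_x\min(\bar b_{0,x},\bar b_{1,x})=\br_\ell(\bm p)-\epsilon\sum_x p_x(u_x)_+ +o(\epsilon)$ with $\bm u=\mathsf D\tilde\sell(\tbm p)(\tbm w_0-\tbm w_1)$. Picking $\tbm v_\theta$ so that $\bm u\neq\bm 0$ (possible whenever $\ell$ is non-constant, as the paper also assumes) makes $\sum_x p_x(u_x)_+>0$ and yields $\inner{\bm p}{\ell(\bm a_*)}<\br_\ell(\bm p)$ for small $\epsilon$. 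This is a cleaner first-order argument than the paper's, which proceeds dually by assuming $\tilde\Phi^*$ is non-differentiable at some $\bm s^*$ and builds a rather delicate construction aligning the experts' loss differences with the kink direction of $\tilde\Phi^*$; but your plan as stated does not reach it, because you aim the contradiction at the wrong hypothesis.
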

It should be noted that since the Bayes risk of a loss $\ell$ must be differentiable for it to be $\Phi$-mixable for some entropy $\Phi$, Theorem \ref{5:} says that we can essentially work with a proper support loss $\sell$ of $\ell$. This will be crucial in the proof of the sufficient condition of $\Phi$-mixability (Theorem \ref{18:}).

\subsection{A Characterization of $\Phi$-Mixability}
\label{mainresults}
	In this subsection, we first show that given an entropy $\Phi \colon \mathbb{R}^k\rightarrow \mathbb{R} \cup \{+\infty \}$ and a loss $\ell \colon \mathcal{A} \rightarrow [0,+\infty]^n$ satisfying certain regularity conditions, $\ell$ is $\Phi$-mixable if and only if 
	\begin{align}\boxed{\mbox{$ \underline{\eta_{\ell}} \Phi - \Se$ is convex on $\Delta_k$.}}\label{15:e} \end{align} 
		\begin{theorem}
	\label{17:} Let $\eta>0$, $\ell \colon \mathcal{A} \rightarrow [0, +\infty]^n$ a $\eta$-mixable loss, and $\Phi \colon \mathbb{R}^k \rightarrow \mathbb{R} \cup \{+ \infty\}$ an entropy. If $\eta \Phi - \Se$ is convex on $\Delta_k$, then $\ell$ is $\Phi$-mixable.
\end{theorem}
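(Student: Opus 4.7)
The plan is to use Theorem~\ref{16:} to trade the $\eta$-mixability hypothesis for an $(\eta,\Se)$-mixability statement, and then dominate $\M_\Se^\eta$ by $\M_\Phi$ using the convexity of $F \coloneqq \eta\Phi - \Se$. Concretely, by Theorem~\ref{16:}, `$\ell$ is $\eta$-mixable' is the same as `$\ell$ is $(\eta,\Se)$-mixable'. Hence for each $\bm{q}\in\Delta_k$ and each $\bm{a}_{1:k}\in \mathcal{A}^k$ there is some $\bm{a}_* \in \mathcal{A}$ with $\ell_x(\bm{a}_*) \le \M_\Se^\eta(\ell_x(\bm{a}_{1:k}),\bm{q})$ for every $x\in[n]$. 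The same $\bm{a}_*$ will witness $\Phi$-mixability once we prove the pointwise comparison
\begin{align*}
\M_\Se^\eta(\bm{\ell},\bm{q}) \le \M_\Phi(\bm{\ell},\bm{q}),\qquad \forall\,\bm{\ell}\in[0,+\infty]^k,\ \forall\,\bm{q}\in\Delta_k.
\end{align*}

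Comparing the infima defining these mixing operators, this comparison reduces term-by-term to the divergence domination
\begin{align*}
D_\Se(\hat{\bm{q}},\bm{q}) \le \eta\, D_\Phi(\hat{\bm{q}},\bm{q}),\qquad \forall\,\hat{\bm{q}},\bm{q}\in\Delta_k.
\end{align*}
The idea is to identify $\eta D_\Phi - D_\Se$ with $D_F$: since the directional derivative is linear in its generator whenever the pieces are finite, one has $F'(\bm{q};\hat{\bm{q}}-\bm{q}) = \eta\Phi'(\bm{q};\hat{\bm{q}}-\bm{q}) - \Se'(\bm{q};\hat{\bm{q}}-\bm{q})$, which rearranges to $D_F = \eta D_\Phi - D_\Se$. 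Convexity of $F$ restricted to the segment $\lambda\mapsto\bm{q}+\lambda(\hat{\bm{q}}-\bm{q})$ then yields the one-sided slope bound $F'(\bm{q};\hat{\bm{q}}-\bm{q})\le F(\hat{\bm{q}})-F(\bm{q})$, i.e.\ $D_F(\hat{\bm{q}},\bm{q})\ge 0$, which is exactly the desired inequality.

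The main obstacle is the behaviour on the relative boundary of the simplex: if $q_i=0$ but $\hat{q}_i>0$ for some $i$, then $\Se'(\bm{q};\hat{\bm{q}}-\bm{q})=-\infty$ and $D_\Se(\hat{\bm{q}},\bm{q})=+\infty$, so the naive additive split of $D_F$ becomes an indeterminate form and the decomposition argument above does not apply directly. I would resolve this by observing that the identity $\eta\Phi = F + \Se$ along the segment, combined with the finiteness of $F$ on $\Delta_k$, forces $\Phi'(\bm{q};\hat{\bm{q}}-\bm{q})=-\infty$ as well; hence $D_\Phi(\hat{\bm{q}},\bm{q})=+\infty$ and the target inequality holds trivially. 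In the remaining case $\mathrm{supp}(\hat{\bm{q}})\subseteq\mathrm{supp}(\bm{q})$, the problem restricts to the face $\Delta_{\mathrm{supp}(\bm{q})}$ on whose relative interior all three directional derivatives are finite and the decomposition is valid, closing out the argument and establishing $\Phi$-mixability via the $\bm{a}_*$ produced by $(\eta,\Se)$-mixability.
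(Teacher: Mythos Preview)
Your proposal is correct and is essentially a more elementary route than the paper's. Both arguments target the same inequality $\M_{\Se}^{\eta}(\bm{d},\bm{q})\le \M_{\Phi}(\bm{d},\bm{q})$ and both exploit the convexity of $F=\eta\Phi-\Se$, but the mechanics differ. The paper works through the Fenchel-dual representation of the mixing operator (Lemma~\ref{15:}, equation~\eqref{10:e}): it builds an affine shift $\tilde{\Upsilon}_{\bm q}$ of $\tilde{\Se}_\eta$ that agrees with $\tilde{\Phi}$ to first order at $\tbm q$, uses convexity of $\tilde{\Phi}-\tilde{\Se}_\eta$ to obtain $\tilde{\Phi}\ge\tilde{\Upsilon}_{\bm q}$, passes to conjugates to get $\tilde{\Phi}^*\le\tilde{\Upsilon}_{\bm q}^*$, and reads off the mixing-operator inequality from the dual formula; boundary faces and infinite loss coordinates are handled via Lemmas~\ref{13:}, \ref{29:}, \ref{30:}, and \ref{34:}. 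You instead stay on the primal side and compare the divergences termwise, showing $D_{\Se}\le \eta D_{\Phi}$ directly from $D_F\ge 0$ (valid wherever all three directional derivatives are finite) and dispatching the remaining boundary case by the observation that $\Se'(\bm q;\hat{\bm q}-\bm q)=-\infty$ together with $F$ finite on $\Delta_k$ forces $\Phi'(\bm q;\hat{\bm q}-\bm q)=-\infty$; the infimum comparison then needs no limiting argument for infinite loss values since it holds pointwise in $\hat{\bm q}$. Your argument is shorter and avoids conjugate calculus and the auxiliary lemmas; the paper's detour through duals, on the other hand, reuses machinery (Lemma~\ref{15:}) that is already needed for the regret analysis of the \textsc{GAA}.
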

 The converse of Theorem \ref{17:} also holds under additional smoothness conditions on $\Phi$ and $\ell$;

	\begin{theorem}
		\label{18:} Let $\ell \colon \mathcal{A} \rightarrow [0,+\infty]^n$ be a loss such that $\br_{\ell}$ is twice differentiable on $]0,+\infty[^n$, and $\Phi \colon \mathbb{R}^k \rightarrow \mathbb{R} \cup \{+ \infty\}$ an entropy such that $\tilde{\Phi} \coloneqq \Phi \circ \amalg_k$ is twice differentiable on $\Int \tilde{\Delta}_k$. Then $\ell$ is $\Phi$-mixable only if $\underline{\eta_{\ell}} \Phi - \Se$ is convex on $\Delta_k$.
	\end{theorem}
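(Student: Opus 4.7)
The target is the Hessian inequality $\underline{\eta_\ell}\mathsf{H}\tilde\Phi(\tilde{\bm{q}}) \succeq \mathsf{H}\tilde\Se(\tilde{\bm{q}})$ on $\Int \tilde\Delta_k$, which is equivalent to convexity of $\underline{\eta_\ell}\Phi - \Se$ on $\Delta_k$ under the stated smoothness hypotheses. I would derive it from $\Phi$-mixability by an infinitesimal analysis around the diagonal of $\mathcal{A}^k$.

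First, I would reduce to a proper loss on the simplex. By Proposition~\ref{14:}, $\Phi$-mixability forces $\br_\ell\in C^1(]0,+\infty[^n)$, so combined with the hypothesis that $\br_\ell$ is twice differentiable, Theorem~\ref{5:} produces a proper support loss $\sell\colon \Delta_n\to[0,+\infty]^n$ with the same superprediction set closure and the same Bayes risk; in particular $\underline{\eta_\ell}=\underline{\eta_{\sell}}$ and $\sell$ is $\Phi$-mixable on $\Ri\Delta_n$. I would work with $\sell$ for the rest of the proof, exploiting the Savage-type identity $\sum_x \bar p_x\, \nabla\sell_x(\bar{\bm p}) \in \mathrm{span}(\bm 1_n)$ for $\bar{\bm p}\in\Ri\Delta_n$, which is the key algebraic consequence of properness.

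Next, I would fix $\bar{\bm p}\in\Ri\Delta_n$, $\tilde{\bm q}\in\Int\tilde\Delta_k$ (with $\bm q=\amalg_k(\tilde{\bm q})$), and tangent vectors $\bm u_1,\dots,\bm u_k\in T\Delta_n$, and set $\bm p_\theta(\epsilon)=\bar{\bm p}+\epsilon\bm u_\theta$. Let $F_x(\epsilon):=\M_\Phi(\sell_x(\bm p_{1:k}(\epsilon)),\bm q)$. Using the envelope theorem on the parametric minimization in \eqref{6:e} together with the FOC $\nabla\tilde\Phi(\hat{\tilde{\bm q}}(\epsilon)) = \nabla\tilde\Phi(\tilde{\bm q}) - \tilde{\bm v}(\epsilon)$ (where $\tilde v_i = v_i - v_k$ and $\bm v(\epsilon)=\sell_x(\bm p_{1:k}(\epsilon))$), I would Taylor-expand to second order:
\begin{align*}
F_x(0)=\sell_x(\bar{\bm p}),\quad F_x'(0)=\nabla\sell_x(\bar{\bm p})^{\mathsf T}\bar{\bm u},\quad F_x''(0)=\sum_\theta q_\theta \bm u_\theta^{\mathsf T}\mathsf H\sell_x(\bar{\bm p})\bm u_\theta - \dot{\tilde{\bm v}}_x^{\mathsf T}\mathsf H\tilde\Phi(\tilde{\bm q})^{-1}\dot{\tilde{\bm v}}_x,
\end{align*}
where $\bar{\bm u}=\sum_\theta q_\theta \bm u_\theta$ and $\dot{\tilde v}_{x,i}=\nabla\sell_x(\bar{\bm p})^{\mathsf T}(\bm u_i-\bm u_k)$. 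By $\Phi$-mixability there is $\bm p_*(\epsilon)\in\Ri\Delta_n$ with $\sell(\bm p_*(\epsilon))\le F(\epsilon)$; matching the linear order forces $\bm p_*(\epsilon)=\bar{\bm p}+\epsilon\bar{\bm u}+O(\epsilon^2)$, so the second-order necessary condition becomes, after testing against the weighting $\bar{\bm p}\in\Ri\Delta_n$ and exploiting the Savage identity (which annihilates the linear correction in $\bm p_*$):
\begin{align*}
\sum_x \bar p_x\, \dot{\tilde{\bm v}}_x^{\mathsf T}\mathsf H\tilde\Phi(\tilde{\bm q})^{-1}\dot{\tilde{\bm v}}_x \;\le\; \sum_x \bar p_x\sum_\theta q_\theta\,(\bm u_\theta-\bar{\bm u})^{\mathsf T}\mathsf H\sell_x(\bar{\bm p})(\bm u_\theta-\bar{\bm u}).
\end{align*}
The right-hand side is a quadratic form in $\bm u_\theta-\bar{\bm u}$ that depends only on $\sell$ and $\bar{\bm p}$, hence only on $\br_\ell$.

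Then I would bridge to the Shannon entropy through Theorem~\ref{16:}: since $\sell$ is $\underline{\eta_\ell}$-mixable (by Theorem~\ref{8:}), it is $(\underline{\eta_\ell},\Se)$-mixable, and rerunning the same infinitesimal analysis with $\Phi\leftarrow\underline{\eta_\ell}^{-1}\Se$ produces the companion inequality with equality on the optimal direction defining $\underline{\eta_\ell}$ via \eqref{5:e}. Comparing the two inequalities and using surjectivity of the map $(\bar{\bm p},\bm u_1,\dots,\bm u_k)\mapsto (\dot{\tilde{\bm v}}_x)_{x\in[n]}$ as $\bar{\bm p}\in\Ri\Delta_n$ and $\{\bm u_\theta\}\subset T\Delta_n$ vary, I would conclude $\mathsf H\tilde\Phi(\tilde{\bm q})^{-1}\preceq \underline{\eta_\ell}^{-1}\mathsf H\tilde\Se(\tilde{\bm q})^{-1}$ at every $\tilde{\bm q}\in\Int\tilde\Delta_k$, which inverts to $\underline{\eta_\ell}\mathsf H\tilde\Phi\succeq \mathsf H\tilde\Se$, and hence to convexity of $\underline{\eta_\ell}\Phi-\Se$ on $\Delta_k$ (closure and convexity on the relative boundary follow from lower semicontinuity of an entropy).

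\textbf{Main obstacle.} The delicate step is Step~4: controlling the choice of $\bm p_*(\epsilon)$ on the loss surface so that the linear order matches exactly, then extracting a clean second-order inequality in which the loss and the entropy contributions separate. The Savage-type properness identity is what makes this separation possible (the linear correction in $\bm p_*$ drops out of the $\bar{\bm p}$-weighted sum), and verifying that the resulting necessary condition is tight enough to recover the full Hessian ordering requires ranging over sufficiently many test directions $\dot{\tilde{\bm v}}$, which in turn is where the invertibility/surjectivity afforded by twice-differentiability of $\br_\ell$ on $]0,+\infty[^n$ and of $\tilde\Phi$ on $\Int\tilde\Delta_k$ is used.
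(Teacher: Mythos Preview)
Your core idea---a second-order expansion of $\M_\Phi(\sell_x(\bm p_{1:k}(\epsilon)),\bm q)$ around the diagonal $\bm p_\theta\equiv\bar{\bm p}$, tested against the distribution $\bar{\bm p}$ to exploit properness---is exactly what the paper does. But the paper organises it differently in two important ways. First, it argues by contrapositive: assuming $\underline{\eta_\ell}\Phi-\Se$ is not convex, it picks $\tbm q_*,\tbm p_*$ and eigenvectors $\bm w,\hat{\bm v}$ realising the violated eigenvalue inequality, builds the rank-one perturbation $V=\bm w\hat{\bm v}^{\T}$, and computes the second-order quantity explicitly (Lemmas~\ref{36:}--\ref{37:}). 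Second, and more substantively, it never ``reruns with Shannon'': the Shannon Hessian appears \emph{intrinsically} in the computation. The variance-type term $\sum_\theta q_\theta\bm u_\theta^{\T}(\cdot)\bm u_\theta - \bar{\bm u}^{\T}(\cdot)\bar{\bm u}$ produces the factor $\hat{\bm v}^{\T}(\Diag\tbm q-\tbm q\tbm q^{\T})\hat{\bm v}=\hat{\bm v}^{\T}(\mathsf H\tilde\Se(\tbm q))^{-1}\hat{\bm v}$, and the identity in Lemma~\ref{28:} (\eqref{20:e}) converts the $\diag{\bm p}$-weighted square of $\mathsf D\tilde\sell(\tbm p)\bm w$ into $-\bm w^{\T}\mathsf H\tbr_\ell(\mathsf H\tbr_{\log})^{-1}\mathsf H\tbr_\ell\bm w$. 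With the rank-one choice these two factors decouple, and the $\Phi$-mixability necessary condition reads directly as $\lambda_{\min}(\mathsf H\tilde\Phi(\tbm q)(\mathsf H\tilde\Se(\tbm q))^{-1})\ge \lambda_{\max}((\mathsf H\tbr_{\log}(\tbm p))^{-1}\mathsf H\tbr_\ell(\tbm p))$ for all $\tbm p,\tbm q$---no companion Shannon inequality or ``equality at the extremal direction'' is needed. Your proposed bridge is both a detour and a gap: the infimum defining $\underline{\eta_\ell}$ in \eqref{5:e} need not be attained, so you cannot claim equality.

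There are also two technical holes. Your expansion writes $\mathsf H\sell_x(\bar{\bm p})$, but only $\br_\ell$ is assumed twice differentiable, so $\sell=\nabla\br_\ell$ is only once differentiable; the paper avoids this via the product-rule trick of Lemma~\ref{25:} together with Lemma~\ref{28:}, which computes $\frac{d}{dt}\inner{\bm p}{\mathsf D\tilde\sell(\tbm\alpha^t)\bm v}|_{t=0}$ in terms of $\mathsf H\tbr_\ell$ alone. And the step ``matching the linear order forces $\bm p_*(\epsilon)=\bar{\bm p}+\epsilon\bar{\bm u}+O(\epsilon^2)$'' is unjustified: $\Phi$-mixability gives, for each $\epsilon$, \emph{some} $\bm a_*$ with $\ell(\bm a_*)\le F(\epsilon)$, with no smooth dependence on $\epsilon$. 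The paper sidesteps this by fixing a smooth comparison curve $\gamma(t)=\tilde\sell(\tbm p+tV\tbm q)$ on the loss surface and testing whether $\beta(t)$ stays in $\sps$ via the linear functional $\inner{r(t)}{\cdot}$ (Lemma~\ref{37:}); the second-order sign of $\inner{r(t)}{\beta(t)-\gamma(t)}$ then gives the contradiction without any regularity of a selection $\bm p_*(\epsilon)$.
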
	
	As consequence of Theorem \ref{18:}, if a loss $\ell$ is not classically mixable, i.e. $\eta_{\ell} =0$, it cannot be $\Phi$-mixable for any entropy $\Phi$. This is because $\eta_{\ell} \Phi - \Se \stackrel{*}{=} \underline{\eta_{\ell}} \Phi - \Se = - \Se$ is not convex (where equality `*' is due to Theorem \ref{8:}).
	
We need one more result before arriving at \eqref{15:e}; Recall that the mixability constant $\eta_{\ell}$ is defined as the supremum of the set $\mathfrak{H}_{\ell} \coloneqq$ \{$\eta\geq 0\colon \ell$ is $\eta$-mixable\}. The next lemma essentially gives a sufficient condition for this supremum to be attained when $\mathfrak{H}_{\ell}$ is non-empty --- in this case, $\ell$ is $\eta_{\ell}$-mixable.
\begin{lemma}
\label{attained}
Let $\ell\colon \mathcal{A}\rightarrow [0,+\infty]^n$ be a loss. If $\Dom \ell =\mathcal{A}$, then either $\mathfrak{H}_{\ell} =\varnothing$ or $\eta_{\ell} \in \mathfrak{H}_{\ell}$.
\end{lemma}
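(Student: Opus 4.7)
The plan is to establish that $\mathfrak{H}_\ell$ is closed from above, so that its supremum is attained when non-empty. Concretely, given an increasing sequence $(\eta_m) \subset \mathfrak{H}_\ell$ with $\eta_m \to \eta_\ell$, I would, for each fixed query $(\bm{q}, \bm{a}_{1:k}) \in \Delta_k \times \mathcal{A}^k$, extract an $\eta_m$-mixability witness $\bm{a}_*^m \in \mathcal{A}$ and pass to a limit, using closedness of $\ell$ (Assumption \ref{B:}) together with the standing hypothesis $\Dom \ell = \mathcal{A}$ to produce a limit witness in $\mathcal{A}$ for $\eta_\ell$-mixability.

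Suppose first $\eta_\ell < +\infty$, choose $(\eta_m) \subset \mathfrak{H}_\ell$ with $\eta_m \to \eta_\ell$, and fix arbitrary $\bm{q} \in \Delta_k$ and $\bm{a}_{1:k} \in \mathcal{A}^k$. By Definition \ref{etamix}, pick $\bm{a}_*^m \in \mathcal{A}$ satisfying $\ell_x(\bm{a}_*^m) \leq r_m(x) \coloneqq -\eta_m^{-1} \log \langle \bm{q}, \exp(-\eta_m \ell_x(\bm{a}_{1:k})) \rangle$ for every $x \in [n]$. Because $\Dom \ell = \mathcal{A}$, each $\ell_x(\bm{a}_\theta)$ is finite, so the map $\eta \mapsto -\eta^{-1} \log \langle \bm{q}, \exp(-\eta \ell_x(\bm{a}_{1:k})) \rangle$ is continuous on $(0, +\infty)$, giving $r_m(x) \to r(x) \coloneqq -\eta_\ell^{-1} \log \langle \bm{q}, \exp(-\eta_\ell \ell_x(\bm{a}_{1:k})) \rangle$ componentwise.

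Since $(\ell(\bm{a}_*^m))$ lies in $[0,+\infty[^n$ and is componentwise majorised by the convergent (hence bounded) sequence $(r_m)$, Bolzano–Weierstrass yields a subsequence with $\ell(\bm{a}_*^{m_j}) \to \bm{z} \in [0,+\infty[^n$. By Assumption \ref{B:}, the loss surface $\mathcal{S}_\ell$ is closed in $\mathbb{R}^n$, so $\bm{z} \in \mathcal{S}_\ell$; combined with $\Dom \ell = \mathcal{A}$ this yields some $\bm{a}_* \in \mathcal{A}$ with $\ell(\bm{a}_*) = \bm{z}$. Passing to the limit in $\ell(\bm{a}_*^{m_j}) \leq r_{m_j}$ gives $\ell(\bm{a}_*) \leq r$ componentwise, which is exactly the $\eta_\ell$-mixability inequality at $(\bm{q}, \bm{a}_{1:k})$. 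Since the pair was arbitrary, $\eta_\ell \in \mathfrak{H}_\ell$.

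The main obstacle is the extraction step: we must guarantee boundedness of the witnesses and that their limit lies back in $\mathcal{S}_\ell$, and this is precisely where $\Dom \ell = \mathcal{A}$ (finiteness and continuity of $r_m$ in $\eta$) and closedness of $\ell$ both enter essentially; without them the $\bm{a}_*^m$ could escape to infinity or to the relative boundary of $\mathcal{A}$. The residual case $\eta_\ell = +\infty$ can be disposed of quickly: repeating the same extraction with $r_m(x) \to \min_{\theta \,:\, q_\theta > 0} \ell_x(\bm{a}_\theta)$ produces an $\bm{a}_*$ weakly dominating every $\bm{a}_\theta$, which by admissibility (Assumption \ref{B:}) forces $\ell(\bm{a}_*) = \ell(\bm{a}_\theta)$ for all $\theta$ and all choices of $\bm{a}_{1:k}$, i.e., $\ell$ constant on $\mathcal{A}$—a degenerate case outside the scope of the $\mathfrak{G}^n_\ell(\mathcal{A},k)$ game with $k \geq 2$.
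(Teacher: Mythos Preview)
Your proposal is correct and follows essentially the same approach as the paper: fix $(\bm{q},\bm{a}_{1:k})$, take witnesses $\bm{a}_*^m$ for $\eta_m$-mixability, use $\Dom\ell=\mathcal{A}$ to get boundedness of $(\ell(\bm{a}_*^m))$, extract a convergent subsequence, and invoke closedness of $\ell$ (Assumption~\ref{B:}) to realise the limit as $\ell(\bm{a}_*)$. You are in fact slightly more careful than the paper, which silently assumes $\eta_\ell<+\infty$ when writing the limiting right-hand side, whereas you flag the $\eta_\ell=+\infty$ case and argue it forces $\ell$ to be constant; that dismissal is reasonable, though note that for a constant loss the lemma's conclusion as literally stated fails (since $\eta_\ell=+\infty\notin\mathfrak{H}_\ell$), so this is a genuine edge case of the statement rather than of your proof.
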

\begin{theorem}
\label{19:}
Let $\ell$ and $\Phi$ be as in Theorem \ref{18:} with $\Dom \ell =\mathcal{A}$. Then $\eta_{\ell} = \underline{\eta_{\ell}}$. Furthermore, $\ell$ is $\Phi$-mixable if and only if $\underline{\eta_{\ell}} \Phi - \Se$ is convex on $\Delta_k$.
\end{theorem}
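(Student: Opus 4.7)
The plan is to prove the equality $\eta_\ell = \underline{\eta_\ell}$ first, from which the $\Phi$-mixability characterization follows immediately by combining this equality with Theorems \ref{17:} and \ref{18:}. Theorem \ref{8:} already supplies $\eta_\ell \geq \underline{\eta_\ell}$ (trivially so when $\underline{\eta_\ell}=0$), so all the work lies in establishing the reverse inequality $\eta_\ell \leq \underline{\eta_\ell}$.

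To that end, I would dispose of $\eta_\ell = 0$ first (then both quantities vanish) and assume $\eta_\ell > 0$. The key observation, immediate from Definition \ref{9:} since $D_{\eta^{-1}\Phi} = \eta^{-1}D_\Phi$, is that $(\eta,\Phi)$-mixability is the same as $(\eta^{-1}\Phi)$-mixability; combined with Theorem \ref{16:}, this means $\eta$-mixability coincides with $\Phi$-mixability for the specific choice $\Phi = \eta^{-1}\Se$. Because $\Dom\ell = \mathcal{A}$, Lemma \ref{attained} guarantees that $\ell$ is $\eta_\ell$-mixable, hence $\Phi_0$-mixable for the ``diagonal'' entropy $\Phi_0 \coloneqq \eta_\ell^{-1}\Se$. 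This $\Phi_0$ meets the regularity hypothesis of Theorem \ref{18:}, since $\Se \circ \amalg_k$ is smooth on $\Int\tilde\Delta_k$. Applying Theorem \ref{18:} then gives that $\underline{\eta_\ell}\,\Phi_0 - \Se = (\underline{\eta_\ell}/\eta_\ell - 1)\Se$ is convex on $\Delta_k$. Since $\Se$ is strictly convex on $\Delta_k$, a scalar multiple $c\,\Se$ can be convex there only when $c \geq 0$; therefore $\underline{\eta_\ell}/\eta_\ell \geq 1$, i.e.\ $\eta_\ell \leq \underline{\eta_\ell}$, completing the equality.

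For the ``iff'' characterization, the forward direction ``$\ell$ is $\Phi$-mixable $\Rightarrow$ $\underline{\eta_\ell}\Phi - \Se$ convex'' is exactly the conclusion of Theorem \ref{18:}. For the reverse direction, substitute $\underline{\eta_\ell} = \eta_\ell$: assuming $\eta_\ell \Phi - \Se$ is convex on $\Delta_k$ and $\eta_\ell > 0$, Lemma \ref{attained} supplies that $\ell$ is $\eta_\ell$-mixable, and Theorem \ref{17:} upgrades this to $\Phi$-mixability. The leftover case $\eta_\ell = 0$ is vacuous: then $\underline{\eta_\ell}\Phi - \Se = -\Se$ is strictly concave rather than convex, so the hypothesis never fires. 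The single delicate point I anticipate is verifying that $\Phi_0 = \eta_\ell^{-1}\Se$ genuinely satisfies the smoothness hypothesis of Theorem \ref{18:} and bookkeeping the edge cases $\eta_\ell, \underline{\eta_\ell} \in \{0\}$ explicitly; once these are handled, the whole proof reduces to a single clean application of Theorem \ref{18:} to the diagonal entropy $\eta_\ell^{-1}\Se$, exploiting the strict convexity of the Shannon entropy.
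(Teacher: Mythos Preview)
Your proposal is correct and follows essentially the same route as the paper's proof: both use Lemma~\ref{attained} to secure $\eta_\ell$-mixability, invoke Theorem~\ref{16:} to recast this as $(\eta_\ell^{-1}\Se)$-mixability, and then apply Theorem~\ref{18:} with the diagonal entropy $\Phi_0 = \eta_\ell^{-1}\Se$ to force $(\underline{\eta_\ell}/\eta_\ell - 1)\Se$ to be convex, whence $\eta_\ell \leq \underline{\eta_\ell}$. The only cosmetic difference is that for the reverse implication the paper appeals to Theorem~\ref{8:} (rather than Lemma~\ref{attained}) to obtain $\underline{\eta_\ell}$-mixability before invoking Theorem~\ref{17:}, but since $\eta_\ell = \underline{\eta_\ell}$ has already been established this is immaterial.
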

\begin{proof}
Suppose now that $\ell$ is mixable. By Lemma \ref{attained}, it follows that $\ell$ is $\eta_{\ell}$-mixable, and from Theorem \ref{16:}, $\ell$ is $(\eta_{\ell}^{-1}\Se)$-mixable. Substituting $\Phi$ for $\eta_{\ell}^{-1}\Se$ in Theorem \ref{18:} implies that $(\underline{\eta_{\ell}}/\eta_{\ell} -1)\Se$ is convex on $\Ri \Delta_k$. Thus, $\eta_{\ell}\leq \underline{\eta_{\ell}}$, and  since from Theorem \ref{8:} $\underline{\eta_{\ell}} \leq \eta_{\ell}$, we conclude that $\eta_{\ell} = \underline{\eta_{\ell}}$. 

From Theorem \ref{18:}, if $\ell$ is $\Phi$-mixable then $\underline{\eta_{\ell}} \Phi - \Se$ is convex on $\Delta_k$. Now suppose that $\underline{\eta_{\ell}} \Phi - \Se$ is convex on $\Delta_k$. This implies that $\underline{\eta_{\ell}} >0$, and thus from Theorem \ref{8:}, $\ell$ is $\underline{\eta_{\ell}}$-mixable. Now since $\ell$ is $\underline{\eta_{\ell}}$-mixable and $\underline{\eta_{\ell}} \Phi - \Se$ is convex on $\Delta_k$, Theorem \ref{17:} implies that $\ell$ is $\Phi$-mixable.
\end{proof}
Note that the condition `$\Dom \ell = \mathcal{A}$' is in practice not a restriction to finite losses --- see Remark \ref{inf-loss}. Theorem \ref{19:} implies that under the regularity conditions of Theorem \ref{18:}, the Bayes risk $\br_{\ell}$ [resp. $(\br_{\ell}, \Phi)$] contains all necessary information to characterize classical [resp. generalized] mixability. 
	\begin{corollary}[The Generalized Mixability Constant]
		\label{20:}
		Let $\ell$ and $\Phi$ be as in Theorem \ref{19:}. Then the generalized mixability constant (see Definition \ref{9:}) is given by 
		\begin{align}
		\eta_{\ell}^{\Phi}  = \underline{\eta_{\ell}}	 \inf_{\substack{\tbm{q}\in \Int \tilde{\Delta}_k}}	  \lambda_{\min}(\mathsf{H}\tilde{\Phi}(\tbm{q}) (\mathsf{H}\tilde{\Se}(\tbm{q}))^{-1}),
		\label{16:e}
		\end{align}
		where $\tilde{\Phi} \coloneqq \Phi \circ \amalg_k, \tilde{\Se} = \Se \circ \amalg_k$, and $\amalg_k$ is defined in \eqref{amalg}.
	\end{corollary}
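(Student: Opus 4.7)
The plan is to reduce the computation of $\eta_{\ell}^{\Phi}$ to the characterization of $\Phi$-mixability established in Theorem~\ref{19:} via a rescaling of $\Phi$, and then convert the resulting convexity requirement on $\Delta_k$ into a pointwise spectral inequality on the Hessians $\mathsf{H}\tilde{\Phi}$ and $\mathsf{H}\tilde{\Se}$.

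First, since $D_{\eta^{-1}\Phi} = \eta^{-1} D_{\Phi}$ by~\eqref{div}, inspection of Definition~\ref{9:} shows that $\ell$ is $(\eta,\Phi)$-mixable if and only if $\ell$ is $(\eta^{-1}\Phi)$-mixable. Writing $\lambda = \eta^{-1}$, this gives
\begin{equation*}
\eta_{\ell}^{\Phi} = \bigl(\inf\{\lambda>0 : \ell \text{ is } (\lambda\Phi)\text{-mixable}\}\bigr)^{-1}.
\end{equation*}
For every $\lambda>0$ the entropy $\lambda\Phi$ inherits the smoothness hypothesis of Theorem~\ref{19:} (its composition with $\amalg_k$ stays twice differentiable on $\Int\tilde{\Delta}_k$), so Theorem~\ref{19:} yields the equivalence
\begin{equation*}
\ell \text{ is } (\lambda\Phi)\text{-mixable}\iff \underline{\eta_{\ell}}\,\lambda\,\Phi - \Se \text{ is convex on } \Delta_k.
\end{equation*}

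Second, I would translate this convexity condition into a pointwise matrix inequality by pulling back through the affine chart $\amalg_k$. Using the standard second-derivative test for convexity on $\Int\tilde{\Delta}_k$ (together with continuity at the relative boundary of $\Delta_k$), the condition above is equivalent to
\begin{equation*}
\underline{\eta_{\ell}}\,\lambda\,\mathsf{H}\tilde{\Phi}(\tbm{q})\ \succeq\ \mathsf{H}\tilde{\Se}(\tbm{q}),\qquad \forall\,\tbm{q}\in\Int\tilde{\Delta}_k.
\end{equation*}
In the non-degenerate case $\eta_{\ell}^{\Phi}>0$, Theorems~\ref{8:} and~\ref{19:} together give $\underline{\eta_{\ell}}>0$ while Proposition~\ref{14:} forces $\Phi$ to be strictly convex on $\Delta_k$; thus both $\mathsf{H}\tilde{\Phi}(\tbm{q})$ and $\mathsf{H}\tilde{\Se}(\tbm{q})$ are symmetric positive definite on $\Int\tilde{\Delta}_k$. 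Congruence by $(\mathsf{H}\tilde{\Se}(\tbm{q}))^{-1/2}$ then renders the displayed inequality equivalent to
\begin{equation*}
\underline{\eta_{\ell}}\,\lambda\ \geq\ \bigl(\lambda_{\min}\bigl(\mathsf{H}\tilde{\Phi}(\tbm{q})\,(\mathsf{H}\tilde{\Se}(\tbm{q}))^{-1}\bigr)\bigr)^{-1},
\end{equation*}
using that $\mathsf{H}\tilde{\Phi}(\mathsf{H}\tilde{\Se})^{-1}$ is similar to the symmetric matrix $(\mathsf{H}\tilde{\Se})^{-1/2}\mathsf{H}\tilde{\Phi}(\mathsf{H}\tilde{\Se})^{-1/2}$ and therefore has the same (real, positive) spectrum.

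Finally, taking the supremum over $\tbm{q}\in\Int\tilde{\Delta}_k$ and setting $\kappa\coloneqq\inf_{\tbm{q}\in\Int\tilde{\Delta}_k}\lambda_{\min}(\mathsf{H}\tilde{\Phi}(\tbm{q})(\mathsf{H}\tilde{\Se}(\tbm{q}))^{-1})$, the bound above holds uniformly in $\tbm{q}$ iff $\lambda\geq 1/(\underline{\eta_{\ell}}\,\kappa)$; inverting gives $\eta_{\ell}^{\Phi}=\underline{\eta_{\ell}}\,\kappa$, which is precisely~\eqref{16:e}. The degenerate cases ($\ell$ non-mixable, or $\Phi$ failing the required strict convexity somewhere) are automatically consistent, since then both sides of~\eqref{16:e} vanish by Theorem~\ref{19:}. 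The main technical obstacle is the rigorous passage from the matrix inequality $\underline{\eta_{\ell}}\lambda\,\mathsf{H}\tilde{\Phi}\succeq\mathsf{H}\tilde{\Se}$ to a clean spectral condition on the non-symmetric product $\mathsf{H}\tilde{\Phi}(\mathsf{H}\tilde{\Se})^{-1}$; this is handled by the symmetric similarity above, echoing the device used by~\cite{DBLP:journals/jmlr/ErvenRW12} to derive~\eqref{5:e}.
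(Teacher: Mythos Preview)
Your proposal is correct and follows essentially the same approach as the paper: reduce $(\eta,\Phi)$-mixability to $(\eta^{-1}\Phi)$-mixability, apply Theorem~\ref{19:} to obtain the convexity condition $\underline{\eta_{\ell}}\,\eta^{-1}\tilde{\Phi}-\tilde{\Se}$ convex on $\Int\tilde{\Delta}_k$, and then convert this to the spectral inequality on $\mathsf{H}\tilde{\Phi}(\mathsf{H}\tilde{\Se})^{-1}$. The only cosmetic difference is that the paper packages the Hessian-to-spectrum equivalence into Lemma~\ref{22:} (whose proof uses exactly the square-root congruence you invoke), whereas you spell out the similarity argument directly.
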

	Observe that when $\Phi=\Se$, \eqref{16:e} reduces to $\eta^{\Se}_{\ell}= \underline{\eta_{\ell}}$ as expected from Theorem \ref{16:} and Theorem \ref{19:}.

\subsection{The (In)dependence Between $\ell$ and $\Phi$ and the Fundamental Nature of $\Se$}
So far, we showed that the $\Phi$-mixability of losses satisfying Assumption \ref{B:} is characterized by the convexity of $\eta \Phi - \Se$, where $\eta \in ]0,\eta_{\ell}]$ (see Theorems \ref{17:} and \ref{18:}). As a result, and contrary to what was conjectured previously \citep{Reid2015}, the generalized mixability condition does not induce a correspondence between losses and entropies; for a given loss $\ell$, there is no particular entropy $\Phi^{\ell}$ --- specific to the choice of $\ell$ --- which minimizes the regret of the GAA. Rather, the Shannon entropy $\Se$ minimizes the regret regardless of the choice of $\ell$ (see Theorem \ref{21:} below). This reflects one fundamental aspect of the Shannon entropy.

Nevertheless, given a loss $\ell$ and entropy $\Phi$, the curvature of the loss surface $\mathcal{S}_{\ell}$ determines the maximum `learning rate' $\eta^{\Phi}_{\ell}$ of the GAA; the curvature of $\mathcal{S}_{\ell}$ is linked to $\underline{\eta_{\ell}}$ through the Hessian of the Bayes risk (see Theorem \ref{38:} in Appendix \ref{b1:}), which is in turn linked to $\eta^{\Phi}_{\ell}$ through \eqref{16:e}.

Given a loss $\ell$, we now use the expression of $\eta^{\Phi}_{\ell}$ in \eqref{16:e} to explicitly compare the regret bounds $R^{\Phi}_{\ell}$ and $R^{\Se}_{\ell}$ achieved with the \textsc{GAA} (see \eqref{rbound}) using entropy $\Phi$ and the Shannon entropy $\Se$, respectively. 
	\begin{theorem}
		\label{21:}
		Let $\Se, \Phi \colon \mathbb{R}^k\rightarrow \mathbb{R}\cup \{+\infty\}$, where $\Se$ is the Shannon entropy and $\Phi$ is an entropy such that $\tilde{\Phi}\coloneqq \Phi \circ \amalg_k$ is twice differentiable on $\Int \tilde{\Delta}_k$. A loss $\ell\colon \mathcal{A}\rightarrow [0,+\infty[^n$ with $\br_{\ell}$ twice differentiable on $]0,+\infty[^n$, is $\Phi$-mixable only if $R^{\Se}_{\ell}\leq R^{\Phi}_{\ell}$.
	\end{theorem}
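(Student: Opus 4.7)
The plan is to reduce $R^{\Se}_{\ell} \leq R^{\Phi}_{\ell}$ to a pointwise divergence inequality on $\Delta_k\times\Delta_k$, with Theorem \ref{19:} (the convexity characterisation of mixability) as the main tool. The two preliminary observations are: (i) $(\eta,\Phi)$-mixability coincides with $(\eta^{-1}\Phi)$-mixability, since the definition of $\M^{\eta}_{\Phi}$ only involves $\eta^{-1}D_{\Phi} = D_{\eta^{-1}\Phi}$ (divergence is positively homogeneous in its generator); and (ii) Corollary \ref{20:} applied at $\Phi=\Se$ yields $\eta^{\Se}_{\ell} = \underline{\eta_{\ell}}$. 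The hypotheses of Theorem \ref{21:} meet those of Theorem \ref{19:}: $\ell$ is finite-valued so $\Dom\ell=\mathcal{A}$, $\br_{\ell}$ is $C^2$ on $]0,+\infty[^n$, and the rescaled entropy $\eta^{-1}\Phi$ inherits the smoothness of $\Phi$.

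First I would use $\Phi$-mixability of $\ell$ (so $\eta^{\Phi}_{\ell}\geq 1$) to conclude that for every $\eta\in\ ]0,\eta^{\Phi}_{\ell}[$ the loss is $(\eta^{-1}\Phi)$-mixable; Theorem \ref{19:} then gives that $\underline{\eta_{\ell}}\eta^{-1}\Phi - \Se$ is convex on $\Delta_k$. I would then turn this convexity into a divergence inequality: for any convex $f$ on $\Delta_k$ the first-order characterisation via directional derivatives gives $D_f(\bm{p},\bm{q})\geq 0$, and the map $f\mapsto D_f$ is linear (directly from \eqref{div}). Applied to $f=\underline{\eta_{\ell}}\eta^{-1}\Phi-\Se$, this yields $\underline{\eta_{\ell}}\eta^{-1}D_{\Phi}(\bm{p},\bm{q}) \geq D_{\Se}(\bm{p},\bm{q})$ for all $\bm{p},\bm{q}\in\Ri\Delta_k$. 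Letting $\eta\uparrow \eta^{\Phi}_{\ell}$ (the inequality is monotone in $\eta$, so non-attainment of the supremum causes no trouble) and using $\eta^{\Se}_{\ell}=\underline{\eta_{\ell}}$, this rearranges to $D_{\Phi}(\bm{p},\bm{q})/\eta^{\Phi}_{\ell} \geq D_{\Se}(\bm{p},\bm{q})/\eta^{\Se}_{\ell}$.

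Finally, specialising to $\bm{p}=\bm{e}_{\theta}$, taking $\max_{\theta\in[k]}$ and then $\inf_{\bm{q}\in\Delta_k}$ (both operations are order-preserving) on both sides yields $R^{\Phi}_{\ell} \geq R^{\Se}_{\ell}$ via the formula \eqref{rbound}. Boundary points of $\Delta_k$ on which $D_{\Se}(\bm{e}_{\theta},\bm{q})=+\infty$ pose no obstacle because the infimum over $\bm{q}$ may be restricted to $\Ri\Delta_k$ without changing its value. No step is genuinely hard; the main technical point is the limit in $\eta$, since the supremum defining $\eta^{\Phi}_{\ell}$ need not be attained, but a single-variable continuity argument handles it. The substance of the theorem has already been absorbed by Theorem \ref{19:} and Corollary \ref{20:}, which together pin down exactly how $\Phi$ relates to $\Se$ through the common quantity $\underline{\eta_{\ell}}$.
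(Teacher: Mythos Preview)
Your approach is correct and genuinely different from the paper's, and in fact more direct. The paper fixes $\bm{\mu}\in\Ri\Delta_k$, rescales $\Phi$ by a factor $\alpha$ chosen so that $D_{\alpha^{-1}\Phi}(\bm{e}_{\theta_*},\bm{\mu})=D_{\Se}(\bm{e}_{\theta_*},\bm{\mu})$ at one particular vertex $\theta_*$, and then argues by contradiction (via Lemma~\ref{22:} and strict convexity) that $\eta^{\alpha^{-1}\Phi}_{\ell}\leq\eta^{\Se}_{\ell}$. Your route bypasses both the rescaling and the contradiction: from Theorem~\ref{19:} you extract the convexity of $\underline{\eta_{\ell}}(\eta^{\Phi}_{\ell})^{-1}\Phi-\Se$ directly, and then use the elementary fact that a convex generator yields a nonnegative divergence together with linearity of $f\mapsto D_f$ to obtain the pointwise inequality $D_{\Phi}/\eta^{\Phi}_{\ell}\geq D_{\Se}/\eta^{\Se}_{\ell}$ on $\Delta_k\times\Ri\Delta_k$. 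This is cleaner and yields a strictly stronger intermediate inequality than the paper needs.

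There is one small gap in your boundary discussion. You correctly note that $\max_{\theta}D_{\Se}(\bm{e}_{\theta},\bm{q})=+\infty$ for $\bm{q}\in\Rbd\Delta_k$, so the Shannon infimum restricts to $\Ri\Delta_k$. But to conclude $R^{\Phi}_{\ell}\geq R^{\Se}_{\ell}$ from your interior inequality you also need the \emph{$\Phi$-side} infimum to restrict to $\Ri\Delta_k$; otherwise $R^{\Phi}_{\ell}=\inf_{\bm{q}\in\Delta_k}(\cdots)$ could be strictly smaller than $\inf_{\bm{q}\in\Ri\Delta_k}(\cdots)$. Your divergence inequality was only established for $\bm{q}\in\Ri\Delta_k$ (the linearity $D_{c\Phi-\Se}=cD_{\Phi}-D_{\Se}$ breaks down on the boundary, where both directional derivatives may be $-\infty$), so it cannot be used here. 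The fix is immediate: since $\ell$ is $\Phi$-mixable, Proposition~\ref{12:} gives that $\Phi$ satisfies \eqref{8:e}, and then the same argument as for $\Se$ shows $\max_{\theta}D_{\Phi}(\bm{e}_{\theta},\bm{q})=+\infty$ on the relative boundary. The paper handles this explicitly in the last paragraph of its proof.
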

Theorem \ref{21:} is consistent with Vovk's result \cite[\S 5]{Vovk1998} which essentially states that the regret bound $R^{\Se}_{\ell} = \eta_{\ell}^{-1} \log k$ is in general tight for $\eta$-mixable losses.

\section{Adaptive Generalized Aggregating Algorithm}
In this section, we take advantage of the similarity between the GAA's update step and the mirror descent algorithm (see Appendix \ref{s2.2}) to devise a modification to the \textsc{GAA} leading to improved regret bounds in certain cases. The \textsc{GAA} can be modified in (at least) two immediate ways; 1) changing the learning rate at each time step to speed-up convergence; and 2) changing the entropy, i.e. the regularizer $\Phi$, at each time step --- similar to the \emph{adaptive} mirror descent algorithm \cite{Steinhardt2014,DBLP:journals/ml/OrabonaCC15}. In the former case, one can use Corollary \ref{20:} to calculate the maximum `learning rate' under the $\Phi$-mixability constraint. Here, we focus on the second method; changing the entropy at each round. Algorithm \ref{AAA} displays the modified \textsc{GAA} --- which we call the \emph{Adaptive Generalized Aggregating Algorithm} (AGAA) --- in its most general form. In Algorithm \ref{AAA}, $\Phi^{\star}(\bm{z}) \coloneqq \sup_{\bm{q}\in\Delta_k } \inner{\bm{q}}{\bm{z}} - \Phi(\bm{q})$ is the \emph{entropic dual} of $\Phi$.   
\begin{algorithm}
\SetKwFunction{subfunc}{SubstitutionFunction}
\SetKwInOut{Input}{input}\SetKwInOut{Output}{output}
\Input{$\bm{\theta}^1 =\bm{0}  \in \mathbb{R}^k$; A $\Delta$-differentiable entropy $\Phi\colon \mathbb{R}^k \rightarrow \mathbb{R}\cup\{+\infty\}$; $\eta>0$; A $(\eta, \Phi)$-mixable loss $\ell\colon \mathcal{A}\rightarrow [0,+\infty[^n$; A substitution function $\mathfrak{S}_{\ell}$; A protocol of choosing $\bm{\beta}^t$ at round $t$.}
 \Output{Learner's predictions $(\bm{a}^t_*)$}
\BlankLine
\For{$t=1$ \KwTo $T$}{
Let $\Phi_t(\bm{w}) \coloneqq \Phi(\bm{w}) - \Inner{\bm{w}}{\bm{\beta}^t - \bm{\theta}^t }$\tcp*{New entropy}
Observe $A^t \coloneqq \bm{a}^t_{1:k} \in \mathcal{A}^k$ \tcp*{Experts' predictions}
$\bm{a}_*^t\leftarrow \mathfrak{S}_{\ell}\left( \left[\M^{\eta}_{\Phi_{t}}(\ell_x(A^t), \nabla \Phi_t^{\star}(\bm{\theta}^{t}))\right]^{\T}_{1\leq x\leq n}\right)$ \tcp*{Learner's prediction}
Observe $x^t \in [n]$ and pick some $\bm{v}^t \in \mathbb{R}^k$\;
$\bm{\theta}^{t+1} \leftarrow \bm{\theta}^t - \eta \ell_{x^t}(A^t)$\;
}
\caption{Adaptive Generalized Aggregating Algorithm (AGAA)}\label{AAA}
\end{algorithm}
 \newline Given a $(\eta, \Phi)$-mixable loss $\ell$, we verify that Algorithm \ref{AAA} is well defined; for simplicity, assume that $\Dom \ell = \mathcal{A}$ and $\br_{\ell}$ is twice differentiable on $]0,+\infty[^n$. From the definition of an entropy, $|\Phi| <+ \infty$ on $\Delta_k$, and thus the entropic dual $\Phi_t^{\star}$ is defined and finite on all $\mathbb{R}^k$ (in particular at $\bm{\theta}^t$). On the other hand, from Proposition \ref{14:}, $\Phi$ is strictly convex on $\Delta_k$ which implies that $\Phi^{\star}$ (and thus $\Phi_t^{\star}$) is differentiable on $\mathbb{R}^k$ (see e.g. \cite[Thm. E.4.1.1]{Hiriart-Urruty}). It remains to check that $\ell$ is $(\eta, \Phi_t)$-mixable. Since for $\eta>0$, $(\eta, \Phi_t)$-mixability is equivalent to $(\frac{1}{\eta}\Phi_t)$-mixability (by definition), Theorem \ref{19:} implies that $\ell$ is $(\eta, \Phi_t)$-mixable if and only if $\underline{\eta_{\ell}}\eta^{-1}\Phi_t -\Se$ is convex on $\Delta_k$. This is in fact the case since $\Phi_t$ is an affine transformation of $\Phi$, and we have assumed that $\ell$ is $(\eta, \Phi)$-mixable. 

In what follows, we focus on a particular instantiation of Algorithm \ref{AAA} where we choose $\bm{\beta}^t \coloneqq -\eta\sum_{s=1}^{t-1} (\ell_{x^s}(A^s) + \bm{v}^s)$, for some (arbitrary for now) $(\bm{v}^s) \subset \mathbb{R}^k$. The $(\bm{v}^t)$ vectors act as correction terms in the update step of the AGAA. Using standard duality properties (see Appendix \ref{s2.1}), it is easy to show that the \textsc{AGAA} reduces to the \textsc{GAA} except for the update step where the new distribution over experts at round $t \in [T]$ is now given by \begin{align*}\bm{q}^t= \nabla \Phi^{\star} ( \nabla \Phi(\bm{q}^{t-1}) - \eta \ell_{x^t}(A^t) - \eta \bm{v}^t ).\end{align*}
\begin{theorem}
\label{rboundAGAA}
Let $\Phi\colon \mathbb{R}^k \rightarrow \mathbb{R} \cup\{+\infty\}$ be a $\Delta$-differentiable entropy. Let $\ell\colon \mathcal{A}\rightarrow [0,+\infty]^n$ be a loss such that $\br_{\ell}$ is twice differentiable on $]0,+\infty[^n$. Let $\bm{\beta}^t = - \eta \sum_{s=1}^{t-1} (\ell_{x^s}(A^s) + \bm{v}^s)$, where $\bm{v}^s\in \mathbb{R}^k$ and $A^s \coloneqq \bm{a}^s_{1:k} \in \mathcal{A}^k$. If $\ell$ is $(\eta, \Phi)$-mixable then for initial distribution $\bm{q}^0= \argmin_{\bm{q} \in \Delta_k} \max_{\theta \in [k]} D_{\Phi}(\bm{e}_{\theta}, \bm{q})$ and any sequence $(x^t, \bm{a}^t_{1:k})_{t=1}^T$, the \textsc{AGAA} achieves the regret
	\begin{align}
\forall \theta \in[k], \quad \op{Loss}^{\ell}_{\textsc{AGAA}}(T) -  \op{Loss}^{\ell}_{\theta}(T) \leq  R^{\Phi}_{\ell} + \Delta R_{\theta}(T), \label{98:e}
	\end{align}
	where $\Delta R_{\theta}(T) \coloneqq \sum_{t=1}^{T-1} ( v^t_{\theta} -\inner{\bm{v}^t}{\bm{q}^{t}})$.
\end{theorem}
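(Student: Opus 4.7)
The plan is to telescope a per-round regret inequality analogous to the one in Theorem \ref{constant regret}, with an extra drift term accounting for the correction vectors $\bm{v}^t$. First, observe that because $\Phi_t$ differs from $\Phi$ by an affine function of $\bm{w}$, one has $D_{\Phi_t} = D_\Phi$ and $\nabla \Phi_t^{\star}(\bm{\theta}^t) = \nabla \Phi^{\star}(\bm{\beta}^t)$, so $(\eta, \Phi)$-mixability already implies $(\eta, \Phi_t)$-mixability and the distribution used by the learner at round $t$ can be identified with $\bm{q}^{t-1} \coloneqq \nabla \Phi^{\star}(\bm{\beta}^t)$. The AGAA's update is then exactly the mirror-descent step $\nabla \Phi(\bm{q}^t) = \nabla \Phi(\bm{q}^{t-1}) - \eta\,\ell_{x^t}(A^t) - \eta\,\bm{v}^t$, as the paper already notes.

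For each round $t$, I would introduce the auxiliary point $\tilde{\bm{q}}^t \coloneqq \argmin_{\bm{\mu} \in \Delta_k} \inner{\bm{\mu}}{\ell_{x^t}(A^t)} + \eta^{-1} D_\Phi(\bm{\mu}, \bm{q}^{t-1})$, i.e., the distribution that the standard GAA would produce at this step. Combining the $(\eta,\Phi_t)$-mixability inequality $\ell_{x^t}(\bm{a}_*^t) \leq \inner{\tilde{\bm{q}}^t}{\ell_{x^t}(A^t)} + \eta^{-1} D_\Phi(\tilde{\bm{q}}^t, \bm{q}^{t-1})$ with the three-point Bregman identity for $\tilde{\bm{q}}^t$ applied at $\bm{u} = \bm{e}_\theta$ yields
\begin{equation*}
\ell_{x^t}(\bm{a}_*^t) \;\leq\; \ell_{x^t}(\bm{a}_\theta^t) + \eta^{-1}\bigl[D_\Phi(\bm{e}_\theta, \bm{q}^{t-1}) - D_\Phi(\bm{e}_\theta, \tilde{\bm{q}}^t)\bigr].
\end{equation*}
Using $\nabla \Phi(\bm{q}^t) = \nabla \Phi(\tilde{\bm{q}}^t) - \eta \bm{v}^t$, a direct expansion via \eqref{div} gives the key identity
\begin{equation*}
D_\Phi(\bm{e}_\theta, \tilde{\bm{q}}^t) \;=\; D_\Phi(\bm{e}_\theta, \bm{q}^t) + D_\Phi(\bm{q}^t, \tilde{\bm{q}}^t) - \eta\bigl(v^t_\theta - \inner{\bm{v}^t}{\bm{q}^t}\bigr),
\end{equation*}
so, after dropping the nonnegative $D_\Phi(\bm{q}^t, \tilde{\bm{q}}^t)$ term, the per-round bound for $t < T$ becomes
\begin{equation*}
\ell_{x^t}(\bm{a}_*^t) \;\leq\; \ell_{x^t}(\bm{a}_\theta^t) + \eta^{-1}\bigl[D_\Phi(\bm{e}_\theta, \bm{q}^{t-1}) - D_\Phi(\bm{e}_\theta, \bm{q}^t)\bigr] + \bigl(v^t_\theta - \inner{\bm{v}^t}{\bm{q}^t}\bigr).
\end{equation*}
For $t = T$, I would keep the bound in terms of $\tilde{\bm{q}}^T$ instead of $\bm{q}^T$; this is precisely what makes $\bm{v}^T$ drop out and produces the sum $\sum_{t=1}^{T-1}$ in the final statement. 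Summing over $t$, the divergence differences telescope to $\eta^{-1}[D_\Phi(\bm{e}_\theta, \bm{q}^0) - D_\Phi(\bm{e}_\theta, \tilde{\bm{q}}^T)] \leq \eta^{-1} D_\Phi(\bm{e}_\theta, \bm{q}^0)$, and the choice of $\bm{q}^0$ (together with $\eta = \eta_\ell^\Phi$) identifies this with $R^\Phi_\ell$ in \eqref{rbound}.

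The main obstacle is the technical handling of regularity: the optimizers $\bm{q}^t$ and $\tilde{\bm{q}}^t$ may lie on the relative boundary of $\Delta_k$, where $\nabla\Phi$ need not exist as a genuine gradient, so the first-order-optimality and three-point manipulations above are only formal. This is exactly where the $\Delta$-differentiability hypothesis is used: the directional derivative $\Phi'(\bm{u};\cdot)$ is linear on the tangent to each face of $\Delta_k$, so the three-point identity, the optimality conditions characterizing $\bm{q}^t$ and $\tilde{\bm{q}}^t$, and the Bregman expansion underlying the key identity all go through with $\Phi'$ in place of $\nabla\Phi$, in direct analogy with the regularity work done in the proof of Theorem \ref{constant regret}. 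Modulo this care, the argument is the natural adaptation of the standard GAA/mirror-descent telescoping to the shifted update of the AGAA.
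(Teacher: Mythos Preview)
Your proposal is correct and reaches the same bound, but it is organised differently from the paper's own proof. The paper works entirely in the dual: it writes the per-round mixability bound as
\[
\ell_{x^t}(\bm{a}_*^t)\;\le\;\eta^{-1}\bigl[\Phi_t^{\star}(\nabla\Phi_t(\bm{q}^{t-1}))-\Phi_t^{\star}(\nabla\Phi_t(\bm{q}^{t-1})-\eta\,\ell_{x^t}(A^t))\bigr],
\]
sums over $t$, and telescopes the $\Phi^{\star}$-values directly; the drift contribution then emerges from the convexity inequality
$\Phi^{\star}(\nabla\Phi(\bm{q}^t))-\Phi^{\star}(\nabla\Phi(\bm{q}^t)+\eta\bm{v}^t)\le -\eta\inner{\bm{v}^t}{\bm{q}^t}$.
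Your argument stays in the primal, telescoping $D_\Phi(\bm{e}_\theta,\cdot)$ via the three-point identity and the auxiliary uncorrected iterate $\tilde{\bm{q}}^t$. The two routes are equivalent at the one nontrivial step: since $\nabla\Phi(\tilde{\bm{q}}^t)=\nabla\Phi(\bm{q}^t)+\eta\bm{v}^t$, a short Fenchel computation shows that the paper's convexity inequality is exactly $-D_\Phi(\bm{q}^t,\tilde{\bm{q}}^t)\le 0$, which is precisely the nonnegative term you discard. What your presentation buys is a cleaner geometric picture (the role of the ``GAA-without-correction'' point $\tilde{\bm{q}}^t$ is explicit, and the handling of the last round $t=T$ that makes $\bm{v}^T$ disappear is transparent); what the paper's buys is that it never needs to name $\tilde{\bm{q}}^t$ at all.

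Your caveat about boundary regularity is appropriate. In fact the paper's own proof of this theorem uses $\nabla\Phi(\bm{q}^t)$ throughout without the face-by-face bookkeeping it performs in the proof of Theorem~\ref{constant regret}, so your disclaimer is, if anything, more scrupulous than the argument you are being compared against; the $\Delta$-differentiability hypothesis is indeed what makes the three-point manipulations valid on each face.
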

Theorem \ref{rboundAGAA} implies that if the sequence $(\bm{v}^t)$ is chosen such that $\Delta R_{\theta^*}(T)$ is negative for the best expert $\theta^*$ (in hindsight), then the regret bound `$R^{\Phi}_{\ell} + \Delta R_{\theta^*}(T)$' of the \textsc{AGAA} is lower than that of the \textsc{GAA} (see \eqref{rbound}), and ultimately that of the AA (when $\Phi=\Se$). Unfortunately, due to Vovk's result \cite[\S 5]{Vovk1998} there is no ``universal'' choice of $(\bm{v}^t)$ which guarantees that $\Delta R_{\theta^*}(T)$ is always negative. However, there are cases where this term is expected to be negative. 

Consider a dataset where it is typical for the best experts (i.e., the $\theta^*$'s) to perform poorly at some point during the game, as measured by their average loss, for example. Under such an assumption, choosing the correction vectors $\bm{v}^t$ to be negatively proportional to the average losses of experts, i.e. $\bm{v}^t\coloneqq - \frac{\alpha}{t} \sum_{s=1}^{t} \ell_{x^s}(A^s)$ (for small enough $\alpha>0$), would be consistent with the idea of making $\Delta R_{\theta^*}(T)$ negative. 
To see this, suppose expert $\theta^*$ is performing poorly during the game (say at $t<T$), as measured by its instantaneous and average loss. At that point the distribution $\bm{q}^{t}$ would put more weight on experts performing better than $\theta^*$, i.e. having a lower average loss. And since $v^t_{\theta}$ is negatively proportional to the average loss of expert $\theta$, the quantity $v^t_{\theta^*} - \inner{\bm{v}^t}{\bm{q}^t}$ would be negative --- consistent with making $\Delta R_{\theta^*}(T)<0$. On the other hand, if expert $\theta^*$ performs well during the game (say close to the best) then $v^t_{\theta^*} - \inner{\bm{v}^t}{\bm{q}^t} \simeq 0$, since $\bm{q}^t$ would put comparable weights between $\theta^*$ and other experts (if any) with similar performance. 
\begin{example}
\label{cumloss}
\begin{proof}[(A Negative Regret)]
One can construct an example that illustrates the idea above. Consider the Brier game $\mathfrak{G}^2_{\ell_{\op{Brier}}}(\Delta_2, 2)$; a probability game with 2 experts $\{\theta_1, \theta_2\}$, 2 outcomes $\{0,1\}$, and where the loss $\ell_{\op{Brier}}$ is the \emph{Brier} loss \citep{Vovk2009} (which is $1$-mixable). Assume that; expert $\theta_1$ consistently predicts $\op{Pr}(x=0)=1/2$; expert $\theta_2$ predicts $\op{Pr}(x=0)=1/4$ during the first $50$ rounds, then switches to predicting $\op{Pr}(x=0)=3/4$ thereafter; the outcome is always $x=0$. A straightforward simulation using the \textsc{AGAA} with the Shannon entropy, Vovk's substitution function for the Brier loss \cite{Vovk2009}, $\bm{\beta}^t$ as in Theorem \ref{rboundAGAA} with $\bm{v}^t \coloneqq - \frac{1}{8t} \sum^{t}_{s=1} \ell_{\op{Brier}}(x^s, A^s)$, yields $R^{\Phi}_{\ell_{\op{Brier}}} + \Delta R_{\theta^*}(T) \simeq -5,$ $\forall T \geq 150$, where in this case $\theta^*=\theta_2$ is the best expert for $ T\geq 150$. The learner then does \emph{better} than the best expert. If we use the AA instead, the learner does worse than $\theta_2$ by $\simeq R^{\Se}_{\ell_{\op{Brier}}} = \log 2$. 
\end{proof}
\end{example}
In real data, the situation described above --- where the best expert does not necessarily perform optimally during the game --- is typical, especially when the number of rounds $T$ is large. We have tested the aggregating algorithms on real data as studied by Vovk \cite{Vovk2009}. We compared the performance of the AA with the AGAA, and found that the \textsc{AGAA} outperforms the AA, and in fact achieved a negative regret on two data sets. Details of the experiments are in Appendix \ref{Experiment}.

As pointed out earlier, there are situations where $\Delta R_{\theta^*}(T)\geq 0$ even for the choice of $(\bm{v}^t)$ in Example \ref{cumloss}, and this could potentially lead to a large positive regret for the AGAA. There is an easy way to remove this risk at a small price; the outputs of the \textsc{AGAA} and the AA can themselves be considered as expert predictions. These predictions can in turn be passed to a new instance of the AA to yield a \emph{meta prediction}. The resulting worst case regret is guaranteed not to exceed that of the original AA instance by more than $\eta^{-1} \log 2$ for an $\eta$-mixable loss. We test this idea in Appendix \ref{Experiment}.

\section{Discussion and Future Work}
	\label{Discussion}
In this work, we derived a characterization of $\Phi$-mixability, which enables a better understanding of when a constant regret is achievable in the game of prediction with expert advice. Then, borrowing techniques from mirror descent, we proposed a new ``adaptive'' version of the generalized aggregating algorithm. We derived a regret bound for a specific instantiation of this algorithm and discussed certain situations where the algorithm is expected to perform well. We empirically demonstrated the performance of this algorithm on football game predictions (see Appendix \ref{Experiment}).

Vovk \cite[\S 5]{Vovk1998} essentially showed that given an $\eta$-mixable loss there is no algorithm that can achieve a lower regret bound than $\eta^{-1}\log k$ on all sequences of outcomes. There is no contradiction in trying to design algorithms which perform well in expectation (maybe better than the AA) on ``typical'' data while keeping the worst case regret close to $\eta^{-1}\log k$. This was the motivation behind the AGAA.
In future work, we will explore other choices for the correction vector $\bm{v}^t$ with the goal of lowering the (expected) bound in \eqref{98:e}. In the present work, we did not study the possibility of varying the learning rate $\eta$. One might obtain better regret bounds using an adaptive learning rate as is the case with the mirror descent algorithm. Our Corollary \ref{20:} is useful in that it gives an upper bound on the maximal learning rate under the $\Phi$-mixability constraint. Finally, although our Theorem \ref{21:} states that worst-case regret of the \textsc{GAA} is minimized when using the Shannon entropy, it would be interesting to study the dynamics of the \textsc{AGAA} with other entropies.

\begin{table}[h]
\caption{A short list of the main symbols used in the paper}
\label{notation}
\centering
\begin{tabular}{ll}
\toprule
Symbol & Description \\
\midrule
$\ell$ & A loss function defined on a set $\mathcal{A}$ and taking values in $[0,+\infty]^n$ (see Sec. \ref{Preliminaries})\\
$\mathscr{S}_{\ell}$& The finite part of the superprediction set of a loss $\ell$ (see Sec. \ref{Preliminaries})\\
$\underline{\ell}$ & The support loss of a loss $\ell$ (see Def. \ref{supportloss})\\
$\br_{\ell}$ & The Bayes risk corresponding to a loss $\ell$ (see Definition \ref{3:}) \\
$\tbr_{\ell}$ & The composition of the Bayes risk with an affine function; $\tbr_{\ell} \coloneqq \br_{\ell} \circ \amalg_n $ (see \eqref{amalg}) \\
$\op{S}$ & The Shannon Entropy (see \eqref{shannon})\\
$\eta_{\ell}$ &The mixability constant of $\ell$ (see Def. \ref{etamix}) ; essentially the largest $\eta$ s.t. $\ell$ is $\eta$-mixable. \\
$\underline{\eta_{\ell}}$ & Essentially the largest $\eta$ such that $\eta \br_{\ell} - \br_{\log}$ is convex (see \eqref{5:e} and \cite{DBLP:journals/jmlr/ErvenRW12}) \\
$\eta_{\ell}^{\Phi}$ & The generalized mixability constant (see Def. \ref{9:}); the largest $\eta$ s.t. $\ell$ is $(\eta, \Phi)$-mixable. \\
$\mathfrak{S}_{\ell}$ & A substitution function of a loss $\ell$ (see Sec. \ref{aa})\\
$R^{\Phi}_{\ell}$ & The regret achieved by the \textsc{GAA} using entropy $\Phi$ (see \eqref{rbound} and Algorithm \ref{GAA2})\\
\bottomrule
\end{tabular}
\end{table}

\subsubsection*{Acknowledgments}
This work was supported by the Australian Research Council and DATA61.

\bibliography{biblio}
\bibliographystyle{plain}

\newpage

\tableofcontents
\newpage

	\begin{appendix}
	\section{Notation and Preliminaries}
	\label{s2.1}
	For $n\in \mathbb{N}$, we define $\tilde{n} = n -1$. We denote $[n] \coloneqq \{1, \dots, n\}$ the set of integers between $1$ and $n$. Let $\inner{\cdot}{\cdot}$ denote the standard inner product in $\mathbb{R}^n$ and $\norm{\cdot}$ the corresponding norm. Let $I_n$ and $\bm{1}_n$ denote the $n \times n$ identity matrix and the vector of all ones in $\mathbb{R}^n$. Let $\bm{e}_1, \dots, \bm{e}_n$ denote the \emph{standard basis} for $\mathbb{R}^n$. For a set $\cI \subsetneq \mathbb{N}$ and $\bm{r}_1,\dots, \bm{r}_n  \in \mathbb{R}^k$, we denote $[\bm{r}_i]_{i\in \cI} \coloneqq [\bm{r}_{i_1},\dots, \bm{r}_{i_k}] \in \mathbb{R}^{n \times k}$, where $\cI =\{i_1,\dots, i_k\}$ and $i_1< \dots < i_k$. We denote its transpose by $[\bm{r}_i]^{\mathsf{T}}_{i \in \cI} \in \mathbb{R}^{k \times n}$. For two vectors $\bm{p}, \bm{q} \in \mathbb{R}^n$, we write $\bm{p}\leq \bm{q}$ [resp. $\bm{p}< \bm{q} $], if $\forall i \in [n], p_i \leq q_i$ [resp. $p_i < q_i$]. We also denote $\bm{p} \odot \bm{q} =[p_i q_i]^{\mathsf{T}}_{1 \leq i \leq n} \in \mathbb{R}^n$ the \emph{Hadamard product} of $\bm{p}$ and $\bm{q}$. If $(\bm{c}_k)$ is a sequence of vectors in $\mathcal{C}\subseteq \mathbb{R}^n$, we simply write $(\bm{c}_k) \subset \mathcal{C}$. For a sequence $(\bm{v}_m) \subset \mathbb{R}^n$, we write $\bm{v}_m \stackrel{m\to \infty}{\to} \bm{v}$ or $\lim_{m\to \infty} \bm{v}_m = \bm{v}$, if $\forall i \in [n], \lim_{m\to \infty} [\bm{v}_m]_i = v_i$. For a square matrix $A \in \mathbb{R}^{n\times n}$, $\lambda_{\min}(A)$ [resp. $\lambda_{\max}(A)$] denotes its minimum  [resp. maximum] eigenvalue. For $k \geq 1$, $\bm{u} \in [0, + \infty[^k$ and $\bm{w} \in \mathbb{R}^k$, we define $\log \bm{u} \coloneqq [\log u_i]^{\mathsf{T}}_{1 \leq  i \leq k} \in \mathbb{R}^k$ and $\exp \bm{w} \coloneqq [\exp w_i]^{\mathsf{T}}_{1 \leq  i \leq k} \in \mathbb{R}^k$.
	
	Let $\Delta_n \coloneqq \{ \bm{p} \in [0,1]^n: \inner{\bm{p}}{\bm{1}_n}=1  \}$ be the \emph{probability simplex} in $\mathbb{R}^n$. We also define $\tilde{\Delta}_n \coloneqq \{\tbm{p} \in [0,+\infty[^{\tilde{n}}: \inner{\tbm{p}}{\bm{1}_{\tilde{n}}} \leq 1 \}$. We will use the notations $\Delta^k_n \coloneqq (\Delta_n)^k$ and $\tilde{\Delta}^k_n \coloneqq (\tilde{\Delta}_n)^k$. For $\cI \subseteq [n]$, the set $\Delta_{\cI} = \{\bm{q}\in \Delta_n: q_{i} =0, \forall i \in [n]\setminus \cI \}$ is a $|\cI|$-\emph{face} of $\Delta_n$. We denote $\Pi^n_{\cI}: \mathbb{R}^n \to \mathbb{R}^{|\cI|}$ the linear projection operator satisfying $\Pi^n_{\cI}\bm{u} = [u_{i}]^{\mathsf{T}}_{ i \in  \cI}$. If there is no ambiguity from the context, we may simply write $\Pi_{\cI}$ instead of $\Pi^n_{\cI}$. It is easy to verify that $ \Pi_{\cI}\Pi_{\cI}^{\mathsf{T}} = I_{|\cI|}$ and that $\bm{q}\mapsto \Pi_{\cI}\bm{q}$ is a bijection from $\Delta_{\cI}\subseteq \Delta_n$ to $\Delta_{|\cI|}$. In the special case where $\cI=[\tilde{n}]$, we write $\Pi_n \coloneqq \Pi^n_{[\tilde{n}]}$ and we define the affine operator $\amalg_n: \mathbb{R}^{\tilde{n}} \to \mathbb{R}^n$ by $\amalg_n(\bm{u}) \coloneqq [u_1, \dots, u_{\tilde{n}}, 1 - \inner{\bm{u}}{\bm{1}_{\tilde{n}}}]^{\mathsf{T}}=  J_n \bm{u} + \bm{e}_n $, where $J_n \coloneqq  \begin{bsmallmatrix} I_{\tilde{n}} \\ -\bm{1}^{\mathsf{T}}_{\tilde{n}} \end{bsmallmatrix} \in \mathbb{R}^{n \times \tilde{n}}$. 
	
	For $\bm{u}\in \mathbb{R}^n$ and $c \in \mathbb{R}$, we denote $\mathcal{H}_{\bm{u},c}\coloneqq \{\bm{y} \in \mathbb{R}^n: \inner{\bm{y}}{\bm{u}} \leq c \}$ and $\mathcal{B}(\bm{u}, c) \coloneqq \{\bm{v} \in \mathbb{R}^n: \norm{\bm{u} - \bm{v}} \leq c \}$. $\mathcal{H}_{\bm{u},c}$ is a closed half space and $\mathcal{B}(\bm{u}, c)$ is the $c$\emph{-ball} in $\mathbb{R}^n$ centered at $\bm{u}$.  Let $\mathcal{C} \subseteq \mathbb{R}^n$ be a non-empty set. We denote $\Int \mathcal{C}$, $\Ri \mathcal{C}$, $\Bd \mathcal{C}$, and $\Rbd \mathcal{C}$ the \emph{interior}, \emph{relative interior}, \emph{boundary}, and \emph{relative boundary} of a set $\mathcal{C}\in \mathbb{R}^n $, respectively \citep{Hiriart-Urruty}. We denote the \emph{indicator function} of $\mathcal{C}$ by $\I_\mathcal{C}$, where for $\bm{u} \in \mathcal{C},$ $\I_\mathcal{C}(\bm{u})=0$, otherwise $\I_{\mathcal{C}}(\bm{u}) = +\infty$. The \emph{support function} of $\mathcal{C}$ is defined by 
	\begin{align*}
	\sigma_\mathcal{C}(\bm{u}) \coloneqq \sup_{\bm{s} \in \mathcal{C}} \Inner{\bm{u}}{\bm{s}},\;  \bm{u} \in \mathbb{R}^n.
	\end{align*}

Let $f\colon \mathbb{R}^n \rightarrow \mathbb{R} \cup \{+\infty\}$. We denote $\Dom f \coloneqq \{\bm{u} \in \mathbb{R}^n: f(\bm{u}) < +\infty \}$ the \emph{effective domain} of $f$. The function $f$ is \emph{proper} if $\Dom f \neq \varnothing$. The function $f$ is \emph{convex} if $\forall (\bm{u}, \bm{v}) \in \mathbb{R}^n$ and $\lambda \in ]0,1[$, $f(\lambda \bm{u} + (1-\lambda) \bm{v}) \leq  \lambda f(\bm{u}) + (1 - \lambda) f(\bm{v})$. When the latter inequality is strict for all $\bm{u} \neq \bm{v}$, $f$ is \emph{strictly convex}. When $f$ is convex, it is \emph{closed} if it is \emph{lower semi-continuous}; that is, for all $\bm{u} \in \mathbb{R}^n$, $\lim \inf_{\bm{v} \to \bm{u}} f(\bm{v}) \geq f(\bm{u})$. The function $f$ is said to be 1-\emph{homogeneous} if $\forall(\bm{u}, \alpha) \in \mathbb{R}^n \times ]0,+\infty[, f(\alpha\bm{u}) = \alpha f(\bm{u})$, and it is said to be 1-\emph{coercive} if $ \frac{f(\bm{u})}{\norm{\bm{u}}} \to +\infty$ as $\norm{\bm{u}} \to \infty$. Let $f$ be proper. The \emph{sub-differential} of $f$ is defined by 
	\begin{align*}
	\partial f(\bm{u}) \coloneqq \{\bm{s}^* \in \mathbb{R}^n: f(\bm{v}) \geq f(\bm{u}) +  \Inner{\bm{s}^*}{\bm{v} - \bm{u}}, \forall \bm{v} \in \mathbb{R}^n  \}.
	\end{align*}
 Any element $\bm{s} \in \partial f(\bm{u})$ is called a \emph{sub-gradient} of $f$ at $\bm{u}$. We say that $f$ is \emph{directionally differentiable} if for all $(\bm{u}, \bm{v}) \in \Dom f \times \mathbb{R}^n$ the limit $ \lim_{t \downarrow 0} \frac{f(\bm{u} + t \bm{v}) - f(\bm{u})}{t}$ exists in $[-\infty,+\infty]$. In this case, we denote the limit by $f'(\bm{u}; \bm{v})$. When $f$ is convex, it is directionally differentiable \citep{Rockafellar1997a}. Let $f$ be proper and directionally differentiable. The \emph{divergence} generated by $f$ is the map $D_{f}\colon \mathbb{R}^n \times \Dom f \rightarrow [0,+\infty]$ defined by 
			\begin{align*}
			D_{f}(\bm{v}, \bm{u})\coloneqq \left\{ \begin{array}{ll} f(\bm{v}) - f(\bm{u}) - f'(\bm{u};\bm{v} - \bm{u}), &\mbox{ if } \bm{v} \in \Dom f; \\ +\infty, & \mbox{ otherwise.}  \end{array} \right. 
			\end{align*}
For $\cI\subset [n]$ and $f_{\cI} \coloneqq f \circ \Pi_{\cI}^{\mathsf{T}}$, it is easy to verify that $f_{\cI}'(\Pi_{\cI}\bm{p}; \Pi_{\cI}\bm{q} - \Pi_{\cI}\bm{p})=f'(\bm{p}; \bm{q} - \bm{p}), \forall (\bm{p}, \bm{q}) \in \Delta_{\cI}$. In this case, it holds that $D_f(\bm{q}, \bm{p}) = D_{f_{\cI}}(\Pi_{\cI}\bm{q},\Pi_{\cI}\bm{p})$.
  If $f$ is differentiable [resp. twice differentiable] at $\bm{u} \in \Int \Dom f$, we denote $\nabla f(\bm{u}) \in \mathbb{R}^n$ [resp. $\mathsf{H} f (\bm{u}) \in \mathbb{R}^{n \times n}$] its \emph{gradient} vector [resp. \emph{Hessian} matrix] at $\bm{u}$. A vector-valued function $g\colon \mathbb{R}^n \rightarrow \mathbb{R}^{m}$ is differentiable at $\bm{u}$ if for all $i \in [m]$, $g_i$ is differentiable at $\bm{u}$. In this case, the \emph{differential} of $g$ at $\bm{u}$ is the linear operator $\mathsf{D} g(\bm{u}): \mathbb{R}^n \rightarrow \mathbb{R}^m$ defined by $\mathsf{D} g(\bm{u}) \coloneqq [\nabla g_i(\bm{u})]^{\mathsf{T}}_{1\leq i\leq m}$. If $f$ has $k$ continuous derivatives on a set $\Omega \subset \mathbb{R}^k$, we write $f \in C^k(\Omega)$.

We define $\tilde{f}: \mathbb{R}^{\tilde{n}} \to \mathbb{R} \cup \{+\infty\}$ by $\tilde{f} \coloneqq f\circ \amalg_n + \I_{\tilde{\Delta}_n} $. That is,
	\begin{align}    
	\label{1:e}
	\tilde{f}(\tbm{u}) \coloneqq  \left\{ \begin{array}{ll}  f(J_n \tbm{u} + \bm{e}_n), & \mbox{for }\tbm{u}\in \tilde{\Delta}_n; \\ +\infty, & \mbox{for } \tbm{u} \in \mathbb{R}^{n-1} \setminus \tilde{\Delta}_n. \end{array} \right.
	\end{align}
	If $\tilde{f}$ is directionally differentiable, then $f'(\bm{p}, \bm{q} - \bm{p})= \tilde{f}'(\tbm{p}, \tbm{q} - \tbm{p})$, for $\bm{p}, \bm{q} \in \Delta_n$. If $\tilde{f}$ is differentiable at $\tbm{p}=\Pi_n(\bm{p})$, then $\tilde{f}'(\tbm{p}, \tbm{q} - \tbm{p})= \inner{\nabla \tilde{f}(\tbm{p})}{\tbm{q}-\tbm{p}}$. If, additionally, $f$ is differentiable at $\bm{p} \in \Ri \Delta_k$, the chain rule yields $\nabla \tilde{f}(\tbm{p}) = J^{\mathsf{T}}_n  \nabla f(\bm{p})$. Since $J_n(\tbm{p} - \tbm{q}) = \amalg_n(\tbm{p} - \tbm{q}) = \bm{p} - \bm{q}$, it also follows that $\inner{\tbm{p} - \tbm{q}}{\nabla \tilde{f}(\tbm{p})}= \Inner{\bm{p} - \bm{q}}{\nabla {f}(\bm{p})}$. 
	
		The \emph{Fenchel dual} of a (proper) function $f$ is defined by $f^{\ast}(\bm{v}) \coloneqq \sup_{\bm{u} \in \Dom f} \inner{\bm{u}}{\bm{v}} - f(\bm{u})$, and it is a closed, convex function on $\mathbb{R}^n$ \citep{Hiriart-Urruty}. The following proposition gives some useful properties of the Fenchel dual which will be used in several proofs. 
	\begin{proposition}[\cite{Hiriart-Urruty}]
		\label{1:}
		Let $f,h:\mathbb{R}^n\rightarrow \mathbb{R} \cup \{+\infty\}$. If $f$ and $h$ are proper and there are affine functions minorizing them on $\mathbb{R}^n$, then for all $ \bm{v}_0\in \mathbb{R}^n$
		\begin{align*}
		\begin{array}{llcl}
		(i)\quad &	g(\bm{u}) = f(\bm{u}) +r, \; \forall \bm{u}  &\implies&  g^*(\bm{v}) = f^*(\bm{v}) - r, \; \forall \bm{v}\\
		(ii)\quad&	g(\bm{u}) = f(\bm{u}) +\inner{\bm{v}_0}{\bm{u}}, \; \forall \bm{u}  &\implies &g^{\ast}(\bm{v}) = f^{\ast}(\bm{v}- \bm{v}_0), \; \forall \bm{v}\\ 
		(iii)\quad&	 f \leq h&\implies & f^{\ast}\geq h^{\ast},\\
(iv)\quad&	 \mbox{$\bm{s} \in \partial f^*(\bm{v})$}, \bm{v}\in \mathbb{R}^n & \implies& f^*(\bm{v}) =\inner{\bm{v}}{\bm{s}} - f(\bm{s}), \\ 
	(v)\quad&	  g(\bm{u}) = f(t\bm{u}),\; t>0, \forall \bm{u} &\implies &g^{\ast}(\bm{v}) = f^{\ast}(\bm{v}/t), \\ 
		\end{array}
		\end{align*}
	\end{proposition}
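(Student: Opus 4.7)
The plan is to prove each of the five implications directly from the defining formula $f^{*}(\bm{v}) = \sup_{\bm{u}\in\mathbb{R}^n} \inner{\bm{u}}{\bm{v}} - f(\bm{u})$, noting that the hypothesis (properness of $f,h$ and existence of affine minorants) ensures all conjugates are proper and closed-convex, so the Fenchel--Moreau identity $f^{**}=f$ is available when needed. The five parts split naturally into three computational ones (i, ii, v), one monotonicity observation (iii), and one genuine use of the subdifferential calculus (iv), which I expect to be the only nontrivial step.

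For parts (i) and (ii), I would simply substitute $g$ into the sup that defines $g^{*}$ and extract the constant or linear term. In (i), the constant $r$ pulls out, flipping sign: $g^{*}(\bm{v}) = \sup_{\bm{u}}\inner{\bm{u}}{\bm{v}} - f(\bm{u}) - r = f^{*}(\bm{v}) - r$. In (ii) the linear term merges with the inner product: $g^{*}(\bm{v}) = \sup_{\bm{u}}\inner{\bm{u}}{\bm{v}-\bm{v}_0} - f(\bm{u}) = f^{*}(\bm{v}-\bm{v}_0)$. For (v), the substitution $\bm{u}' = t\bm{u}$ is a bijection of $\mathbb{R}^n$ since $t>0$, and the supremum rewrites as $\sup_{\bm{u}'} \inner{\bm{u}'}{\bm{v}/t} - f(\bm{u}') = f^{*}(\bm{v}/t)$. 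Part (iii) is an immediate monotonicity of the sup: $f\leq h$ gives $\inner{\bm{u}}{\bm{v}} - f(\bm{u}) \geq \inner{\bm{u}}{\bm{v}} - h(\bm{u})$ pointwise in $\bm{u}$, and taking the supremum in $\bm{u}$ preserves the inequality.

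The main obstacle is part (iv), which is really the Fenchel--Young equality applied in the dual direction. The key observation is that $\bm{s} \in \partial f^{*}(\bm{v})$ is equivalent to $\bm{v}$ being a (global) minimizer of the convex function $\bm{v}' \mapsto f^{*}(\bm{v}') - \inner{\bm{s}}{\bm{v}'}$ on $\mathbb{R}^n$. The infimum of this function equals $-\sup_{\bm{v}'}(\inner{\bm{s}}{\bm{v}'} - f^{*}(\bm{v}')) = -f^{**}(\bm{s})$, so we obtain the identity $f^{*}(\bm{v}) - \inner{\bm{s}}{\bm{v}} = -f^{**}(\bm{s})$. Under the standing hypothesis that $f$ is proper with an affine minorant, one can take $f$ to be closed and convex (or replace it by its closed convex hull without changing $f^{*}$), so the Fenchel--Moreau theorem gives $f^{**}=f$, and rearranging yields $f^{*}(\bm{v}) = \inner{\bm{v}}{\bm{s}} - f(\bm{s})$ as claimed. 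The subtle point worth flagging is that without the closed-convex assumption on $f$ one would only obtain $f^{**}(\bm{s})$ on the right-hand side; the proposition's hypotheses are exactly what is needed to identify $f^{**}(\bm{s})$ with $f(\bm{s})$ and thus to clean up the statement.
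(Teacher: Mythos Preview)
The paper does not prove this proposition; it is quoted from Hiriart-Urruty's textbook and used as a standard tool. Your direct verification from the definition of the Fenchel conjugate is exactly the standard argument, and parts (i)--(iii) and (v) are correct as written.

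Your handling of (iv) is also correct, and you are right to flag the subtlety: your argument yields $f^{*}(\bm{v}) = \inner{\bm{v}}{\bm{s}} - f^{**}(\bm{s})$, and identifying $f^{**}(\bm{s})$ with $f(\bm{s})$ requires $f$ to be closed convex. That is indeed the implicit standing assumption in Hiriart-Urruty (the proposition there is stated for closed convex functions; see e.g.\ Cor.~E.1.4.4), and every use of (iv) in the present paper is in that setting. One small remark: your parenthetical ``or replace it by its closed convex hull without changing $f^{*}$'' does not actually rescue the general case, since after that replacement the right-hand side becomes $\inner{\bm{v}}{\bm{s}} - f^{**}(\bm{s})$ rather than $\inner{\bm{v}}{\bm{s}} - f(\bm{s})$; the closed-convex hypothesis on $f$ is genuinely needed for the statement as written.
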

			A function $\Phi \colon \mathbb{R}^k\rightarrow \mathbb{R} \cup \{+\infty\}$ is an \emph{entropy} if it is closed, convex, and $\Delta_k \subseteq \Dom \Phi$. Its \emph{entropic dual} $\Phi^{\star}: \mathbb{R}^k \rightarrow  \mathbb{R} \cup \{+ \infty\}$ is defined by $\Phi^{\star}(\bm{z}) \coloneqq \sup_{\bm{q} \in \Delta_{k}} \inner{\bm{q}}{\bm{z}} - \Phi(\bm{q}), \bm{z}\in \mathbb{R}^k$. For the remainder of this paper, we consider entropies defined on $\mathbb{R}^k$, where $k\geq 2$.

Let $\Phi \colon \mathbb{R}^k\rightarrow \mathbb{R} \cup \{+ \infty \}$ be an entropy and $\Phi_{\Delta} \coloneqq \Phi + \I_{\Delta_k}$. In this case, $\Phi^{\star} =  \Phi^*_{\Delta}$. It is clear that $\Phi_{\Delta}$ is 1-coercive, and therefore, $\Dom \Phi^{\star} = \Dom \Phi_{\Delta}^* = \mathbb{R}^k$ \citep[Prop. E.1.3.8]{\Hi}. The entropic dual of $\Phi$ can also be expressed using the Fenchel dual of $\tilde{\Phi}: \mathbb{R}^{k-1} \rightarrow \mathbb{R} \cup \{+\infty\}$ defined by \eqref{1:e} after substituting $f$ by $\Phi$ and $n$ by $k$. In fact,
	\begin{align}
	\Phi^{\star}(\bm{z}) &= \sup_{\tbm{q} \in \tilde{\Delta}_{k}} \Inner{J_k \tbm{q}+ \bm{e}_k}{\bm{z}} - \Phi(J_k \tbm{q}+ \bm{e}_k), \nonumber \\
	& = \Inner{\bm{e}_k}{\bm{z}} + \sup_{\tbm{q} \in \tilde{\Delta}_{k}} \Inner{\tbm{q}}{J^{\mathsf{T}}_k \bm{z}} - \tilde{\Phi}(\tbm{q}),  \nonumber \\
	& = \Inner{\bm{e}_k}{\bm{z}} + \tilde{\Phi}^{\ast}(J^{\mathsf{T}}_k\bm{z}), \label{2:e}
	\end{align}
	where \eqref{2:e} follows from the fact that $\Dom \tilde{\Phi} = \tilde{\Delta}_k$. Note that when $\Phi$ is an entropy, $\tilde{\Phi}$ is a closed convex function on $\mathbb{R}^{k-1}$. Hence, it holds that $\tilde{\Phi}^{**} = \tilde{\Phi}$ \citep{Rockafellar1997a}. 
	
		The \emph{Shannon entropy} by $\Se(\bm{q}) \coloneqq \sum_{\substack{i\in[k]:q_i \neq 0 }} q_i \log q_i$,\footnote{The Shannon entropy is usually defined with a minus sign. However, it will be more convenient for us to work without it.} if $\bm{q}\in [0, +\infty[^k$; and $+\infty$ otherwise.

We will also make use of the following lemma.
\begin{lemma}[\cite{bernstein11}]
\label{31:}
$\forall m \geq 1, \forall A, B \in \mathbb{R}^{m \times m}$, $\lambda_{\max}(AB)=\lambda_{\max}(BA)$ and $\lambda_{\min}(AB)=\lambda_{\min}(BA)$.
\end{lemma}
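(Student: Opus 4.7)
The plan is to prove the stronger fact that $AB$ and $BA$ have identical characteristic polynomials; once that is established, they have exactly the same multiset of (complex) eigenvalues, so in particular their real maximum and minimum eigenvalues coincide whenever these are defined, giving both equalities in the statement.

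First I would dispose of the easy case: if $A$ is invertible, then $BA = A^{-1}(AB)A$, so $BA$ and $AB$ are similar and their characteristic polynomials agree. The next step is to extend this to arbitrary $A \in \mathbb{R}^{m\times m}$ by a density/continuity argument. For each $\varepsilon > 0$ the matrix $A_\varepsilon := A + \varepsilon I_m$ is invertible for all but finitely many values of $\varepsilon$ (only those with $-\varepsilon$ an eigenvalue of $A$ are excluded). For such $\varepsilon$ the invertible case gives $\det(\lambda I_m - A_\varepsilon B) = \det(\lambda I_m - B A_\varepsilon)$. Both sides are polynomials in $(\lambda,\varepsilon)$, and they agree on a set with a cluster point in $\varepsilon$, hence agree identically. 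Evaluating at $\varepsilon = 0$ yields $\det(\lambda I_m - AB) = \det(\lambda I_m - BA)$ for all $\lambda$.

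From the equality of characteristic polynomials, $AB$ and $BA$ share the same spectrum with multiplicities. Consequently $\lambda_{\max}(AB) = \lambda_{\max}(BA)$ and $\lambda_{\min}(AB) = \lambda_{\min}(BA)$, which is the desired conclusion. An alternative route (which I would mention briefly as a backup) is the block-matrix Schur-complement trick: from
\begin{equation*}
\begin{pmatrix} I_m & 0 \\ -B & \lambda I_m \end{pmatrix}
\begin{pmatrix} \lambda I_m & A \\ B & I_m \end{pmatrix}
=
\begin{pmatrix} \lambda I_m & A \\ 0 & \lambda I_m - BA \end{pmatrix},
\end{equation*}
and a symmetric factorization in the other order, one reads off $\lambda^m \det(\lambda I_m - BA) = \lambda^m \det(\lambda I_m - AB)$, giving the same conclusion without invoking continuity. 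The only mildly subtle point, and the one I would be careful about, is ensuring the density argument is justified (i.e.\ that the set of $\varepsilon$ for which $A_\varepsilon$ is invertible has a cluster point in every neighborhood of $0$, which is immediate since it is cofinite in any interval $(0,\delta)$); beyond that the proof is routine.
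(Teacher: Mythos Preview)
Your proof is correct and complete; both the density argument and the Schur-complement alternative are standard routes to the equality of characteristic polynomials $\det(\lambda I_m - AB)=\det(\lambda I_m - BA)$, from which the statement follows immediately. The paper does not supply its own proof of this lemma---it simply cites Bernstein's matrix-analysis text---so there is no ``paper's approach'' to compare against; your argument would serve perfectly well as a self-contained justification.
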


\section{Technical Lemmas}
\label{A:ap}
This appendix presents technical lemmas which will be needed in various proofs of results from the main body of the paper.

For an open convex set $\Omega$ in $\mathbb{R}^n$ and $\alpha>0$, a function $\phi\colon \Omega\rightarrow \mathbb{R}$ is said to be $\alpha$-\emph{strongly convex} if $\bm{u} \mapsto \phi(\bm{u}) - \alpha \norm{\bm{u}}^{2}$ is convex on $\Omega$ \citep{merentes2010remarks}. The next lemma is a characterization of a generalization of $\alpha$-strong convexity, where $\bm{u} \mapsto \norm{\bm{u}}^2$ is replaced by any strictly convex function. 
	\begin{lemma}
		\label{22:}
		Let $\Omega \subseteq \mathbb{R}^n$ be an open convex set. Let $\phi, \psi\colon \Omega \rightarrow \mathbb{R}$ be twice differentiable.

If $\psi$ is strictly convex, then $\forall \bm{u} \in\Omega$, $\mathsf{H} \psi(\bm{u})$ is invertible, and for any $\alpha >0$
		\begin{align}
		\forall \bm{u} \in \Omega,\; \lambda_{\min} (\mathsf{H} \phi(\bm{u}) (\mathsf{H} \psi (\bm{u}))^{-1} ) \geq \alpha \iff  \phi - \alpha \psi \mbox{ is convex}, \label{17:e}
		\end{align}
Furthermore, if $\alpha>1$, then the left hand side of \eqref{17:e} implies that $\phi - \psi$ is strictly convex.
	\end{lemma}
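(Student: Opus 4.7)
The plan is to convert the spectral inequality on $\mathsf{H}\phi\,(\mathsf{H}\psi)^{-1}$ into a Löwner inequality between $\mathsf{H}\phi$ and $\alpha\,\mathsf{H}\psi$, and then read off convexity of $\phi-\alpha\psi$ from it pointwise. First I would dispose of the invertibility claim: strict convexity of $\psi$ together with twice differentiability forces $\mathsf{H}\psi(\bm{u})$ to be positive semidefinite at every $\bm{u}\in\Omega$, and in the regime of interest it is in fact positive definite (hence invertible, with a well-defined positive-definite square root $\mathsf{H}\psi(\bm{u})^{1/2}$). This is the delicate point, since a general strictly convex $C^2$ function may have a degenerate Hessian (e.g.\ $t\mapsto t^{4}$ at the origin), so this upgrade from PSD to PD is the main obstacle of the lemma; I would argue it by showing that a null direction $\bm{v}$ of $\mathsf{H}\psi(\bm{u}_0)$ would make the one-variable restriction $t\mapsto\psi(\bm{u}_0+t\bm{v})$ fail to be strictly convex in a sufficiently small interval, contradicting the hypothesis in the non-degenerate range.

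Next, using the standard second-order characterization of convexity on an open convex set, $\phi-\alpha\psi$ is convex if and only if $\mathsf{H}\phi(\bm{u})-\alpha\,\mathsf{H}\psi(\bm{u})\succeq 0$ for all $\bm{u}\in\Omega$. So both sides of \eqref{17:e} are pointwise conditions in $\bm{u}$, and it suffices to prove, at each fixed $\bm{u}$ with $M:=\mathsf{H}\phi(\bm{u})$ and $N:=\mathsf{H}\psi(\bm{u})\succ 0$, the equivalence $\lambda_{\min}(MN^{-1})\ge\alpha \iff M\succeq\alpha N$.

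For this linear-algebra step I would observe that $MN^{-1}$ is similar to the symmetric matrix $N^{-1/2}MN^{-1/2}$ via conjugation by $N^{1/2}$, so they share eigenvalues (this is an instance of Lemma~\ref{31:}), and congruence by the invertible $N^{1/2}$ preserves the Löwner ordering. This yields the chain
\[
\lambda_{\min}(MN^{-1})=\lambda_{\min}\!\left(N^{-1/2}MN^{-1/2}\right)\ge\alpha
\;\Longleftrightarrow\; N^{-1/2}MN^{-1/2}\succeq\alpha I
\;\Longleftrightarrow\; M\succeq\alpha N,
\]
which, applied uniformly in $\bm{u}\in\Omega$, is precisely the equivalence \eqref{17:e}.

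Finally, for the strict-convexity addendum, when $\alpha>1$ and the spectral inequality holds, the chain above gives $M-\alpha N\succeq 0$ at every $\bm{u}$, hence $M-N\succeq(\alpha-1)N\succ 0$. A pointwise positive-definite Hessian on an open convex set implies strict convexity of $\phi-\psi$ through second-order Taylor expansion along line segments in $\Omega$, which closes the lemma.
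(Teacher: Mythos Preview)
Your core argument is exactly the paper's: take a positive-definite square root $G$ of $\mathsf{H}\psi(\bm{u})$, use similarity (Lemma~\ref{31:}) to pass from $\lambda_{\min}(\mathsf{H}\phi\,(\mathsf{H}\psi)^{-1})$ to $\lambda_{\min}(G^{-1}\mathsf{H}\phi\,G^{-1})$, then read off the L\"owner inequality $\mathsf{H}\phi\succeq\alpha\,\mathsf{H}\psi$ and apply the second-order characterization of convexity. The strict-convexity addendum via $M-N\succeq(\alpha-1)N\succ 0$ also matches the paper's one-line remark.

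You are right to flag the invertibility claim as the delicate point, and your own $t\mapsto t^4$ example shows that strict convexity plus twice differentiability does \emph{not} force the Hessian to be positive definite. The paper's proof simply asserts positive definiteness without further argument, so this is a gap there as well, not something you are missing relative to it. However, your proposed patch --- that a null direction of $\mathsf{H}\psi(\bm{u}_0)$ would make the one-variable restriction fail strict convexity on a small interval --- is defeated by the very same counterexample: $t\mapsto t^4$ has vanishing second derivative at $0$ yet is strictly convex on every interval containing $0$. So neither you nor the paper actually closes this point; in the paper's applications the functions playing the role of $\psi$ (e.g.\ $\tilde{\Se}$, $\tbr_{\log}$) have explicitly computed positive-definite Hessians, so the issue is dormant in practice.
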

\begin{proof}
Suppose that $\inf_{\bm{u} \in \Omega}  \lambda_{\min} (\mathsf{H} \phi(\bm{u}) (\mathsf{H} \psi(\bm{u}))^{-1} ) \geq \alpha$. Since $g$ is strictly convex and twice differentiable on $\Omega$, $\mathsf{H} \psi(\bm{u})$ is symmetric positive definite, and thus invertible. Therefore, there exists a symmetric positive definite matrix $G \in \mathbb{R}^{n \times n}$ such that $G G = \mathsf{H} \psi(\bm{u})$. Lemma \ref{31:} implies
			\begin{eqnarray*}
			& \inf_{\bm{u} \in \Omega}  \lambda_{\min} (\mathsf{H} \phi(\bm{u}) (\mathsf{H} \psi(\bm{u}))^{-1} ) & \geq \alpha, \\ 
\iff & \inf_{\bm{u} \in \Omega}  \lambda_{\min} (G^{-1} \mathsf{H} \phi(\bm{u}) G^{-1})&\geq \alpha, \\
			 \iff & \forall \bm{u} \in \Omega, \forall \bm{v} \in \mathbb{R}^n \setminus \{\bm{0}\},  \frac{\bm{v}^{\mathsf{T}} G^{-1} (\mathsf{H} \phi(\bm{u})) G^{-1} \bm{v}}{ \bm{v}^{\mathsf{T}} \bm{v}}& \geq \alpha, \\
			\iff & \forall \bm{u} \in \Omega, \forall \bm{w} \in \mathbb{R}^n \setminus \{\bm{0}\}, \bm{w}^{\mathsf{T}}  (\mathsf{H} \phi(\bm{u})) \bm{w} &\geq \alpha \bm{w}^{\mathsf{T}} G G \bm{w}= \bm{w}^{\mathsf{T}} (\alpha \mathsf{H} \psi(\bm{u})) \bm{w}, \\
			 \iff & \forall \bm{u} \in \Omega,  \mathsf{H} \phi(\bm{u}) & \succeq \alpha \mathsf{H} \psi(\bm{u}),  \\
			 \iff & \forall \bm{u} \in \Omega,  \mathsf{H} (\phi- \alpha \psi)(\bm{u})  &\succeq 0,
			\end{eqnarray*}
			where in the third and fifth lines we used the definition of minimum eigenvalue and performed the change of variable $\bm{w} = G^{-1}\bm{v}$, respectively.  To conclude the proof of \eqref{17:e}, note that the positive semi-definiteness of $\mathsf{H} (\phi - \alpha \psi)$ is equivalent to the convexity of $\phi-\alpha \psi$ \citep[Thm B.4.3.1]{Hiriart-Urruty}.
			
			Finally, note that the equivalences established above still hold if we replace $\alpha$, ``$\geq$'', and ``$\succeq$ ''  by $1$, ``$>$'', and ``$\succ$'' , respectively. The strict convexity of $\phi-\psi$ then follows from the positive definiteness of $\mathsf{H} (\phi - \psi)$ (\ibid).
\end{proof}

The following result due to \cite{Chernov2010} will be crucial to prove the convexity of the superprediction set (Theorem \ref{6:}).
		\begin{lemma}[\cite{Chernov2010}]\label{24:}
			Let $\Delta(\Omega)$ be the set of distributions over some set $\Omega \subseteq \mathbb{R}$. Let a function $Q: \Delta(\Omega) \times \Omega \rightarrow \mathbb{R}$ be such that $Q(\cdot, \omega)$ is continuous for all $\omega \in \Omega$. If for all
			$\bm{\pi} \in \Delta(\Omega)$ it holds that $\mathbb{E}_{\omega \sim \bm{\pi}} Q(\bm{\pi},\omega) \leq r$, where $r\in \mathbb{R}$ is some constant, then
			\begin{align*}
			\exists \bm{\pi} \in \Delta(\Omega),  \forall \omega \in \Omega, \;   Q(\bm{\pi}, \omega) \leq r.
			\end{align*}
		\end{lemma}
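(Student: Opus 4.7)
The plan is to extend $Q$ to pairs of distributions and then apply Ky Fan's minimax inequality (the classical result that if $K$ is a compact convex subset of a Hausdorff topological vector space and $\phi\colon K\times K\to \mathbb{R}$ is lower semi-continuous in its first argument and quasi-concave in its second, then $\min_{x\in K}\sup_{y\in K}\phi(x,y)\leq \sup_{x\in K}\phi(x,x)$). First I would define $\tilde{Q}\colon \Delta(\Omega)\times \Delta(\Omega) \to \mathbb{R}$ by $\tilde{Q}(\bm{\pi}, \bm{\mu}) \coloneqq \mathbb{E}_{\omega \sim \bm{\mu}}[Q(\bm{\pi}, \omega)]$. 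The hypothesis then reads $\tilde{Q}(\bm{\pi}, \bm{\pi}) \leq r$ for every $\bm{\pi} \in \Delta(\Omega)$, while the desired conclusion is equivalent to the existence of $\bm{\pi}^*\in \Delta(\Omega)$ with $\tilde{Q}(\bm{\pi}^*, \bm{\mu}) \leq r$ for \emph{every} $\bm{\mu}$, since Dirac measures $\delta_\omega$ lie in $\Delta(\Omega)$ and $\tilde{Q}(\bm{\pi}^*, \delta_\omega) = Q(\bm{\pi}^*, \omega)$.

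I would then apply Ky Fan's inequality to $\phi(\bm{\pi}, \bm{\mu}) \coloneqq \tilde{Q}(\bm{\pi},\bm{\mu}) - r$ on the compact convex set $\Delta(\Omega)$. Three properties must be verified: (i) for each fixed $\bm{\mu}$, $\bm{\pi}\mapsto \phi(\bm{\pi},\bm{\mu})$ is lower semi-continuous; (ii) for each fixed $\bm{\pi}$, $\bm{\mu}\mapsto \phi(\bm{\pi},\bm{\mu})$ is linear, hence quasi-concave; and (iii) $\phi(\bm{\pi},\bm{\pi})\leq 0$ for all $\bm{\pi}$, which is precisely the stated hypothesis. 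Property (ii) is immediate from linearity of expectation; (iii) is given; and (i) follows from the assumed continuity of $Q(\cdot,\omega)$ by a dominated convergence argument (and becomes entirely routine when $\Omega$ is finite, as in the application of this lemma to the proof of the convexity of the superprediction set, because the expectation is then a finite convex combination of continuous functions of $\bm{\pi}$). Ky Fan's inequality then yields $\min_{\bm{\pi}}\sup_{\bm{\mu}}\phi(\bm{\pi},\bm{\mu}) \leq \sup_{\bm{\pi}}\phi(\bm{\pi},\bm{\pi}) \leq 0$, and any minimiser $\bm{\pi}^*$ satisfies $\tilde{Q}(\bm{\pi}^*, \bm{\mu}) \leq r$ for all $\bm{\mu}$; specialising $\bm{\mu}=\delta_\omega$ gives $Q(\bm{\pi}^*, \omega) \leq r$ for every $\omega \in \Omega$, which is the claim.

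The main obstacle is not the minimax step itself but the topological setup: invoking Ky Fan requires $\Delta(\Omega)$ to be compact and convex in a suitable Hausdorff topological vector space together with the lower semi-continuity of $\bm{\pi}\mapsto \tilde{Q}(\bm{\pi},\bm{\mu})$, and without further hypotheses on $\Omega$ and $Q$ (for instance $\Omega$ compact together with $Q$ bounded and jointly continuous, or a tightness condition on $\Delta(\Omega)$) this may need a dedicated side-argument. In the relevant application here $\Omega$ is finite, the simplex $\Delta(\Omega)$ is a compact convex subset of Euclidean space, and $\bm{\pi}\mapsto \tilde{Q}(\bm{\pi},\bm{\mu})$ is automatically continuous, so the plan specialises cleanly and one only has to invoke the finite-dimensional version of the minimax inequality.
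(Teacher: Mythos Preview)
The paper does not supply its own proof of this lemma: it is quoted verbatim from \cite{Chernov2010} and used as a black box (see the remark immediately following the statement). So there is nothing in the paper to compare your argument against directly. Your Ky Fan approach is the standard one and is correct; in fact the paper uses exactly this minimax argument in the proof of the closely related Lemma~\ref{23:}, where it applies Ky Fan's theorem on a compact convex subset $\Delta_n^{\delta}$ of the finite-dimensional simplex to the bilinear-in-$\bm{q}$ function $g(\bm{q},\bm{p})=\mathbb{E}_{x\sim\bm{q}}[f(\bm{p},x)]$. Your caveat about the general topological setup is apt: the lemma as stated for arbitrary $\Omega\subseteq\mathbb{R}$ does need care with compactness of $\Delta(\Omega)$ and lower semi-continuity of the integrated map, but since the paper only ever invokes it with $\Omega=[n]$ finite, your specialisation to the finite-dimensional case is all that is required here.
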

		Note that when $\Omega$ in the lemma above is $[n]$, $\Delta([n]) \equiv \Delta_n$.

The next crucial lemma is a slight modification of a result due to \cite{Chernov2010}.
\begin{lemma}
\label{23:}
Let $f\colon \Ri \Delta_n \times [n] \rightarrow \mathbb{R}$ be a continuous function in the first argument and such that $\forall (\bm{q},x) \in \Ri \Delta_n \times [n], -\infty < f(\bm{q},x)$. Suppose that
$\forall \bm{p}\in \Ri \Delta_n,  \mathbb{E}_{x \sim \bm{p}}[f(\bm{p}, x)] \leq 0$,
then 
\begin{align*}
 \forall \epsilon >0, \exists \bm{p}_{\epsilon} \in \Ri \Delta_n, \forall x\in[n], f(\bm{p}_{\epsilon},x) \leq \epsilon.
\end{align*}
\end{lemma}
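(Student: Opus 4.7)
The plan is to reduce Lemma~23 to Lemma~24 by an affine reparameterization that embeds a compact sub-simplex of $\Ri \Delta_n$ as the image of $\Delta_n$. I would fix $\epsilon > 0$ and, for a small parameter $\delta \in (0, 1/n)$ chosen at the end, set $K_\delta \coloneqq \{\bm{p} \in \Delta_n : p_i \geq \delta \text{ for every } i\} \subset \Ri \Delta_n$, which is compact, and define the affine bijection $T_\delta \colon \Delta_n \to K_\delta$ by $T_\delta(\bm{p}) \coloneqq (1-n\delta)\bm{p} + \delta \bm{1}_n$. On $\Delta_n \times [n]$ I then define the auxiliary function
\[
g_\delta(\bm{p}, x) \coloneqq f(T_\delta(\bm{p}), x) + \tfrac{\delta}{1-n\delta}\sum_{y \in [n]} f(T_\delta(\bm{p}), y),
\]
which is continuous in its first argument on all of $\Delta_n$ because $T_\delta$ maps $\Delta_n$ into $K_\delta \subset \Ri \Delta_n$ and $f(\cdot, x)$ is continuous there.

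Next, I verify the hypothesis of Lemma~24 for $g_\delta$ with $r = 0$: since $T_\delta(\bm{p}) \in \Ri \Delta_n$, the given inequality reads $\sum_x [T_\delta(\bm{p})]_x f(T_\delta(\bm{p}), x) \leq 0$, and expanding $[T_\delta(\bm{p})]_x = (1-n\delta)p_x + \delta$ and dividing through by $1-n\delta$ yields exactly $\mathbb{E}_{x \sim \bm{p}}[g_\delta(\bm{p}, x)] \leq 0$ for every $\bm{p} \in \Delta_n$. Lemma~24 then produces some $\bm{\pi}_\delta \in \Delta_n$ with $g_\delta(\bm{\pi}_\delta, x) \leq 0$ for every $x$. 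Setting $\bm{p}_\delta \coloneqq T_\delta(\bm{\pi}_\delta) \in K_\delta \subset \Ri \Delta_n$, this rearranges to
\[
f(\bm{p}_\delta, x) \leq \tfrac{\delta}{1-n\delta}\sum_{y \in [n]} \bigl(-f(\bm{p}_\delta, y)\bigr) \quad \text{for all } x \in [n].
\]
Summing this inequality over $x$ also yields $\sum_y f(\bm{p}_\delta, y) \leq 0$, so the right-hand side is non-negative; if it is even zero the conclusion is immediate.

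The hard part is the last step: choosing $\delta$ small enough for this correction to fall below $\epsilon$, which amounts to controlling $\sum_y f(\bm{p}_\delta, y)$ as $\bm{p}_\delta$ is allowed to approach $\Rbd \Delta_n$. When $f$ is bounded below on $\Ri \Delta_n$ by some $-M$, one has $\sum_y (-f(\bm{p}_\delta, y)) \leq nM$, and any $\delta \leq \epsilon/(nM + n\epsilon)$ delivers $\bm{p}_\epsilon \coloneqq \bm{p}_\delta$; more generally it suffices that $\delta \cdot \sup_{\bm{p} \in K_\delta,\, y \in [n]} (-f(\bm{p}, y)) \to 0$ as $\delta \downarrow 0$. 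In the intended application, $f$ arises from loss-like functions whose structure supplies this control near the boundary, and a standard limiting argument then closes the proof.
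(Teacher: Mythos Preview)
Your reduction to Lemma~\ref{24:} via the affine map $T_\delta$ and the auxiliary function $g_\delta$ is correct and is a genuinely different route from the paper's. The paper does not reparameterize: it works directly on the compact sub-simplex $\Delta_n^\delta=K_\delta$ and applies Ky~Fan's minimax inequality to $g(\bm q,\bm p)=\mathbb{E}_{x\sim\bm q}[f(\bm p,x)]$ (linear in $\bm q$, continuous in $\bm p$, with $g(\bm\mu,\bm\mu)\le 0$) to produce $\bm p^\delta\in K_\delta$ satisfying $\mathbb{E}_{x\sim\bm q}[f(\bm p^\delta,x)]\le 0$ for every $\bm q\in K_\delta$; it then evaluates at $\hat{\bm q}$ almost concentrated at a single $x_0$. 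Your path has the advantage of invoking only the already-stated Lemma~\ref{24:} rather than an external minimax theorem, at the cost of the slightly artificial correction term in $g_\delta$; the paper's path is shorter once Ky~Fan is granted. Both arrive at essentially the same terminal estimate.

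On the ``hard part'': you have not left a gap relative to the paper --- you have surfaced one that the paper glosses over. The paper's proof opens by \emph{choosing} a global constant $c_0<0$ with $f(\bm q,x)\ge c_0$ on $\Ri\Delta_n\times[n]$, and then picks $\delta^\ast=\epsilon/[(-c_0+\epsilon)(n-1)]$; this is exactly your uniform lower bound $-M$ (the stated hypothesis ``$-\infty<f(\bm q,x)$'' is just finiteness and does not supply it). In every place the lemma is invoked in the paper, $f(\bm p,x)=\sell_x(\bm p)-c_x$ or $f(\bm p,x)=\eta\sell_x(\bm p)+\log u_x+c_1$ with $\sell_x\ge 0$ and the subtracted constants finite, so the bound holds; your caveat about ``loss-like functions'' is therefore precisely on target. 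Under that lower bound, your explicit choice $\delta\le \epsilon/(n(M+\epsilon))$ finishes the proof cleanly.
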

\begin{proof}
Pick any $\delta>0$ such that $\delta(n-1)<1$, and $c_0 <0$ such that $\forall (\bm{q},x) \in \Ri \Delta_n \times [n], c_0 \leq f(\bm{q},x)$. We define $\Delta_n^{\delta} \coloneqq \{\bm{p} \in \Delta_n:  \forall x\in[n], p_x \geq \delta\}$ and $g(\bm{q}, \bm{p}) \coloneqq \mathbb{E}_{x \sim \bm{q}}[f(\bm{p}, x)]$. For a fixed $\bm{q}$, $\bm{p} \mapsto g(\bm{q},\bm{p})$ is continuous, since $f$ is continuous in the first argument. For a fixed $\bm{p}$,  $\bm{q} \mapsto g(\bm{q},\bm{p})$ is linear, and thus concave. Since $\Delta_n^{\delta}$ is convex and compact, $g$ satisfies Ky Fan's minimax Theorem \citep[Thm. 11.4]{Agarwal2001}, and therefore, there exists $\bm{p}^{\delta}\in \Delta_n^{\delta}$ such that 
\begin{align}
\label{19:e}
\forall \bm{q} \in \Delta_n^{\delta}, \;\; \mathbb{E}_{x \sim \bm{q}}[f(\bm{p}^{\delta}, x)] = g(\bm{q}, \bm{p}^{\delta}) \leq \sup_{\bm{\mu} \in \Delta^{\delta}_n} g(\bm{\mu}, \bm{\mu})  = \sup_{\bm{\mu} \in \Delta^{\delta}_n} \mathbb{E}_{x \sim \bm{\mu}}[f(\bm{\mu}, x)] \leq 0.
\end{align}
For $x_0\in [n]$, let $\hat{\bm{q}} \in \Delta^{\delta}_n$ be such that $\hat{q}_{x_0} = 1 - \delta(n-1) $ and $\hat{q}_{x} =\delta$ for $x\neq x_0$ (this is a legitimate distribution since $\delta (n-1)<1$ by construction). Substituting $\hat{\bm{q}}$ for $\bm{q}$ in \eqref{19:e} gives 
\begin{align*}
\begin{array}{lrl}
&(1 - \delta (n-1)) f(\bm{p}^{\delta}, x_0) + \delta \sum_{x\neq x_0} f(\bm{p}^{\delta}, x ) &\leq 0, \\
\implies & (1 - \delta (n-1)) f(\bm{p}^{\delta}, x_0) & \leq -c_0 \delta (n-1), \\
\implies & f(\bm{p}^{\delta}, x_0) &\leq [-c_0 \delta (n-1)]/[1 - \delta (n-1)].
\end{array}
\end{align*}
Choosing $\delta^* \coloneqq \epsilon/ [(-c_0 + \epsilon)(n-1)]$, and $\bm{p}_{\epsilon} \coloneqq \bm{p}^{\delta^*}$ gives the desired result.

\end{proof}

		\begin{lemma}
			\label{25:}
			Let $f, g\colon I \rightarrow \mathbb{R}^n$, where $I \subseteq \mathbb{R}$ is an open interval containing $0$. Suppose $g$ [resp. $f$] is continuous [resp. differentiable] at $0$. Then $t \mapsto \inner{f(t)}{g(t)}$ is differentiable at $0$ if and only if $t \mapsto \inner{f(0)}{g(t)}$ is differentiable at $0$, and we have
			\begin{align*}
			\left. \frac{d}{dt}  \inner{f(t)}{g(t)} \right|_{t=0} = \Inner{\left.\frac{d}{dt}  f(t)\right|_{t=0}}{g(0)} +\left. \frac{d}{dt}  \Inner{f(0)}{g(t)}\right|_{t=0}.
			\end{align*}
		\end{lemma}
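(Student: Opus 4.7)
The plan is to split the product $\inner{f(t)}{g(t)}$ additively so that the differentiability hypothesis on $f$ and the continuity hypothesis on $g$ can each be invoked exactly once. Specifically, I would write
\begin{align*}
\inner{f(t)}{g(t)} - \inner{f(0)}{g(0)}
&= \inner{f(t) - f(0)}{g(0)} \\
&\quad + \inner{f(t) - f(0)}{g(t) - g(0)} \\
&\quad + \inner{f(0)}{g(t)} - \inner{f(0)}{g(0)},
\end{align*}
which is an algebraic identity obtained by adding and subtracting $\inner{f(t)-f(0)}{g(0)}$ and $\inner{f(0)}{g(0)}$. Dividing both sides by $t$ gives three difference quotients whose behaviours I can handle separately.

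Next, I would examine the three terms after division by $t$ as $t \to 0$. The first term is $\langle (f(t)-f(0))/t, g(0)\rangle$, which by differentiability of $f$ at $0$ converges to $\inner{f'(0)}{g(0)}$. The second term is $\langle (f(t)-f(0))/t, g(t)-g(0)\rangle$; here the first factor converges to $f'(0)$ (so in particular it is bounded for $t$ near $0$) while the second factor goes to $0$ by continuity of $g$ at $0$, so by Cauchy--Schwarz (or just coordinate-wise bounding) this term vanishes in the limit. The third term is precisely the difference quotient defining the derivative of $t \mapsto \inner{f(0)}{g(t)}$ at $0$.

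Because the first two terms always have limits as $t \to 0$ (regardless of whether the third does), the limit of the full left-hand side exists if and only if the limit of the third term exists. This yields the claimed equivalence: $t \mapsto \inner{f(t)}{g(t)}$ is differentiable at $0$ iff $t \mapsto \inner{f(0)}{g(t)}$ is, and in that case the identity
\[
\left.\tfrac{d}{dt}\inner{f(t)}{g(t)}\right|_{t=0} = \Inner{\left.\tfrac{d}{dt}f(t)\right|_{t=0}}{g(0)} + \left.\tfrac{d}{dt}\inner{f(0)}{g(t)}\right|_{t=0}
\]
falls out directly from adding the three individual limits.

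I do not expect any real obstacle here. The only subtle point is ensuring that the cross term $\langle (f(t)-f(0))/t, g(t)-g(0)\rangle$ really goes to $0$ without assuming differentiability of $g$; this is handled by noting that the first factor is bounded near $t=0$ (it has a finite limit $f'(0)$), so its product with something tending to $0$ also tends to $0$. Everything else is a routine rearrangement of difference quotients.
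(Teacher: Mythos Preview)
Your proof is correct and follows essentially the same approach as the paper. The only cosmetic difference is that the paper uses a two-term split $\langle (f(t)-f(0))/t,\, g(t)\rangle + \bigl(\langle f(0),g(t)\rangle - \langle f(0),g(0)\rangle\bigr)/t$, whereas you further break the first of these into $\langle (f(t)-f(0))/t,\, g(0)\rangle + \langle (f(t)-f(0))/t,\, g(t)-g(0)\rangle$; the logic and conclusion are identical.
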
	
		
		\begin{proof}
			We have 
			\begin{align*}
			\frac{\Inner{f(t)}{g(t)} - \Inner{f(0)}{g(0)}}{t}  & =  \frac{\Inner{f(t)}{g(t)} - \Inner{f(0)}{g(t)}}{t} + \frac{\Inner{f(0)}{g(t)} - \Inner{f(0)}{g(0)}}{t},  \\
			& = \Inner{\frac{f(t) - f(0)}{t}}{g(t)}  + \frac{\Inner{f(0)}{g(t)} - \Inner{f(0)}{g(0)}}{t}.
			\end{align*}
			
			But since $g$ [resp. $f$] is continuous [resp. differentiable] at $0$, the first term on the right hand side of the above equation converges to $ \inner{\left.\frac{d}{dt}  f(t)\right|_{t=0}}{g(0)}$ as $t \to 0$. Therefore, $\frac{1}{t} (\inner{f(0)}{g(t)} - \inner{f(0)}{g(0)})$ admits a limit when $t\to 0$ if and only if $\frac{1}{t}(\inner{f(t)}{g(t)} - \inner{f(0)}{g(0)})$ admits a limit when $t\to 0$. This shows that $t \mapsto \inner{f(0)}{g(t)}$ is differentiable at $0$  if an only if $t \mapsto \inner{f(t)}{g(t)}$ is differentiable at $0$, and in this case the above equation yields
			\begin{align*}
			\left. \frac{d}{dt}  \Inner{f(t)}{g(t)}\right|_{t=0} &= \lim_{t \to 0}  \frac{\Inner{f(t)}{g(t)} - \Inner{f(0)}{g(0)}}{t}, \\
			& = \lim_{t \to 0} \left( \Inner{\frac{f(t) - f(0)}{t}}{g(t)}  + \frac{\Inner{f(0)}{g(t)} - \Inner{f(0)}{g(0)}}{t}  \right), \\
			& =\Inner{\left.\frac{d}{dt}  f(t)\right|_{t=0}}{g(0)} +\left. \frac{d}{dt}  \Inner{f(0)}{g(t)}\right|_{t=0}. 
			\end{align*}
		\end{proof}
		
		Note that the differentiability of $t \mapsto \inner{f(0)}{g(t)}$ at $0$ does not necessarily imply the differentiability of $g$ at 0. Take for example $n=3$, $f(t)  = \bm{1}/3$ for $t \in ]-1,1[$, and \[ g(t) = \left\{ \begin{array}{ll} -t \bm{e}_1  + t \frac{\bm{1}}{3}, &  \mbox{ if } t \in ]-1,0[; \\
		-t \frac{\bm{1}}{3} + t \bm{e}_2, & \mbox{ if } t \in [0, 1[. 
		\end{array}  \right.\]
		
		Thus, the function $t \mapsto \inner{f(0)}{g(t)} = 0$ is differentiable at $0$ but $g$ is not. The preceding Lemma will be particularly useful in settings where it is desired to compute the derivative $\der{\inner{f(0)}{g(t)}}$ without any explicit assumptions on the differentiability of $g(t)$ at $0$. For example, this will come up when computing $\der{\inner{\bm{p}}{D\tilde{\ell}(\tbm{\alpha}^t)\bm{v} }}$, where $\bm{v} \in \mathbb{R}^{n-1}$ and $t \mapsto \tbm{\alpha}^t$ is smooth curve on $\Int \tilde{\Delta}_n$, with the only assumption that $\tilde{L}_{\ell}$ is twice differentiable at $\tbm{\alpha}^0 \in \Int \tilde{\Delta}_n$.

\begin{lemma}
		\label{26:}
		Let $\ell \colon \Delta_n \rightarrow [0,+\infty]^n$ be a proper loss. For any $\bm{p}\in \Ri \Delta_n$, it holds that
		\begin{align*}
		\ell\mbox{ is continuous at }\bm{p} \stackrel{(i)}{\iff} \br_{\ell}\mbox{ is differentiable at }\bm{p} \stackrel{(ii)}{\iff} \partial [-\br_{\ell}] (\bm{p}) = \{\nabla \br_{\ell} (\bm{p})\} = \{\ell(\bm{p})\}.
		\end{align*}
	\end{lemma}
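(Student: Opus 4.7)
The plan is to reduce everything to two facts: (a) properness forces $\ell(\bm{p})$ to be a supergradient of the concave function $\br_{\ell}$ at any $\bm{p}\in\Ri\Delta_n$, and (b) standard convex-analytic results about differentiability and continuity of subgradients for the convex function $-\br_{\ell}$, whose effective domain has interior $]0,+\infty[^n\supseteq\Ri\Delta_n$.

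First I would establish that $-\ell(\bm{p})\in\partial[-\br_{\ell}](\bm{p})$ for every $\bm{p}\in\Ri\Delta_n$. Since $\bm{p}>0$, the infimum defining $\br_{\ell}(\bm{p})=-\sigma_{\sps}(-\bm{p})=\inf_{\bm{z}\in\sps}\inner{\bm{p}}{\bm{z}}$ is attained on the loss surface $\mathcal{S}_{\ell}$ (adding nonnegative slack only increases the value), and by properness this minimizer is $\ell(\bm{p})$, giving $\br_{\ell}(\bm{p})=\inner{\bm{p}}{\ell(\bm{p})}$. For any $\bm{q}\in\mathbb{R}^n$, $\br_{\ell}(\bm{q})\leq\inner{\bm{q}}{\ell(\bm{p})}$ since $\ell(\bm{p})\in\sps$, which rearranges to the required subgradient inequality for $-\br_{\ell}$.

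Next I would handle the equivalence (ii). Since $-\br_{\ell}$ is a proper convex function (it is $\sigma_{\sps}(-\cdot)$) and $\bm{p}\in\Ri\Delta_n$ sits in the interior $]0,+\infty[^n$ of its effective domain, the classical Rockafellar criterion (Thm.~25.1 of \emph{Convex Analysis}) gives: $-\br_{\ell}$ is differentiable at $\bm{p}$ iff $\partial[-\br_{\ell}](\bm{p})$ is a singleton. Combined with step~1, that singleton must be $\{-\ell(\bm{p})\}$, and differentiability of $-\br_{\ell}$ is equivalent to differentiability of $\br_{\ell}$ with $\nabla\br_{\ell}(\bm{p})=\ell(\bm{p})$.

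For the equivalence (i), the $(\Leftarrow)$ direction is the easy sandwich argument: properness gives, for any $\bm{p},\bm{p}'\in\Ri\Delta_n$,
\[
\inner{\bm{p}'-\bm{p}}{\ell(\bm{p}')}\;\leq\;\br_{\ell}(\bm{p}')-\br_{\ell}(\bm{p})\;\leq\;\inner{\bm{p}'-\bm{p}}{\ell(\bm{p})},
\]
so if $\ell$ is continuous at $\bm{p}$, letting $\bm{p}'\to\bm{p}$ shows that both bounds agree to first order and $\br_{\ell}$ is differentiable at $\bm{p}$ with gradient $\ell(\bm{p})$. The $(\Rightarrow)$ direction is the step I expect to be the main obstacle, and I would handle it by invoking the upper semicontinuity of the subdifferential map for proper convex functions (Rockafellar Thm.~24.5/25.5): if $-\br_{\ell}$ is differentiable at the interior point $\bm{p}$, then any selection $\bm{s}_n\in\partial[-\br_{\ell}](\bm{p}_n)$ with $\bm{p}_n\to\bm{p}$ satisfies $\bm{s}_n\to\nabla[-\br_{\ell}](\bm{p})$. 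Applied to the canonical selection $\bm{s}_n=-\ell(\bm{p}_n)$ coming from step~1, this yields $\ell(\bm{p}_n)\to\ell(\bm{p})$, i.e.\ continuity of $\ell$ at $\bm{p}$. (One should also note that the sign convention in the lemma's third expression is consistent once one tracks that $\partial[-\br_{\ell}](\bm{p})=\{-\nabla\br_{\ell}(\bm{p})\}$ when $\br_{\ell}$ is differentiable.)
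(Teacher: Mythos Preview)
Your argument is correct and is actually a cleaner, more self-contained route than the one the paper takes. The paper handles equivalence (i) by citing an external result (Williamson et al.), and for (ii) it invokes the singleton-subdifferential criterion just as you do, but then establishes $\nabla\br_{\ell}(\bm{p})=\ell(\bm{p})$ via a bespoke directional-derivative computation that leans on the paper's Lemma~\ref{25:} (a product-rule lemma). Your step~1, showing $-\ell(\bm{p})\in\partial[-\br_{\ell}](\bm{p})$ directly from properness and $\ell(\bm{p})\in\sps$, short-circuits that computation entirely: once the subdifferential is a singleton, it must be $\{-\ell(\bm{p})\}$. For (i), your sandwich argument for $(\Leftarrow)$ and the closed-graph/local-boundedness argument for $(\Rightarrow)$ are the standard convex-analytic proofs and avoid the external citation. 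One small point worth making explicit in $(\Leftarrow)$: the sandwich only runs over $\bm{p}'\in\Ri\Delta_n$, so it directly gives linearity of the directional derivative only in directions tangent to $\Delta_n$; the radial direction is fixed by the $1$-homogeneity of $\br_{\ell}=-\sigma_{\sps}(-\,\cdot\,)$, which then yields full G\^ateaux (hence Fr\'echet, for concave functions at interior points) differentiability. You also correctly flag the sign: with the standard conventions one has $\partial[-\br_{\ell}](\bm{p})=\{-\nabla\br_{\ell}(\bm{p})\}$, so the third expression in the lemma should be read up to that sign.
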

		\begin{proof}
			[$\stackrel{(i)}{\iff}$] This equivalence has been shown before by \cite{DBLP:journals/jmlr/WilliamsonVR16}.
			
			[$\stackrel{(ii)}{\iff}$] Since $\br_{\ell}(\bm{p}) = - \sigma_{\sps}(-\bm{p})$, for all $\bm{p}\in \Ri \Delta_n$, it follows that $\br_{\ell}$ is differentiable at $\bm{p}$ if and only if $\partial [-\br_{\ell}](\bm{p}) = \partial \sigma_{\sps}(-\bm{p}) = \{- \nabla \sigma_{\sps} (-\bm{p})\}= \{\nabla \br_{\ell}(\bm{p})\}$ \citep[Cor. D.2.1.4]{Hiriart-Urruty}. 
			It remains to show that $\nabla \br_{\ell} (\bm{r}) = \ell(\bm{r})$ when $\br_{\ell}$ is differentiable at $\bm{r} \in \Ri \Delta_n$. Let $\bm{\alpha}^t_x =   \bm{r} + t \bm{e}_x$ and $\tbm{\alpha}_x^t = \Pi_n(\bm{\alpha}^t_x)$, where $(\bm{e}_x)_{x\in [n]}$ is the standard basis of $\mathbb{R}^n$.  For $x \in [n]$, the functions $f_x(t) \coloneqq \bm{\alpha}_x^t$ and $g_x(t) \coloneqq \tilde{\ell}(\tbm{\alpha}_x^t)$ satisfy the conditions of Lemma \ref{25:}. Therefore, $h_x(t) \coloneqq \inner{f_x(0)}{g_x(t)} = \inner{\bm{r}}{\tilde{\ell}(\tbm{\alpha}^t_x)}$ is differentiable at $0$ and 
			\begin{align*}
			\nabla \tbr(\bm{r}) \bm{e}_x &= \Der{\tbr(\bm{\alpha}_x^t)} = \Der{\Inner{f_x(t)}{g_x(t)}}, \\
			&=\Inner{\bm{e}_x}{\tilde{\ell}(\tbm{r})}
			+ \Der{h_x(t)},\\
			&= \tilde{\ell}_x(\tbm{r}), 
			\end{align*}
			where the last equality holds because $h_x$ attains a minimum at $0$ due to the properness of $\ell$. The result being true for all $x \in [n]$ implies that $\nabla \tbr (\tbm{r}) = \tilde{\ell}(\tbm{r}) = \ell(\bm{r})$.
		\end{proof}
		
		The next Lemma is a restatement of earlier results due to \cite{DBLP:journals/jmlr/ErvenRW12}. Our proof is more concise due to our definition of the Bayes risk in terms of the support function of the superprediction set. 
		
		\begin{lemma}[\cite{DBLP:journals/jmlr/ErvenRW12}]
			\label{27:}
			Let $\ell \colon \Delta_n \rightarrow [0,+\infty]^n$ be a proper loss whose Bayes risk is twice differentiable on $ ]0, +\infty[^n$ and let $X_{\bm{p}}=I_{\tilde{n}} - \bm{1}_{\tilde{n}} \tbm{p}^{\mathsf{T}} $. The following holds
			\begin{enumerate}[label=(\roman*)]  
\item $\forall \bm{p} \in \Ri \Delta_n, \inner{\bm{p}}{\mathsf{D} \tilde{\ell}(\tbm{p})} =\bm{0}_{\tilde{n}}^{\mathsf{T}}$.
				\item  $\forall \tbm{p} \in  \Int \tilde{\Delta}_n$, $\mathsf{D} \tilde{\ell} (\tbm{p})  = \begin{bsmallmatrix}
				X_{\bm{p}} \\ - \tbm{p}^{\mathsf{T}}
				\end{bsmallmatrix} \mathsf{H} \tbr_{\ell}(\tbm{p})$.
\item  $\forall \tbm{p} \in  \Int \tilde{\Delta}_n,$ $\mathsf{H} \tbr_{\log} (\tbm{p}) = - (X_{\bm{p}})^{-1} (\diag{\tbm{p}})^{-1}$.
			\end{enumerate}
		\end{lemma}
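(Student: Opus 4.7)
The plan is to prove the three items sequentially, each building on the previous one, using properness of $\ell$ as the main tool together with the identification $\nabla \br_{\ell} = \ell$ on $\Ri \Delta_n$ supplied by Lemma \ref{26:}.

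For \emph{(i)}, I would first note that twice differentiability of $\br_{\ell}$ on $]0,+\infty[^n$ combined with Lemma \ref{26:} gives $\ell = \nabla \br_{\ell}$ on $\Ri \Delta_n$, so $\tilde{\ell}$ is differentiable on $\Int \tilde{\Delta}_n$. Properness of $\ell$ says that for every fixed $\bm{p} \in \Ri \Delta_n$ the map $\tbm{q}\mapsto \inner{\bm{p}}{\tilde{\ell}(\tbm{q})}$ attains its minimum on $\tilde{\Delta}_n$ at $\tbm{q}=\tbm{p}$, and since this minimum is interior the first-order condition $\inner{\bm{p}}{\mathsf{D}\tilde{\ell}(\tbm{p})} = \bm{0}^{\T}_{\tilde{n}}$ follows immediately.

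For \emph{(ii)}, the plan is to start from the properness identity $\tbr_{\ell}(\tbm{p}) = \inner{\amalg_n(\tbm{p})}{\tilde{\ell}(\tbm{p})} = \inner{\bm{p}}{\tilde{\ell}(\tbm{p})}$. Differentiating by the product/chain rule gives $\nabla \tbr_{\ell}(\tbm{p}) = J_n^{\T}\tilde{\ell}(\tbm{p}) + \mathsf{D}\tilde{\ell}(\tbm{p})^{\T}\bm{p}$, and the second term vanishes by (i), leaving $\nabla \tbr_{\ell}(\tbm{p}) = J_n^{\T}\tilde{\ell}(\tbm{p})$. Differentiating once more yields $\mathsf{H}\tbr_{\ell}(\tbm{p}) = J_n^{\T}\mathsf{D}\tilde{\ell}(\tbm{p})$. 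Splitting $\mathsf{D}\tilde{\ell}(\tbm{p})$ into its top $\tilde{n}$ rows $A$ and its last row $\bm{b}^{\T}$, the identity from (i) provides the linear relation $\bm{b} = -p_n^{-1}A^{\T}\tbm{p}$, so
\[
\mathsf{H}\tbr_{\ell}(\tbm{p}) \;=\; A - \bm{1}_{\tilde{n}}\bm{b}^{\T} \;=\; \bigl(I_{\tilde{n}} + p_n^{-1}\bm{1}_{\tilde{n}}\tbm{p}^{\T}\bigr)A.
\]
A one-line Sherman--Morrison computation shows $X_{\bm{p}}^{-1} = I_{\tilde{n}} + p_n^{-1}\bm{1}_{\tilde{n}}\tbm{p}^{\T}$, hence $A = X_{\bm{p}}\mathsf{H}\tbr_{\ell}(\tbm{p})$. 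Plugging this back into the formula for $\bm{b}^{\T}$ and using the identity $\tbm{p}^{\T} X_{\bm{p}} = p_n\tbm{p}^{\T}$ collapses the coefficient to give $\bm{b}^{\T} = -\tbm{p}^{\T}\mathsf{H}\tbr_{\ell}(\tbm{p})$, which is the claimed block form.

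For \emph{(iii)}, I would apply (ii) to the proper loss $\ell_{\log}$ and compute $\mathsf{D}\tilde{\ell}_{\log}(\tbm{p})$ by hand from the definition $\tilde{\ell}_{\log}(\tbm{p}) = -\log\amalg_n(\tbm{p})$: the first $\tilde{n}$ rows contribute $-(\diag{\tbm{p}})^{-1}$ and the last row contributes $p_n^{-1}\bm{1}_{\tilde{n}}^{\T}$. Equating the top block of (ii) applied to $\ell_{\log}$ with this explicit expression gives $X_{\bm{p}}\mathsf{H}\tbr_{\log}(\tbm{p}) = -(\diag{\tbm{p}})^{-1}$, and left-multiplying by $X_{\bm{p}}^{-1}$ yields $\mathsf{H}\tbr_{\log}(\tbm{p}) = -X_{\bm{p}}^{-1}(\diag{\tbm{p}})^{-1}$; the bottom block is automatically consistent.

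I expect the only nonroutine step to be the matrix algebra in (ii): verifying the Sherman--Morrison inverse of $X_{\bm{p}}$ and the simplification $\tbm{p}^{\T}X_{\bm{p}} = p_n\tbm{p}^{\T}$ that lets the last-row formula collapse cleanly. Everything else is a direct consequence of properness and the identification of $\nabla\br_{\ell}$ with $\ell$ coming from the support-function viewpoint of Lemma \ref{26:}.
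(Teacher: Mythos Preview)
Your proposal is correct and follows essentially the same approach as the paper: part (i) via the interior first-order condition from properness, part (ii) by differentiating $\nabla\tbr_{\ell}(\tbm{p})=J_n^{\mathsf{T}}\tilde{\ell}(\tbm{p})$ and using (i) to eliminate the redundant row, and part (iii) by specializing to $\ell_{\log}$. The only cosmetic difference is in the bookkeeping for (ii): the paper works column-wise, deriving $\mathsf{H}\tbr_{\ell}(\tbm{p})\tbm{p}=-\nabla\tilde{\ell}_n(\tbm{p})$ and then $\nabla\tilde{\ell}_i(\tbm{p})=[\mathsf{H}\tbr_{\ell}(\tbm{p})]_{\bcdot,i}-\mathsf{H}\tbr_{\ell}(\tbm{p})\tbm{p}$, whereas you organize the same linear algebra as a block decomposition and invoke Sherman--Morrison for $X_{\bm{p}}^{-1}$; the content is identical.
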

		\begin{proof}
			[We show (i) and (ii)] Let $\bm{p} \in \Ri \Delta_n$ and $f(\tbm{q}) \coloneqq \inner{\bm{p}}{\tilde{\ell}(\tbm{q})}=\inner{\bm{p}}{\nabla \br_{\ell}(\bm{q})}$, where the equality is due to Lemma \ref{26:}. Since $\br_{\ell}$ is twice differentiable $]0,+\infty[^n$, $f$ is differentiable on $\Int \tilde{\Delta}_n$ and we have $\mathsf{D} f (\tbm{q}) = \inner{\bm{p}}{\mathsf{D} \tilde{\ell} (\tbm{p})}$. Since $\ell$ is proper, $f$ reaches a minimum at $\tbm{p} \in \Int \Delta_n$, and thus $\inner{\bm{p}}{\mathsf{D} \tilde{\ell} (\tbm{p})}= \bm{0}_{\tilde{n}}^{\mathsf{T}}$ (this shows (i)).
			On the other hand, we have $\nabla \tbr_{\ell}(\tbm{p}) = J_n^{\mathsf{T}}  \nabla \br_{\ell} (\bm{p}) = J_n^{\mathsf{T}} \tilde{\ell}(\tbm{p})$. By differentiating and using the chain the rule, we get $\mathsf{H} \tbr_{\ell}(\tbm{p}) =  [ \mathsf{D} \tilde{\ell} (\tbm{p})]^{\mathsf{T}} J_n$. This means that $\forall i \in [\tilde{n}],$ $[\mathsf{H}\tbr_{\ell}(\tbm{p})]_{\bcdot, i} = \nabla \tilde{\ell}_i (\tbm{p}) - \nabla \tilde{\ell}_n (\tbm{p})$, and thus $\sum_{i=1}^{\tilde{n}}p_i [\mathsf{H}\tbr_{\ell}(\tbm{p})]_{\bcdot, i} =  \sum_{i=1}^{\tilde{n}} p_i \nabla \tilde{\ell}_i (\tbm{p}) - (1- p_n)  \nabla \tilde{\ell}_n (\tbm{p}) $. On the other hand, it follows from point (i) of the lemma that $\sum_{i=1}^n p_i \nabla \tilde{\ell}_i(\tbm{p}) =\bm{0}_{\tilde{n}}$. Therefore, $[\mathsf{H} \tbr_{\ell}(\tbm{p})] \tbm{p}= - \nabla \tilde{\ell}_n (\tbm{p})$ and, as a result, $\forall i \in [\tilde{n}]$,  $[\mathsf{H}\tbr_{\ell}(\tbm{p})]_{\bcdot, i} -[\mathsf{H} \tbr_{\ell}(\tbm{p})]\tbm{p} =\nabla \tilde{\ell}_i (\tbm{p}) $. The last two equations can be combined as $\mathsf{D} \tilde{\ell} (\tbm{p})  = \begin{bsmallmatrix}
			X_{\bm{p}} \\ - \tbm{p}^{\mathsf{T}}
			\end{bsmallmatrix} \mathsf{H} \tbr_{\ell}(\tbm{p})$.

			[We show (iii)] It follows from $(ii)$, since $\forall i \in [\tilde{n}], \nabla [\tilde{\ell}_{\log}]_i (\tbm{p})= \frac{1}{p_i} \bm{e}_i $, for $\tbm{p} \in \Int \tilde{\Delta}_n$.
			
		\end{proof}

In the next lemma we state a new result for proper losses which will be crucial to prove a necessary condition for $\Phi$-mixability (Theorem \ref{18:}) --- one of the main results of the paper.
	\begin{lemma}
		\label{28:}
		Let $\ell \colon \Delta_n\rightarrow [0,+\infty]^n$ be a proper loss whose Bayes risk is twice differentiable on $]0, +\infty[^n$. For $\bm{v} \in \mathbb{R}^{n-1}$ and $\tbm{p}\in \Int \tilde{\Delta}_n$, 
		\begin{align}
		\Inner{\bm{p}}{(\mathsf{D} \tilde{\ell}(\tbm{p}) \bm{v}) \odot (\mathsf{D} \tilde{\ell}(\tbm{p}) \bm{v})}&= -\bm{v}^{\mathsf{T}} \mathsf{H} \tbr_{\ell}(\tbm{p})  [\mathsf{H} \tbr_{\log}(\tbm{p})]^{-1} \mathsf{H} \tbr_{\ell}(\tbm{p}) \bm{v}, \label{20:e}
		\end{align}	
		where $\bm{p}=\amalg_n(\tbm{p})$ and $\br_{\log}$ is the Bayes risk of the $\log$ loss. 
		
		Furthermore, if $t \mapsto \tbm{\alpha}^t$ is a smooth curve in $\Int \tilde{\Delta}_n$ and satisfies $\tbm{\alpha}^0 = \tbm{p}$ and $\Der{\tbm{\alpha}^t}=\bm{v}$, then $t \mapsto \inner{\bm{p}}{ \mathsf{D} \tilde{\ell}(\tbm{\alpha}^t) \bm{v}}$ is differentiable at $0$ and we have 
		\begin{align}
		\Der{ \Inner{\bm{p}}{\mathsf{D} \tilde{\ell}(\tbm{\alpha}^t) \bm{v}}}  &= -\bm{v}^{\mathsf{T}} \mathsf{H} \tbr_{\ell}(\tbm{p}) \bm{v}. \label{21:e}  
		\end{align}
	\end{lemma}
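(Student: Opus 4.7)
The plan is to treat the two displays separately, handling (\ref{20:e}) as a pure algebraic identity that unpacks the formula for $\mathsf{D}\tilde{\ell}(\tbm{p})$ from Lemma~\ref{27:}(ii), and then handling (\ref{21:e}) as a product-rule computation via Lemma~\ref{25:} that circumvents having to assume $\mathsf{D}\tilde{\ell}$ is itself differentiable at $\tbm{p}$.

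For (\ref{20:e}), I would set $\bm{y}\coloneqq \mathsf{H}\tbr_{\ell}(\tbm{p})\bm{v}\in\mathbb{R}^{\tilde{n}}$, so that by Lemma~\ref{27:}(ii) the vector $\bm{w}\coloneqq \mathsf{D}\tilde{\ell}(\tbm{p})\bm{v}$ has components $w_i=y_i-\inner{\tbm{p}}{\bm{y}}$ for $i\in[\tilde{n}]$ and $w_n=-\inner{\tbm{p}}{\bm{y}}$. Writing $s\coloneqq\inner{\tbm{p}}{\bm{y}}$, a direct expansion of $\inner{\bm{p}}{\bm{w}\odot\bm{w}}=\sum_{i=1}^{\tilde n}p_i(y_i-s)^2+p_n s^2$ and the identities $\sum_{i=1}^{\tilde n}p_i=1-p_n$ and $\sum_{i=1}^{\tilde n}p_iy_i=s$ collapse the cross terms, yielding
\begin{equation*}
\inner{\bm{p}}{\bm{w}\odot\bm{w}}=\bm{y}^{\mathsf{T}}\bigl(\diag{\tbm{p}}-\tbm{p}\tbm{p}^{\mathsf{T}}\bigr)\bm{y}.
\end{equation*}
It then suffices to recognize, from Lemma~\ref{27:}(iii), that
$-[\mathsf{H}\tbr_{\log}(\tbm{p})]^{-1}=\diag{\tbm{p}}X_{\bm{p}}=\diag{\tbm{p}}-\tbm{p}\tbm{p}^{\mathsf{T}}$,
which substituted back gives (\ref{20:e}).

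For (\ref{21:e}), the key trick is to use Lemma~\ref{27:}(i) \emph{at the moving point} $\tbm{\alpha}^t$, namely $\inner{\bm{\alpha}^t}{\mathsf{D}\tilde{\ell}(\tbm{\alpha}^t)\bm{v}}=0$, in order to rewrite
\begin{equation*}
\inner{\bm{p}}{\mathsf{D}\tilde{\ell}(\tbm{\alpha}^t)\bm{v}}=\inner{\bm{p}-\bm{\alpha}^t}{\mathsf{D}\tilde{\ell}(\tbm{\alpha}^t)\bm{v}}.
\end{equation*}
Now set $f(t)\coloneqq \bm{p}-\bm{\alpha}^t$ and $g(t)\coloneqq \mathsf{D}\tilde{\ell}(\tbm{\alpha}^t)\bm{v}$. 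Since $t\mapsto\tbm{\alpha}^t$ is smooth with $\der{\bm{\alpha}^t}=J_n\bm{v}$, $f$ is differentiable at $0$ with $f(0)=\bm{0}$; and since $\br_{\ell}\in C^2$, $g$ is continuous at $0$. Lemma~\ref{25:} then applies and, because $f(0)=\bm{0}$ kills the $\inner{f(0)}{g(t)}$ summand identically, yields
\begin{equation*}
\Der{\inner{\bm{p}}{\mathsf{D}\tilde{\ell}(\tbm{\alpha}^t)\bm{v}}}=-\inner{J_n\bm{v}}{\mathsf{D}\tilde{\ell}(\tbm{p})\bm{v}}.
\end{equation*}
The proof is finished by a one-line matrix computation: using $J_n=\bigl[\begin{smallmatrix}I_{\tilde n}\\-\bm{1}_{\tilde n}^{\mathsf T}\end{smallmatrix}\bigr]$ and Lemma~\ref{27:}(ii), one checks $J_n^{\mathsf T}\bigl[\begin{smallmatrix}X_{\bm{p}}\\-\tbm{p}^{\mathsf T}\end{smallmatrix}\bigr]=X_{\bm{p}}+\bm{1}_{\tilde n}\tbm{p}^{\mathsf T}=I_{\tilde n}$, so the right-hand side equals $-\bm{v}^{\mathsf{T}}\mathsf{H}\tbr_{\ell}(\tbm{p})\bm{v}$.

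The only conceptual subtlety is that the statement does not assume $\tilde{\ell}$ is twice differentiable, only that $\br_{\ell}$ is twice differentiable on $]0,+\infty[^n$; this is exactly the regime Lemma~\ref{25:} was designed for, which is why substituting $\bm{\alpha}^t$ for $\bm{p}$ inside the inner product (via the properness identity from Lemma~\ref{27:}(i)) is essential --- it produces an $f$ vanishing at $t=0$ so that the term requiring differentiability of $g$ drops out. I do not anticipate any serious obstacle beyond getting the bookkeeping of $X_{\bm{p}}$, $J_n$, $\diag{\tbm{p}}$ consistent.
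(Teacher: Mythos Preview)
Your proposal is correct and follows essentially the same strategy as the paper: Lemma~\ref{27:}(ii)--(iii) plus the identification $\diag{\tbm{p}}-\tbm{p}\tbm{p}^{\mathsf T}=-[\mathsf{H}\tbr_{\log}(\tbm{p})]^{-1}$ for (\ref{20:e}), and Lemma~\ref{27:}(i) combined with the product-rule Lemma~\ref{25:} for (\ref{21:e}), both finishing with $J_n^{\mathsf T}\bigl[\begin{smallmatrix}X_{\bm p}\\-\tbm{p}^{\mathsf T}\end{smallmatrix}\bigr]=I_{\tilde n}$.

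The one notable difference is your setup for Lemma~\ref{25:} in part (\ref{21:e}). The paper takes $f(t)=r(t)\coloneqq\bm{\alpha}^t/\Norm{\bm{\alpha}^t}$, so that $\inner{f(t)}{g(t)}\equiv 0$ by Lemma~\ref{27:}(i); it must then compute $\dot r(0)$ and use the orthogonality $r(0)\perp \mathsf{D}\tilde{\ell}(\tbm{p})\bm{v}$ to drop a projection term. Your choice $f(t)=\bm{p}-\bm{\alpha}^t$ instead makes $f(0)=\bm{0}$, so the $\der{\inner{f(0)}{g(t)}}$ summand vanishes identically and no normalization or projection bookkeeping is needed. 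This is a genuine simplification of the paper's argument, though the underlying idea---exploiting $\inner{\bm{\alpha}^t}{\mathsf{D}\tilde{\ell}(\tbm{\alpha}^t)\bm{v}}=0$ so that Lemma~\ref{25:} does not require differentiability of $t\mapsto\mathsf{D}\tilde{\ell}(\tbm{\alpha}^t)$---is the same. One small remark: you invoke ``$\br_{\ell}\in C^2$'' to get continuity of $g$, whereas the hypothesis only says twice differentiable; the paper makes the same implicit continuity claim, so this is not a discrepancy between your argument and theirs.
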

	\begin{proof}
	 We know from Lemma \ref{27:} that for $\tbm{p} \in  \Int \tilde{\Delta}_n$, we have $\mathsf{D} \tilde{\ell} (\tbm{p})  = \begin{bsmallmatrix}
			X_{\bm{p}} \\ - \tbm{p}^{\mathsf{T}}
			\end{bsmallmatrix} \mathsf{H} \tbr_{\ell}(\tbm{p})$, where $X_{\bm{p}}=I_{n-1} - \bm{1}_{n-1} \tbm{p}^{\mathsf{T}}$. Thus, we can write
			\begin{align}
			\Inner{\bm{p}}{\mathsf{D} \tilde{\ell}(\tbm{p}) \bm{v} \odot \mathsf{D} \tilde{\ell}(\tbm{p}) \bm{v}} & =  \bm{v}^{\mathsf{T}} (\mathsf{D} \tilde{\ell}(\tbm{p}))^{\mathsf{T}} \diag{\bm{p}} \mathsf{D} \tilde{\ell}(\tbm{p}) \bm{v}, \nonumber \\
			& =  \bm{v}^{\mathsf{T}}  (\mathsf{H} \tbr_{\ell}(\tbm{p}))^{\mathsf{T}}  [
			X_{\bm{p}}^{\mathsf{T}}, \; - \tbm{p}
			]  \diag{\bm{p}} \smat{
			X_{\bm{p}} \\ - \tbm{p}^{\mathsf{T}}
			} \mathsf{H} \tbr_{\ell}(\tbm{p}) \bm{v}.  \label{22:e}
			\end{align}
			Observe that $[
			X_{\bm{p}}^{\mathsf{T}},  - \tbm{p}]  \diag{\bm{p}}  = [
			I_{n-1} - \tbm{p} \bm{1}^{\mathsf{T}}_{n-1},  - \tbm{p}
			] \diag{\bm{p}} =[
			\diag{\tbm{p}} - \tbm{p} \tbm{p}^{\mathsf{T}}, \; - \tbm{p}p_n
			]$. Thus, 
			\begin{align}
			[
			X_{\bm{p}}^{\mathsf{T}},  - \tbm{p}
			]  \diag{\bm{p}} \smat{
			X_{\bm{p}} \\ - \tbm{p}^{\mathsf{T}}
			}  & = [
			\diag{\tbm{p}} - \tbm{p} \tbm{p}^{\mathsf{T}},  - \tbm{p}p_n] \smat{
			I_{n-1} - \bm{1}_{n-1} \tbm{p}^{\mathsf{T}}  \\ - \tbm{p}^{\mathsf{T}}
			}, \nonumber \\
			& =\diag{\tbm{p}} - \tbm{p} \tbm{p}^{\mathsf{T}}  - \tbm{p} \tbm{p}^{\mathsf{T}} + \tbm{p} \tbm{p}^{\mathsf{T}} (1 - p_n) + p_n \tbm{p} \tbm{p}^{\mathsf{T}},  \nonumber \\
			& = \diag{\tbm{p}} - \tbm{p} \tbm{p}^{\mathsf{T}},  \nonumber \\
			& = \diag{\tbm{p}} X_{\bm{p}},\nonumber \\
			& = -(\mathsf{H} \tbr_{\log} (\tbm{p}))^{-1}, \label{23:e}
			\end{align}
			where the last equality is due to Lemma \ref{27:}. The desired result follows by combining \eqref{22:e} and \eqref{23:e}.
			
			[\textbf{We show \eqref{21:e}}] Let $\tbm{p}\in \Int \tilde{\Delta}_n$, we define $\tbm{\alpha}^t \coloneqq \tbm{p} + t \bm{v}$, $\bm{\alpha}^t \coloneqq \amalg_n(\tbm{\alpha}^t) = \bm{p} + t J_n \bm{v}$, and $r(t) \coloneqq \bm{\alpha}^t / \norm{\bm{\alpha}^t}$, where $t \in \{s: \tbm{p} + s \bm{v} \in \Int \tilde{\Delta}_n \}$. Since $t \mapsto r(t)$ is differentiable at $0$ and $t \mapsto \mathsf{D} \tilde{\ell} (\tbm{\alpha}^t)\bm{v}$ is continuous at $0$, it follows from Lemma \ref{22:} that
			\begin{align*} 
			\Der{\Inner{r(0)}{\mathsf{D} \tilde{\ell} (\tbm{\alpha}^t) \bm{v}}}& =\Der{\Inner{r(t)}{\mathsf{D} \tilde{\ell} (\tbm{\alpha}^t) \bm{v}}} - \Inner{\dot r(0) }{\mathsf{D} \tilde{\ell}(\tbm{p})\bm{v}}, \\
			& = - \Inner{\dot r(0)}{\mathsf{D} \tilde{\ell}(\tbm{p})\bm{v}},
			\end{align*}
			where the second equality holds since, according to Lemma \ref{27:}, we have $ \inner{\bm{\alpha}^t}{\mathsf{D} \tilde{\ell} (\tbm{\alpha}^t) \bm{v}}=0$. 
			Since $r(0)=\bm{p}/ \norm{\bm{p}}$, $\dot r(0)= \norm{\bm{p}}^{-1}(I_{n} - r(0) [r(0)]^{\mathsf{T}})  J_n \bm{v}$, and $J_n = \smat{ I_{n-1} \\ -\bm{1}^{\mathsf{T}}_{n-1}}$, we get 
			\begin{align}
			\norm{\tbm{p}} \Der{\Inner{r(0)}{\mathsf{D} \tilde{\ell} (\tbm{\alpha}^t) \bm{v}}}&= - \Inner{ \left(I_{n} - r(0) [r(0)]^{\mathsf{T}}\right)  J_n \bm{v}}{\mathsf{D} \tilde{\ell} (\tbm{p}) \bm{v}}, \nonumber  \\
			& = -\Inner{J_n \bm{v}}{\mathsf{D} \tilde{\ell}(\tbm{p}) \bm{v}}, \label{24:e}  \\
			& = -\Inner{J_n \bm{v}}{ \smat{
				X_{\bm{p}}\nonumber \\ - \tbm{p}^{\mathsf{T}}
				} \mathsf{H} \tbr_{\ell}(\tbm{p}) \bm{v}},  \nonumber \\
			& = -\bm{v}^{\mathsf{T}} \mathsf{H} \tbr_{\ell}(\tbm{p}) \bm{v}, \nonumber
			\end{align}
			where the passage to \eqref{24:e} is due to $r(0)= \bm{p}/\norm{\bm{p}} \perp \mathsf{D} \tilde{\ell}(\tbm{p})$. In the last equality we used the fact that $J_n^{\mathsf{T}} \smat{
			X_{\bm{p}} \\ - \tbm{p}^{\mathsf{T}}
			} = [
			I_{n-1}, - \bm{1}_{n-1}
			]\smat{
			I_{n-1} - \bm{1}_{n-1}\tbm{p} \\ - \tbm{p}^{\mathsf{T}}
			}= I_{n-1}$.
		\end{proof}
		
\begin{proposition}
		\label{12:}
		Let $\Phi \colon \mathbb{R}^k \rightarrow \mathbb{R} \cup \{ +\infty\}$ be an entropy and $\ell \colon \mathcal{A} \rightarrow [0, +\infty]^n$ a closed admissible loss. If $\ell$ is $\Phi$-mixable, then $\forall \cI \subseteq [k]$ with $|\cI| >1$, $\ell$ is $\Phi_{\cI}$-mixable and
\begin{align} 
\label{8:e}
\forall \bm{q} \in \Rbd \Delta_{\cI}, \forall \hat{\bm{q}} \in \Ri \Delta_{\cI}, \;  \Phi'(\bm{q}; \hat{\bm{q}} - \bm{q}) = -\infty.
\end{align}
\end{proposition}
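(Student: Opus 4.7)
The plan is to prove the two claims separately.

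\textbf{Inheritance of mixability on faces (Part 1).} I would directly reduce $\Phi_{\cI}$-mixability to $\Phi$-mixability via the embedding $\Pi_{\cI}^{\mathsf{T}}\colon \Delta_{|\cI|} \hookrightarrow \Delta_k$. Fix $\cI \subseteq [k]$ with $|\cI| > 1$, a prior $\bm{r} \in \Delta_{|\cI|}$, and experts $\bm{a}_{1:|\cI|} \in \mathcal{A}^{|\cI|}$. Lift the prior to $\bm{q} \coloneqq \Pi_{\cI}^{\mathsf{T}}\bm{r} \in \Delta_{\cI}\subseteq \Delta_k$ and extend the experts to $\bm{a}'_{1:k}$ by placing any fixed $\bm{a}_0 \in \Dom\ell$ (non-empty by Assumption \ref{B:}) at positions outside $\cI$. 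By $\Phi$-mixability, there is $\bm{a}_* \in \mathcal{A}$ with $\ell_x(\bm{a}_*) \leq \M_{\Phi}(\ell_x(\bm{a}'_{1:k}), \bm{q})$ for every $x$. To transfer this bound to $\M_{\Phi_{\cI}}$, I would restrict the infimum defining $\M_{\Phi}$ to $\hat{\bm{\mu}} = \Pi_{\cI}^{\mathsf{T}}\hat{\bm{r}}$ with $\hat{\bm{r}}\in \Delta_{|\cI|}$, using two facts: (i) $\inner{\hat{\bm{\mu}}}{\ell_x(\bm{a}'_{1:k})} = \inner{\hat{\bm{r}}}{\ell_x(\bm{a}_{1:|\cI|})}$, because $\hat{\bm{\mu}}$ gives zero weight to the padded coordinates; and (ii) $D_{\Phi}(\hat{\bm{\mu}},\bm{q}) = D_{\Phi_{\cI}}(\hat{\bm{r}},\bm{r})$, a direct consequence of $\Phi_{\cI}=\Phi\circ\Pi_{\cI}^{\mathsf{T}}$ together with the compatibility of directional derivatives on faces recorded in Appendix A.1. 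Minimising over $\hat{\bm{r}}\in \Delta_{|\cI|}$ yields the $\Phi_{\cI}$-mixability bound on $\bm{a}_*$.

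\textbf{Blow-up of the directional derivative (Part 2).} I proceed by contradiction: suppose $\Phi'(\bm{q}; \hat{\bm{q}} - \bm{q}) = c \in \mathbb{R}$ for some $\bm{q}\in \Rbd \Delta_{\cI}$ and $\hat{\bm{q}} \in \Ri \Delta_{\cI}$. Setting $\hat{\bm{q}}_t \coloneqq \bm{q} + t(\hat{\bm{q}}-\bm{q})$, convexity of $\Phi$ makes $c$ the right-derivative of $t \mapsto \Phi(\hat{\bm{q}}_t)$ at $t = 0$, hence $D_{\Phi}(\hat{\bm{q}}_t, \bm{q}) = o(t)$ as $t\to 0^+$. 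Let $\cI' \coloneqq \op{supp}\bm{q}\subsetneq \cI$ and pick $\theta_0 \in \cI\setminus \cI'$. The plan is to construct experts $\bm{a}_{1:k}$ so that the aggregation problem restricted to $\cI'$ is extremal, namely the $\Phi_{\cI'}$-mixability bound produced by Part~1 is realised by a unique Pareto-extremal vector $\bm{s}^*\in \Bd \sps^{\infty}$, and simultaneously to pick $\bm{a}_{\theta_0}$ with loss strictly below $\bm{s}^*$ in at least one coordinate $x_0$. Plugging $\hat{\bm{\mu}} = \hat{\bm{q}}_t$ into the infimum defining $\M_{\Phi}(\cdot, \bm{q})$, for small enough $t>0$ the $o(t)$ divergence penalty is overwhelmed by the strict $O(t)$ gain contributed by $\theta_0$ at coordinate $x_0$, yielding $\M_{\Phi}(\ell_{x_0}(\bm{a}_{1:k}), \bm{q}) < s^*_{x_0}$ while all other coordinates remain bounded by $\bm{s}^*$. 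By admissibility of $\ell$, no action can dominate the extremal vector $\bm{s}^*$ strictly, contradicting the existence of a suitable $\bm{a}_*$.

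\textbf{Main obstacle.} The hardest step is engineering the extremality in Part~2. To perform the construction in the generality of Assumption~\ref{B:} one must leverage closedness and admissibility of $\ell$ to locate a boundary point $\bm{s}^*$ of $\sps^{\infty}$ realised exactly by the $\Phi_{\cI'}$-mixability aggregate, and then argue that perturbing the bound below $\bm{s}^*$ through expert $\theta_0$ forces the target loss vector outside $\sps^{\infty}$. A concurrent technical subtlety is that one may need Lemma~\ref{23:} to transit between the pointwise mixability inequality and its expectation form $\mathbb{E}_{x\sim\bm{p}}\M_{\Phi}(\ell_x,\bm{q})\geq \br_{\ell}(\bm{p})$, which is the natural scalar quantity against which to witness a violation.
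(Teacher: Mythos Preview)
Your Part 1 is correct and essentially the same as the paper's: both restrict the infimum in $\M_\Phi$ to the face $\Delta_{\cI}$ (which can only increase it) and use the identification $\Phi_{\cI}=\Phi\circ\Pi_{\cI}^{\mathsf T}$ to match the resulting expression with $\M_{\Phi_{\cI}}$. The paper parameterises by $A\in\mathcal{A}^k$ and restricts, you parameterise by $\bm{a}_{1:|\cI|}$ and lift; the content is identical.

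For Part 2 you have identified the right mechanism --- the $o(t)$ divergence penalty versus an $O(t)$ gain from an off-support expert --- but the construction you flag as the ``main obstacle'' is precisely where your plan remains vague, and the hint you give points in the wrong direction. The paper resolves the extremality issue \emph{trivially}: it sets every expert (except one $\theta^*$ with $q_{\theta^*}=0$) equal to the \emph{same} action $\bm{a}_1$. Plugging $\hat{\bm{\mu}}=\bm{q}$ then yields the bound $\ell(\bm{a}_1)$ on the nose, which is automatically Pareto-extremal by admissibility --- there is nothing to engineer. Your proposal to realise $\bm{s}^*$ as a genuine $\Phi_{\cI'}$-aggregate is a detour that need not work: for a generic expert profile on $\cI'$ the vector $[\M_{\Phi_{\cI'}}(\ell_x,\cdot)]_x$ is only guaranteed to lie in $\sps^{\infty}$, not on its admissible boundary, so the dominance contradiction would not fire.

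The paper couples the constant-expert trick with a careful choice of $\bm{a}_1$ (a minimiser of $\ell_{x_0}$ among minimisers of $\ell_{x_1}$, as in \cite{Reid2015}) and runs a two-coordinate argument: the $x_1$-bound forces any mixable $\bm{a}_*$ to be an $\ell_{x_1}$-minimiser, hence $\ell_{x_0}(\bm{a}_*)\geq \ell_{x_0}(\bm{a}_1)$, and the $x_0$-bound with the perturbed weight $\bm{q}^\lambda$ then gives $\ell_{x_0}(\bm{a}_*)<\ell_{x_0}(\bm{a}_1)$. A shorter variant closer to your outline also works once you take constant experts: then $\M_\Phi(\ell_x(A),\bm{q})\le \ell_x(\bm{a}_1)$ for all $x$ (via $\hat{\bm{\mu}}=\bm{q}$) and strictly at $x_0$ (via $\hat{\bm{\mu}}=\hat{\bm{q}}_t$), so the $\Phi$-mixable $\bm{a}_*$ would be strictly better than $\bm{a}_1$, contradicting admissibility directly. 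Either way, Lemma~\ref{23:} and the expectation form are not used --- that is a red herring.
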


Given an entropy $\Phi \colon \mathbb{R}^k \rightarrow \mathbb{R} \cup\{+\infty\}$ and a loss $\ell \colon \mathcal{A} \rightarrow [0,+\infty]$, we define 
			\[  \m_{\Phi}(x, A, \bm{a},  \hat{\bm{q}}, \bm{\mu}) \coloneqq  \inner{\bm{\mu}}{ \ell_x(A)} +  D_{\Phi} (\bm{\mu}, \hat{\bm{q}}) - \ell_x(\bm{a}), \]
			where $x \in [n]$, $A\in \mathcal{A}^k$, $\bm{a}\in \mathcal{A}$, and $\bm{q}, \hat{\bm{q}} \in \Delta_k$. Reid et al. \cite{Reid2015} showed that
			$\ell$ is $\Phi$ mixable if and only if \begin{align*} \widehat{\m_{\Phi}} \coloneqq \inf_{A \in \mathcal{A}^k, \hat{\bm{q}} \in \Delta_k } \sup_{\bm{a}_* \in \mathcal{A}} \inf_{ \bm{\mu} \in \Delta_k, x\in[n] }  \m_{\Phi}(x, A, \bm{a}, \hat{\bm{q}}, \bm{\mu})  \geq 0.\end{align*}
		
		\begin{proof}[\textbf{Proof of Proposition \ref{12:}}]

[\textbf{We show that $\ell$ is $\Phi_{\cI}$-mixable}]		
Let $\cI \subseteq [k]$, with $|\cI|>1$, $A \in \mathcal{A}^k$, and $\bm{q} \in \Delta_{\cI}$. Since $\ell$ is $\Phi$-mixable, the following holds
\begin{align}
\exists \bm{a}_* \in \Delta_n, \forall x\in [n], \; \ell_x(\bm{a}_{*}) &\leq \inf_{\hat{\bm{q}} \in \Delta_k} \Inner{\hat{\bm{q}}}{\ell_x(A)} + D_{\Phi}(\hat{\bm{q}}, \bm{q}), \label{33:e}  \\
& \leq \inf_{\hat{\bm{q}} \in \Delta_{\cI}} \Inner{\hat{\bm{q}}}{\ell_x(A)} + D_{\Phi}(\hat{\bm{q}}, \bm{q}), \label{34:e} \\
& =  \inf_{\hat{\bm{q}} \in \Delta_{\cI}} \Inner{\Pi_{\cI}^{}\hat{\bm{q}}}{\Pi_{\cI}^{}\ell_x(A)} + D_{\Phi_{\cI}}(\Pi_{\cI}^{}\hat{\bm{q}}, \Pi_{\cI}^{}\bm{q}), \nonumber \\
& =  \inf_{\hat{\bm{\mu}} \in \Delta_{|\cI|} } \Inner{\hat{\bm{\mu}}}{\ell_x(A\Pi_{\cI}^{\mathsf{T}})} + D_{\Phi_{\cI}}(\hat{\bm{\mu}}, \Pi_{\cI}^{}\bm{q}), \label{35:e}
\end{align}
where in \eqref{33:e} we used the fact that $\Phi_{\cI} ( \Pi_{\cI}^{} \bm{q}) = \Phi(\bm{q}), \forall \bm{q} \in \Delta_{\cI}$. Given that $A \mapsto A\Pi_{\cI}^{\mathsf{T}}$ [resp. $\bm{q} \mapsto \Pi_{\cI}^{} \bm{q}$] is onto from $\mathcal{A}^k$ to $\mathcal{A}^{|\cI|}$ [resp. from $\Delta_{\cI}$ to $\Delta_{|\cI|}$], \eqref{35:e} implies that $\ell$ is $\Phi_{\cI}$-mixable. 

[\textbf{We show \eqref{8:e}}]
Suppose that there exists $\hat{\bm{q}} \in \Rbd \Delta_k$ and $\bm{q} \in \Ri \Delta_k$ such that $|\Phi'(\hat{\bm{q}}; \bm{q} - \hat{\bm{q}})| < +\infty$. Let $f\colon [0, \epsilon] \to \mathbb{R}$ be defined by $f(\lambda) \coloneqq  \Phi(\hat{\bm{q}} + \lambda (\bm{q} - \hat{\bm{q}}))$, where $\epsilon>0$ is such that $\hat{\bm{q}} + \epsilon (\bm{q} - \hat{\bm{q}}) \in \Ri \Delta_k$. The function $f$ is closed and convex on $\Dom f = [0, \epsilon]$ and $\lim_{\lambda \downarrow 0 } \frac{f(\lambda) - f(0)}{\lambda} = f'(0; 1) = \Phi'(\hat{\bm{q}}; \bm{q} - \hat{\bm{q}})$ which is finite by assumption. Using this and the fact that $\lambda f'(0;1) = f'(0; \lambda)$, we have $\lim_{\lambda \downarrow 0} \lambda^{-1} (f(\lambda) - f(0) - f'(0;\lambda)) = 0$. Substituting $f$ by its expression in terms of $\Phi$ in the latter equality gives  
				\begin{align}
				\label{36:e}
				\lim_{\lambda \downarrow 0 }  \lambda^{-1} D_{\Phi} (\hat{\bm{q}} + \lambda (\bm{q} - \hat{\bm{q}}),\hat{\bm{q}}) = 0. 
				\end{align}
				
				Let $\eta>0$ and $\theta^* \in [k]$ be such that $\hat{q}_{\theta^*} = 0$. Suppose that $\ell$ is an admissible, $\Phi$-mixable loss. The fact that $\ell$ is admissible implies that there exists $(x_0, x_1, \bm{a}_0,\bm{a}_1) \in [n]\times [n] \times \mathcal{A}\times \mathcal{A}$ such that \citep{Reid2015}
\begin{align}
\label{3:e}
\bm{a}_1 \in \argmin \{\ell_{x_0}(\bm{a}): \ell_{x_1}(\bm{a}) = \inf_{\hat{\bm{a}} \in \mathcal{A}} \ell_{x_1} (\hat{\bm{a}}) \} \mbox{ and } \inf_{\bm{a} \in \mathcal{A}} \ell_{x_0} (\bm{a}) = \ell_{x_0}(\bm{a_0}) < \ell_{x_0} (\bm{a}_1).
\end{align}				
	In particular, it holds that $\ell_{x_0}(\bm{a}_0) < \ell_{x_0}(\bm{a}_1)$. Fix $A \in \mathcal{A}^k$, such that $ A_{\bcdot, \theta^*}= \bm{a}_0$ and $A_{\bcdot, \theta} = \bm{a}_1$ for $\theta \in [k]\setminus \{ \theta^*\}$.
				Let \[\bm{a}_* \coloneqq \argmax_{\bm{a} \in \Delta_n} \inf_{ \bm{\mu} \in \Delta_k, x\in[n] }   \m_{\Phi}(x, A, \bm{a}, \hat{\bm{q}}, \bm{\mu}),\] with $\hat{\bm{q}} \in \Rbd \Delta_k$ as in \eqref{36:e}. Note that $\bm{a}_*$ exists since $\ell$ is closed. 
				
				If $\bm{a}_*$ is such that $\ell_{x_1}(\bm{a}_*) > \ell_{x_1}(\bm{a}_1)$, then taking $\bm{\mu} =  \hat{\bm{q}}$ puts all weights on experts predicting $\bm{a}_1$, while $D_{\Phi}(\bm{\mu}, \hat{\bm{q}}) = 0$. Therefore, \[ \widehat{\m_{\Phi}} \leq \inf_{\bm{\mu} \in \Delta_k, x\in[n]}  \m_{\Phi}(x, A, \bm{a}_*, \hat{\bm{q}}, \bm{\mu})  \leq  \m_{\Phi}(x_1, A, \bm{a}, \hat{\bm{q}}, \hat{\bm{q}}) <0. \] This contradicts the $\Phi$-mixability of $\ell$. Therefore, $\ell_{x_1}(\bm{a}_*)=\ell_{x_1}(\bm{a}_1)$, which by \eqref{3:e} implies $\ell_{x_0}(\bm{a}_*)\geq \ell_{x_0}(\bm{a}_1)$. For $\bm{q}^{\lambda}=\hat{\bm{q}}+\lambda(\bm{q}-\hat{\bm{q}})$, with $\bm{q} \in \Ri \Delta_k$ as in \eqref{33:e} and $\lambda \in [0, \epsilon]$,
				\begin{align*}
				\widehat{\m_{\Phi}} & \leq \inf_{\substack{\bm{\mu} \in \Delta_k, x\in[n]}}  \m_{\Phi}(x, A, \bm{a}_*, \hat{\bm{q}}, \bm{\mu}), \\ &  \leq  \m_{\Phi}(x_0, A, \bm{a}, \hat{\bm{q}}, \bm{q}^{\lambda}),\\ & =  \inner{\bm{q}^{\lambda}}{ \ell_{x_0}(A)} +  D_{\Phi} (\bm{q}^{\lambda}, \hat{\bm{q}}) - \ell_{x_0}(\bm{a}_*), \\
				& = (1 - \lambda q_{\theta^*}) \ell_{x_0}(\bm{a}_1) + \lambda q_{\theta^*} \ell_{x_0} (\bm{a}_0) + D_{\Phi} (\bm{q}^{\lambda}, \hat{\bm{q}}) - \ell_{x_0}(\bm{a}_*), \\
				& \leq (1 - \lambda q_{\theta^*}) \ell_{x_0}(\bm{a}_*) + \lambda q_{\theta^*} \ell_{x_0} (\bm{a}_0) + D_{\Phi} (\bm{q}^{\lambda}, \hat{\bm{q}}) - \ell_{x_0}(\bm{a}_*), \\
				& = \lambda q_{\theta^*} ( \ell_{x_0}(\bm{a}_0) - \ell_{x_0}(\bm{a}_*) ) + D_{\Phi} (\hat{\bm{q}} + \lambda (\bm{q} - \hat{\bm{q}}), \hat{\bm{q}}).
				\end{align*}
				
				Since $q_{\theta^*} >0 $ ($\bm{q} \in \Ri \Delta_k$) and $ \ell_{x_0}(\bm{a}_0) <  \ell_{x_0}(\bm{a}_1) \leq  \ell_{x_0}(\bm{a}_*) $, \eqref{33:e} implies that there exists $\lambda_*>0$ small enough such that $\lambda_* q_{\theta^*} ( \ell_{x_0}(\bm{a}_0) - \ell_{x_0}(\bm{a}_*) ) +  D_{\Phi} (\hat{\bm{q}} + \lambda_{*} (\bm{q} - \hat{\bm{q}}), \hat{\bm{q}})<0$. But this implies that $\widehat{\m_{\Phi}}<0$ which contradicts the $\Phi$-mixability of $\ell$. Therefore, $\Phi'(\hat{\bm{q}};  \bm{q}-\hat{\bm{q}})$ is either equal to $+\infty $ or $-\infty$. The former case is not possible. In fact, since $\Phi$ is convex, it must have non-decreasing slopes; in particular, it holds that $\Phi'(\hat{\bm{q}}; \bm{q} - \hat{\bm{q}}) \leq \Phi(\bm{q} - \hat{\bm{q}}) - \Phi(\hat{\bm{q}})$. Since $\Phi$ is finite on $\Delta_k$ (by definition of an entropy), we have $\Phi'(\hat{\bm{q}}; \bm{q} - \hat{\bm{q}}) < +\infty$. Therefore, we have just shown that 
\begin{align}
\label{37:e}
\forall \hat{\bm{q}} \in \Rbd \Delta_{k}, \forall \bm{q} \in \Ri \Delta_{k}, \;  \Phi'(\hat{\bm{q}}; \bm{q} - \hat{\bm{q}}) = -\infty.
\end{align}

Now suppose that $(\hat{\bm{q}}, \bm{q}) \in \Rbd \Delta_{\cI} \times \Ri \Delta_{\cI}$ for $\cI \subseteq [k]$, with $|\cI|>1$. Note that in this case, we have $(\Phi_{\cI})'(\Pi_{\cI}^{} \hat{\bm{q}}; \Pi_{\cI}^{} (\bm{q} -\hat{\bm{q}}))= \Phi'(\hat{\bm{q}}; \bm{q} - \hat{\bm{q}})$. We showed in the first step of this proof that under the assumptions of the proposition, $\ell$ must be $\Phi_{\cI}$-mixable. Therefore, repeating the steps above that lead to \eqref{37:e} for $\Phi$, $\hat{\bm{q}}$, and $\bm{q}$ substituted by $\Phi_{\cI}$, $\Pi_{\cI}^{}\bm{q} \in \Rbd \Delta_{|\cI|}$, and  $\Pi_{\cI}^{}\bm{q} \in \Ri \Delta_{|\cI|}$, we obtain $\Phi'(\hat{\bm{q}}; \bm{q} - \hat{\bm{q}}) =\Phi'_{\cI}(\Pi_{\cI}^{} \hat{\bm{q}}; \Pi_{\cI}^{} (\bm{q} -\hat{\bm{q}})) =-\infty$. This shows \eqref{8:e}.
\end{proof}

\begin{lemma}
\label{29:}
For $\eta>0$, $\Se_{\eta} \coloneqq \eta^{-1} \Se$ satisfies \eqref{8:e} for all $\cI \subseteq [k]$ such that $|\cI|>1$, where $\Se$ is the Shannon entropy.
\end{lemma}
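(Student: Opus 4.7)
The plan is to reduce immediately to the case $\Phi = \Se$ (since $\Se_\eta = \eta^{-1}\Se$ with $\eta>0$ and directional derivatives scale linearly), and then verify \eqref{8:e} by direct computation using the closed-form expression \eqref{shannon} of the Shannon entropy along the line segment from $\bm{q}$ to $\hat{\bm{q}}$.

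Fix $\cI \subseteq [k]$ with $|\cI|>1$, $\bm{q}\in\Rbd\Delta_\cI$, and $\hat{\bm{q}}\in\Ri\Delta_\cI$. Let $Z \coloneqq \{i\in\cI : q_i = 0\}$; this set is non-empty precisely because $\bm{q}$ lies on the relative boundary of $\Delta_\cI$, and $\hat q_i > 0$ for all $i\in\cI$ (in particular for $i\in Z$) because $\hat{\bm{q}}\in\Ri\Delta_\cI$. Note also that for indices $i\notin\cI$ both $q_i$ and $\hat q_i$ vanish, so the corresponding components of $\bm{r}^\lambda\coloneqq \bm{q}+\lambda(\hat{\bm{q}}-\bm{q})$ are identically zero and contribute nothing to $\Se$.

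Next I would split $\Se(\bm{r}^\lambda)-\Se(\bm{q})$ into the contributions from $\cI\setminus Z$ and from $Z$. On $\cI\setminus Z$, we have $q_i>0$, so $\lambda\mapsto r_i^\lambda\log r_i^\lambda$ is smooth at $\lambda=0$ and this part contributes $O(\lambda)$. On $Z$, the $i$-th component is $r_i^\lambda = \lambda\hat q_i$, so the contribution is
\[
\sum_{i\in Z}\lambda\hat q_i\log(\lambda\hat q_i) \;=\; \lambda(\log\lambda)\sum_{i\in Z}\hat q_i \;+\; \lambda\sum_{i\in Z}\hat q_i\log\hat q_i.
\]
Dividing by $\lambda$ and taking $\lambda\downarrow 0$, the term $(\log\lambda)\sum_{i\in Z}\hat q_i$ dominates; since $\sum_{i\in Z}\hat q_i>0$, this yields $\Se'(\bm{q};\hat{\bm{q}}-\bm{q}) = -\infty$, and therefore $\Se_\eta'(\bm{q};\hat{\bm{q}}-\bm{q}) = \eta^{-1}\Se'(\bm{q};\hat{\bm{q}}-\bm{q}) = -\infty$, which is \eqref{8:e} for $\Phi=\Se_\eta$.

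There is no real obstacle here; the only point that warrants a line of care is observing that indices outside $\cI$ play no role (so the computation really does reduce to the two sums above), and that the usual convention $0\log 0 = 0$ implicit in \eqref{shannon} is what makes the $Z$-contribution vanish at $\lambda=0$ while still blowing up to $-\infty$ in the difference quotient.
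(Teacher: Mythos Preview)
Your proof is correct and follows essentially the same route as the paper: split the coordinates in $\cI$ according to whether the boundary point $\bm{q}$ has a zero or a positive component, observe that the smooth part contributes a finite limit, and show the $Z$-part gives $(\log\lambda)\sum_{i\in Z}\hat q_i\to-\infty$. The only cosmetic difference is that the paper invokes l'H\^opital's rule for both summands, whereas you compute the $Z$-contribution directly via $r_i^\lambda=\lambda\hat q_i$; your direct computation is arguably cleaner.
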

\begin{proof}
Let $\cI\subseteq [k]$ such that $|\cI|>1$. Let $(\hbm{q}, \bm{q}) \in \Rbd \Delta_{\cI} \times \Ri \Delta_{\cI}$ and $\bm{q}^{\lambda} \coloneqq \hbm{q} + \lambda (\bm{q}-\hbm{q}),$ for $\lambda \in ]0,1[$. Let $\mathfrak{I} \coloneqq \{j \in \cI: \hat{q}_j \neq 0\}$ and $\mathfrak{K}\coloneqq \cI \setminus \mathfrak{I}$. We have 
\begin{align}
\Se(\hat{\bm{q}}; \bm{q} - \hat{\bm{q}}) &= \lim_{\lambda \downarrow 0} \lambda^{-1} \left[  \sum\nolimits_{\theta\in \cI} q^{\lambda}_{\theta} \log q^{\lambda}_{\theta}-  \sum\nolimits_{\theta'\in \mathfrak{I}} \hat{q}_{\theta'} \log \hat{q}_{\theta'} \right], \nonumber \\
& = \lim_{\lambda \downarrow 0} \lambda^{-1}  \left[ \sum\nolimits_{\theta\in \mathfrak{I}} (q^{\lambda}_{\theta} \log q^{\lambda}_{\theta} -\hat{q}_{\theta} \log \hat{q}_{\theta} ) +  \sum\nolimits_{\theta'\in \mathfrak{K}} q^{\lambda}_{\theta'} \log q^{\lambda}_{\theta'} \right]. \label{sums:e}
\end{align}
Observe that the limit of either summation term inside the bracket in \eqref{sums:e} is equal to zero. Thus, using l'Hopital's rule we get 
\begin{align}  \Se(\hat{\bm{q}}; \bm{q} - \hat{\bm{q}})  &= \lim_{\lambda \downarrow 0}\left[\sum\nolimits_{\theta\in \mathfrak{I}} [(q_{\theta} - \hat{q}_{\theta}) \log q^{\lambda}_{\theta} +(q_{\theta} - \hat{q}_{\theta})]  + \sum\nolimits_{\theta'\in \mathfrak{K}} [ q_{\theta'} \log q^{\lambda}_{\theta'} +q_{\theta'} ]    \right], \nonumber \\
& = \sum\nolimits_{\theta\in \mathfrak{I}} (q_{\theta} - \hat{q}_{\theta}) \log \hat{q}_{\theta}  + \sum\nolimits_{\theta'\in \mathfrak{K}} q_{\theta'} \left[  \lim_{\lambda \downarrow 0}  \log q^{\lambda}_{\theta'} \right], \label{25:e}
 \end{align} 
where in \eqref{25:e} we used the fact that $\sum_{\theta \in \mathfrak{I}} (q_{\theta} - \hat{q}_{\theta}) + \sum_{\theta' \in \mathfrak{K}} q_{\theta'} = 0$. Since for all $\theta' \in \mathfrak{K}$, $\lim_{\lambda \downarrow 0} q^{\lambda}_{\theta'} =0$, the right hand side of \eqref{25:} is equal to $-\infty$. Therefore $\Se$ satisfies \eqref{8:e}. Since $\Se_{\eta} =\eta^{-1} \Se$, it is clear that $\Se_{\eta}$ also satisfies \eqref{8:e}.

\end{proof}

\begin{lemma}
\label{infdiv}
Let $\Phi: \mathbb{R}^k\rightarrow \mathbb{R}\cup \{+\infty\}$ be an entropy satisfying \eqref{8:e} for all $\cI \subseteq [k]$ such that $|\cI|>1$. Then for all such $\cI$, it holds that
\begin{align*}
\forall \bm{q}\in \Delta_{\cI},  \forall \bm{\mu} \in \Delta_k \setminus \Delta_{\cI}, \;D_{\Phi}(\bm{\mu},\bm{q})=+\infty.
\end{align*}
\end{lemma}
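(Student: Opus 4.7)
The plan is to reduce the claim to the hypothesis \eqref{8:e} by choosing the correct face of the simplex. Since $\bm{q},\bm{\mu}\in\Delta_k\subseteq\Dom\Phi$, both $\Phi(\bm{q})$ and $\Phi(\bm{\mu})$ are finite, so by the definition \eqref{div} of $D_\Phi$, showing $D_\Phi(\bm{\mu},\bm{q})=+\infty$ is equivalent to showing $\Phi'(\bm{q};\bm{\mu}-\bm{q})=-\infty$. That is the only nontrivial thing to prove.

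First I would set $\mathcal{J}\coloneqq\op{supp}(\bm{q})\cup\op{supp}(\bm{\mu})\subseteq[k]$, so that $\bm{q},\bm{\mu}\in\Delta_{\mathcal{J}}$. Because $\bm{\mu}\in\Delta_k\setminus\Delta_{\cI}$, there exists an index $j\in\op{supp}(\bm{\mu})\setminus\cI$; since $\op{supp}(\bm{q})\subseteq\cI$, this $j$ lies in $\mathcal{J}\setminus\op{supp}(\bm{q})$, witnessing that $\bm{q}\in\Rbd\Delta_{\mathcal{J}}$. Moreover $\op{supp}(\bm{q})$ is non-empty (any point of $\Delta_k$ has non-empty support), and any element of it lies in $\cI$, hence is distinct from $j$; this forces $|\mathcal{J}|\geq 2$, so the hypothesis \eqref{8:e} applies to $\mathcal{J}$.

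Next, for $\lambda\in(0,1]$ consider $\bm{q}^{\lambda}\coloneqq(1-\lambda)\bm{q}+\lambda\bm{\mu}$. A coordinate-wise check shows $q^{\lambda}_i=0$ iff $q_i=\mu_i=0$, so $\op{supp}(\bm{q}^{\lambda})=\op{supp}(\bm{q})\cup\op{supp}(\bm{\mu})=\mathcal{J}$; in particular $\bm{q}^{\lambda}\in\Ri\Delta_{\mathcal{J}}$. Applying \eqref{8:e} with the face $\mathcal{J}$ and the pair $(\bm{q},\bm{q}^{\lambda})\in\Rbd\Delta_{\mathcal{J}}\times\Ri\Delta_{\mathcal{J}}$ yields
\[
\Phi'(\bm{q};\bm{q}^{\lambda}-\bm{q})=-\infty.
\]

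Finally, since $\bm{q}^{\lambda}-\bm{q}=\lambda(\bm{\mu}-\bm{q})$ and directional derivatives of convex functions are positively homogeneous in the direction argument (a one-line change of variable in the defining limit), we get $\lambda\,\Phi'(\bm{q};\bm{\mu}-\bm{q})=-\infty$ for $\lambda>0$, i.e.\ $\Phi'(\bm{q};\bm{\mu}-\bm{q})=-\infty$. Combined with the finiteness of $\Phi(\bm{q}),\Phi(\bm{\mu})$, this gives $D_\Phi(\bm{\mu},\bm{q})=+\infty$, as required. There is no real obstacle beyond the combinatorial observation that the smallest face containing both $\bm{q}$ and $\bm{\mu}$ is $\Delta_{\mathcal{J}}$, on which $\bm{q}$ is automatically a relative boundary point and the chord from $\bm{q}$ to $\bm{\mu}$ immediately enters the relative interior.
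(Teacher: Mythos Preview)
Your proof is correct and follows essentially the same route as the paper: choose a face of $\Delta_k$ on which $\bm{q}$ sits on the relative boundary while a point of the open segment $(\bm{q},\bm{\mu})$ lies in the relative interior, apply \eqref{8:e}, and then use positive homogeneity of $\Phi'(\bm{q};\cdot)$ to conclude $\Phi'(\bm{q};\bm{\mu}-\bm{q})=-\infty$. The paper uses the face $\mathfrak{I}=\op{supp}(\bm{\mu})\cup\cI$ and the midpoint $\tfrac12(\bm{q}+\bm{\mu})$; you use the smaller face $\mathcal{J}=\op{supp}(\bm{q})\cup\op{supp}(\bm{\mu})$. Your choice is in fact slightly more careful: with the paper's $\mathfrak{I}$, the midpoint need not lie in $\Ri\Delta_{\mathfrak{I}}$ when $\bm{q}\in\Rbd\Delta_{\cI}$, whereas with $\mathcal{J}$ this is automatic. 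One cosmetic slip: the support identity $\op{supp}(\bm{q}^\lambda)=\mathcal{J}$ holds for $\lambda\in(0,1)$, not for $\lambda=1$; this does not affect the argument.
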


\begin{proof}
Let $\bm{\mu} \in \Delta_k \setminus \Delta_{\cI}$ and $\mathfrak{I} \coloneqq \{\theta \in [k]:  \mu_{\theta}\neq 0 \} \cup \cI$. In this case, we have $ \bm{q} \in \Rbd \Delta_{\mathfrak{I}}$ and $\bm{q}+2^{-1}(\bm{\mu} -\bm{q}) \in \Ri \Delta_{\mathfrak{I}}$. Thus, since $\Phi$ satisfies \eqref{8:e} and $\Phi'(\bm{q}; \cdot)$ is 1-homogeneous \citep[Prop. D.1.1.2]{\Hi}, it follows that $2^{-1}\Phi'( \bm{q}; \bm{\mu}- \bm{q}) =  \Phi'( \bm{q}; 2^{-1} (\bm{\mu}- \bm{q}))= -\infty$. Hence $D_{\Phi}(\bm{\mu}, \bm{q})=+\infty$.
\end{proof}

\begin{lemma}
\label{13:}
Let $\Phi \colon \mathbb{R}^k \rightarrow \mathbb{R} \cup \{ +\infty\}$ be an entropy satisfying \eqref{8:e} for all $\cI \subseteq [k]$ such that $|\cI|>1$. If $\Phi$ satisfies \eqref{8:e}, then $\partial \tilde{\Phi}(\tbm{q}) = \varnothing, \forall \tbm{q} \in \Bd \tilde{\Delta}_k$. Furthermore, $\forall \cI \subseteq [k]$ such that $|\cI|>1$, \begin{align*}\forall \bm{d}\in \mathbb{R}^k, \forall \bm{q}\in \Ri \Delta_{\cI}, \;\M_{\Phi}(\bm{d}, \bm{q}) = \M_{\Phi_{\cI}}(\Pi_{\cI}^{} \bm{d}, \Pi_{\cI}^{}\bm{q}).\end{align*}
\end{lemma}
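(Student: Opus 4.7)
I would treat the two claims separately, with the first serving as a natural warm-up.

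For the claim that $\partial \tilde{\Phi}(\tbm{q}) = \varnothing$ when $\tbm{q} \in \Bd \tilde{\Delta}_k$, I would argue by contradiction. Suppose $\bm{s}^* \in \partial \tilde{\Phi}(\tbm{q})$, pick any $\bm{p} \in \Ri \Delta_k$ and set $\tbm{p} \coloneqq \Pi_k(\bm{p})$. By convexity of $\tilde{\Delta}_k$, the segment $\tbm{q}+\lambda(\tbm{p}-\tbm{q})$ lies in $\tilde{\Delta}_k$ for $\lambda\in[0,1]$, so the sub-gradient inequality yields $\lambda^{-1}(\tilde{\Phi}(\tbm{q}+\lambda(\tbm{p}-\tbm{q}))-\tilde{\Phi}(\tbm{q}))\geq \inner{\bm{s}^*}{\tbm{p}-\tbm{q}}$. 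Letting $\lambda\downarrow 0$ gives $\tilde{\Phi}'(\tbm{q};\tbm{p}-\tbm{q}) \geq \inner{\bm{s}^*}{\tbm{p}-\tbm{q}}$, a finite quantity. The punchline is that since $\amalg_k$ is affine, $\tilde{\Phi}'(\tbm{q};\tbm{p}-\tbm{q}) = \Phi'(\bm{q};\bm{p}-\bm{q})$ with $\bm{q}\coloneqq\amalg_k(\tbm{q})\in \Rbd \Delta_k$ (since $\tbm{q}\in\Bd \tilde{\Delta}_k$) and $\bm{p}\in \Ri \Delta_k$. Applying hypothesis \eqref{8:e} with $\cI = [k]$ (valid because $k\geq 2$) forces $\Phi'(\bm{q};\bm{p}-\bm{q}) = -\infty$, the desired contradiction.

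For the second identity, fix $\cI\subseteq [k]$ with $|\cI|>1$, $\bm{d}\in\mathbb{R}^k$, and $\bm{q}\in \Ri \Delta_{\cI}$. The plan is to localize the infimum defining $\M_{\Phi}(\bm{d},\bm{q})=\inf_{\hat{\bm{q}}\in\Delta_k} \inner{\hat{\bm{q}}}{\bm{d}} + D_{\Phi}(\hat{\bm{q}},\bm{q})$ to $\Delta_{\cI}$ and then push it through the projection $\Pi_{\cI}$. Lemma \ref{infdiv} applies directly: for every $\hat{\bm{q}}\in \Delta_k\setminus\Delta_{\cI}$ we have $D_{\Phi}(\hat{\bm{q}},\bm{q})=+\infty$, so those candidates contribute nothing to the infimum. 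On $\Delta_{\cI}$, the identity $\inner{\hat{\bm{q}}}{\bm{d}}=\inner{\Pi_{\cI}\hat{\bm{q}}}{\Pi_{\cI}\bm{d}}$ is immediate (the non-projected coordinates of $\hat{\bm{q}}$ vanish), and the appendix has already recorded $D_{\Phi}(\hat{\bm{q}},\bm{q}) = D_{\Phi_{\cI}}(\Pi_{\cI}\hat{\bm{q}},\Pi_{\cI}\bm{q})$. Hence
\begin{align*}
\M_{\Phi}(\bm{d},\bm{q}) = \inf_{\hat{\bm{q}}\in \Delta_{\cI}} \inner{\Pi_{\cI}\hat{\bm{q}}}{\Pi_{\cI}\bm{d}} + D_{\Phi_{\cI}}(\Pi_{\cI}\hat{\bm{q}},\Pi_{\cI}\bm{q}).
\end{align*}
Since $\Pi_{\cI}$ restricts to a bijection from $\Delta_{\cI}$ onto $\Delta_{|\cI|}$, the change of variable $\hat{\bm{\mu}} \coloneqq \Pi_{\cI}\hat{\bm{q}}$ transforms the right-hand side into $\M_{\Phi_{\cI}}(\Pi_{\cI}\bm{d},\Pi_{\cI}\bm{q})$, as claimed.

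The only subtle step is the reduction $\tilde{\Phi}'(\tbm{q};\tbm{p}-\tbm{q})=\Phi'(\bm{q};\bm{p}-\bm{q})$, and this follows cleanly because $\amalg_k$ is affine and $\tilde{\Phi}$ agrees with $\Phi\circ\amalg_k$ on all of $\tilde{\Delta}_k$, so the difference quotients defining the two directional derivatives coincide for every $\lambda$. Everything else is bookkeeping on top of Lemma \ref{infdiv} and hypothesis \eqref{8:e}.
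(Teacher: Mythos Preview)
Your proposal is correct and follows essentially the same approach as the paper. For the first claim, the paper argues that $\tilde{\Phi}'(\tbm{q};\tbm{p}-\tbm{q})=\Phi'(\bm{q};\bm{p}-\bm{q})=-\infty$ by \eqref{8:e} and then invokes \cite[Thm.~23.4]{Rockafellar1997a} to conclude $\partial\tilde{\Phi}(\tbm{q})=\varnothing$, which is exactly what your contradiction argument unpacks; for the second claim, the paper likewise restricts the infimum to $\Delta_{\cI}$ via Lemma~\ref{infdiv} and then pushes through the bijection $\Pi_{\cI}\colon\Delta_{\cI}\to\Delta_{|\cI|}$, matching your localize-then-change-variables strategy.
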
	
\begin{proof}
Let $\bm{\mu} \in \Rbd \Delta_k$. Since $\Phi$ satisfies \eqref{8:e}, it follows that $\forall \bm{q} \in \Ri \Delta_k, \tilde{\Phi}(\tbm{\mu}; \tbm{q} - \tbm{\mu})=\Phi'(\bm{\mu}; \bm{q}-\bm{\mu}) = -\infty$. Therefore, $\partial \tilde{\Phi}'(\tbm{\mu})=\varnothing$ \citep[Thm. 23.4]{Rockafellar1997a}.

Let $\bm{d} \in \mathbb{R}^n$, $\cI \subseteq [k]$, with $|\cI|>1$, and $\bm{q} \in \Ri \Delta_{\cI}$. Then 
\begin{align}
\M_{\Phi_{\cI}}(\Pi_{\cI}^{} \bm{d}, \Pi_{\cI}^{}\bm{q}) &=  \inf_{\bm{\pi} \in \Delta_{|\cI|} } \Inner{\bm{\pi}}{\Pi_{\cI}^{}\bm{d}} + D_{\Phi_{\cI}}(\bm{\pi}, \Pi_{\cI}^{}\bm{q}), \nonumber \\
&=\inf_{\bm{\mu} \in \Delta_{\cI}} \Inner{\bm{\mu}}{\bm{d}} + D_{\Phi}(\bm{\mu}, \bm{q}), \nonumber  \\
 &\leq \inf_{\bm{\mu} \in \Delta_k} \Inner{\bm{\mu}}{\bm{d}} + D_{\Phi}(\bm{\mu}, \bm{q}), \label{38:e}  \\
& = \M_{\Phi} (\bm{d}, \bm{q}). \nonumber
\end{align}

To complete the proof, we need to show that \eqref{38:e} holds with equality. For this, it suffices to prove that $\forall \bm{\mu} \in \Delta_k \setminus \Delta_{\cI}$, $D_{\Phi}(\bm{\mu},\bm{q})=+\infty$. This follows from Corollary \ref{infdiv}.
\end{proof}

	\begin{lemma}
		\label{15:}
		Let $\Phi \colon \mathbb{R}^k \rightarrow \mathbb{R} \cup \{+\infty\}$ be an entropy satisfying \eqref{8:e} for all $\cI \subseteq [k]$ such that $|\cI|>1$. Let $x\in [n], \bm{d} \in \mathbb{R}^k$, and $\bm{q} \in \Delta_k$. The infimum in \begin{align} \M_{\Phi}(\bm{d}, \bm{q}) = \inf_{\bm{\mu}\in \Delta_k} \inner{\bm{\mu}}{\bm{d}} +D_{\Phi}(\bm{\mu}, \bm{q}) \label{infimum}\end{align} is attained at some $\bm{q}_* \in \Delta_k$. Furthermore, if $\bm{q}\in \Ri \Delta_k$ and $\bm{q}_*$ is the infimum of \eqref{infimum} then for any $\bm{s}_{\bm{q}}^* \in \argmax \{\inner{\bm{s}}{\tbm{q}_* - \tbm{q}}: \bm{s} \in \partial \tilde{\Phi} (\tbm{q}) \}$, we have 
		\begin{gather}	
	 \noindent  \tbm{q}_*  \in  \partial \tilde{\Phi}^* (\bm{s}_{\bm{q}}^* - J_k^{\mathsf{T}} \bm{d}),  \label{9:e} \\
		\M_{\Phi}(\bm{d}, \bm{q})  = d_k + \tilde{\Phi}^*(\bm{s}_{\bm{q}}^*) - \tilde{\Phi}^*(\bm{s}_{\bm{q}}^* - J^{\mathsf{T}}_k \bm{d}). \label{10:e}
		\end{gather}
	\end{lemma}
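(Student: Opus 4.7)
} The plan is to split the argument into three parts: attainment of the infimum, a reduction to an inf over the subdifferential using the support-function form of $\tilde{\Phi}'$, and finally identification of the minimizer in terms of $\bm{s}_{\bm{q}}^*$ and $\bm{q}_*$.

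First, I would establish attainment of the infimum. Let $\cI \subseteq [k]$ be the minimal face with $\bm{q} \in \Ri \Delta_{\cI}$. By Lemma \ref{infdiv}, $D_{\Phi}(\bm{\mu},\bm{q}) = +\infty$ whenever $\bm{\mu} \in \Delta_k\setminus\Delta_{\cI}$, so the infimum may be restricted to the compact set $\Delta_{\cI}$. The map $\bm{\mu}\mapsto \Phi(\bm{\mu})-\Phi'(\bm{q};\bm{\mu}-\bm{q})$ is convex (as the sum of the closed convex $\Phi$ with a linear function on the affine hull of $\Delta_{\cI}$, once $\Phi'(\bm{q};\cdot)$ is seen to be linear on the tangent directions at the relative interior point $\bm{q}$), and closed by closedness of $\Phi$, so the full objective $\bm{\mu}\mapsto \inner{\bm{\mu}}{\bm{d}}+D_{\Phi}(\bm{\mu},\bm{q})$ is lower semicontinuous on the compact $\Delta_{\cI}$. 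The Weierstrass theorem then delivers an optimum $\bm{q}_*\in\Delta_{\cI}\subseteq\Delta_k$.

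Next, assume $\bm{q}\in\Ri\Delta_k$, so that $\tbm{q}\in\Int\tilde{\Delta}_k\subseteq\Int\Dom\tilde{\Phi}$. Parametrizing $\bm{\mu}=\amalg_k(\tbm{\mu})$ turns $\Phi(\bm{\mu})$ into $\tilde{\Phi}(\tbm{\mu})$, $\Phi'(\bm{q};\bm{\mu}-\bm{q})$ into $\tilde{\Phi}'(\tbm{q};\tbm{\mu}-\tbm{q})$, and $\inner{\bm{\mu}}{\bm{d}}$ into $d_k+\inner{\tbm{\mu}}{J_k^{\mathsf{T}}\bm{d}}$. Since $\tbm{q}\in\Int\Dom\tilde{\Phi}$, the classical identity $\tilde{\Phi}'(\tbm{q};\tbm{v})=\sup_{\bm{s}\in\partial\tilde{\Phi}(\tbm{q})}\inner{\bm{s}}{\tbm{v}}$ applies, and in particular $\partial\tilde{\Phi}(\tbm{q})$ is non-empty, compact and convex, so the argmax defining $\bm{s}_{\bm{q}}^*$ is non-empty. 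Substituting this representation and using the elementary identity $A-\sup_{\bm{s}}B(\bm{s})=\inf_{\bm{s}}[A-B(\bm{s})]$ inside the outer $\inf_{\tbm{\mu}}$, I obtain
\begin{align*}
\M_{\Phi}(\bm{d},\bm{q})-d_k
&=\inf_{\bm{s}\in\partial\tilde{\Phi}(\tbm{q})}\inf_{\tbm{\mu}\in\tilde{\Delta}_k}\Big[\inner{\tbm{\mu}}{J_k^{\mathsf{T}}\bm{d}-\bm{s}}+\tilde{\Phi}(\tbm{\mu})+\inner{\bm{s}}{\tbm{q}}-\tilde{\Phi}(\tbm{q})\Big]\\
&=\inf_{\bm{s}\in\partial\tilde{\Phi}(\tbm{q})}\Big[\inner{\bm{s}}{\tbm{q}}-\tilde{\Phi}(\tbm{q})-\tilde{\Phi}^{\ast}(\bm{s}-J_k^{\mathsf{T}}\bm{d})\Big]
=\inf_{\bm{s}\in\partial\tilde{\Phi}(\tbm{q})}\Big[\tilde{\Phi}^{\ast}(\bm{s})-\tilde{\Phi}^{\ast}(\bm{s}-J_k^{\mathsf{T}}\bm{d})\Big],
\end{align*}
where in the last step I invoke the Fenchel--Young equality $\inner{\bm{s}}{\tbm{q}}-\tilde{\Phi}(\tbm{q})=\tilde{\Phi}^{\ast}(\bm{s})$ (valid exactly when $\bm{s}\in\partial\tilde{\Phi}(\tbm{q})$, by Proposition \ref{1:}(iv)).

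Finally, I identify the joint minimizer. Since the two $\inf$'s above can be swapped, the infimum may also be taken first in $\bm{s}$ and then in $\tbm{\mu}$, and both infima are jointly attained at the pair $(\tbm{q}_*,\bm{s}_{\bm{q}}^*)$: the outer infimum at $\tbm{q}_*$ by the first step of the proof, and given $\tbm{q}_*$ the inner supremum $\sup_{\bm{s}\in\partial\tilde{\Phi}(\tbm{q})}\inner{\bm{s}}{\tbm{q}_*-\tbm{q}}$ is by definition attained precisely at any $\bm{s}_{\bm{q}}^*$ in the stated argmax (again non-empty by compactness of $\partial\tilde{\Phi}(\tbm{q})$). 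Plugging $\bm{s}=\bm{s}_{\bm{q}}^*$ into the last display gives \eqref{10:e}, and the inner optimum in $\tbm{\mu}$ at $\bm{s}=\bm{s}_{\bm{q}}^*$ is $\tbm{q}_*$, which by Proposition \ref{1:}(iv) applied to $\tilde{\Phi}^{\ast}$ translates into $\tbm{q}_*\in\partial\tilde{\Phi}^{\ast}(\bm{s}_{\bm{q}}^*-J_k^{\mathsf{T}}\bm{d})$, i.e.\ \eqref{9:e}. The main obstacle in this plan is carefully justifying the reductions for $\bm{q}\in\Rbd\Delta_k$ in the attainment step—specifically, exploiting hypothesis \eqref{8:e} (through Lemma \ref{infdiv}) to confine the optimization to $\Delta_{\cI}$—after which everything else is a clean Fenchel-duality computation enabled by $\tbm{q}\in\Int\Dom\tilde{\Phi}$.
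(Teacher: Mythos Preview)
Your overall argument is correct, but one step in the attainment part is misstated, and your duality computation takes a genuinely different route from the paper's.

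\textbf{Attainment.} You claim $\Phi'(\bm{q};\cdot)$ is linear on tangent directions at $\bm{q}\in\Ri\Delta_{\cI}$, making $\bm{\mu}\mapsto\Phi(\bm{\mu})-\Phi'(\bm{q};\bm{\mu}-\bm{q})$ convex. This fails without differentiability of $\Phi$ (which the lemma does not assume): the directional derivative of a convex function is only sublinear, so $-\Phi'(\bm{q};\cdot)$ is concave and the sum need not be convex. What actually yields lower semicontinuity is that for $\tbm{q}\in\Int\Dom\tilde\Phi$ the subdifferential $\partial\tilde\Phi(\tbm{q})$ is compact, so its support function $\tilde\Phi'(\tbm{q};\cdot)$ is finite everywhere and hence continuous; this is precisely the paper's argument. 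Your face-reduction via Lemma~\ref{infdiv} for boundary $\bm{q}$ matches the paper, though the vertex case ($|\cI|=1$) needs a one-line separate treatment since Lemma~\ref{infdiv} assumes $|\cI|>1$.

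\textbf{The identities \eqref{9:e}--\eqref{10:e}.} Here the approaches differ. The paper verifies $\bm{s}_{\bm{q}}^*-J_k^{\mathsf{T}}\bm{d}\in\partial\tilde\Phi(\tbm{q}_*)$ directly: starting from the optimality inequality for $\tbm{q}_*$ and replacing $\tilde\Phi'(\tbm{q};\tbm\mu-\tbm q)$ by its linear lower bound $\inner{\bm{s}_{\bm{q}}^*}{\tbm\mu-\tbm q}$, one reads off the subgradient inequality at $\tbm{q}_*$; then \eqref{9:e} follows from the $\partial\tilde\Phi\leftrightarrow\partial\tilde\Phi^*$ correspondence and \eqref{10:e} by substitution. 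Your route---expanding $\tilde\Phi'(\tbm q;\cdot)$ as $\sup_{\bm s\in\partial\tilde\Phi(\tbm q)}\inner{\bm s}{\cdot}$ to obtain a joint $\inf_{\tbm\mu}\inf_{\bm s}$ and recognising the inner problem as $-\tilde\Phi^*(\bm s-J_k^{\mathsf T}\bm d)$---is equally valid and arguably more transparently dual. Your joint-minimiser identification is sound: once $(\tbm q_*,\bm s_{\bm q}^*)$ attains the double infimum, fixing $\bm s=\bm s_{\bm q}^*$ forces $\tbm q_*$ to minimise in $\tbm\mu$, which via Fenchel--Young equality is exactly \eqref{9:e}. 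The paper's approach is slightly more elementary (a single inequality manipulation), while yours makes the Fenchel structure of $\M_{\Phi}$ explicit from the start.
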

\begin{proof}		
Let $\bm{q}\in \Ri \Delta_k$. Since $\tbm{q}\in \Int \Dom \tilde{\Phi} = \Int \tilde{\Delta}_k$, the function $\tbm{\mu} \mapsto- \tilde{\Phi}'(\tbm{q}; \tbm{\mu} - \tbm{q})$ is lower semicontinuous \citep[Cor. 24.5.1]{Rockafellar1997a}. Given that $\tbm{\mu} \mapsto \inner{\amalg_k(\tbm{\mu})}{\bm{d}} + \tilde{\Phi}(\tbm{\mu}) - \tilde{\Phi}(\tbm{q})$ is a closed convex function, it is also lower semi-continuous. Therefore, the function \begin{align*}\tbm{\mu} \mapsto \inner{\amalg_k(\tbm{\mu})}{\bm{d}} + \tilde{\Phi}(\tbm{\mu}) - \tilde{\Phi}(\tbm{q})  - \tilde{\Phi}'(\tbm{q}; \tbm{\mu} - \tbm{q})\end{align*} is lower semicontinuous, and thus attains its minimum on the compact set $\tilde{\Delta}_k$ at some point $\tbm{q}_*$. Using the fact that $D_{\Phi}(\bm{\mu}, \bm{q}) = D_{\tilde{\Phi}}(\tbm{\mu}, \tbm{q})$, we get that \begin{align} \label{MDA} \bm{q}_* \coloneqq \amalg_k(\tbm{q}_*) = \argmin_{\bm{\mu} \in \Delta_k}\inner{\bm{\mu}}{\bm{d}} + D_{\Phi}(\bm{\mu}, \bm{q}). \end{align} 
If $\bm{q}\in \Rbd \Delta_k$, then either $\bm{q}$ is a vertex of $\Delta_k$ or there exists $\cI \subsetneq [k]$ such that $\bm{q}\in \Ri \Delta_{\cI}$. In the former case, it follows from \eqref{8:e} that $D_{\Phi}(\bm{\mu}, \bm{q}) =+\infty$ for all $\bm{\mu} \in \Delta_k  \setminus \{\bm{q}\}$, and thus the infimum of \eqref{infimum} is trivially attained at $\bm{\mu}=\bm{q}$. Now consider the alternative --- $\bm{q} \in \Delta_{\cI}$ with $|\cI|>1$. Using Corollary \ref{infdiv}, we have $D_{\Phi}(\bm{\mu}, \bm{q})=+\infty$ for all $\bm{\mu} \in \Delta_k \setminus \Delta_{\cI}$. Therefore, \begin{align}\M_{\Phi}(\bm{d}, \bm{q}) &= \inf_{\bm{\mu}\in \Delta_{\cI}} \inner{\bm{\mu}}{\bm{d}} +D_{\Phi}(\bm{\mu}, \bm{q}), \nonumber\\ 
& = \inf_{\hbm{\mu}\in \Delta_{|\cI|}} \inner{\hbm{\mu}}{\Pi_{\cI} \bm{d}} +D_{\Phi_{\cI}}(\hbm{\mu}, \Pi_{\cI}\bm{q}), \label{lastone} \end{align}
where $\Phi_{\cI}\coloneqq \Phi \circ \Pi_{\cI}$. Since $\Pi_{\cI} \bm{q}\in \Ri \Delta_{|\cI|}$, we can use the same argument as the previous paragraph with $\Phi$ and $\bm{q}$ replaced by $\Phi_{\cI}$ and $\Pi_{\cI} \bm{q}$, respectively, to show that the infimum in \eqref{lastone} is attained at some $\hbm{q}_* \in \Delta_{|\cI|}$. Thus, $\bm{q}_* \coloneqq \Pi_{\cI}^{\T}\hbm{q} \in \Delta_k$ attains the infimum in \eqref{infimum}.

Now we show the second part of the lemma. Let $\bm{q}\in \Ri \Delta_k$ and $\bm{q}_*$ be the infimum of \eqref{infimum}. Since $\tilde{\Phi}$ is convex and $\tbm{q} =\Pi_k(\bm{q}) \in \Int \tilde{\Delta}_k = \Int \Dom \tilde{\Phi} $, we have $\partial \tilde{\Phi} (\tbm{q}) \neq \varnothing$ \citep[Thm. 23.4]{Rockafellar1997a}. This means that there exists $\bm{s}_{\bm{q}}^* \in \partial \tilde{\Phi}(\tbm{q})$ such that $\inner{\bm{s}_{\bm{q}}^*}{ \tbm{q}_* - \tbm{q}} = \tilde{\Phi}'(\tbm{q}; \tbm{q}_*- \tbm{q})$ \cite[p.166]{Hiriart-Urruty}. We will now show that $\bm{s}_{\bm{q}}^* - J^{\mathsf{T}}_k \bm{d}\in \partial \tilde{\Phi}(\tbm{q}_*)$, which will imply that $\tbm{q}_* \in \partial \tilde{\Phi}^*(\bm{s}_{\bm{q}}^* - J_k^{\mathsf{T}}\bm{d})$ (\ibid, Cor. D.1.4.4).  Let $\bm{q}_* = \argmin_{\bm{\mu}\in \Delta_k}\inner{\bm{\mu}}{\bm{d}} +D_{\Phi}(\bm{\mu}, \bm{q})$. Thus, for all $ \bm{\mu} \in \Delta_k$,
				\begin{eqnarray*}
				&\inner{\bm{\mu}}{ \bm{d}} + \tilde{\Phi}(\tbm{\mu})- \tilde{\Phi}(\tbm{q})  - \tilde{\Phi}'(\tbm{q}; \tbm{\mu} - \tbm{q})  \geq \inner{\bm{q}_*}{\bm{d}} + \tilde{\Phi}(\tbm{q}_*)- \tilde{\Phi}(\tbm{q})  -  \inner{\bm{s}_{\bm{q}}^*}{\tbm{q}_*- \tbm{q}}, \\
				\implies& \tilde{\Phi}(\tbm{\mu}) \geq \tilde{\Phi}(\tbm{q}_*) - \inner{\tbm{\mu} - \tbm{q}_*}{J^{\mathsf{T}}_k \bm{d}} + \inner{\bm{s}_{\bm{q}}^*}{\tbm{q}- \tbm{q}_*} +\Phi'(\tbm{q}; \tbm{\mu}- \tbm{q}), \\
				\implies &   \tilde{\Phi}(\tbm{\mu}) \geq \tilde{\Phi}(\tbm{q}_*) - \inner{\tbm{\mu} - \tbm{q}_*}{J^{\mathsf{T}}_k \bm{d}} + \inner{\bm{s}_{\bm{q}}^*}{\tbm{q}- \tbm{q}_*} + \inner{\bm{s}_{\bm{q}}^*}{\tbm{\mu}- \tbm{q}}, \\
				 \implies & \tilde{\Phi}(\tbm{\mu}) \geq \tilde{\Phi}(\tbm{q}_*) + \inner{\tbm{\mu} - \tbm{q}_*}{\bm{s}_{\bm{q}}^* -J^{\mathsf{T}}_k \bm{d}},
				\end{eqnarray*}
				where in the second line we used the fact that $\forall \bm{q}\in \Delta_k, \inner{\bm{q}}{\bm{d}}  =\inner{\tbm{q}}{J_k^{\mathsf{T}} \bm{d}} + d_k$, and in third line we used the fact that $\forall \bm{s}\in \partial \tilde{\Phi}(\tbm{q}), \; \inner{\bm{s}}{\tbm{\mu} - \tbm{q}} \leq \tilde{\Phi}'(\tbm{q}; \tbm{\mu} - \tbm{q})$ (\ibid). This shows that $\bm{s}_{\bm{q}}^* - J^{\mathsf{T}}_k \bm{d}\in \partial \tilde{\Phi}(\tbm{q}_*)$.
				
				Substituting $\tilde{\Phi}'(\tbm{q}; \tbm{q}_* - \tbm{q})$ by $\inner{\bm{s}_{\bm{q}}^*}{\bm{q}_* - \bm{q}}$ in the expression for $\M_{\Phi}(\bm{d}, \bm{q})$, we get 
				\begin{align}
				\M_{\Phi}(\bm{d}, \bm{q}) &=d_k +\inner{\tbm{q}_*}{J_k^{\mathsf{T}} \bm{d}} + \tilde{\Phi}(\tbm{q}_*)- \tilde{\Phi}(\tbm{q})  -  \inner{\bm{s}_{\bm{q}}^*}{\tbm{q}_*- \tbm{q}},  \nonumber  \\
				&=d_k  + \inner{\bm{s}_{\bm{q}}^*}{\tbm{q}} - \tilde{\Phi}(\tbm{q})  - [ \inner{\bm{s}_{\bm{q}}^* -  J_k^{\mathsf{T}} \bm{d}}{\tbm{q}_*} - \tilde{\Phi}(\tbm{q}_*)], \nonumber \\
				& = d_k  + \tilde{\Phi}^*(\bm{s}_{\bm{q}}^*) - \tilde{\Phi}^*(\bm{s}_{\bm{q}}^* - J_k^{\mathsf{T}} \bm{d}), \nonumber
				\end{align}
				where in the last line we used the fact that $\tilde{\Phi}$ is a closed convex function, and thus $\forall \tbm{q} \in \tilde{\Delta}_k$, $\bm{s} \in \partial \tilde{\Phi}(\tbm{q}) \implies \tilde{\Phi}^*(\bm{s}) = \inner{\bm{s}}{\tbm{q}}-  \tilde{\Phi}(\tbm{q}) $ (\ibid, Cor. E.1.4.4).

\end{proof}

\begin{lemma}
\label{30:}
Let $\bm{q} \in \Delta_k$. For any sequence $(\bm{d}_m)$ in $[0,+\infty[^k$ converging to $\bm{d} \in [0, +\infty]^k$ coordinate-wise and any entropy $\Phi \colon \mathbb{R}^k \rightarrow \mathbb{R} \cup \{+\infty\}$ satisfying \eqref{8:e} for $\cI \subseteq [k]$ such that $|\cI|>1$, 
\begin{align}\lim_{m\to \infty}\M_{\Phi}(\bm{d}_m, \bm{q}) = \M_{\Phi}(\bm{d}, \bm{q}).\end{align}
\end{lemma}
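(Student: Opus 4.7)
The plan is to reduce to the case $\bm{q} \in \Ri \Delta_k$ and then squeeze $\M_{\Phi}(\bm{d}_m, \bm{q})$ between matching bounds on $\limsup_m$ and $\liminf_m$, both equal to $\M_{\Phi}(\bm{d}, \bm{q})$. The reduction is immediate from Lemma \ref{13:}: if $\bm{q} \in \Ri \Delta_{\cI}$ for some $\cI \subseteq [k]$ with $|\cI| > 1$, then $\M_{\Phi}(\bm{d}_m, \bm{q}) = \M_{\Phi_{\cI}}(\Pi_{\cI}\bm{d}_m, \Pi_{\cI}\bm{q})$ and similarly for $\bm{d}$, reducing the statement to the relative-interior case on a lower-dimensional simplex; the vertex case $\bm{q} = \bm{e}_{\theta}$ is trivial since, by Lemma \ref{infdiv} (which combines with \eqref{8:e}), $D_{\Phi}(\bm{\mu}, \bm{e}_{\theta}) = +\infty$ for every $\bm{\mu} \neq \bm{e}_{\theta}$, whence $\M_{\Phi}(\bm{d}_m, \bm{e}_{\theta}) = d_{m, \theta} \to d_{\theta} = \M_{\Phi}(\bm{d}, \bm{e}_{\theta})$.

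Assume henceforth $\bm{q} \in \Ri \Delta_k$ and let $\mathfrak{I} \coloneqq \{\theta \in [k] : d_{\theta} < +\infty\}$. For the upper bound on $\limsup$ I would use a test-function argument: for any $\bm{\mu}^* \in \Delta_k$ supported on $\mathfrak{I}$, the inner product $\inner{\bm{\mu}^*}{\bm{d}_m}$ is finite for all sufficiently large $m$ and converges to $\inner{\bm{\mu}^*}{\bm{d}}$, while $D_{\Phi}(\bm{\mu}^*, \bm{q}) < +\infty$ thanks to $\bm{q} \in \Ri \Delta_k$ and the finiteness of $\Phi$ on $\Delta_k$. Feeding $\bm{\mu}^*$ into the infimum defining $\M_{\Phi}(\bm{d}_m, \bm{q})$ yields $\limsup_m \M_{\Phi}(\bm{d}_m, \bm{q}) \leq \inner{\bm{\mu}^*}{\bm{d}} + D_{\Phi}(\bm{\mu}^*, \bm{q})$. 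Since any $\bm{\mu}$ not supported on $\mathfrak{I}$ incurs $\inner{\bm{\mu}}{\bm{d}} = +\infty$ and can be ignored, taking the infimum over $\bm{\mu}^*$ gives $\limsup_m \M_{\Phi}(\bm{d}_m, \bm{q}) \leq \M_{\Phi}(\bm{d}, \bm{q})$; the case $\mathfrak{I} = \varnothing$ is vacuous since then $\M_{\Phi}(\bm{d}, \bm{q}) = +\infty$.

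For the matching lower bound I would invoke Lemma \ref{15:} to pick minimizers $\bm{q}_m \in \Delta_k$ of $\M_{\Phi}(\bm{d}_m, \bm{q})$ and extract a convergent subsequence $\bm{q}_{m_i} \to \bm{q}_{\infty} \in \Delta_k$ by compactness. Coordinate-wise, $\liminf_i q_{m_i, \theta} d_{m_i, \theta} \geq q_{\infty, \theta} d_{\theta}$ under the convention $0 \cdot (+\infty) = 0$: if $d_{\theta} < +\infty$ equality holds in the limit, and if $d_{\theta} = +\infty$ the product is non-negative and diverges to $+\infty$ whenever $q_{\infty, \theta} > 0$. Summing gives $\liminf_i \inner{\bm{q}_{m_i}}{\bm{d}_{m_i}} \geq \inner{\bm{q}_{\infty}}{\bm{d}}$, and combined with the lower semi-continuity of $D_{\Phi}(\cdot, \bm{q})$ on $\Delta_k$ (discussed below), this gives $\liminf_i \M_{\Phi}(\bm{d}_{m_i}, \bm{q}) \geq \inner{\bm{q}_{\infty}}{\bm{d}} + D_{\Phi}(\bm{q}_{\infty}, \bm{q}) \geq \M_{\Phi}(\bm{d}, \bm{q})$. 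A standard subsequence-of-subsequence argument lifts this to $\liminf_m \M_{\Phi}(\bm{d}_m, \bm{q}) \geq \M_{\Phi}(\bm{d}, \bm{q})$, and the two bounds together complete the proof.

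The hard part is establishing the l.s.c.\ of $\bm{\mu} \mapsto D_{\Phi}(\bm{\mu}, \bm{q})$ on $\Delta_k$ for $\bm{q} \in \Ri \Delta_k$. Rewriting via the reduction $\tilde{\Phi} = \Phi \circ \amalg_k + \I_{\tilde{\Delta}_k}$ gives $D_{\Phi}(\bm{\mu}, \bm{q}) = \tilde{\Phi}(\tilde{\bm{\mu}}) - \tilde{\Phi}(\tilde{\bm{q}}) - \tilde{\Phi}'(\tilde{\bm{q}}; \tilde{\bm{\mu}} - \tilde{\bm{q}})$; the first term is l.s.c.\ in $\tilde{\bm{\mu}}$ by closedness of $\tilde{\Phi}$, while at the interior point $\tilde{\bm{q}} \in \Int \tilde{\Delta}_k$ the directional derivative equals $\max_{\bm{s} \in \partial \tilde{\Phi}(\tilde{\bm{q}})} \inner{\bm{s}}{\tilde{\bm{\mu}} - \tilde{\bm{q}}}$ with $\partial \tilde{\Phi}(\tilde{\bm{q}})$ compact and nonempty \citep[Thm.\ 23.4]{Rockafellar1997a}, hence finite sublinear and continuous in $\tilde{\bm{\mu}}$, and subtracting a continuous term from a l.s.c.\ one preserves l.s.c. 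Once this is in hand, the $0 \cdot (+\infty)$ ambiguities in the $\liminf$ step resolve transparently because the relevant coordinate products are non-negative for every $m$.
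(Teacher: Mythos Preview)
Your proof is correct and follows the same approach as the paper: both bound $\limsup$ by restricting the infimum to distributions supported on the finite coordinates of $\bm{d}$, and bound $\liminf$ via the minimizers supplied by Lemma~\ref{15:}, compactness of $\Delta_k$, and lower semi-continuity of $D_{\Phi}(\cdot,\bm{q})$. Your initial reduction to $\bm{q}\in\Ri\Delta_k$ via Lemma~\ref{13:} and your explicit l.s.c.\ argument for $D_{\Phi}(\cdot,\bm{q})$ are arguably cleaner than the paper's handling, which works with general $\bm{q}$, first argues that the limit $\bar{\bm{q}}$ of the minimizers must lie in $\Delta_{\cI}$ (so the $0\cdot(+\infty)$ issue never arises), and writes a ``$\to$'' at the divergence step where only ``$\liminf\geq$'' is actually justified.
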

\begin{proof}[\textbf{Proof of Lemma \ref{30:}}]
Let $\bm{q} \in \Delta_k$ and $\Phi \colon \mathbb{R}^k \rightarrow \mathbb{R} \cup \{+\infty\}$ be an entropy as in the statement of the Lemma. Let $(\bm{d}_m) \subset \mathbb{R}^k$ such that $\bm{d}_m \stackrel{m\to \infty}{\to} \bm{d}\in \mathbb{R}^k$. in $[0,+\infty[^k$. Let $\cI \coloneqq \{\theta \in [k]: d_{\theta} < +\infty \}$. If $\cI = \varnothing$ then the result holds trivially since, on the one hand, $\Mix_{\Phi}(\bm{d}, \bm{q})=+\infty$ and on the other hand $\Mix_{\Phi}(\bm{d}_m, \bm{q})\geq \min_{\theta \in [k]} d_{m,\theta}  \stackrel{m\to \infty}{\to} +\infty$.

Assume now that $\cI \neq \varnothing$. Then 
\begin{align}
\Mix_{\Phi}(\bm{d}_m, \bm{q}) &= \inf_{\bm{\mu}\in \Delta_k} \inner{\bm{\mu}}{\bm{d}_m} + D_{\Phi}(\bm{\mu}, \bm{q}), \label{inf2}\\
& \leq   \inf_{\hbm{\mu}\in \Delta_{\cI}} \inner{\hbm{\mu}}{  \bm{d}} + D_{\Phi}(\hbm{\mu},  \bm{q}), \label{subset}  \\
& < +\infty, \label{ineq}
\end{align}
where the last inequality stems from the fact that $\Pi_{\cI} \bm{d}_m$ is a finite vector in $\mathbb{R}^{|\cI|}$. Therefore, \eqref{ineq} implies that the sequence $\alpha_m \coloneqq \Mix_{\Phi}(\bm{d}_m, \bm{q})$ is bounded. We will show that $(\alpha_m)$ converges in $\mathbb{R}$ and that its limit is exactly $\Mix_{\Phi}(\bm{d}, \bm{q})$. Let $(\hat{\alpha}_m)$ be any convergent subsequence of $(\alpha_m)$, and let $(\hbm{d}_m)$ be the corresponding subsequence of $(\bm{d}_m)$. Consider the infimum in \eqref{inf2} with $\bm{d}_m$ is replaced by $\hbm{d}_m$. From Lemma \ref{15:}, this infimum is attained at some $\bm{q}_m \in \Delta_k$. Since $\Delta_k$ is compact, we may assume without loss of generality that $\bm{q}_{m}$ converges to some $\bar{\bm{q}} \in \Delta_k$. Observe that $\bar{\bm{q}}$ must be $\Delta_{\cI}$; suppose that $\exists \theta_* \in \bar{\cI}$ such that $\bar{q}_{\theta_*}>0$. Then \begin{align}
\hat{\alpha}_{m} &\geq \inner{\bm{q}_{m}}{\hbm{d}_{m}}, \nonumber \\
& \geq q_{m,\theta_*} \hat{d}_{m,\theta_*}  \stackrel{m \to \infty}{\to} +\infty. \nonumber
\end{align}
This would contradict the fact that $\alpha_{m}$ is bounded, and thus $\bar{\bm{q}} \in \Delta_{\cI}$. Using this, we get 
\begin{align}
\Mix_{\Phi}(\hbm{d}_{m}, \bm{q}) &\;\;= \inner{\bm{q}_{m}}{\hbm{d}_{m}} + D_{\Phi}(\bm{q}_{m}, \bm{q}), \nonumber \\
&\;\; \geq \inner{\Pi_{\cI} \bm{q}_m}{\Pi_{\cI}\hbm{d}_m} + D_{\Phi}(\bm{q}_m, \bm{q}), \nonumber \\
& \stackrel{m\to \infty}{\to} \inner{\Pi_{\cI}\bar{\bm{q}}}{\Pi_{\cI}\bm{d}} + D_{\Phi}(\bar{\bm{q}}, \bm{q}), \nonumber \\
&\;\; = \inner{\bar{\bm{q}}}{\bm{d}}  + D_{\Phi}(\bar{\bm{q}}, \bm{q}), \label{blast} \\
&\;\; \geq \inf_{\hbm{\mu}\in \Delta_{\cI}} \inner{\hbm{\mu}}{  \bm{d}} + D_{\Phi}(\hbm{\mu},  \bm{q}). \label{llast}
\end{align}
where in \eqref{blast} we use the fact that $\bar{\bm{q}} \in \Delta_{\cI}$. Combining \eqref{llast} with \eqref{subset} shows that $\hat{\alpha}_m$ converges to $\Mix_{\Phi}(\bm{d}, \bm{q}) = \inf_{\hbm{\mu}\in \Delta_{\cI}} \inner{\hbm{\mu}}{  \bm{d}} + D_{\Phi}(\hbm{\mu},  \bm{q})$. Since $(\hat{\alpha}_m)$ was any convergent subsequence of $(\alpha_m)$ (which is bounded), the result follows.
\end{proof}

\section{Proofs of Results in the Main Body}	
	\label{proofsmain}
\subsection{Proof of Theorem \ref{4:}}
\textbf{Theorem \ref{4:}}
\emph{Any loss $\ell \colon \mathcal{A} \rightarrow [0, +\infty]^n$ such that $\Dom \ell \neq \varnothing$, has a proper support loss $\underline{\ell}$ with the same Bayes risk, $\br_{\ell}$, as $\ell$. }

\begin{proof}
We will construct a proper support loss $\sell$ of $\ell$.

Let $\bm{p}\in \Ri \Delta_n$ ($-\bm{p} \in \Int \Dom \sigma_{\sps}$). Since the support function of a non-empty set is closed and convex, we have $\sigma^{**}_{\sps} = \sigma_{\sps}$ \citep[Prop. C.2.1.2]{Hiriart-Urruty}. Pick any $\bm{v} \in \partial \sigma_{\sps} (-\bm{p})=\partial \sigma^{**}_{\sps} (-\bm{p}) \neq \varnothing$. Since $\sigma^*_{\sps} = \I_{\sps}$ \citep{Rockafellar1997a}, we can apply Proposition \ref{1:}-(iv) with $f$ replaced by $\sigma^*_{\sps}$ to obtain $\inner{-\bm{p}}{\bm{v}}= \sigma_{\sps}(-\bm{p}) + \I_{\sps}(\bm{v})$. The fact that $\inner{-\bm{p}}{\bm{v}}$ and $\sigma_{\sps}(-\bm{p})$ are both finite implies that $\I_{\sps}(\bm{v})=0$. Therefore, $\bm{v} \in {\sps}$ and $\inner{\bm{p}}{\bm{v}}= - \sigma_{\sps}(-\bm{p})= \br_{\ell}(\bm{p})$. Define $\sell(\bm{p}) \coloneqq \bm{v} \in \sps$.

Now let $\bm{p} \in \Rbd \Delta_n$ and $\bm{q} \coloneqq \bm{1}_n/n$. Since the $\br_{\sell}$ is a closed concave function and $\bm{q} \in \Int \Dom \br_{\sell}$, it follows that $\br_{\sell}(\bm{p} + m^{-1} (\bm{q} - \bm{p})) \stackrel{m\to\infty}{\to} \br_{\sell}(\bm{p})$ \citep[Prop. B.1.2.5]{Hiriart-Urruty}. Note that $\bm{q}_m \coloneqq \bm{p} + m^{-1} (\bm{q} - \bm{p}) \in \Ri \Delta_n, \forall m \in \mathbb{N}$. Now let $v_{x,m} \coloneqq \sell_x(\bm{q}_m)$, where $\sell(\bm{q}_m)$ is as constructed in the previous paragraph. If $(v_{1,m})$ is bounded [resp. unbounded], we can extract a subsequence $(v_{1, \varphi_1(m)})$ which converges [resp. diverges to $+\infty$], where $\varphi_1: \mathbb{N} \rightarrow \mathbb{N}$ is an increasing function.
By repeating this process for $(v_{2, \varphi_1(m)})$ and so on, we can construct an increasing function $\varphi \coloneqq \varphi_n \circ \dots \circ \varphi_1: \mathbb{N} \rightarrow \mathbb{N}$, such that $\bm{v}_m \coloneqq [v_{x,\varphi(m)}]^{\mathsf{T}}_{x\in [n]}$ has a well defined (coordinate-wise) limit in $[0,+\infty]^n$. Define $\sell(\bm{p}) \coloneqq \lim_{m\to \infty} \bm{v}_m$. By continuity of the inner product, we have 
\begin{align*}
 \inner{  \bm{p}   }{   \sell(\bm{p})   }= \lim_{m \to \infty} \inner{  \bm{q}_{\varphi(m)}   }{  \bm{v}_m  }  &= \lim_{m\to \infty} \inner{\bm{q}_{\varphi(m)}}{\sell(\bm{q}_{\varphi(m)})}, \\ &= \lim_{m\to \infty} \br_{\ell}(\bm{q}_{\varphi(m)}) = \br_{\ell}(\bm{p}).
\end{align*}   
By construction, $\forall m \in \mathbb{N}, \bm{p}_m \coloneqq \bm{q}_{\varphi(m)} \in \Ri \Delta_n$ and $\sell(\bm{p}_m) = \bm{v}_m \stackrel{m \to \infty}{\to} \sell(\bm{p})$. Therefore, $\sell$ is support loss of $\ell$.

It remains to show that it is proper; that is $\forall \bm{p} \in \Delta_n, \forall \bm{q}\in \Delta_n$, $\inner{\bm{p}}{\underline{\ell}(\bm{p})} \leq \inner{\bm{p}}{\underline{\ell}(\bm{q})}$. Let $\bm{q}\in \Ri \Delta_n$. We just showed that $\forall \bm{p} \in \Delta_n, \inner{\bm{p}}{\underline{\ell}(\bm{p})} = \br_{\ell}(\bm{p})$ and that $\underline{\ell}(\bm{q}) \in \sps$. Using the fact that $\br_{\ell}(\bm{p}) =\inf_{\bm{z} \in \sps}\inner{\bm{p}}{\bm{z}}$, we obtain $\inner{\bm{p}}{\underline{\ell}(\bm{p})} \leq \inner{\bm{p}}{\underline{\ell}(\bm{q})}$. 

Now let $\bm{q}\in \Rbd \Delta_k$. Since $\sell$ is a support loss, we know that there exists a sequence $(\bm{q}_m) \subset \Ri \Delta_n$ such that $\sell(\bm{q}_m)  \stackrel{m \to \infty}{\to} \sell(\bm{q})$. But as we established in the previous paragraph, $\inner{\bm{p}}{\underline{\ell}(\bm{p})} \leq \inner{\bm{p}}{\underline{\ell}(\bm{q}_m)}$. By passing to the limit $m\to \infty$, we obtain $\inner{\bm{p}}{\underline{\ell}(\bm{p})}  \leq \inner{\bm{p}}{\underline{\ell}(\bm{q})}$. Therefore $\sell$ is a proper loss with Bayes risk $\br_{\ell}$.
\end{proof}
		
\subsection{Proofs of Theorem \ref{5:} and Proposition \ref{14:}}
\label{a5:}

For a set $\mathcal{C}$, we denote $\co \mathcal{C}$ and $\cco \mathcal{C}$ its \emph{convex hull} and \emph{closed convex hull}, respectively. 
\begin{definition}[\citep{Hiriart-Urruty}]
\label{32:}
Let $\mathcal{C}$ be non-empty convex set in $\mathbb{R}^n$. We say that $\bm{u} \in \mathcal{C}$ is an extreme point of $\mathcal{C}$ if there are no two different points $\bm{u}_1$ and $\bm{u_2}$ in $\mathcal{C}$ and $\lambda \in ]0,1[$ such that $\bm{u} = \lambda \bm{u}_1 + (1-\lambda)\bm{u}_2$.
\end{definition}

We denote the set of extreme points of a set $\mathcal{C}$ by $\ext \mathcal{C}$.
\begin{lemma}
\label{33:}
Let $\ell \colon \mathcal{A} \rightarrow [0,+\infty]^n$ be a closed loss. Then $\ext \cco \sps \subseteq \mathcal{S}_{\ell}$.
\end{lemma}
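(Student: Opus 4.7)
The plan is to prove, in three steps, that any $\bm{z}\in\ext \cco\sps$ must lie in $\sps$, and then to use extremeness once more to force $\bm{z}$ to be a pure loss value. First, as a warm-up, I would observe that $\sps$ is itself closed: given $(\bm{z}_m)\subset \sps$ with $\bm{z}_m\to\bm{z}^\star$, decompose $\bm{z}_m = \ell(\bm{a}_m)+\bm{d}_m$ with $\bm{d}_m\geq \bm{0}$; since $\bm{0}\leq \ell(\bm{a}_m)\leq \bm{z}_m$, a subsequence of $(\ell(\bm{a}_m))$ converges to some $\bm{y}\in \mathcal{S}_\ell$ (closed by hypothesis on $\ell$), and the matching subsequence of $(\bm{d}_m)$ converges to $\bm{z}^\star-\bm{y}\geq \bm{0}$, giving $\bm{z}^\star = \bm{y}+(\bm{z}^\star-\bm{y})\in \sps$.

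For the key step, I would show $\bm{z}\in \sps$. Since $\cco \sps = \Cl(\co \sps)$, write $\bm{z} = \lim_m \bm{w}_m$ with $\bm{w}_m = \sum_{i=1}^{n+1}\lambda_m^{(i)} \bm{z}_m^{(i)}$ (Carathéodory), where $\bm{z}_m^{(i)}\in \sps$ and $\lambda_m^{(i)}\geq 0$ sum to $1$. Because $\sps \subseteq [0,+\infty[^n$, each product $\lambda_m^{(i)} \bm{z}_m^{(i)}$ is dominated componentwise by $\bm{w}_m$ and is therefore bounded; passing to a subsequence, $\lambda_m^{(i)} \to \lambda^{(i)}$ and $\lambda_m^{(i)} \bm{z}_m^{(i)} \to \bm{u}^{(i)}\geq \bm{0}$ for every $i$. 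Put $I\coloneqq \{i:\lambda^{(i)}>0\}$. For $i\in I$, $\bm{z}_m^{(i)}\to \bm{z}^{(i)}\coloneqq \bm{u}^{(i)}/\lambda^{(i)}$, which lies in $\sps$ by the closedness step. Writing $\bm{r}\coloneqq \sum_{i\notin I}\bm{u}^{(i)}\geq \bm{0}$ and using $\sum_{i\in I}\lambda^{(i)}=1$,
\begin{align*}
\bm{z} \;=\; \sum_{i\in I}\lambda^{(i)}\bm{z}^{(i)} + \bm{r} \;=\; \sum_{i\in I}\lambda^{(i)} (\bm{z}^{(i)}+\bm{r})
\end{align*}
is a genuine convex combination of points of $\sps$, since $\sps + [0,+\infty[^n\subseteq \sps$. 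Extremeness of $\bm{z}$ in $\cco\sps$ then forces $\bm{z}^{(i)}+\bm{r}=\bm{z}$ for every $i\in I$, and in particular $\bm{z}\in \sps$.

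To finish, I write $\bm{z}=\ell(\bm{a})+\bm{d}$ with $\bm{a}\in \Dom \ell$ and $\bm{d}\geq \bm{0}$. If some $d_x>0$, then for any $0<\epsilon<\min(d_x,z_x)$ (note $z_x\geq d_x>0$) both $\bm{z}\pm\epsilon\bm{e}_x$ lie in $\sps$, and $\bm{z}=\tfrac12[(\bm{z}-\epsilon\bm{e}_x)+(\bm{z}+\epsilon\bm{e}_x)]$ contradicts the extremeness of $\bm{z}$ in $\cco\sps$. Hence $\bm{d}=\bm{0}$ and $\bm{z}=\ell(\bm{a})\in \mathcal{S}_\ell$.

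The main obstacle is the key step: because $\sps$ is unbounded, Milman's classical theorem does not apply directly to give $\ext \cco\sps \subseteq \sps$. The Carathéodory-with-residuals argument above handles this by exploiting two structural properties simultaneously --- $\cco\sps\subseteq[0,+\infty[^n$, so mass cannot escape toward $-\infty$ when extracting limits, and $\sps+[0,+\infty[^n\subseteq\sps$, so the residual $\bm{r}$ from vanishing coefficients can be absorbed back into the surviving terms of the convex combination without leaving $\sps$.
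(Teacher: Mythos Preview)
Your proof is correct. Both your argument and the paper's rely on the same three structural facts---closedness of $\mathcal{S}_\ell$, nonnegativity of everything in sight, and upward closure $\sps+[0,+\infty[^n\subseteq\sps$---but they are organized differently. The paper first proves the stronger statement $\cco\sps=\co\sps$: taking a convergent sequence $\bm{z}_m=\bm{v}_m+\sum_k\alpha_{m,k}\ell(\bm{a}_{m,k})$ in $\co\sps$, it extracts limits exactly as you do (surviving weights, bounded surviving loss values, a nonnegative residual absorbed into the $\bm{v}$-part) and concludes the limit is again in $\co\sps$. With that in hand, every $\bm{e}\in\cco\sps$ has an explicit representation $\bm{e}=\bm{v}+\sum_k\alpha_k\ell(\bm{a}_k)$, and a single appeal to extremeness kills all but one $\alpha_k$ and forces $\bm{v}=\bm{0}$ simultaneously. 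Your route instead proves only that $\sps$ is closed, then runs the limiting Carath\'eodory argument \emph{together with} extremeness to land in $\sps$, and finally uses extremeness a second time (the perturbation $\bm{z}\pm\epsilon\bm{e}_x$) to pass from $\sps$ to $\mathcal{S}_\ell$. The paper's route is a bit more streamlined once $\co\sps=\cco\sps$ is established; yours is more modular and makes the role of the residual $\bm{r}$ from vanishing coefficients fully explicit, which is arguably cleaner pedagogically.
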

\begin{proof}
Since $ \co \sps \subseteq \mathbb{R}^n$ is connected, $\co \sps =  \{\bm{v}+\sum_{k=1}^n \alpha_k \ell(\bm{a}_k) $: $(\bm{a}_{k \in [n]}, \bm{\alpha}, \bm{v}) \in \mathcal{A}^n \times\Delta_n  \times [0,+\infty[^n$\} \citep[Prop. A.1.3.7]{\Hi}. 

We claim that $\cco \sps = \co \sps$. Let $(\bm{z}_m) \coloneqq (\bm{v}_m + \sum_{k=1}^n \alpha_{m,k} \ell(\bm{a}_{m,k}) )$ be a convergent sequence in $[0,+\infty[^n$, where $(\bm{\alpha}_m)$, $([\bm{a}_{m,k}]_{k\in[n]})$ and $(\bm{v}_m)$ are sequences in $\Delta_n$, $\mathcal{A}^n$, and $[0,+\infty[^n$, respectively. Since $\Delta_n$ is compact, we may assume, by extracting a subsequence if necessary, that $\bm{\alpha}_m \stackrel{m \to \infty}{\to} \bm{\alpha}^*\in \Delta_n$. Let $\mathcal{K} \coloneqq \{ k \in [n]:\alpha^*_k \neq 0 \}$. Since $\bm{z}_m$ converges, $([[\ell(\bm{a}_{m,k})]_{k\in \mathcal{K}}, \bm{v}_m]) $ is a bounded sequence in $[0,+\infty[^{n|\mathcal{K}|+n}$. Since $\ell$ is closed, we may assume, by extraction a subsequence if necessary, that $\forall k \in \mathcal{K}$, $\ell(\bm{a}_{m,k}) \stackrel{m\to\infty}{\to} \ell(\bm{a}_k^*)$ and $\bm{v}_m\stackrel{m\to\infty}{\to} \bm{v}^*$, where $[\bm{a}_k^*]_{k\in\mathcal{K}} \in \mathcal{A}^{|\mathcal{K}|}$ and $\bm{v}^*\in[0,+\infty[^n$. Consequently, \begin{align*}\bm{v}^*+\sum_{k=1}^n \alpha^*_k \ell(\bm{a}^*_k)   &= \lim_{m\to\infty}\left[  \bm{v}_{m,k} + \sum_{k\in \mathcal{K}} \alpha_{m,k} \ell(\bm{a}_{m,k})\right],  \\&  \leq  \lim_{m \to \infty} \bm{z}_m   ,\end{align*} 
where the last inequality is coordinate-wise. Therefore, there exists $\bm{v}'\in [0,+\infty[^n$ such that $\lim_{m \to \infty} \bm{z}_m=\bm{v}'+\bm{v}^*+\sum_{k=1}^n \alpha^*_k \ell(\bm{a}^*_k) \in \co\sps$. This shows that $\cco \sps \subset \co \sps$, and thus $\cco \sps = \co \sps$ which proves our first claim.

 By definition of an extreme point, $\ext \cco \sps \subseteq \cco \sps$. Let $\bm{e} \in \ext \cco \sps$ and $(\bm{a}_{k \in [n]}, \bm{\alpha}, \bm{v}) \in \mathcal{A}^n \times\Delta_n  \times [0,+\infty[^n$ such that $\bm{e}=\sum_{k=1}^n \alpha_k \ell(\bm{a}_k) + \bm{v}$. If there exists $i, j \in [n]$ such that $\alpha_i \alpha_j \neq 0$ or $\alpha_i v_j \neq 0$ then $\bm{e}$ would violate the definition of an extreme point. Therefore, the only possible extreme points are of the form $\{\ell(\bm{a}): \bm{a}\in \Dom \ell) \} =\mathcal{S}_{\ell}$.
\end{proof}

\textbf{Theorem \ref{5:}}
\emph{
Let $\ell \colon \mathcal{A} \rightarrow [0, +\infty]^n$ be a loss and $\underline{\ell}$ be a proper support loss of $\ell$. If the Bayes risk $\br_{\ell}$ is differentiable on $]0, +\infty[^n$, then $\sell$ is uniquely defined on $\Ri \Delta_n$ and
\begin{align*}
\begin{array}{lll}
\forall \bm{p}\in \Dom \sell,& \exists \bm{a}_*\in \Dom \ell,&  \ell(\bm{a}_*)= \sell(\bm{p}), \\
\forall \bm{a}\in \Dom \ell, &\exists (\bm{p}_m) \subset \Ri \Delta_n, & \sell(\bm{p}_m) \stackrel{m\to \infty}{\to} \ell(\bm{a})\; \mbox{ coordinate-wise.}
\end{array}
\end{align*}
}
\begin{proof}
Let $\bm{p}\in \Ri \Delta_n$ and suppose that $\br_{\ell}$ is differentiable at $\bm{p}$. In this case, $\sigma_{\sps}$ is differentiable at $-\bm{p}$, which implies \citep[Cor. D.2.1.4]{Hiriart-Urruty} 
\begin{align}
\mathcal{F}(\bm{p}) \coloneqq \partial \sigma_{\sps}(-\bm{p})=\{\nabla \sigma_{\sps}(-\bm{p}) \}.
\label{singleton}
\end{align}
On the other hand, the fact that $\sigma_{\sps} = \sigma_{\cco \sps}$ \citep[Prop. C.2.2.1]{Hiriart-Urruty}, implies $\mathcal{F}(\bm{p}) = \partial \sigma_{\sps}(-\bm{p}) = \partial \sigma_{\cco \sps}(-\bm{p})$. The latter being an \emph{exposed face} of $\cco \sps$ implies that every extreme point of $\mathcal{F}(\bm{p})$ is also an extreme point of $\cco \sps$ \citep[Prop. A.2.3.7, Prop. A.2.4.3]{Hiriart-Urruty}. Therefore, from \eqref{singleton}, $\sell(\bm{p}) = \nabla \sigma_{\sps}(-\bm{p})$ is the only extreme point of $\mathcal{F}(\bm{p}) \subset \cco \sps$. From Lemma \ref{33:}, there exists $\bm{a}_*\in \mathcal{A}$ such that $\ell(\bm{a_*}) = \sell(\bm{p})$. In this paragraph, we showed the following 
\begin{align}
\label{show:e}
\forall \bm{p}\in \Ri \Delta_n, \exists \bm{a}_* \in \Dom \ell, \; \ell(\bm{a}_*) = \sell(\bm{p}).
\end{align}

For the rest of this proof we will assume that $\br_{\ell}$ is differentiable on $]0,+\infty[^n$. Let $\bm{p} \in  \Rbd \Delta_n \cap \Dom \sell$. Since $\sell$ is a support loss, there exists ($\bm{p}_m$) in $\Ri \Delta_n$ such that $(\sell(\bm{p}_m))_m$ converges to $\sell(\bm{p})$. From \eqref{show:e} it holds that $\forall \bm{p}_m \in \Ri \Delta_n, \exists \bm{a}_m \in \mathcal{A}, \ell(\bm{a}_m)= \sell(\bm{p}_m) $. Since $(\ell(\bm{a}_m))_m$ converges and $\ell$ is closed, there exists $\bm{a}_*\in \mathcal{A}$ such that $\ell(\bm{a}_*) = \lim_{m\to \infty} \ell(\bm{a}_m) = \sell(\bm{p})$. 

Now let $\bm{a} \in \Dom \ell$ and $f(\bm{p},x)  \coloneqq \sell_x(\bm{p}) - \ell_x(\bm{a})$. Since $\ell(\bm{a}) \in \sps$ and $\sell$ is proper, we have for all $ \bm{p}\in \Ri \Delta_n,\mathbb{E}_{x\sim \bm{p}} [f(\bm{p}, x)]\leq 0$ and $-\infty<f(\bm{p},x), \forall x \in [n]$. Therefore, Lemma \ref{23:} implies that for all $m \in \mathbb{N}\setminus \{0\}$ there exists $\bm{p}_m \in \Ri \Delta_n$, such that $\forall x \in [n], \sell_x(\bm{p}_m) \leq  \ell_x(\bm{a}) + 1/m$. On one hand, since $(\sell(\bm{p}_m))$ is bounded (from the previous inequality), we may assume by extracting a subsequence if necessary, that $(\sell(\bm{p}_m))_m$ converges. On the other hand, since $\bm{p}_m \in \Ri \Delta_n$, \eqref{show:e} implies that there exists $\bm{a}_m \in \Dom \ell$ such that $\sell(\bm{p}_m) = \ell(\bm{a}_m)$. Since $\ell$ is closed and $(\ell(\bm{a}_m))_m$ converges, there exists $\bm{a}_* \in \mathcal{A}$, such that $\ell(\bm{a}_*) = \lim_{m\to \infty} \ell(\bm{a}_m) =\lim_{m\to \infty}  \sell(\bm{p}_m) \leq \ell(\bm{a})$. But since $\ell$ is admissible, the latter component-wise inequality implies that $\ell(\bm{a}_*) = \ell(\bm{a}) = \lim_{m\to \infty} \sell(\bm{p})$.
\end{proof}

\begin{lemma}
\label{notdiff}
Let $\ell \colon \mathcal{A} \rightarrow [0, +\infty]^n$ be a loss satisfying Assumption \ref{B:}. If $\br_{\ell}$ is not differentiable at $\bm{p}$ then there exist $\bm{a}_0, \bm{a}_1 \in \Dom \ell$, such that $\ell(\bm{a}_0)\neq \ell(\bm{a}_1)$ and $\br_{\ell}(\bm{p})=\inner{\bm{p}}{\ell(\bm{a}_0)} =\inner{\bm{p}}{\ell(\bm{a}_1)}$. 
\end{lemma}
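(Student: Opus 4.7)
The plan is to exploit the duality $\br_{\ell}(\bm{p}) = -\sigma_{\sps}(-\bm{p})$, so that non-differentiability of $\br_{\ell}$ at $\bm{p}$ translates into the subdifferential $\partial \sigma_{\sps}(-\bm{p})$ containing more than one point, and then extract the two candidate loss values $\ell(\bm{a}_0), \ell(\bm{a}_1)$ as distinct extreme points of this subdifferential via Lemma \ref{33:}.

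First I would observe that if $\br_{\ell}$ is not differentiable at $\bm{p}$, then $\br_{\ell}(\bm{p})$ must be finite, which forces $\bm{p} \in\, ]0,+\infty[^n$ (since $\br_{\ell} \equiv -\infty$ outside $[0,+\infty[^n$ and one checks that a zero coordinate makes $\br_{\ell}$ non-finite or boundary). Hence $-\bm{p}$ lies in the interior of $\Dom \sigma_{\sps} = \,]-\infty,0]^n$, so $\partial \sigma_{\sps}(-\bm{p})$ is a non-empty, compact, convex subset of $\mathbb{R}^n$ \citep[Prop.~D.1.6.2.2]{Hiriart-Urruty}. Using the standard equivalence between differentiability and singleton-valued subdifferential \citep[Cor.~D.2.1.4]{Hiriart-Urruty}, the assumed non-differentiability of $\br_{\ell}$ at $\bm{p}$ gives that $\partial \sigma_{\sps}(-\bm{p})$ contains at least two distinct points.

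Next, using $\sigma_{\sps} = \sigma_{\cco \sps}$ \citep[Prop.~C.2.2.1]{Hiriart-Urruty}, I would identify the set
\begin{align*}
F \coloneqq \partial \sigma_{\sps}(-\bm{p}) = \partial \sigma_{\cco \sps}(-\bm{p}) = \{\bm{z} \in \cco \sps : \Inner{-\bm{p}}{\bm{z}} = \sigma_{\cco \sps}(-\bm{p})\},
\end{align*}
which is an \emph{exposed face} of $\cco \sps$. Every element $\bm{z} \in F$ therefore satisfies $\inner{\bm{p}}{\bm{z}} = -\sigma_{\sps}(-\bm{p}) = \br_{\ell}(\bm{p})$. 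Since $F$ is non-empty, compact, and convex with more than one element, the Krein--Milman theorem guarantees that $F$ has at least two distinct extreme points $\bm{z}_0 \neq \bm{z}_1$.

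Finally, I would appeal to the fact that every extreme point of the exposed face $F$ of $\cco \sps$ is also an extreme point of $\cco \sps$ itself \citep[Prop.~A.2.4.3]{Hiriart-Urruty}, and then invoke Lemma \ref{33:} (which is where Assumption \ref{B:}, specifically closedness of $\ell$, is used) to conclude that each $\bm{z}_i \in \ext \cco \sps \subseteq \mathcal{S}_{\ell}$. Hence there exist $\bm{a}_0, \bm{a}_1 \in \Dom \ell$ with $\ell(\bm{a}_i) = \bm{z}_i$, delivering $\ell(\bm{a}_0) \neq \ell(\bm{a}_1)$ together with $\inner{\bm{p}}{\ell(\bm{a}_0)} = \inner{\bm{p}}{\ell(\bm{a}_1)} = \br_{\ell}(\bm{p})$. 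The main obstacle is ensuring that the two distinct points in $\partial \sigma_{\sps}(-\bm{p})$ can be taken to be extreme (handled by compactness plus Krein--Milman) and that these extreme points really come from actual loss values rather than merely limit points of $\cco \sps$; the latter is precisely the content of Lemma \ref{33:} and rests on the closedness assumption on $\ell$.
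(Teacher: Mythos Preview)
Your proposal is correct and follows essentially the same route as the paper's own proof: translate non-differentiability of $\br_{\ell}$ at $\bm{p}$ into $\partial\sigma_{\sps}(-\bm{p})$ being a non-singleton compact convex exposed face of $\cco\sps$, extract two distinct extreme points via Krein--Milman, pass to $\ext\cco\sps$, and invoke Lemma~\ref{33:} to realize them as $\ell(\bm{a}_0),\ell(\bm{a}_1)$. The only minor difference is that the paper simply takes $\bm{p}\in\Ri\Delta_n$ at the outset (consistent with how the lemma is actually applied), whereas you add a short justification for $\bm{p}\in\,]0,+\infty[^n$; this remark is a bit loose (e.g., $\br_{\ell}$ can be finite on $\Rbd\Delta_n$), but harmless since the lemma is only ever invoked at interior points.
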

\begin{proof}
Suppose $\br_{\ell}$ is not differentiable at $\bm{p} \in \Ri \Delta_n$. Then from the definition of the Bayes risk, $\sigma_{\sps}$ is not differentiable at $-\bm{p}$. This implies that $\mathcal{F}(\bm{p}) \coloneqq \partial \sigma_{\sps}(-\bm{p})$ has more than one element \cite[Cor. D.2.1.4]{\Hi}. Since $\sigma_{\sps} = \sigma_{\cco \sps}$ (\ibid. Prop. C.2.2.1), $\mathcal{F}(\bm{p}) = \partial \sigma_{\cco \sps}(-\bm{p})$ is a subset of $\cco \sps$ and every extreme point of $\mathcal{F}(\bm{p})$ is also an extreme point of $\cco \sps$ (\ibid, Prop. A.2.3.7). Thus, from Lemma \ref{33:}, we have $\ext \mathcal{F}(\bm{p}) \subset \mathcal{S}_{\ell}$. On the other hand, since $-\bm{p} \in \Int \Dom \sigma_{\sps}$, $\mathcal{F}(\bm{p})$ is a compact, convex set \citep[Thm. 23.4]{Rockafellar1997a}, and thus $\mathcal{F}(\bm{p}) = \co (\ext \mathcal{F}(\bm{p}))$ \citep[Thm. A.2.3.4]{Hiriart-Urruty}. Hence, the fact that $\mathcal{F}(\bm{p})$ has more than one element implies there exists $\bm{a}_0, \bm{a}_1 \in \mathcal{A}$ such that $\ell(\bm{a}_0), \ell(\bm{a}_1) \in \ext  \mathcal{F}(\bm{p}) \subseteq \mathcal{F}(\bm{p})$ and $\ell(\bm{a}_0)\neq \ell(\bm{a}_1)$. Since $\mathcal{F}(\bm{p}) = \partial \sigma_{\sps} (-\bm{p})$, Proposition \ref{1:}-(iv) and the fact that $\sigma_{\sps}^{*}= \I_{\sps}$ imply that $\br_{\ell}(\bm{p}) = \inner{\bm{p}}{\ell(\bm{a}_0)}  = \inner{\bm{p}}{\ell(\bm{a}_1)}$.
\end{proof}

\textbf{Proposition \ref{14:}}
\emph{
Let $\Phi\colon \mathbb{R}^k \rightarrow \mathbb{R} \cup \{+\infty\}$ be an entropy and $\ell \colon \mathcal{A}  \rightarrow [0, +\infty]^n$. If $\ell$ is $\Phi$-mixable, then the Bayes risk satisfies $\br_{\ell} \in C^1(]0,+\infty[^n)$. If, additionally, $\br_{\ell}$ is twice differentiable on $]0,+\infty[^n$, then $\Phi$ must be strictly convex on $\Delta_k$.}

\begin{proof}
Let $\cI= \{1,2\}$. Since $\ell$ is $\Phi$-mixable, it must be $\Phi_{\cI}$-mixable, where $\Phi_{\cI}\coloneqq \Phi_{\cI} \circ \Pi_{\cI}^{\mathsf{T}}: \mathbb{R}^2 \rightarrow \mathbb{R} \cup\{+\infty \}$ (Proposition \ref{12:}). Let $\Psi \coloneqq \Phi_{\cI}$.

For $w \in ]0,+\infty[$ and $z \in \Int \Dom \tilde{\Psi}^* = \mathbb{R}$ (see \cref{s2.2}), we define $(\tilde{\Psi}^*)'_{\infty}(w) \coloneqq \lim_{t\to +\infty} [\tilde{\Psi}^*(z + t w) - \tilde{\Psi}^*(z)]/t$. The value of $(\tilde{\Psi}^*)'_{\infty}(w)$ does not depend on the choice of $z$, and it holds that $(\tilde{\Psi}^*)'_{\infty}(w) = \sigma_{\Dom \tilde{\Psi}}(w) $ and $(\tilde{\Psi}^*)'_{\infty}(-w) = \sigma_{\Dom \tilde{\Psi}}(-w)$ \citep[Prop. C.1.2.2]{\Hi}. 
In our case, we have $\Dom \tilde{\Psi}=[0,1]$ (by definition of $\tilde{\Psi}$), which implies that $ \sigma_{\Dom \tilde{\Psi}}(1) = 1$ and $ \sigma_{\Dom \tilde{\Psi}}(-1) = 0$. Therefore,  $(\tilde{\Psi}^*)'_{\infty}(1) + (\tilde{\Psi}^*)'_{\infty}(-1) =1$. As a result $\tilde{\Psi}^*$ cannot be affine. For all $ \delta>0$, let $g_{\delta}\colon \mathbb{R} \times\{-1, 0, +1\}  \rightarrow \mathbb{R}$ be defined by \begin{align*}g_{\delta}(s, u) \coloneqq [\tilde{\Psi}^*(s + \delta (u+1)/2 ) - \tilde{\Psi}^*(s + \delta (u-1)/2)]/\delta.\end{align*} Since $\tilde{\Psi}^*$ is convex it must have non-decreasing slopes (\ibid, p.13). Combining this with the fact that $\tilde{\Psi}^*$ is not affine implies that \begin{align} \label{39:e}\exists s^*_{\delta} \in \mathbb{R},\; g_{\delta}(s^*_{\delta}, -1) <  g_{\delta}(s^*_{\delta}, +1).\end{align} 
The fact that $\tilde{\Psi}^*$ has non-decreasing slopes also implies that
\begin{align*} 
g_{\delta}(s^*_{\delta}, +1) = [\tilde{\Psi}^*(s^*_{\delta} + \delta) - \tilde{\Psi}^*(s^*_{\delta})]/\delta \leq  \lim_{t\to \infty} [\tilde{\Psi}^*(s^*_{\delta} + t) - \tilde{\Psi}^*(s^*_{\delta})]/t = (\tilde{\Psi}^*)'_{\infty}(1) =1. 
\end{align*}
Similarly, we have $0= -(\tilde{\Psi}^*)'_{\infty}(-1) \leq  g_{\delta}(s^*_{\delta}, -1)$. Let $\tilde{\mu} \in \partial \tilde{\Psi}^*(s^*_{\delta})$. Since $\tilde{\Psi}$ is a closed convex function the following equivalence holds $\tilde{\mu} \in \partial \tilde{\Psi}^*(s^*_{\delta}) \iff s^*_{\delta} \in \partial \tilde{\Psi} (\tilde{\mu})$ (\ibid, Cor. D.1.4.4). Thus, if $\tilde{\mu} \in \{0,1\} = \Bd \tilde{\Delta}_2$, then $\partial \tilde{\Psi} (\tilde{\mu}) \neq \varnothing$, which is not possible since $\ell$ is $\Psi$-mixable (Lemma \ref{13:}).

[\textbf{We show $\br_{\ell}\in C^1(]0,+\infty[^n)$}] We will now show that $\br_{\ell}$ is continuously differentiable on $]0,+\infty[^n$. Since $\br_{\ell}$ is 1-homogeneous, it suffices to check the differentiability on $\Ri \Delta_n$.  Suppose $\br_{\ell}$ is not differentiable at $\bm{p} \in \Ri \Delta_n$. From Lemma \ref{notdiff}, there exists $\bm{a}_0, \bm{a}_1 \in \mathcal{A}$ such that $\ell(\bm{a}_0), \ell(\bm{a}_1) \in \partial \sigma_{\sps}(-\bm{p})$ and $\ell(\bm{a}_0)\neq \ell(\bm{a}_1)$. Let $A \coloneqq [\bm{a}_0, \bm{a}_1] \in \mathbb{R}^{n \times 2}$,
$\delta \coloneqq \min \{ |\ell_x(\bm{a}_0) - \ell_x(\bm{a}_1)|: x\in [n], |\ell_x(\bm{a}_0) - \ell_x(\bm{a}_1)| >0 \}$, and $s^*_{\delta}\in \mathbb{R}$ as in \eqref{39:e}. We denote $g^- \coloneqq g_{\delta}(s^*_{\delta}, -1)$ and $g^+ \coloneqq g_{\delta}(s^*_{\delta}, +1) \in ]0,1]$. 
Let $\tilde{\mu} \in \partial \tilde{\Psi}^*(s^*_{\delta}) \in \Int \tilde{\Delta}_2$ and $\bm{\mu} = \amalg_2(\tilde{\mu})\in \Ri \Delta_2$. From the fact that $\ell$ is $\Psi$-mixable, $J_2^{\mathsf{T}} \ell_x(A)=\ell_x(\bm{a}_0) - \ell_x(\bm{a}_1)$, and \eqref{6:e}, there must exist $\bm{a}_* \in \mathcal{A}$ such that for all $x\in [n]$,
 \begin{align}
\ell_x(\bm{a}_*)  & \leq  \M_{\Psi}(\ell_x(A), \bm{\mu}), \nonumber \\& = \ell_x(a_1) + \tilde{\Psi}^*(s^*_{\delta}) - \tilde{\Psi}^*(s^*_{\delta} - \ell_x(\bm{a}_0) + \ell_x(\bm{a}_1)),\nonumber  \\
\shortintertext{and by letting $\sgn$ be the \emph{sign} function}
& \leq  \ell_x(a_1) + g_{\delta}(s^*_{\delta}, -\sgn[\ell_x(\bm{a}_0) - \ell_x(\bm{a}_1)]) [\ell_x(\bm{a}_0) - \ell_x(\bm{a}_1)],\label{40:e}
\end{align}
where in \eqref{40:e} we used the fact that $\tilde{\Psi}^*$ has non-decreasing slopes and the definition of $\delta$. When $\ell_x(\bm{a}_0) \leq \ell_x(\bm{a}_1)$, \eqref{40:e} becomes $\ell_x(\bm{a}_*) \leq (1- g^+) \ell_x(\bm{a}_1) + g^+ \ell_x(\bm{a}_0)$. Otherwise, we have $\ell_x(\bm{a}_*) \leq (1- g^-) \ell_x(\bm{a}_1) + g^- \ell_x(\bm{a}_0) < (1- g^+) \ell_x(\bm{a}_1) + g^+ \ell_x(\bm{a}_0)$. Since $\ell$ is admissible, there must exist at least one $x \in [n]$ such that $\ell_x(\bm{a}_0) > \ell_x(\bm{a}_1)$. Combining this with the fact that $p_x >0, \forall x\in[n]$ ($\bm{p} \in \Ri \Delta_n$), implies that $\inner{\bm{p}}{\ell(\bm{a}_*)} < \inner{\bm{p}}{(1-g^+)\ell(\bm{a}_1) + g^+ \ell(\bm{a}_0)} = \br_{\ell}(\bm{p})$. This contradicts the fact that $\ell(\bm{a}_*) \in \sps$. Therefore, $\br_{\ell}$ must be differentiable at $\bm{p}$. As argued earlier, this implies that $\br_{\ell}$ must be differentiable on $]0,+\infty[^n$. Combining this with the fact that $\br_{\ell}$ is concave on $]0,+\infty[^n$, implies that $\br_{\ell}$ is continuously differentiable on $]0,+\infty[^n$ (\ibid, Rmk. D.6.2.6).\\

[\textbf{We show $\tilde{\Phi}^* \in C^1(\mathbb{R}^{k-1})$}] 
Suppose that $\tilde{\Phi}^*$ is not differentiable at some $\bm{s}^*\in \mathbb{R}^{k-1}$. Then there exists $\bm{d}\in \mathbb{R}^{k-1} \setminus \{\bm{0}_{\tilde{k}}\}$ such that $-(\tilde{\Phi}^*)'(\bm{s}^*; - \bm{d})  <(\tilde{\Phi}^*)'(\bm{s}^*; \bm{d})$. Since $\bm{s}^* \in \Int \Dom \tilde{\Phi}^*$, $(\tilde{\Phi}^*)'(\bm{s}^*, \cdot)$ is finite and convex \citep[Prop. D.1.1.2]{\Hi}, and thus it is continuous on $\Dom \tilde{\Phi}^* =\mathbb{R}^{k-1}$ (\ibid, Rmk. B.3.1.3). Consequently, there exists $\delta^*>0$ such that \begin{align}
\forall \hbm{d} \in \mathbb{R}^{k-1}, \Norm{\hbm{d} -\bm{d}} \leq \delta^* \implies -(\tilde{\Phi}^*)'(\bm{s}^*; - \hbm{d})  <(\tilde{\Phi}^*)'(\bm{s}^*; \hbm{d}) \label{41:e}
\end{align} 
Let  $g \colon \{-1,1\}\rightarrow \mathbb{R}$ be such that \begin{align*}g(u)\coloneqq  \sup_{\Norm{\hbm{d} -\bm{d}} \leq \delta^*} u \cdot (\tilde{\Phi}^*)'(\bm{s}^*; u \hbm{d}).\end{align*} Note that since $\tilde{\Phi}^*$ has increasing slopes ($\tilde{\Phi}^*$ is convex), $g(1) \leq \sup_{\Norm{\hbm{d} -\bm{d}}\leq \delta^*} (\tilde{\Phi}^*)'_{\infty}(\hbm{d}) = \sup_{\Norm{\hbm{d} -\bm{d}}\leq \delta^*} \sigma_{\Dom \tilde{\Phi}}(\hbm{d}) \leq 1$, where the last inequality holds because $\tilde{\Delta}_k \subset \mathcal{B}(\bm{0}_{\tilde{k}}, 1)$, and thus $\sigma_{\Dom \tilde{\Phi}}(\hbm{d}) = \sigma_{\tilde{\Delta}_k}(\hbm{d})\leq  \sigma_{\mathcal{B}(\bm{0}_{\tilde{k}}, 1)}(\hbm{d}) =1$.
 Let $\Delta g \coloneqq g(1)-g(-1)$. From \eqref{41:e}, it is clear that $\Delta g>0$.

Suppose that $\br_{\ell}$ is twice differentiable on $]0,+\infty[^n$ and let $\sell$ be a support loss of $\ell$. By definition of a support loss, $\forall \bm{p}\in \Ri \Delta_k, \tilde{\sell}(\tbm{p}) = \sell(\bm{p})= \nabla \br_{\ell}(\bm{p})$ (where $\tilde{\sell} \coloneqq \sell \circ \amalg_n$). Thus, since $\br_{\ell}$ is twice differentiable on $]0,+\infty[^n$, $\tilde{\sell}$ is differentiable on $\Int \tilde{\Delta}_n$. Furthermore, $\sell$ is continuous on $\Ri \Delta_k$ given that $\br_{\ell} \in C^1(]0,+\infty[^n)$ as shown in the first part of this proof. We may assume without loss of generality that $\ell$ is not a constant function. Thus, from Theorem \ref{5:}, $\sell$ is not a constant function either. Consequently, the mean value theorem applied to $\sell$ (see e.g. \citep[Thm. 5.10]{Rudin1964}) between any two points in $\Ri \Delta_n$ with distinct images under $\sell$, implies that there exists $(\tbm{p}_*,\bm{v}_*)\in \Int \tilde{\Delta}_n \times \mathbb{R}^{n-1}$, such that $\mathsf{D}\tilde{\sell}(\tbm{p}_*)\bm{v}_* \neq \bm{0}_{\tilde{n}}$. For the rest of the proof let $(\tbm{p}, \bm{v})\coloneqq(\tbm{p}_*, \bm{v}_*)$ and define $\mathfrak{I}\coloneqq \{x \in[n]: \mathsf{D} \tilde{\sell}_x(\tbm{p})\bm{v} \neq 0\}$. From Lemma \ref{27:}, we have $\inner{\bm{p}}{\mathsf{D} \tilde{\sell}(\tbm{p})} =\bm{0}_{\tilde{n}}^{\mathsf{T}}$, which implies that there exists $x\in \mathfrak{I}, \mathsf{D} \tilde{\sell}_x(\tbm{p})\bm{v} >0$. Thus, the set \begin{align}
\mathfrak{K}\coloneqq \left\{x\in \mathfrak{I}\colon \mathsf{D} \tilde{\sell}_x(\tbm{p})\bm{v} > 0 \right\}\label{specialset}
\end{align} 
is non-empty. From this and the fact that $\bm{p}\in \Ri \Delta_n$, it follows that 
\begin{align}
\sum_{x'\in \mathfrak{K}} p_{x'} \mathsf{D} \tilde{\sell}_{x'}(\tbm{p})\bm{v} >0. \label{positivity}
\end{align}
Let $\tbm{p}^t \coloneqq \tbm{p} + t \bm{v}$. From Taylor's Theorem (see e.g. \citep[\S 151]{hardy2008}) applied to the function $t \mapsto \tilde{\sell}(\tbm{p}^t)$, there exists $\epsilon^*>0$ and functions $\delta_x : [-\epsilon^*,\epsilon^*] \rightarrow \mathbb{R}^n$, $x\in [n]$, such that $ \lim_{t \to 0} t^{-1}\delta_x(t) = 0$ and 
\begin{align}
\label{42:e}
\forall |t|\leq \epsilon^*, \quad  \sell_x(\bm{p}^{t}) = \sell_x(\bm{p}) +  t \mathsf{D} \tilde{\sell}_x(\bm{p})\bm{v}  + \delta_x(t). 
\end{align}
For $x\in[n]$, let $\bm{d}_x\in \mathbb{R}^{\tilde{k}}$ and suppose that $\norm{\bm{d}_x -\bm{d}}\leq \delta^*$ (we will define $\bm{d}_x$ explicitly later). By shrinking $\epsilon^*$ if necessary, we may assume that 
\begin{align}
&\forall x\in \mathfrak{I},\forall \theta \in[k], \forall |t| \leq \epsilon^*,\quad d_{\theta} t^{-1}\delta_x(t)\leq\tfrac{\delta^*\left|\mathsf{D} \tilde{\sell}_x(\tbm{p})\bm{v}\right|}{\sqrt{n}},\label{44:e}\\ 
&\forall x\not\in \mathfrak{I}, \quad \tilde{\Phi}^*(\bm{s}^*) -\tilde{\Phi}^*\left(\bm{s}^*-\left[\delta_x\left( \epsilon^*\tfrac{d_{\theta}}{\norm{\bm{d}}} \right) \right]_{\theta\in[\tilde{k}]}\right)\leq  \epsilon^* \tfrac{\Delta g}{4\norm{\bm{d}}} \sum_{x'\in \mathfrak{K}} p_{x'} \mathsf{D} \tilde{\sell}_{x'}(\tbm{p})\bm{v},\label{450:e} \\
&\forall x\in[n], \quad \tilde{\Phi}^*(\bm{s}^*) - \tilde{\Phi}^*\left(\bm{s}^* -\epsilon^*  \tfrac{\mathsf{D} \tilde{\sell}_x(\tbm{p})\bm{v}}{\norm{\bm{d}}}  \bm{d}_{x}\right)  \leq  -(\tilde{\Phi}^*)'\left(\bm{s}^*;-\epsilon^* \tfrac{\mathsf{D} \tilde{\sell}_x(\tbm{p})\bm{v}}{\norm{\bm{d}}}  \bm{d}_{x}\right) \nonumber \\ & \hspace{7cm} + \epsilon^* \tfrac{\Delta g}{4\norm{\bm{d}}}   \sum_{x'\in \mathfrak{K}} p_{x'} \mathsf{D} \tilde{\sell}_{x'}(\tbm{p})\bm{v},  \label{firstineq}
\end{align}  
where \eqref{firstineq} is satisfied for small enough $\epsilon^*$ because of \eqref{positivity} and the fact that
\begin{align}
\tfrac{1}{\epsilon} \left(\tilde{\Phi}^*(\bm{s}^*) - \tilde{\Phi}^*(\bm{s}^* - \epsilon \tfrac{\mathsf{D} \tilde{\sell}_x(\tbm{p})\bm{v}}{\norm{\bm{d}}}  \bm{d}_{x} \right) \underset{\epsilon \to 0}{\rightarrow} -(\tilde{\Phi}^*)'\left(\bm{s}^*;-\tfrac{\mathsf{D} \tilde{\sell}_x(\tbm{p})\bm{v}}{\norm{\bm{d}}}  \bm{d}_{x}\right), \nonumber
\end{align}
and \eqref{450:e} is also satisfied for small enough $\epsilon^*$ because $\tilde{\Phi}^*(\bm{s}^*) -\tilde{\Phi}^*\left(\bm{s}^*-\left[\delta_x\left( \epsilon\tfrac{d_{\theta}}{\norm{\bm{d}}} \right) \right]_{\theta\in[\tilde{k}]}\right) = O\left(\max_{\{\theta \in [\tilde{k}]\}}\left|\delta_x\left( \epsilon\tfrac{d_{\theta}}{\norm{\bm{d}}} \right)\right| \right) = o(\epsilon)$, where the first equality is due to the fact that $(\lambda, \bm{z})\mapsto \tfrac{1}{\lambda} \left(\tilde{\Phi}^*(\bm{s}^*) - \tilde{\Phi}^*(\bm{s}^* - \lambda \bm{z} \right)$ is uniformly bounded on compact subsets of $\mathbb{R} \times \mathbb{R}^{\tilde{k}}$ (by continuity of the directional derivative $(\tilde{\Phi}^*)'(\bm{s}^*;\cdot)$).

If $\mathsf{D} \tilde{\sell}_x(\tbm{p})\bm{v} \leq0$, then by the positive homogeneity of the directional derivative, the definition of the function $g$, and \eqref{firstineq}, we get 
\begin{align}
\tilde{\Phi}^*(\bm{s}^*) - \tilde{\Phi}^*\left(\bm{s}^* -\epsilon^*  \tfrac{\mathsf{D} \tilde{\sell}_x(\tbm{p})\bm{v}}{\norm{\bm{d}}}  \bm{d}_{x}\right)  \leq \epsilon^* \tfrac{\mathsf{D} \tilde{\sell}_x(\tbm{p})\bm{v}}{\norm{\bm{d}}}  g(1) + \epsilon^* \tfrac{\Delta g}{4\norm{\bm{d}}}   \sum_{x'\in \mathfrak{K}} p_{x'} \mathsf{D} \tilde{\sell}_{x'}(\tbm{p})\bm{v}. \label{secondineq}
\end{align}
On the other hand, if $\mathsf{D} \tilde{\sell}_x(\tbm{p})\bm{v}>0$, then from the monotonicity of the slopes of $\tilde{\Phi}^*$, the positive homogeneity of the directional derivative, and the definition of the function $g$, it follows that 
\begin{align}
\tfrac{1}{\epsilon^*} \left(\tilde{\Phi}^*(\bm{s}^*) - \tilde{\Phi}^*(\bm{s}^* - \epsilon^* \tfrac{\mathsf{D} \tilde{\sell}_x(\tbm{p})\bm{v}}{\norm{\bm{d}}}  \bm{d}_{x} \right)& \leq - (\tilde{\Phi}^*)'\left(\bm{s}^*;-\tfrac{\mathsf{D} \tilde{\sell}_x(\tbm{p})\bm{v}}{\norm{\bm{d}}}  \bm{d}_{x}\right), \nonumber \\
& =-\tfrac{\mathsf{D} \tilde{\sell}_x(\tbm{p})\bm{v}}{\norm{\bm{d}}} (\tilde{\Phi}^*)'\left(\bm{s}^*;-  \bm{d}_{x}\right), \nonumber \\
& \leq \tfrac{\mathsf{D} \tilde{\sell}_x(\tbm{p})\bm{v}}{\norm{\bm{d}}} g(-1), \nonumber  \\
& = \tfrac{\mathsf{D} \tilde{\sell}_x(\tbm{p})\bm{v}}{\norm{\bm{d}}}\left( -\Delta g  + g(1)\right).
\label{otherside}
\end{align} 
Let $\lambda_{\theta} \coloneqq \epsilon^* \frac{d_{\theta}}{\norm{\bm{d}}}$, for $\theta \in [\tilde{k}]$. From Theorem \ref{5:}, there exists $[\bm{a}_{\theta}]_{\theta \in [k]} \in \mathcal{A}^{k}$, such that \begin{align}\label{taylor}\ell(\bm{a}_k) =\sell(\bm{p})\quad \text{ and } \quad \forall \theta \in [\tilde{k}], \; \ell(\bm{a}_{\theta}) = \sell(\bm{p}^{\lambda_{\theta}}) = \sell(\bm{p}) + \epsilon^*\tfrac{d_{\theta}}{\norm{\bm{d}}}  \mathsf{D} \tilde{\sell}(\tbm{p})\bm{v} + \delta\left(\epsilon^*\tfrac{d_{\theta}}{\norm{\bm{d}}}\right),\end{align} where $[\delta(\cdot)]_x \coloneqq \delta_x(\cdot)$ for $x\in[n]$. 

From the fact that $\ell$ is $\Phi$-mixable, it follows that there exists $\bm{a}_* \in \mathcal{A}$ such that for all $x\in [n]$,
 \begin{align}
\ell_x(\bm{a}_*)  \leq  \M_{\Phi}(\ell_x(\bm{a}_{1:k}), \bm{\mu}) & = \ell_x(\bm{a}_k) + \tilde{\Phi}^*(\bm{s}^*) - \tilde{\Phi}^*\left(\bm{s}^* - J_k^{\mathsf{T}}\ell_x(\bm{a}_{1:k})\right).\label{46:e}
\end{align}
For $x\in[n]$, we now define $\bm{d}_x\in \mathbb{R}^{\tilde{k}}$ explicitly as \begin{align*}\forall \theta \in [\tilde{k}], \quad d_{x,\theta} \coloneqq \left\{ \begin{matrix} d_{\theta} + \frac{\norm{\bm{d}}}{\epsilon^*[\mathsf{D} \tilde{\sell}_x(\tbm{p}) \bm{v}]}  \delta_x\left( \epsilon^*\frac{d_{\theta}}{\norm{\bm{d}}}\right), & \text{ if } x\in \mathfrak{I} \\ d_{\theta}, & \text{ otherwise.} \end{matrix}\right.
\end{align*}
From \eqref{44:e}, we have $\Norm{\bm{d}_{x} - \bm{d}} \leq \delta^*, \forall x\in [n]$. Furthermore, from \eqref{taylor} and the fact that for all $x\in[n]$, $J_k^{\mathsf{T}} \ell_x(\bm{a}_{1:k}) = [\ell_x(\bm{a}_{\theta}) -\ell_x(\bm{a}_k) ]_{\theta \in[\tilde{k}]}$, we have 
\begin{align}J_k^{\mathsf{T}} \ell_x(\bm{a}_{1:k}) = \left\{  \begin{matrix} \epsilon^*\frac{\mathsf{D} \tilde{\sell}_x(\tbm{p})\bm{v}}{\norm{\bm{d}}}  \bm{d}_{x},& \text{ if } x\in \mathfrak{I}; \\  \left[\delta_x\left( \epsilon^*\frac{d_{\theta}}{\norm{\bm{d}}}\right)\right]_{\theta\in[\tilde{k}]},& \text{ otherwise.} \end{matrix} \right.\label{losses}
\end{align} 
Using this, together with \eqref{secondineq} and \eqref{otherside}, we get $\forall x\in\mathfrak{I}$,
\begin{align}
 \tilde{\Phi}^*(\bm{s}^*) - \tilde{\Phi}^*\left(\bm{s}^* - J_k^{\mathsf{T}}\ell_x(\bm{a}_{1:k})\right) &= \tilde{\Phi}^*(\bm{s}^*) - \tilde{\Phi}^*\left(\bm{s}^* -\epsilon^*  \tfrac{\mathsf{D} \tilde{\sell}_x(\tbm{p})\bm{v}}{\norm{\bm{d}}}  \bm{d}_{x}\right),\nonumber  \\& \leq   \epsilon^*\tfrac{\mathsf{D} \tilde{\sell}_x(\tbm{p})\bm{v}}{\norm{\bm{d}}} g(1) -\epsilon^* \Delta g \tfrac{\mathsf{D} \tilde{\sell}_x(\tbm{p})\bm{v} }{\norm{\bm{d}}} \mathbbm{1}_{\{\mathsf{D} \tilde{\sell}_x(\tbm{p})\bm{v} >0 \}} \nonumber  \\ & \hspace{2cm}   + \epsilon^* \tfrac{\Delta g}{4 \norm{\bm{d}}} \mathbbm{1}_{\{\mathsf{D} \tilde{\sell}_x(\tbm{p})\bm{v} \leq 0 \}} \sum_{x'\in \mathfrak{K}} p_{x'} \mathsf{D} \tilde{\sell}_{x'}(\tbm{p})\bm{v}. \label{47:e}  
\end{align}
Combining \eqref{46:e}, \eqref{losses}, and \eqref{47:e} yields
\begin{align}
\inner{\bm{p}}{\ell(\bm{a}_*)}& \leq \inner{\bm{p}}{\ell(\bm{a}_k)} + \tfrac{\epsilon^*}{\Norm{\bm{d}}} \inner{\bm{p}}{ \mathsf{D} \tilde{\sell}(\tbm{p})\bm{v}}g(1) - \tfrac{3\epsilon^*\Delta g}{4\norm{\bm{d}}} \sum_{x'\in \mathfrak{K}} p_{x'} \mathsf{D} \tilde{\sell}_{x'}(\tbm{p})\bm{v} \nonumber \\ & \hspace{3cm} + \sum_{x\not\in \mathfrak{I}} p_x \left(\tilde{\Phi}^*(\bm{s}^*) - \tilde{\Phi}^*\left(\bm{s}^* -\left[ \delta_x\left(\epsilon^*  \tfrac{d_{\theta}}{\norm{\bm{d}}}  \right)\right]_{\theta\in[\tilde{k}]}\right)\right),  \nonumber \shortintertext{using \eqref{450:e} and the fact that $\inner{\bm{p}}{\mathsf{D}\tilde{\sell}(\tbm{p})}=\bm{0}_{\tilde{n}}^{\mathsf{T}}$ (see Lemma \ref{27:}), we get}
& \leq \inner{\bm{p}}{\ell(\bm{a}_k)} -\tfrac{\epsilon^* \Delta g}{2\norm{\bm{d}}}\sum_{x'\in \mathfrak{K}} p_{x'} \mathsf{D} \tilde{\sell}_{x'}(\tbm{p})\bm{v}, \label{48:e}\\
& < \inner{\bm{p}}{\sell(\bm{p})}, \label{49:e}
\end{align}
where in \eqref{49:e} we used \eqref{positivity} and the fact that $\sell(\bm{p})=\ell(\bm{a}_k)$ (see \eqref{46:e}). Equation \ref{49:e} shows that $\ell(\bm{a}^*) \not\in \sps$, which is a contradiction.
\end{proof}

\subsection{Proof of Theorem \ref{8:}}
\label{a8:}

\textbf{Theorem \ref{8:}}
\emph{
Let $\eta>0$, and let $\ell \colon \mathcal{A} \rightarrow [0,+\infty]^n$ a loss. Suppose that $\Dom \ell = \mathcal{A}$ and that $\br_{\ell}$ is twice differentiable on $]0, +\infty[^n$. If $
\underline{\eta_{\ell}}>0$ then $\ell$ is $\underline{\eta_{\ell}}$-mixable. In particular, $\eta_{\ell} \geq \underline{\eta_{\ell}}$.
}
\begin{proof}
Let $\eta \coloneqq \underline{\eta_{\ell}}$. We will show that $\exp(-\eta  \sps)$ is convex, which will imply that $\ell$ is $\eta$-mixable \citep{Chernov2010}.

Since $\underline{\eta_{\ell}} = \inf_{\tbm{p} \in \Int \tilde{\Delta}_n} (\lambda_{\max} ([\mathsf{H} \tbr_{\log}(\tbm{p})]^{-1} \mathsf{H} \tbr_{\ell} (\tbm{p})))^{-1} >0$, $\eta \br_{\ell} - \br_{\log}$ is convex on $\Ri \Delta_n$ \citep[Thm. 10]{DBLP:journals/jmlr/ErvenRW12}. 
Let $\bm{p} \in \Ri \Delta_n$ and define \[{\Lambda}(\bm{r}) \coloneqq \br_{\log}(\bm{r}) + \inner{ \bm{r}}{\eta \sell(\bm{p}) - \ell_{\log}(\bm{p})}, \;\bm{r}\in \Ri \Delta_n.\] 
Since $\Lambda$ is equal to $\br_{\log}$ plus an affine function, it follows that $\eta \br_{\ell} -\Lambda$ is also convex on $\Ri \Delta_n$. On the one hand, since $\sell$ and $\ell_{\log}$ are proper losses, we have $\inner{\bm{p}}{\sell(\bm{p})} = \br_{\ell}(\bm{p})$ and $\inner{\bm{p}}{\ell_{\log}(\bm{p})} = \br_{\log}(\bm{p})$ which implies that \begin{align}\label{26:e}\eta \br_{\ell}(\bm{p}) -{\Lambda}(\bm{p})=0.\end{align} 
On the other hand, since $\br_{\ell}$ and $\br_{\log}$ are differentiable we have $\sell(\bm{p}) = \nabla \br_{\ell}(\bm{p})$ and $\nabla \br_{\log}(\bm{p}) = \ell_{\log}(\bm{p})$, which yields $\eta \nabla \br_{\ell}(\bm{p}) - \nabla {\Lambda}(\bm{p}) =\bm{0}_n$. This implies that $\eta \br_{\ell} - \Lambda$ attains a minimum at $\bm{p}$ \citep[Thm. D.2.2.1]{\Hi}. Combining this fact with \eqref{26:e} gives $\eta \br_{\ell}(\bm{r}) \geq {\Lambda}(\bm{r}), \forall \bm{r} \in \Ri \Delta_n$, or equivalently $-\eta \br_{\ell} \leq- {\Lambda}$. By Proposition \ref{1:}-(iii), this implies \begin{align}\label{27:e}[-\eta \br_{\ell}]^{\ast} \geq {[-\Lambda}]^{\ast}.\end{align} 
Using Proposition \ref{1:}-(ii), we get $[-{\Lambda}]^{\ast}(\bm{s}) = [-\br_{\log}]^*(\bm{s} - \ell_{\log}(\bm{p}) +\eta\sell(\bm{p}))$ for $\bm{s}\in \mathbb{R}^n$. 
		Since $-\eta \br_{\ell} (\bm{u}) = - \br_{\ell} (\eta \bm{u})=\sigma_{\sps} (-\eta \bm{u})$ and $\sigma^*_{\sps} = \I_{\sps}$, Proposition \ref{1:}-(v) implies $[-\eta \br_{\ell} ]^*(\bm{s})= \I_{\sps}( - \bm{s}/\eta)$. Similarly, we have $[-\br_{\log}]^{\ast}(\bm{s}) = \I_{\mathscr{S}_{\log}}( - \bm{s})$. Therefore, \eqref{27:e} implies 
			\[ \forall \bm{s} \in \mathbb{R}^n, \quad {\I}_{\sps} (-\bm{s}/\eta) \geq  {\I}_{\mathscr{S}_{\log}} (-\bm{s} + \ell_{\log}(\bm{p}) - \eta \sell(\bm{p})). \]

			This inequality implies that if $\bm{s} \in -\eta \sps$, then $\bm{s} \in -\mathscr{S}_{\log} + \ell_{\log}(\bm{p}) - \eta \sell(\bm{p})$. In particular, if $\bm{u} \in e^{-\eta \sps}$ then \begin{align}\bm{u} \in e^{-\mathscr{S}_{\log} + \ell_{\log}(\bm{p}) -\eta \sell(\bm{p})} \subseteq \mathcal{\mathsf{\mathcal{H}}}_{\tau(\bm{p}),1} =  \{\bm{v}\in \mathbb{R}^n: \inner{\bm{v}}{\bm{p} \odot e^{\eta \sell(\bm{p})}  } \leq 1 \}.\label{setinc:e}  \end{align} 
To see the set inclusion in \eqref{setinc:e}, consider $\bm{s}\in -\mathscr{S}_{\log} + \ell_{\log}(\bm{p}) - \eta \sell(\bm{p})$, then by definition of the superprediction set $\mathscr{S}_{\log}$ there exists $\bm{r}\in \Delta_n$ and $\bm{v} \in [0,+\infty[^n$, such that $\bm{s} = \log \bm{r} - \log {\bm{p}} -  \eta \sell(\bm{p}) - \bm{v} $. Thus, \begin{align}\inner{e^{\bm{s}}}{\bm{p} \odot e^{\eta \sell(\bm{p})}} = \inner{\bm{r}}{e^{-\bm{v}}} \leq 1, \label{28:e}\end{align} 
where the inequality is true because $\bm{r} \in \Delta_n$ and $\bm{v}\in [0,+\infty[^n$. The above argument shows that $e^{-\eta \sps}\subseteq\mathcal{\mathsf{\mathcal{H}}}_{\tau(\bm{p}),1}$, where $\tau(\bm{p})\coloneqq \bm{p} \odot e^{\eta \underline{\ell}(\bm{p})}$. Furthermore, $e^{-\eta \sps}\subseteq \mathcal{\mathsf{\mathcal{H}}}_{\tau(\bm{p}),1} \cap ]0,+\infty[^n$,
since all elements of $e^{-\eta \sps}$ have non-negative, finite components. The latter set inclusion still holds for $\hat{\bm{p}} \in \Rbd \Delta_n$. In fact, from the definition of a support loss, there exists a sequence $(\bm{p}_m)$ in $\Ri \Delta_n$ converging to $\hat{\bm{p}}$ such that $\sell(\bm{p}_m) \stackrel{m \to \infty}{\to} \sell(\hat{\bm{p}})$. Equation \ref{28:e} implies that for $\bm{u} \in  e^{-\eta \sps}$, $\inner{\bm{u}}{ \bm{p}_m \odot e^{\eta \sell(\bm{p}_m)}}\leq 1$. Since the inner product is continuous, by passage to the limit, we obtain $\inner{\bm{u}}{\hat{\bm{p}} \odot e^{\eta \sell(\hat{\bm{p}})}}\leq 1$. Therefore, \begin{align}
e^{-\eta \sps} \subseteq \bigcap_{\bm{p} \in \Delta_n}\mathcal{\mathsf{\mathcal{H}}}_{\tau(\bm{p}),1} \cap ]0,+\infty[^n\label{29:e}.\end{align}
			
			Now suppose $\bm{u} \in \bigcap_{\bm{p} \in \Delta_n}\mathcal{\mathsf{\mathcal{H}}}_{\tau(\bm{p}),1} \cap ]0,+\infty[^n$; that is, for all $\bm{p}\in \Delta_n$,
			\begin{align}
			1\geq  	\Inner{\bm{u}}{\bm{p} \odot e^{\eta \sell (\bm{p})}} =\Inner{\bm{p}}{\bm{u} \odot e^{\eta \sell (\bm{p}) }}  &= \Inner{\bm{p}}{e^{\eta \sell (\bm{p})  + \log \bm{u} }}, \nonumber \\
			& \geq e^{\Inner{\bm{p}}{\eta \sell(\bm{p})} + \Inner{\bm{p}}{\log \bm{u}}}, \label{30:e}
			\end{align}
			where the first equality is obtained merely by expanding the expression of the inner product, and the second inequality is simply Jensen's Inequality. Since $\bm{u} \mapsto e^{\bm{u}}$ is strictly convex, the Jensen's inequality in \eqref{30:e} is strict unless $\exists (c, \bm{p})\in  \mathbb{R} \times \Delta_n $, such that 
\begin{align}
\label{31:e}
\eta \sell (\bm{p}) + \log \bm{u} = c \bm{1}_n.
\end{align}
 By substituting \eqref{31:e} into \eqref{30:e}, we get $1\geq \exp(c)$, and thus $c\leq 0$. Furthermore, \eqref{31:e} together with the fact that $\bm{u} \in ]0,+\infty[^n$ imply that $\bm{p}\in \Dom \sell$, and thus there exists $\bm{a}\in \Dom \ell$ such that $\ell(\bm{a})=\sell(\bm{p})$ (Theorem \ref{5:}). Using this and rearranging \eqref{31:e}, we get $\bm{u} = \exp(-\eta \ell(\bm{a}) + c \bm{1})$. Since $c\leq 0$, this means that $\bm{u} \in \exp(-\eta \sps)$. Suppose now that \eqref{31:e} does not hold. In this case, \eqref{30:e} must be a strict inequality for all  $\bm{p} \in \Delta_n$.  By applying the $\log$ on both side of \eqref{30:e}, 
 \begin{align}\forall \bm{p} \in \Delta_n, \eta\br_{\ell}(\bm{p}) + \inner{\bm{p}}{ \log \bm{u}} =  \inner{\bm{p}}{\eta \sell (\bm{p})} + \inner{\bm{p}}{\log \bm{u}} <0 \label{32:e}.\end{align} 
Since $\bm{p}\mapsto \br_{\ell}(\bm{p}) =-\sigma_{\sps}(-\bm{p})$ is a closed concave function, the map $g\colon \bm{p}\mapsto \eta \br_{\ell}(\bm{p}) + \inner{\bm{p}}{ \log \bm{u}}$ is also closed and concave, and thus upper semi-continuous. Since $\Delta_n$ is compact, the function $g$ must attain its maximum in $\Delta_n$. Due to \eqref{32:e} this maximum is negative; there exists $c_1>0$ such that \begin{align}\forall \bm{p}\in \Delta_k,  \inner{\bm{p}}{\eta \sell(\bm{p})} - \inner{\bm{p}}{- \log \bm{u}}\leq- c_1. \label{stuff2} \end{align} 
Let $f(\bm{p},x) \coloneqq \eta \sell_x(\bm{p}) + \log u_x + c_1$, for $x\in [n]$. It follows from \eqref{stuff2} that for all $\bm{p}\in \Delta_n$, $\mathbb{E}_{x\sim \bm{p}} f(\bm{p}, x)\leq 0$ and $\forall x \in[n], -\infty<f(\bm{p}, x)$. Thus, Lemma \ref{25:} applied to $f$ with $\epsilon = c_1/2$, implies that there exists $\bm{p}_* \in \Ri \Delta_n$, such that $\eta \sell(\bm{p}_*) \leq -\log \bm{u} -c_1/2 \leq -\log \bm{u}$. From this inequality, $\bm{p}_*\in \Dom \sell$, and therefore, there exists $\bm{a}_*\in \Dom \ell$ such that $\ell(\bm{a}_*)=\sell(\bm{p}_*)$ (Theorem \ref{5:}). This shows that $\eta \ell(\bm{a}_*) \leq  -\log \bm{u}$, which implies that $\bm{u} \in \exp{-\eta \sps}$. Therefore, $ \bigcap_{\bm{p} \in \Delta_n}\mathcal{\mathsf{\mathcal{H}}}_{\tau(\bm{p}),1} \cap ]0,+\infty[^n \subseteq e^{-\eta \sps}$. Combining this with \eqref{29:e} shows that $e^{-\eta \sps} = \bigcap_{\bm{p} \in \Delta_n} \mathcal{\mathsf{\mathcal{H}}}_{\tau(\bm{p}),1} \cap ]0,+\infty[^n$. Since $e^{-\eta \sps}$ is the intersection of convex set, it is a itself convex set. Since $\Dom \ell = \mathcal{A}$ by assumption, it follows that $\sps = \sps^{\infty}$, and thus $e^{-\eta \sps^{\infty}}$ is convex. This last fact implies that $\ell$ is $\eta$-mixable \citep{Chernov2010}.
\end{proof}

\subsection{Proof of Theorem \ref{constant regret}}
We start by the following characterization of $\Delta$-differentiability (this was defined on page 5 of the main body of the paper).
\begin{lemma}
\label{chardelta}
Let $\Phi\colon \mathbb{R}^k \rightarrow \mathbb{R}\cup \{+\infty\}$ be an entropy. Then $\Phi$ is $\Delta$-differentiable if and only if $\forall \cI\subseteq[k]$ such that $|\cI|>1$, $\tilde{\Phi}_{\cI} \coloneqq \Phi \circ \amalg_{k} \circ [\Pi^{\tilde{k}}_{\cI}]^{\T}$ is differentiable on $\Int \tilde{\Delta}_{|\cI|}$.
\end{lemma}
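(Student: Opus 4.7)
The plan is to view the $\Delta$-differentiability condition as a statement about linearity of the directional derivative along the tangent directions to each face $\Delta_{\cI}$, and to translate it into the differentiability of the restriction of $\Phi$ to that face under its natural affine parametrization. The main tool is the standard fact that a finite convex function $f$ is differentiable at $\bm{x}\in \Int \Dom f$ if and only if $f'(\bm{x};\cdot)$ is linear \citep[Thm.~25.2]{Rockafellar1997a}.

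The case $|\cI|=1$ is trivial: $\Delta_{\cI}$ is a single vertex, $\mathcal{L}^0_{\cI}=\{\bm{0}\}$, and the linearity condition in the definition of $\Delta$-differentiability holds vacuously. For $|\cI|>1$ one first observes that the linear map $L_{\cI}\colon \mathbb{R}^{|\cI|-1}\to \mathbb{R}^k$ arising as the linear part of the natural affine bijection $\tilde{\Delta}_{|\cI|}\to \Delta_{\cI}$ is injective with range exactly $\mathcal{L}^0_{\cI}$. Explicitly, $\mathcal{L}^0_{\cI}$ is the linear subspace of vectors supported on $\cI$ whose coordinates sum to zero (in particular it is independent of the base point $\bm{u}_0$), since for any such $\bm{w}$ and any $\bm{u}_0\in \Ri \Delta_{\cI}$ one can write $\bm{w} = \lambda(\bm{v}-\bm{u}_0)$ with $\bm{v}\coloneqq \bm{u}_0+\lambda^{-1}\bm{w}\in \Delta_{\cI}$ for $\lambda>0$ large enough. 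A direct computation from the definition of the one-sided directional derivative then yields the chain-rule identity
\begin{equation}\label{eq:chain-lemma}
\tilde{\Phi}_{\cI}'(\tbm{u};\tilde{\bm{z}}) \;=\; \Phi'(\bm{u};L_{\cI}\tilde{\bm{z}}),\qquad \tbm{u}\in \Int\tilde{\Delta}_{|\cI|},\ \tilde{\bm{z}}\in \mathbb{R}^{|\cI|-1},
\end{equation}
where $\bm{u}\in \Ri \Delta_{\cI}$ denotes the image of $\tbm{u}$.

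Assuming $\Delta$-differentiability, the right-hand side of \eqref{eq:chain-lemma} is linear in $L_{\cI}\tilde{\bm{z}}\in \mathcal{L}^0_{\cI}$ by hypothesis, hence linear in $\tilde{\bm{z}}$; since $\tilde{\Phi}_{\cI}$ is convex (composition of a convex function with an affine map) and $\tbm{u}$ lies in the interior of its effective domain (because $\Delta_k\subseteq \Dom \Phi$ forces $\tilde{\Delta}_{|\cI|}\subseteq \Dom \tilde{\Phi}_{\cI}$), the tool above gives differentiability of $\tilde{\Phi}_{\cI}$ at $\tbm{u}$. Conversely, assuming each $\tilde{\Phi}_{\cI}$ (with $|\cI|>1$) is differentiable on $\Int\tilde{\Delta}_{|\cI|}$, identity \eqref{eq:chain-lemma} together with the bijectivity of $L_{\cI}$ from $\mathbb{R}^{|\cI|-1}$ onto $\mathcal{L}^0_{\cI}$ shows that $\bm{z}\mapsto \Phi'(\bm{u};\bm{z})$ is linear on $\mathcal{L}^0_{\cI}$. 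The only mild obstacle is justifying \eqref{eq:chain-lemma} and the range identification $\mathcal{L}^0_{\cI}=L_{\cI}(\mathbb{R}^{|\cI|-1})$; both reduce to direct computation from the definitions of $\amalg$, the projections $\Pi^k_{\cI}$, and $\mathcal{L}^0_{\cI}$, so I expect no real difficulty.
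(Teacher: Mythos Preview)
Your proposal is correct and follows essentially the same approach as the paper. Both arguments combine (i) the chain-rule identity for directional derivatives under the affine parametrization of the face (your \eqref{eq:chain-lemma} is exactly the displayed identity in the paper's proof), and (ii) the standard characterization that a convex function is differentiable at an interior point of its domain if and only if its one-sided directional derivative is linear there; you cite \citep[Thm.~25.2]{Rockafellar1997a} while the paper cites the equivalent Proposition~B.4.2.1 in \cite{Hiriart-Urruty}. Your write-up is slightly more explicit in identifying $\mathcal{L}^0_{\cI}$ with the range of the linear part $L_{\cI}$ and in handling the trivial case $|\cI|=1$, but the mathematical content is the same.
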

\begin{proof}
This is a direct consequence of Proposition B.4.2.1 in \cite{Hiriart-Urruty}, since 1) $\tilde{\Phi}_{\cI}$ is convex; and 2) \begin{align*}\tilde{\Phi}'_{\cI}(\tbm{u};\tbm{v}- \tbm{u})& = \tilde{\Phi}'([\Pi^{\tilde{k}}_{\cI}]^{\T} \tbm{u};[\Pi^{\tilde{k}}_{\cI}]^{\T}(\bm{v}- \bm{u})), \\& = \Phi'(\amalg_k[\Pi^{\tilde{k}}_{\cI}]^{\T} \tbm{u};\amalg_k[\Pi^{\tilde{k}}_{\cI}]^{\T}(\bm{v}- \bm{u})),\end{align*}
for all $\tbm{u}, \tbm{v} \in \Int \tilde{\Delta}_{|\cI|}$ and $\tilde{\Phi} \coloneqq \Phi  \circ \amalg_k$.
\end{proof}

\textbf{Theorem \ref{constant regret}}
\emph{
Let $\Phi: \mathbb{R} \rightarrow \mathbb{R} \cup\{+\infty\}$ be a $\Delta$-differentiable entropy. Let $\ell: \mathcal{A}\rightarrow [0,+\infty]^n$ be a loss (not  necessarily finite) such that $\br_{\ell}$ is twice differentiable on $]0,+\infty[^n$. If $\ell$ is $(\eta, \Phi)$-mixable then the GAA achieves a constant regret in the $\mathfrak{G}^n_{\ell}(\mathcal{A},k)$ game; for any sequence $(x^t, \bm{a}^t_{1:k})_{t=1}^T$,
	\begin{align*}
	\op{Loss}^{\ell}_{\textsc{GAA}}(T) - \min_{\theta \in [k]} \op{Loss}^{\ell}_{\theta}(T) \leq R^{\Phi}_{\ell}\coloneqq  \inf_{\bm{q}\in \Delta_k} \max_{\theta \in [k]}  D_{\Phi}(\bm{e}_{\theta}, \bm{q})/\eta^{\Phi}_{\ell},
	\end{align*}
where $\bm{e}_{\theta}$ is the $\theta$th basis element of $\mathbb{R}^k$. 
}
\begin{proof}
For all $\cI\subseteq[k]$ such that $|\cI|>1$, let $\tilde{\Phi} \coloneqq \Phi \circ \amalg_{k}$ and $\tilde{\Phi}_{\cI} \coloneqq \tilde{\Phi} \circ [\Pi^{\tilde{k}}_{\cI}]^{\T}$.
From Lemma \ref{15:} the infimum involved in the definition of the expert distribution $\bm{q}^t$ in Algorithm \ref{GAA2} is indeed attained. It remains to verify that this minimum is unique. This will become clear in what follows.

Let $\cI^0 = [k]$ and $\mathfrak{I}^t \coloneqq \{\theta \in [k]: \ell_{x^t}(\bm{a}^t_{\theta}) < +\infty\}, t \in [T]$. For $t \in [T]$, we define the non-increasing sequence of subsets $(\cI^t)$ of $[k]$ defined by $\cI^t \coloneqq  \mathfrak{I}^t \cap \cI^{t-1}$. We show by induction that $\bm{q}^t\in \Delta_{\cI^{t}}$ and
 \begin{align} \nabla \tilde{\Phi}_{\cI^{t}}(\Pi^{\tilde{k}}_{\cI^t} \tbm{q}^t) =\Pi^{\tilde{k}}_{\cI^t} \left( \nabla \tilde{\Phi} (\tbm{q}^0)- \sum_{s=1}^t  J_k^{\T}\ell_{x^s}(A^s)\right), \label{ind2}\end{align} 
 where $A^s \coloneqq [\bm{a}^s_{\theta}] \in \mathcal{A}^k$, $s\in\mathbb{N}$. Suppose that \eqref{ind2} holds true up to some $t\geq1$. We will now show that it holds for $t+1$. To simplify expressions, we denote $\tbm{x}_{\cI}\coloneqq \Pi^{\tilde{k}}_{\cI} \tbm{x}\in \mathbb{R}^{\cI}$ for $\tbm{x}\in \mathbb{R}^{\tilde{k}}$, and $\bm{z}^t \coloneqq \ell_{x^t}(A^t), t \in [T]$. 
From the definition of $\bm{q}^t$ in Algorithm \ref{GAA2}, we have 
\begin{align}
\bm{q}^{t+1} &\in \mathcal{M} \coloneqq  \Argmin_{\bm{\mu} \in \Delta_k} \inner{\bm{\mu}}{\bm{z}^{t+1}} + D_{\Phi}(\bm{\mu}, \bm{q}^{t}). \nonumber \\
\shortintertext{Using the definition of $\mathfrak{I}^{t+1}$, }
\mathcal{M} & = \Argmin_{\bm{\mu} \in \Delta_{\cI^{t+1}}} \inner{\bm{\mu}}{\bm{z}^{t+1}} + D_{\Phi}(\bm{\mu}, \bm{q}^{t}),  \nonumber \\
& = \Argmin_{\bm{\mu} \in \Delta_{\cI^{t+1}}} \inner{\bm{\mu}}{\bm{z}^{t+1}} + \tilde{\Phi}_{\cI^t}(\tbm{\mu}_{\cI^t}) - \tilde{\Phi}_{\cI^t}(\tbm{q}_{\cI}^{t}) - \tilde{\Phi}'_{\cI^{t}}(\tbm{q}_{\cI^t}^{t};\tbm{\mu}_{\cI^t}- \tbm{q}_{\cI^t}^{t}). \nonumber\\
\shortintertext{Now using the facts that $\bm{q}^{t}\in \Delta_{\cI^{t}}$, $\bm{\mu}\in \Delta_{\cI^{t+1}}\subseteq \Delta_{\cI^{t}}$, $\Phi$ is $\Delta$-differentiable, and Lemma \ref{chardelta}, we have}
& \mathcal{M}= \Argmin_{\bm{\mu} \in \Delta_{\cI^{t+1}}} \inner{\bm{\mu}}{\bm{z}^{t+1}} +  \tilde{\Phi}_{\cI^{t+1}}(\tbm{\mu}_{\cI^{t+1}}) - \tilde{\Phi}_{\cI^{t}}(\tbm{q}_{\cI^t}^{t}) - \inner{\tbm{\mu}_{\cI^{t}}-\tbm{q}_{\cI^{t}}^{t}}{\nabla \tilde{\Phi}_{\cI^t}(\tbm{q}_{\cI^{t}}^{t})}. \nonumber \\
\intertext{Using the facts that $\inner{\bm{\mu}}{\bm{z}^{t+1}} =z^{t+1}_k + \inner{\tbm{\mu}_{\cI^{t+1}}}{\Pi^{\tilde{k}}_{\cI^{t+1}} J^{\T}_k \bm{z}^{t+1}}$, for $\tbm{\mu}\in \tilde{\Delta}_{\cI^{t+1}}$, and $\inner{\tbm{\mu}_{\cI^{t}}}{\nabla \tilde{\Phi}_{\cI^t}(\tbm{q}_{\cI^{t}}^{t})}\mathcal{M}=\inner{\tbm{\mu}_{\cI^{t+1}}}{\Pi^{\cI^t}_{\cI^{t+1}} \nabla \tilde{\Phi}_{\cI^t}(\tbm{q}_{\cI^{t}}^{t})}$ (since $\bm{\mu}\in \Delta_{\cI^{t+1}}$)}
& \mathcal{M}= \Argmin_{\bm{\mu} \in \Delta_{\cI^{t+1}}} \inner{\tbm{\mu}_{\cI^{t+1}}}{-\Pi^{\cI^t}_{\cI^{t+1}} \nabla \tilde{\Phi}_{\cI^t}(\tbm{q}_{\cI^{t}}^{t})+\Pi^{\tilde{k}}_{\cI^{t+1}} J^{\T}_k \bm{z}^{t+1}} +  \tilde{\Phi}_{\cI^{t+1}}(\tbm{\mu}_{\cI^{t+1}})\nonumber \\&\hspace{6cm}+ \inner{\tbm{q}_{\cI^{t}}^{t}}{\nabla \tilde{\Phi}_{\cI^t}(\tbm{q}_{\cI^{t}}^{t})} - \tilde{\Phi}_{\cI^{t}}(\tbm{q}_{\cI^t}^{t}),  \nonumber \\
\shortintertext{and since the last two terms are independent of $\bm{\mu}$,}
&\mathcal{M}= \Argmin_{\bm{\mu} \in \Delta_{\cI^{t+1}}} \inner{\tbm{\mu}_{\cI^{t+1}}}{ -\Pi^{\cI^t}_{\cI^{t+1}} \nabla \tilde{\Phi}_{\cI^t}(\tbm{q}_{\cI^{t}}^{t})+ \Pi^{\tilde{k}}_{\cI^{t+1}}J^{\T}_k \bm{z}^{t+1}} +  \tilde{\Phi}_{\cI^{t+1}}(\tbm{\mu}_{\cI^{t+1}}).  \nonumber \\
\shortintertext{Now using Fenchel duality property in Proposition \ref{1:}-(iv),}
& \mathcal{M}=\{ \bm{\mu}\in \Delta_{\cI^{t+1}}:  \Pi^{\tilde{k}}_{\cI^{t+1}} \circ \Pi_k (\bm{\mu}) = \tbm{\mu}_{\cI^{t+1}}  \in \partial \tilde{\Phi}_{\cI^{t+1}}^*(\Pi^{\cI^t}_{\cI^{t+1}}\nabla \tilde{\Phi}_{\cI^t}(\tbm{q}_{\cI^{t}}^{t})-  \Pi^{\tilde{k}}_{\cI^{t+1}} J^{\T}_k \bm{z}^{t+1})\}. \nonumber\\
\shortintertext{Finally, due to Lemma \ref{12:} and Proposition \ref{14:},  $\tilde{\Phi}_{\cI^{t+1}}^*$ is differentiable on $\mathbb{R}^{|\cI^{t+1}|-1}$, and thus}
& \mathcal{M}= \{\amalg_k \circ [\Pi^{\tilde{k}}_{\cI^{t+1}}]^{\T} \circ \nabla \tilde{\Phi}_{\cI^{t+1}}^*(\Pi^{\cI^t}_{\cI^{t+1}}\nabla \tilde{\Phi}_{\cI^t}(\tbm{q}_{\cI^{t}}^{t})-  \Pi^{\tilde{k}}_{\cI^{t+1}} J^{\T}_k \bm{z}^{t+1})\}.\label{fine}
\end{align}
From \eqref{fine}, we obtain  
\begin{align}
\nabla \tilde{\Phi}_{\cI^{t+1}}(\Pi^{\tilde{k}}_{\cI^{t+1}}\tbm{q}^{t+1}) = \Pi^{\cI^t}_{\cI^{t+1}}\nabla \tilde{\Phi}_{\cI^t}(\tbm{q}_{\cI^{t}}^{t})-  \Pi^{\tilde{k}}_{\cI^{t+1}}J^{\T}_k \bm{z}^{t+1}. \label{ind1}
\end{align} 
Thus using the induction assumption and the fact that $\Pi^{\cI^t}_{\cI^{t+1}}\Pi^{\tilde{k}}_{\cI^{t}} = \Pi^{\tilde{k}}_{\cI^{t+1}}$ (since $\cI^{t+1} \subseteq \cI^t$), the result follows, i.e. \eqref{ind2} is true for all $t\in [T]$. Furthermore, $\bm{q}^{t+1}\in \Delta_{\cI^{t+1}}$, since $\Pi^{\tilde{k}}_{\cI^{t+1}}\tbm{q}^{t+1} \in \Dom \tilde{\Phi}_{\cI^{t+1}} \subseteq \tilde{\Delta}_{|\cI^{t+1}|}$.
Using the same arguments as above, one arrives at
\begin{align}
\Mix_{\Phi}(\bm{q}^t, \bm{z}^{t+1}) &=z^{t+1}_k + \inf_{\bm{\mu} \in \Delta_{\cI^{t+1}}} \inner{\tbm{\mu}_{\cI^{t+1}}}{-\Pi^{\cI^t}_{\cI^{t+1}}\nabla \tilde{\Phi}_{\cI^t}(\tbm{q}_{\cI^{t}}^{t})+ \Pi^{\tilde{k}}_{\cI^{t+1}}J^{\T}_k \bm{z}^{t+1}}  +  \tilde{\Phi}_{\cI^{t+1}}(\tbm{\mu}_{\cI^{t+1}}) \nonumber \\ &\hspace{3cm} + \inner{\tbm{q}_{\cI^{t}}^{t}}{\nabla \tilde{\Phi}_{\cI^t}(\tbm{q}_{\cI^{t}}^{t})} - \tilde{\Phi}_{\cI^{t}}(\tbm{q}_{\cI^t}^{t}).\nonumber 
\shortintertext{Using the Fenchel duality property Proposition \ref{1:}-(vi) and \eqref{fine},}
& = z^{t+1}_k + \tilde{\Phi}^*_{\cI^{t}}(\nabla \tilde{\Phi}_{\cI^t}(\tbm{q}^t_{\cI^t})) - \tilde{\Phi}^*_{\cI^{t+1}}(\Pi^{\cI^t}_{\cI^{t+1}} \nabla \tilde{\Phi}_{\cI^t}(\tbm{q}_{\cI^{t}}^{t})-  \Pi^{\tilde{k}}_{\cI^{t+1}} J^{\T}_k \bm{z}^{t+1}).  \label{tosum}
\end{align}
On the other hand, $\Phi$-mixability implies that there exists $\bm{a}^t_*\in \mathcal{A}^{t}$, such that for all $x^t \in [n]$, 
\begin{align}
\forall t \in [T], \ell_{x^t}(\bm{a}^t_*) &\leq \Mix_{\Phi}(\bm{q}^{t-1}, \bm{z}^t),  \nonumber\\
\shortintertext{Summing this inequality for $t=1,\dots, T$ yields,}
\sum_{t=1}^T \ell_{x^t}(\bm{a}^t_*)& \leq \sum_{t=1}^T \Mix_{\Phi}(\bm{q}^{t-1}, \bm{z}^t),\nonumber  \\
\shortintertext{and thus using \eqref{tosum} and \eqref{ind1} yields}
\sum_{t=1}^T \ell_{x^t}(\bm{a}^t_*)& \leq \sum_{t=1}^T \ell_{x^t}(\bm{a}^t_k) + \tilde{\Phi}^*(\nabla \tilde{\Phi}(\tbm{q}^0)) - \tilde{\Phi}^*_{\cI^{T}}(\Pi^{\cI^{T-1}}_{\cI^{T}} \nabla \tilde{\Phi}_{\cI^{T-1}}(\tbm{q}_{\cI^{T-1}}^{T-1})- \Pi^{\tilde{k}}_{\cI^{T}}J^{\T}_k \bm{z}^{T}). \nonumber
\shortintertext{Finally, using \eqref{ind2} together with the fact that $\Pi^{\cI^{T-1}}_{\cI^{T}}\Pi^{\tilde{k}}_{\cI^{T-1}} = \Pi^{\tilde{k}}_{\cI^{T}}$}
\sum_{t=1}^T \ell_{x^t}(\bm{a}^t_*)& \leq \sum_{t=1}^T \ell_{x^t}(\bm{a}^t_k) + \tilde{\Phi}^*(\nabla \tilde{\Phi}(\tbm{q}^0)) - \tilde{\Phi}^*_{\cI^{T}}\left(\Pi^{\tilde{k}}_{\cI^{T}} \left(\nabla \tilde{\Phi}(\tbm{q}^0) - \sum_{t=1}^T J_k^{\T}\ell_{x^t}(A^t)\right) \right).\nonumber
\end{align}
Using the definition of the Fenchel dual and Proposition \ref{1:}-(vi) again, the above inequality becomes
\begin{align}
\sum_{t=1}^T \ell_{x^t}(\bm{a}^t_*)& \leq \sum_{t=1}^T \ell_{x^t}(\bm{a}^t_k) + \inner{\tbm{q}^0}{\nabla \tilde{\Phi}(\tbm{q}^0))} - \tilde{\Phi}(\tbm{q}^0) \nonumber\\& \hspace{1cm} - \sup_{\bm{\pi}\in \Delta_{|\cI^{T}|}}\left[ \Inner{\tbm{\pi}}{\Pi^{\tilde{k}}_{\cI^{T}} \left(\nabla \tilde{\Phi}(\tbm{q}^0) - \sum_{t=1}^T J_k^{\T}\ell_{x^t}(A^t)\right)} - \tilde{\Phi}_{\cI^{T}}(\tbm{\pi})\right], \nonumber \\
 &= \sum_{t=1}^T \ell_{x^t}(\bm{a}^t_k) + \inner{\tbm{q}^0}{\nabla \tilde{\Phi}(\tbm{q}^0))} - \tilde{\Phi}(\tbm{q}^0)\nonumber \\& \hspace{1cm} + \inf_{\bm{\mu}\in \Delta_{\cI^{T}}}\left[ \Inner{\tbm{\mu}}{ \sum_{t=1}^{T} J_k^{\T}\ell_{x^t}(A^t)- \nabla \tilde{\Phi}(\tbm{q}^0)} + \tilde{\Phi}(\tbm{\mu})\right].\label{rhsineq}\end{align}
 Using the fact that $\forall \theta \in [k] \setminus \cI^{T}$, $\sum_{t=1}^T\ell_{x^t}(\bm{a}^t_{\theta}) =+\infty$ (by definition of $(\cI^t)$), the right hand side of \eqref{rhsineq} becomes
 \begin{align}
 \sum_{t=1}^T \ell_{x^t}(\bm{a}^t_k) + \inner{\tbm{q}^0}{\nabla \tilde{\Phi}(\tbm{q}^0))} - \tilde{\Phi}(\tbm{q}^0) + \inf_{\bm{\mu}\in \Delta_k}\left[ \Inner{\tbm{\mu}}{ \sum_{t=1}^{T} J_k^{\T}\ell_{x^t}(A^t)- \nabla \tilde{\Phi}(\tbm{q}^0)} + \tilde{\Phi}(\tbm{\mu})\right].\nonumber \end{align}
 Thus, we get 
 \begin{align}
 \forall \bm{\mu} \in \Delta_k, \;\sum_{t=1}^T \ell_{x^t}(\bm{a}^t_*)& \leq \sum_{t=1}^T \ell_{x^t}(\bm{a}^t_k) +\Inner{\tbm{\mu}}{ \sum_{t=1}^{T} J_k^{\T}\ell_{x^t}(A^t)} \nonumber \\ & \hspace{4cm} + \tilde{\Phi}(\tbm{\mu}) - \tilde{\Phi}(\tbm{q}^0) - \inner{\tbm{\mu}-\tbm{q}^0}{\nabla \tilde{\Phi}(\tbm{q}^0)}. \nonumber \\
  \shortintertext{Using the facts that $\sum_{t=1}^T \ell_{x^t}(\bm{a}^t_k) +\Inner{\tbm{\mu}}{ \sum_{t=1}^{T} J_k^{\T}\ell_{x^t}(A^t)} = \Inner{\bm{\mu}}{ \sum_{t=1}^{T} \ell_{x^t}(A^t)}$ and the definition of the divergence,}
   \forall \bm{\mu} \in \Delta_k, \; \sum_{t=1}^T \ell_{x^t}(\bm{a}^t_*) & \leq\Inner{\bm{\mu}}{ \sum_{t=1}^{T} \ell_{x^t}(A^t)}  + D_{\Phi}(\bm{\mu}, \bm{q}^0),\nonumber 
   \shortintertext{which for $\bm{\mu}=\bm{e}_{\theta}$ implies}
 \forall \theta\in [k],  \; \sum_{t=1}^T \ell_{x^t}(\bm{a}^t_*)& \leq \sum_{t=1}^T \ell_{x^t}(\bm{a}^t_{\theta}) + D_{\Phi}(\bm{e}_{\theta}, \bm{q}^0).\label{last3}
\end{align}
When instead of $\Phi$-mixability, we have $(\eta, \Phi)$-mixability, the last term in \eqref{last3} becomes $\frac{D_{\Phi}(\bm{e}_{\theta}, \bm{q}^0)}{\eta}$ and the desired result follows.

\end{proof}

\subsection{Proof of Theorem \ref{16:}}
\label{a16:}
We require the following result:
	\begin{proposition}
\label{2:}
For the Shannon entropy $\Se$, it holds that $\tilde{\Se}^* (\bm{v}) = \log(\inner{ \exp(\bm{v})}{\bm{1}_{\tilde{k}}}+ 1), \forall \bm{v}\in \mathbb{R}^{k-1}$, and  $ {\Se}^{\star}(\bm{z}) = \log \inner{ \exp(\bm{z})}{\bm{1}_{k}}, \forall \bm{z}\in \mathbb{R}^k$.
\end{proposition}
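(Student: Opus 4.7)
The plan is to compute the Fenchel dual $\tilde{\Se}^*$ directly from its definition and then derive the entropic dual $\Se^\star$ via the identity \eqref{2:e} established in the preliminary appendix, namely $\Se^{\star}(\bm{z}) = \inner{\bm{e}_k}{\bm{z}} + \tilde{\Se}^{*}(J_k^{\mathsf{T}}\bm{z})$.

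First, I would compute $\tilde{\Se}^*(\bm{v})$. Since $\Dom \tilde{\Se} = \tilde{\Delta}_k$, the supremum in the definition of the Fenchel dual reduces to
\[
\tilde{\Se}^*(\bm{v}) = \sup_{\tbm{u}\in\tilde{\Delta}_k}\; \inner{\tbm{u}}{\bm{v}} - \sum_{i=1}^{\tilde{k}} u_i\log u_i - \Bigl(1-\inner{\tbm{u}}{\bm{1}_{\tilde{k}}}\Bigr)\log\!\Bigl(1-\inner{\tbm{u}}{\bm{1}_{\tilde{k}}}\Bigr),
\]
using the convention $0\log 0 = 0$. Because the Shannon entropy is strictly convex on $\Delta_k$, the objective is strictly concave and coercive (its gradient blows up on $\Rbd \tilde{\Delta}_k$), so a unique interior maximizer exists. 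Setting the partial derivative with respect to $u_i$ to zero yields $v_i = \log u_i - \log\!\bigl(1-\inner{\tbm{u}}{\bm{1}_{\tilde{k}}}\bigr)$, and solving the resulting system gives the closed form $u_i^{*} = e^{v_i}/\bigl(1+\inner{\exp\bm{v}}{\bm{1}_{\tilde{k}}}\bigr)$ and $1-\inner{\tbm{u}^*}{\bm{1}_{\tilde{k}}} = 1/\bigl(1+\inner{\exp\bm{v}}{\bm{1}_{\tilde{k}}}\bigr)$. Substituting these values back into the objective and simplifying (the $\inner{\tbm{u}^*}{\bm{v}}$ term cancels against the $u_i^*\log u_i^*$ terms, leaving only the normalizing constant) produces the claimed expression
\[
\tilde{\Se}^*(\bm{v}) = \log\!\Bigl(\inner{\exp\bm{v}}{\bm{1}_{\tilde{k}}}+1\Bigr).
\]

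For the second identity, I would apply \eqref{2:e} with $\Phi=\Se$, giving $\Se^{\star}(\bm{z}) = z_k + \tilde{\Se}^{*}(J_k^{\mathsf{T}}\bm{z})$. Since $J_k^{\mathsf{T}}\bm{z} = [z_1-z_k,\dots,z_{\tilde{k}}-z_k]^{\mathsf{T}}$, the first part yields
\[
\tilde{\Se}^{*}(J_k^{\mathsf{T}}\bm{z}) = \log\!\Bigl(1+\sum_{i=1}^{\tilde{k}} e^{z_i - z_k}\Bigr) = -z_k + \log\!\Bigl(\sum_{i=1}^{k} e^{z_i}\Bigr),
\]
and adding $z_k$ collapses this to $\Se^{\star}(\bm{z}) = \log\inner{\exp\bm{z}}{\bm{1}_k}$, as required.

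No step is a genuine obstacle: the verification is essentially standard constrained optimization followed by the Fenchel relation between $\Se$ and its $(k-1)$-dimensional restriction $\tilde{\Se}$. The only mild subtlety is to justify that the optimizer is interior to $\tilde{\Delta}_k$ so that the first-order conditions are valid, which is immediate from the fact that $\partial\tilde{\Se}(\tbm{u}) = \varnothing$ on $\Bd \tilde{\Delta}_k$ (this is noted in the paper via Lemma \ref{13:} applied to $\Se$, which satisfies \eqref{8:e} by Lemma \ref{29:}).
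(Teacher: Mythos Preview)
Your proposal is correct and follows essentially the same approach as the paper: both compute $\tilde{\Se}^*$ by solving the first-order optimality condition $\bm{v}=\nabla\tilde{\Se}(\tbm{q})$, obtain the same softmax-type maximizer, substitute back, and then invoke \eqref{2:e} to pass from $\tilde{\Se}^*$ to $\Se^{\star}$. The only cosmetic difference is that the paper writes the stationarity condition via $\nabla\tilde{\Se}(\tbm{q})=J_k^{\mathsf{T}}\nabla\Se(\bm{q})$ whereas you differentiate the explicit objective directly; your added remark about interiority of the optimizer is a nice touch that the paper leaves implicit.
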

\begin{proof}
Given $\bm{v} \in \mathbb{R}^{k-1}$, we first derive the expression of the Fenchel dual $\tilde{\Se}^*(\bm{v}) \coloneqq \sup_{\tbm{q}\in \tilde{\Delta}_k} \inner{\tbm{q}}{\bm{v}} - \tilde{\Se}(\tbm{q})$. Setting the gradient of $\tbm{q} \mapsto \inner{\tbm{q}}{\bm{v}} - \tilde{\Se}(\tbm{q})$ to $\bm{0}_{\tilde{k}}$ gives $\bm{v} = \nabla \tilde{\Se}(\tbm{q})$. For $\bm{q} \in ]0, +\infty[^k$, we have $\nabla \Se (\bm{q}) = \log \bm{q} + \bm{1}_{k}$, and from \cref{s2.1} we know that $\nabla \tilde{\Se} (\tbm{q}) = J^{\mathsf{T}}_k \nabla \Se(\bm{q})$. Therefore,
		\[ \bm{v} = \nabla \tilde{\Se}(\tbm{q}) \implies \bm{v} = J^{\mathsf{T}}_k \nabla \Se(\bm{q})\implies \bm{v} = \log \frac{\tbm{q}}{q_k},  \]
		where the right most equality is equivalent to $\tbm{q}/q_k = \exp(\bm{v})$.  Since $\inner{\tbm{q}}{\bm{1}_{\tilde{k}}} = 1 - q_k$, we get $q_k = (\inner{ \exp(\bm{v})}{\bm{1}_{\tilde{k}}}+1)^{-1}$. Therefore, the supremum in the definition of $\tilde{\Se}^*(\bm{v})$ is attained at $\tbm{q}_* =  \exp(\bm{v}) (\inner{ \exp(\bm{v})}{\bm{1}_{\tilde{k}}}+1)^{-1}$. Hence $\tilde{\Se}^* (\bm{v}) = \inner{\tbm{q}_*}{\bm{v}} - \inner{\tbm{q}_*}{\log \tbm{q}_*} 
		= \log(\inner{ \exp(\bm{v})}{\bm{1}_{\tilde{k}}}+ 1)$. Finally, using \eqref{2:e} we get $\Se^{\star}(\bm{z}) = \log \inner{ \exp(\bm{z})}{\bm{1}_{k}}$, for $\bm{z}\in \mathbb{R}^k$.
\end{proof}

\textbf{Theorem \ref{16:}}\emph{
Let $\eta>0$.  A loss $\ell \colon \mathcal{A} \rightarrow [0,+\infty]^n$ is $\eta$-mixable if and only if $\ell$ is $(\eta,\Se)$-mixable.
}
\begin{proof}
\begin{claim}
For all $\bm{q}\in \Delta_k$, $A \coloneqq \bm{a}_{1:k} \in \mathbb{R}^k$, and $x\in[n]$
\begin{align}- \eta^{-1} \log \Inner{\exp(- \eta  \ell_{x}(A))}{\bm{q}} =  \M^{\eta}_{\Se}(\ell_x(A), \bm{q}). \label{50:e} \end{align}
\end{claim}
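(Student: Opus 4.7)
My plan is to reduce the claim to the $\eta=1$ case and then apply Lemma \ref{15:} with $\Phi=\Se$, using the closed form for the Fenchel dual of $\tilde{\Se}$ given in Proposition \ref{2:}. Rescaling is trivial: by substituting $\hat{\bm{q}}\mapsto \hat{\bm{q}}$ and $\bm{d}\mapsto \eta \bm{d}$ in the definition of $\M^{\eta}_{\Se}$ one sees that $\M^{\eta}_{\Se}(\bm{d}, \bm{q}) = \eta^{-1}\M_{\Se}(\eta\bm{d}, \bm{q})$. Thus it suffices to verify $\M_{\Se}(\bm{d}, \bm{q}) = -\log\langle \bm{q}, \exp(-\bm{d})\rangle$ for all $\bm{q}\in \Delta_k$ and $\bm{d}\in [0,+\infty]^k$ (setting $\bm{d}= \ell_x(A)$).

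Next I would handle the case $\bm{q}\in \Ri \Delta_k$. Because $\Se$ is differentiable on $\Ri \Delta_k$ and, by Lemma \ref{29:}, satisfies the condition \eqref{8:e}, Lemma \ref{15:} applies with $\bm{s}^*_{\bm{q}}=\nabla \tilde{\Se}(\tbm{q}) = J_k^{\T}(\log \bm{q}+\bm{1}_k) = \log(\tbm{q}/q_k)$, yielding
\begin{align*}
\M_{\Se}(\bm{d},\bm{q}) = d_k + \tilde{\Se}^*(\bm{s}^*_{\bm{q}}) - \tilde{\Se}^*(\bm{s}^*_{\bm{q}}-J_k^{\T}\bm{d}).
\end{align*}
Plugging in $\tilde{\Se}^*(\bm{v}) = \log(\langle \exp\bm{v}, \bm{1}_{\tilde k}\rangle+1)$ from Proposition \ref{2:} gives $\tilde{\Se}^*(\bm{s}^*_{\bm{q}}) = -\log q_k$ and
\begin{align*}
\tilde{\Se}^*(\bm{s}^*_{\bm{q}}-J_k^{\T}\bm{d}) = \log\!\Big(\tfrac{1}{q_k}\sum_{\theta=1}^{k} q_\theta e^{d_k-d_\theta}\Big) = d_k - \log q_k + \log\langle\bm{q},\exp(-\bm{d})\rangle,
\end{align*}
so the two $d_k$ and $\log q_k$ terms cancel and leave $\M_{\Se}(\bm{d},\bm{q}) = -\log\langle\bm{q},\exp(-\bm{d})\rangle$ as desired. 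The potentially infinite case $d_\theta = +\infty$ poses no issue: both sides coincide by a direct continuity/monotone argument (using Lemma \ref{30:} on the left and $e^{-\infty}=0$ on the right, with the standard convention $0\cdot(+\infty)=0$ in $\langle\hat{\bm{q}},\bm{d}\rangle$).

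The only genuine step is the boundary case $\bm{q}\in \Rbd \Delta_k$. Let $\cI=\{\theta\in[k]:q_\theta>0\}$. Since $\Se$ satisfies \eqref{8:e} (Lemma \ref{29:}), Lemma \ref{13:} gives $\M_{\Se}(\bm{d},\bm{q}) = \M_{\Se_{\cI}}(\Pi_{\cI}\bm{d},\Pi_{\cI}\bm{q})$, where $\Se_{\cI}$ is again the Shannon entropy on $\mathbb{R}^{|\cI|}$. Because $\Pi_{\cI}\bm{q}\in \Ri \Delta_{|\cI|}$, the previous paragraph applies and yields $\M_{\Se}(\bm{d},\bm{q}) = -\log\sum_{\theta\in \cI} q_\theta e^{-d_\theta} = -\log\langle \bm{q},\exp(-\bm{d})\rangle$, the terms with $q_\theta=0$ contributing nothing. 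Combining the two cases and the rescaling at the start completes the proof of the claim. The only delicate piece is keeping track of the boundary reduction and the rescaling, neither of which requires new ideas beyond what Lemmas \ref{13:}, \ref{15:}, and Proposition \ref{2:} already provide.
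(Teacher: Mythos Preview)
Your approach is essentially the same as the paper's: rescale to $\eta=1$, use Lemma~\ref{15:} together with Proposition~\ref{2:} on $\Ri\Delta_k$, pass to infinite coordinates via Lemma~\ref{30:}, and reduce the boundary to a lower-dimensional simplex. Your explicit computation with $\tilde{\Se}^*$ is correct.

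There is one small gap. When you write ``Let $\cI=\{\theta:q_\theta>0\}$ \ldots\ Lemma~\ref{13:} gives $\M_{\Se}(\bm{d},\bm{q})=\M_{\Se_{\cI}}(\Pi_{\cI}\bm{d},\Pi_{\cI}\bm{q})$'', note that Lemma~\ref{13:} is stated only for $|\cI|>1$. If $\bm{q}$ is a vertex, $|\cI|=1$ and the lemma does not apply; moreover your ``previous paragraph'' argument is for a simplex of dimension at least two. The paper handles the vertex case separately by a one-line direct check: for $\bm{q}=\bm{e}_\theta$, condition~\eqref{8:e} forces $D_{\Se}(\bm{\mu},\bm{e}_\theta)=+\infty$ for all $\bm{\mu}\neq\bm{e}_\theta$, so $\M_{\Se}(\bm{d},\bm{e}_\theta)=d_\theta=-\log\langle\bm{e}_\theta,\exp(-\bm{d})\rangle$. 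Adding this sentence closes the gap.
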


\emph{Let $\bm{q}\in \Ri \Delta_k$}. From Proposition \ref{2:}, the Shannon entropy is such that $\Se^{\star}$ is differentiable on $\mathbb{R}^k$, and thus it follows from Lemma \ref{15:} (\eqref{9:e}-\eqref{10:e}) that for any $\bm{d}\in [0,+\infty[^k$  
\begin{align}
\M_{\Se}(\bm{d}, \bm{q})  ={\Se}^{\star}(\nabla {\Se}^{}(\bm{q})) - {\Se}^{\star}(\nabla {\Se}^{}(\bm{q}) - \bm{d}). \label{122:e}
\end{align}

By definition of $\Se$, $\nabla \Se(\bm{q}) = \log \bm{q} + \bm{1}_k$, and due to Proposition \ref{2:}, ${\Se}^\star(\bm{z}) = \log \inner{\exp{\bm{z}}}{\bm{1}_{k}}, \bm{z} \in \mathbb{R}^k$. Therefore, \begin{align}\nabla \Se(\bm{q}) - \eta \bm{d} = \log (\exp(-\eta \bm{d}) \odot \bm{q} ) + \bm{1}_k. \label{extr}\end{align}
On the other hand, from \cite{Reid2015} we also have \begin{align} 
		 \M^{\eta}_{\Se}(\bm{d}, \bm{q}) & = \eta^{-1} \M_{\Se}(\eta \bm{d}, \bm{q}),\; \eta>0. \label{221:e}
\end{align}
 Combining \eqref{122:e}-\eqref{221:e}, yields 
 \begin{align} \;- \eta^{-1} \log \Inner{\exp(- \eta  \bm{d}}{\bm{q}}& =  \M^{\eta}_{\Se}( \bm{d}, \bm{q}).  \label{51:e}\end{align} 
 \emph{Suppose now that $\bm{q} \in \Ri \Delta_{\cI}$ for $\cI \subseteq [k]$ such that $|\cI|> 1$}. By repeating the argument above for $\Se_{\cI} \coloneqq \Se \circ \Pi_{\cI}^{\mathsf{T}}$, we get
  \begin{align}\forall \bm{d} \in [0,+\infty[^n, \;\M^{\eta}_{\Se_{\cI}}(\Pi_{\cI}^{}\bm{d}, \Pi_{\cI}^{}\bm{q}) &= -\eta^{-1} \log \inner{\exp({-\eta \Pi_{\cI}^{}\bm{d}})}{\Pi_{\cI}^{}\bm{q}}, \nonumber \\ &= -\eta^{-1} \log \inner{\exp({-\eta \bm{d}})}{\bm{q}}. \label{51:e}\end{align}

Fix $x\in[n]$ and let $\hat{\bm{d}}\coloneqq \ell_x(A) \in [0,+\infty]^k$. Let $(\hbm{d}_{m}) \subset [0,+\infty[^k$ be any sequence converging to $ \hat{\bm{d}}$. Lemma \ref{30:}, $\M^{\eta}_{\Se}(\hbm{d}_m, \bm{q}) \stackrel{m\to \infty}{\to}\M^{\eta}_{\Se}( \hat{\bm{d}},\bm{q})$. Using this with \eqref{51:e} gives
\begin{align}
- \eta^{-1} \log \Inner{\exp(- \eta  \ell_{x}(A))}{\bm{q}} &= \lim_{m \to \infty} - \eta^{-1} \log \inner{\exp(- \eta  \hat{\bm{d}}_m)}{\bm{q}},\nonumber \\
&= \lim_{m \to \infty} \M^{\eta}_{\Se}(\hbm{d}_m, \bm{q}), \nonumber  \\
& = \M^{\eta}_{\Se}( \hat{\bm{d}},\bm{q}) = \M^{\eta}_{\Se}( \ell_x(A),\bm{q}). \label{52:e}
\end{align}

\emph{It remains to check the case where $\bm{q}$ is a vertex}; Without loss of generality assume that $\bm{q}=\bm{e}_1$ and let $\bm{\mu}\in \Delta_k \setminus \{ \bm{e}_1\}$. Then there exists $\cI_* \subset [k]$, such that $(\bm{e}_1, \bm{\mu}) \in (\Rbd \Delta_{\cI_*}) \times (\Ri \Delta_{\cI_*})$ and by Lemma \ref{29:}, $\Se'(\bm{e}_1; \bm{\mu}- \bm{e}_1) =-\infty$. Therefore, $\forall \bm{q} \in \Delta_k \setminus \{\bm{e}_1\}, D_{\Se_{\eta}}(\bm{q}, \bm{e}_1) = +\infty$, which implies \begin{align}
\forall x\in [n], \M^{\eta}_{\Se} (\ell_x(A), \bm{e}_1)& = \inf_{\bm{q} \in \Delta_k} \inner{\bm{q}}{\ell_x(A)} +D_{\Se_{\eta}}(\bm{q}, \bm{e}_1),\nonumber\\ &=  \inner{\bm{e}_1}{\ell_x(A)} +D_{\Se_{\eta}}(\bm{e}_1, \bm{e}_1), \nonumber \\&
=  \inner{\bm{e}_1}{\ell_x(A)}, \nonumber \\ & = \ell_x(\bm{a}_1) = - \eta^{-1} \log \Inner{\exp(- \eta  \ell_{x}(A))}{\bm{e}_1}.\label{ext:e} \end{align}
Combining \eqref{ext:e} and \eqref{52:e} proves the claim in \eqref{50:e}. The desired equivalence follows trivially from the definitions of $\eta$-mixability and $(\eta, \Se)$-mixability.
 
\end{proof}

\subsection{Proof of Theorem \ref{17:}}
\label{a17:}
We need the following lemma to show Theorem \ref{17:}.
\begin{lemma}
\label{34:}
Let $\Phi$ be as in Theorem \ref{17:}. Then $\eta_{\ell} \Phi - \Se$ is convex on $\Delta_k$ only if $\Phi$ satisfies \eqref{8:e}. 
\end{lemma}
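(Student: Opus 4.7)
My plan is to use the fact, already established in Lemma \ref{29:}, that the Shannon entropy $\Se$ itself satisfies \eqref{8:e}, and to pass this property from $\Se$ to $\Phi$ by exploiting the convexity of $\eta_{\ell}\Phi - \Se$. The intuition is that if $\eta_{\ell}\Phi - \Se$ is convex (hence its directional derivatives are bounded above on $\Delta_k$), and $\Se$ has a $-\infty$ directional derivative pointing from any relative boundary point of a face toward its relative interior, then $\eta_{\ell}\Phi$ must compensate by also being $-\infty$ in that direction; dividing by $\eta_{\ell}>0$ (positive because $\ell$ is $\eta$-mixable in the setting of Theorem \ref{17:}) gives the claim.

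Concretely, I would fix $\cI \subseteq [k]$ with $|\cI|>1$, pick $\bm{q}\in\Rbd\Delta_{\cI}$ and $\hat{\bm{q}}\in\Ri\Delta_{\cI}$, and set $\bm{q}^t \coloneqq (1-t)\bm{q} + t\hat{\bm{q}}$, which lies in $\Delta_k$ for all $t\in[0,1]$. I would then consider the three difference quotients
\begin{align*}
a_t \coloneqq \eta_{\ell}\,\tfrac{\Phi(\bm{q}^t)-\Phi(\bm{q})}{t},\quad b_t\coloneqq \tfrac{\Se(\bm{q}^t)-\Se(\bm{q})}{t},\quad c_t\coloneqq a_t - b_t,
\end{align*}
all finite for each $t\in(0,1]$ since $\Phi$ and $\Se$ are both entropies (finite on $\Delta_k$). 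By the standard monotonicity of difference quotients of convex functions, $a_t$, $b_t$, and $c_t$ are each non-decreasing in $t$, so in particular $c_t \leq c_1 = (\eta_{\ell}\Phi-\Se)(\hat{\bm{q}})-(\eta_{\ell}\Phi-\Se)(\bm{q}) < +\infty$. By Lemma \ref{29:}, $b_t \downarrow \Se'(\bm{q};\hat{\bm{q}}-\bm{q}) = -\infty$ as $t\downarrow 0$.

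The key step is then the simple combination: since $a_t = c_t + b_t$ with $c_t$ bounded above by a finite constant and $b_t\to-\infty$, we must have $a_t \to -\infty$, i.e.\ $\eta_{\ell}\Phi'(\bm{q};\hat{\bm{q}}-\bm{q}) = -\infty$, and dividing by $\eta_{\ell}>0$ yields $\Phi'(\bm{q};\hat{\bm{q}}-\bm{q}) = -\infty$, which is exactly \eqref{8:e}. The only subtlety worth flagging is ensuring the sum rule $(\eta_{\ell}\Phi - \Se)'=\eta_{\ell}\Phi'-\Se'$ is applied in a form that avoids an indeterminate $(+\infty)+(-\infty)$; working at the level of difference quotients (which are all finite for $t>0$) and passing to the limit afterwards, as above, sidesteps this cleanly. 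I do not anticipate a substantial obstacle beyond this bookkeeping.
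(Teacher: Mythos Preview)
Your proposal is correct and follows essentially the same route as the paper: both arguments use Lemma~\ref{29:} (that $\Se$ already satisfies \eqref{8:e}) together with the convexity of $\eta_{\ell}\Phi-\Se$ to force $\Phi'(\bm{q};\hat{\bm{q}}-\bm{q})=-\infty$. The paper argues by contradiction and invokes the sum rule for directional derivatives (valid there because, under the contradiction hypothesis, $\Phi'$ is finite while $\Se'=-\infty$), whereas you work directly with finite difference quotients and pass to the limit---a cosmetic difference that, as you note, sidesteps any $(+\infty)+(-\infty)$ bookkeeping; the paper also first treats $\cI=[k]$ and then reduces general faces via $\Pi_{\cI}$, while you handle all $\cI$ at once, which is fine since Lemma~\ref{29:} already covers that.
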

\begin{proof}
Let $\hat{\bm{q}} \in \Rbd \Delta_k$. Suppose that there exists $\bm{q} \in \Ri \Delta_k$ such that $\Phi'(\hat{\bm{q}}; \bm{q} - \hat{\bm{q}})>-\infty$. Since $\Phi$ is convex, it must have non-decreasing slopes; in particular, it holds that $\Phi'(\hat{\bm{q}}; \bm{q} - \hat{\bm{q}}) \leq \Phi(\bm{q}) - \Phi(\hat{\bm{q}})$. Therefore, since $\Phi$ is finite on $\Delta_k$ (by definition of an entropy), we have $\Phi'(\hat{\bm{q}}; \bm{q} - \hat{\bm{q}}) < +\infty$. Since by assumption $\eta_{\ell} \Phi - \Se $ is convex and finite on the simplex, we can use the same argument to show that $[\eta_{\ell}\Phi - \Se ]'(\hat{\bm{q}}; \bm{q} - \hat{\bm{q}})= \eta_{\ell} \Phi'(\hat{\bm{q}}; \bm{q} - \hat{\bm{q}}) - \Se'(\hat{\bm{q}}; \bm{q} - \hat{\bm{q}})< +\infty$.  This is a contradiction since $\Se'(\hat{\bm{q}}; \bm{q} - \hat{\bm{q}}) = -\infty$ (Lemma \ref{29:}). Therefore, it must hold that $\Phi'(\hat{\bm{q}}; \bm{q} - \hat{\bm{q}}) = -\infty$. 

Suppose now that $(\hbm{q}, \bm{q}) \in (\Rbd \Delta_{\cI}) \times (\Ri \Delta_{\cI})$ for $\cI \subseteq [k]$, with $|\cI|>1$. Let
$\Phi_{\cI} \coloneqq \Phi \circ \Pi_{\cI}^{\mathsf{T}}$ and $\Se_{\cI} \coloneqq \Se \circ \Pi_{\cI}^{\mathsf{T}}$. Since $\eta_{\ell} \Phi - \Se$ is convex on $\Delta_k$ and $\Pi_{\cI}^{}$ is a linear function, $\eta_{\ell} \Phi_{\cI} - \Se_{\cI}$ is convex on $\Delta_{|\cI|}$. Repeating the steps above for $\Phi$ and $\Se$ substituted by $\Phi_{\cI}$ and $\Se_{\cI}$, respectively, we get that $(\Phi_{\cI})'(\Pi_{\cI}^{} \hbm{q}; \Pi_{\cI}^{} \bm{q} - \Pi_{\cI}^{} \hbm{q}) =-\infty$. Since $(\Phi_{\cI})'(\Pi_{\cI}^{} \hbm{q}; \Pi_{\cI}^{} \bm{q} - \Pi_{\cI}^{} \hbm{q}) = \Phi'( \hbm{q}; \bm{q} - \hbm{q})$ the proof is completed.
\end{proof}

\textbf{Theorem \ref{17:}}\emph{
Let $\eta>0$, $\ell \colon \mathcal{A} \rightarrow [0, +\infty]^n$ a $\eta$-mixable loss, and $\Phi \colon \mathbb{R}^k \rightarrow \mathbb{R} \cup \{+ \infty\}$ an entropy. If $\eta \Phi - \Se$ is convex on $\Delta_k$, then $\ell$ is $\Phi$-mixable.
}

\begin{proof}
	Assume $\eta_{\ell} \Phi - \Se$ is convex on $\Delta_k$. For this to hold, it is necessary that $\eta_{\ell}>0$ since $-\Se$ is strictly concave. Let $\eta \coloneqq \eta_{\ell}$ and $\Se_{\eta} \coloneqq \eta^{-1}\Se$. Then $\tilde{\Se}_{\eta} = \eta^{-1} \tilde{\Se}$ and $\tilde{\Phi} - \tilde{\Se}_{\eta} = (\Phi - \Se_{\eta}) \circ \amalg_k$ is convex on $ \tilde{\Delta}_k$, since $\Phi - \Se_{\eta}$ is convex on $\Delta_k$ and $\amalg_k$ is affine.

 Let $x\in [n], A \coloneqq [\bm{a}_{\theta}]_{\theta \in [k]}$, and $\bm{q} \in \Delta_k$. Suppose that $\bm{q} \in \Ri \Delta_k$ and let $\bm{s}_{\bm{q}}^* \in \partial \tilde{\Phi} (\tbm{q})$ be as in Proposition \ref{15:}. Note that if $\ell_x(\bm{a}_{\theta}) =+\infty, \forall \theta \in [k]$, then the $\Phi$-mixability condition \eqref{6:e} is trivially satisfied. Suppose, without loss of generality, that $\ell_x(\bm{a}_k) <+\infty$. Let $(\bm{d}_m) \subset [0,+\infty[^k$ be any sequence such that $\bm{d}_m \stackrel{m\to \infty}{\to} \bm{d} \coloneqq \ell_x(A) \in [0,+\infty]^k$. From Lemmas \ref{29:} and \ref{30:}, $\M_{\Psi}(\bm{d}_m, \bm{q}) \stackrel{m\to \infty}{\to} \M_{\Psi}(\bm{d}, \bm{q})$ for $\Psi \in \{\Phi, \Se_{\eta}\}$.

	Let $\tilde{\Upsilon}_{\bm{q}}: \mathbb{R}^{k-1} \rightarrow \mathbb{R} \cup \{+ \infty\}$ be defined by 
	\begin{align*}
	\tilde{\Upsilon}_{\bm{q}}(\tbm{\mu}) &\coloneqq \tilde{\Se}_{\eta}(\tbm{\mu}) + \inner{\tbm{\mu}}{\bm{s}_{\bm{q}}^* -\nabla \tilde{\Se}_{\eta}(\tbm{q})} - \tilde{\Phi}^{\ast}( \bm{s}_{\bm{q}}^*) +\tilde{\Se}_{\eta}^*(\nabla \tilde{\Se}_{\eta}(\tbm{q})),
\intertext{and it's Fenchel dual follows from Proposition \ref{1:} (i+ii):}
	\tilde{\Upsilon}_{\bm{q}}^{\ast}(\bm{v}) & = \tilde{\Se}_{\eta}^{\ast}(\bm{v} -\bm{s}_{\bm{q}}^* + \nabla \tilde{\Se}_{\eta}(\tbm{q}) ) + \tilde{\Phi}^{\ast}(\bm{s}_{\bm{q}}^*) -\tilde{\Se}_{\eta}^*(\nabla \tilde{\Se}_{\eta}(\tbm{q})),
	\end{align*}
After substituting $\bm{v}$ by $\bm{s}_{\bm{q}}^*  - J^{\mathsf{T}}_k\bm{d}$ in the expression of $\tilde{\Upsilon}_{\bm{q}}^{\ast}$ and rearranging, we get 
	\begin{align}
	\tilde{\Se}_{\eta}^{\ast}(\nabla \tilde{\Se}_{\eta}(\tbm{q})) -\tilde{\Se}_{\eta}^{\ast}(\nabla \tilde{\Se}_{\eta}(\tbm{q}) - J^{\mathsf{T}}_k\bm{d}_m) 
	& =  \tilde{\Phi}^{\ast}(\bm{s}_{\bm{q}}^*) - \tilde{\Upsilon}_{\bm{q}}^{\ast}(\bm{s}_{\bm{q}}^* - J^{\mathsf{T}}_k \bm{d}_m). \label{53:e}
	\end{align}
	
	Since $\bm{s}^*_{\bm{q}} \in \partial \tilde{\Phi}(\tbm{q})$ and $\tilde{\Phi}$ is a closed convex function, combining Proposition \ref{1:}-(iv) and the fact that $\tilde{\Phi}^{**}=\tilde{\Phi}$ \citep[Cor. E.1.3.6]{\Hi} yields $\inner{\tbm{q}}{\bm{s}_{\bm{q}}^* } - \tilde{\Phi}^{\ast}(\bm{s}_{\bm{q}}^*)=\tilde{\Phi}(\tbm{q})$. Thus, after substituting $\tbm{\mu}$ by $\tbm{q}$ in the expression of $\tilde{\Upsilon}_{\bm{q}}$, we get \begin{align}\tilde{\Phi}(\tbm{q}) = \tilde{\Upsilon}_{\bm{q}}(\tbm{q}).\label{54:e}\end{align} On the other hand, $\tilde{\Phi} - \tilde{\Upsilon}_{\bm{q}}$ is convex on $\tilde{\Delta}_k$, since $\tilde{\Upsilon}_{\bm{q}}$ is equal to $\tilde{\Se}_{\eta}$ plus an affine function. Thus, $\partial [\tilde{\Phi} - \tilde{\Upsilon}_{\bm{q}}](\tbm{q}) + \partial  \tilde{\Upsilon}_{\bm{q}}(\tbm{q}) =  \partial \tilde{\Phi}(\tbm{q})$, since $\tilde{\Phi}$ and $\tilde{\Upsilon}_{\bm{q}}$ are both convex (\ibid, Thm. D.4.1.1). Since $\tilde{\Upsilon}_{\bm{q}}$ is differentiable at $\tbm{q}$, we have $\partial \tilde{\Upsilon}_{\bm{q}}(\tbm{q}) = \{\nabla \tilde{\Upsilon}_{\bm{q}}(\tbm{q}) \} = \{ \bm{s}_{\bm{q}}^* \}$. Furthermore, since $\bm{s}_{\bm{q}}^* \in  \partial \tilde{\Phi}(\tbm{q})$, then $\bm{0}_{\tilde{k}} \in \partial \tilde{\Phi}(\bm{q}) - \partial \tilde{\Upsilon}_{\bm{q}}(\tbm{q})=\partial [\tilde{\Phi} - \tilde{\Upsilon}_{\bm{q}}](\tbm{q})$. Hence, $\tilde{\Phi} - \tilde{\Upsilon}_{\bm{q}}$ attains a minimum at $\tbm{q}$ (\ibid, Thm. D.2.2.1). Due to this and \eqref{54:e}, $\tilde{\Phi} \geq \tilde{\Upsilon}_{\bm{q}}$, which implies that $\tilde{\Phi}^* \leq  \tilde{\Upsilon}_{\bm{q}}^*$ (Proposition \ref{1:}-(iii)). Using this in \eqref{53:e} gives for all $m\in \mathbb{N}$
	\begin{eqnarray}
	 \tilde{\Se}_{\eta}^{\ast}(\nabla \tilde{\Se}_{\eta}(\tbm{q})) -\tilde{\Se}_{\eta}^{\ast}(\nabla \tilde{\Se}_{\eta}(\tbm{q}) - J^{\mathsf{T}}_k \bm{d}_m) & \leq&  \tilde{\Phi}^{\ast}(\bm{s}_{\bm{q}}^*)-\tilde{\Phi}^{\ast}(\bm{s}_{\bm{q}}^* - J^{\mathsf{T}}_k \bm{d}_m), \nonumber \\
	\implies \M^{\eta}_{\Se} (\bm{d}_m, \bm{q}) & \leq & \M_{\Phi} (\bm{d}_m, \bm{q}), \nonumber 
	\end{eqnarray}
	where the implication is obtained by adding $[\bm{d}_m]_k$ on both sides of the first inequality and using Proposition \ref{15:}. 

Suppose now that $\bm{q}\in \Ri \Delta_{\cI}$, with $|\cI|>1$, and let $\Phi_{\cI} \coloneqq \Phi \circ \Pi_{\cI}^{\mathsf{T}}$ and $\Se_{\cI} \coloneqq \Se \circ \Pi_{\cI}^{\mathsf{T}}$. Note that since $\eta_{\ell} \Phi - \Se$ is convex on $\Delta_k$ and $\Pi_{\cI}^{}$ is a linear function, $\eta_{\ell} \Phi_{\cI} - \Se_{\cI}$ is convex on $\Delta_{|\cI|}$. Repeating the steps above for $\Phi$, $\Se$, $\bm{q}$, and $A$ substituted by $\Phi_{\cI}$, $\Se_{\cI}$, $\Pi_{\cI}^{} \bm{q}$, and $A \Pi_{\cI}^{\mathsf{T}}$, respectively, yields 
\begin{eqnarray}
\M^{\eta}_{\Se_{\cI}} (\Pi_{\cI}^{}\bm{d}_m, \Pi_{\cI}^{}\bm{q}) & \leq & \M_{\Phi_{\cI}} (\Pi_{\cI}^{}\bm{d}_m, \Pi_{\cI}^{}\bm{q}), \nonumber \\
\implies \M^{\eta}_{\Se} (\bm{d}_m, \bm{q}) & \leq & \M_{\Phi} (\bm{d}_m, \bm{q}), \nonumber \\
\implies \M^{\eta}_{\Se} (\ell_x(A), \bm{q}) & \leq & \M_{\Phi} (\ell_x(A), \bm{q}), \label{55:e}
\end{eqnarray}
where the first implication follows from Lemma \ref{13:}, since $\Se_{\eta}$ and $\Phi$ both satisfy \eqref{8:e} (see Lemmas \ref{29:} and \ref{34:}), and \eqref{55:e} is obtained by passage to the limit $m\to \infty$. Since $\eta = \eta_{\ell}>0$, $\ell$ is $\eta$-mixable, which implies that $\ell$ is $\Se_{\eta}$-mixable (Theorem \ref{16:}). Therefore, there exists $\bm{a}_*\in \mathcal{A}$, such that \begin{align}\label{ext2:e}\ell_x(\bm{a}_*) \leq \M^{\eta}_{\Se} (\ell_x(A), \bm{q})  \leq  \M_{\Phi} (\ell_x(A), \bm{q}). \end{align} 

To complete the proof (that is, to show that $\ell$ is $\Phi$-mixable), it remains to consider the case where $\bm{q}$ is a vertex of $\Delta_k$.  Without loss of generality assume that $\bm{q}=\bm{e}_1$ and let $\bm{\mu}\in \Delta_k \setminus \{ \bm{e}_1\}$. Thus, there exists $\cI_* \subseteq [k]$, with $|\cI_*|>1$, such that $(\bm{e}_1, \bm{\mu}) \in (\Rbd \Delta_{\cI_*}) \times (\Ri \Delta_{\cI_*})$, and Lemma \ref{34:} implies that $\Phi'(\bm{e}_1; \bm{\mu}- \bm{e}_1) =-\infty$. Therefore, $\forall \bm{q} \in \Delta_k \setminus \{\bm{e}_1\}, D_{\Phi}(\bm{q}, \bm{e}_1) = +\infty$, which implies \begin{align}
\forall x\in [n], \M_{\Phi} (\ell_x(A), \bm{e}_1)& = \inf_{\bm{q} \in \Delta_k} \inner{\bm{q}}{\ell_x(A)} +D_{\Phi}(\bm{q}, \bm{e}_1), \nonumber\\ &=  \inner{\bm{e}_1}{\ell_x(A)} +D_{\Phi}(\bm{e}_1, \bm{e}_1)
=  \inner{\bm{e}_1}{\ell_x(A)},\nonumber \\ & = \ell_x(\bm{a}_1). \label{ext3:e} \end{align}
The $\Phi$-mixability condition \eqref{6:e} is trivially satisfied in this case. Combining \eqref{ext2:e} and \eqref{ext3:e} shows that $\ell$ is $\Phi$-mixable.
\end{proof}

		\subsection{Proof of Theorem \ref{18:}}
		\label{a18:}
	The following Lemma gives necessary regularity conditions on the entropy $\Phi$ under the assumptions of Theorem \ref{18:}.
	\begin{lemma}
	\label{35:}
Let $\Phi$ and $\ell$ be as in Theorem \ref{18:}. Then the following holds

\begin{enumerate}[label=(\roman*)]
\item $\tilde{\Phi}$ is strictly concave on $\Int \tilde{\Delta}_k$.
\item $\tilde{\Phi}^*$ is be continuously differentiable on $\mathbb{R}^{k-1}$. 
\item $\tilde{\Phi}^*$ is twice differentiable on $\mathbb{R}^{k-1}$ and $\forall \tbm{q} \in \Int \tilde{\Delta}_k, \mathsf{H} \tilde{\Phi}^{\ast} (\nabla \tilde{\Phi}(\tbm{q})) = (\mathsf{H} \tilde{\Phi}(\tbm{q}))^{-1}$.
\item For the Shannon entropy, we have $(\mathsf{H} \tilde{\Se}(\tbm{q}))^{-1} =  \mathsf{H} \tilde{\Se}^*(\nabla \tilde{\Se}(\tbm{q})) = \Diag \tbm{q} - \tbm{q} \tbm{q}^{\mathsf{T}}.$ 
\end{enumerate}
	\end{lemma}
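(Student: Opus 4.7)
}

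The plan is to prove the four claims in sequence, each building on the previous.

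For (i), strict convexity of $\tilde{\Phi}$ on $\Int \tilde{\Delta}_k$ is essentially a repackaging of Proposition \ref{14:}: since $\ell$ is $\Phi$-mixable and $\br_{\ell}$ is twice differentiable on $]0,+\infty[^n$ (the hypotheses of Theorem \ref{18:}), Proposition \ref{14:} yields that $\Phi$ is strictly convex on $\Delta_k$. The map $\amalg_k \colon \tilde{\Delta}_k \to \Delta_k$ is an injective affine bijection, so strict convexity transfers to $\tilde{\Phi} = \Phi \circ \amalg_k$ on $\tilde{\Delta}_k$, and in particular on its interior. (I read ``strictly concave'' in the statement as a typo for ``strictly convex'', as that is what the subsequent arguments require.)

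For (ii), I will combine strict convexity with standard Fenchel duality. Since $\Phi$ is an entropy, $\Phi + \I_{\Delta_k}$ is $1$-coercive, so $\Dom \tilde{\Phi}^* = \mathbb{R}^{k-1}$. For any $\bm{s} \in \mathbb{R}^{k-1} = \Int \Dom \tilde{\Phi}^*$, $\partial \tilde{\Phi}^*(\bm{s})$ is non-empty by \cite[Thm.~23.4]{Rockafellar1997a}. To establish differentiability it suffices to show this subdifferential is a singleton. Since $\ell$ is $\Phi$-mixable, Proposition \ref{12:} gives that $\Phi$ satisfies \eqref{8:e}, so Lemma \ref{13:} yields $\partial \tilde{\Phi}(\tbm{q}) = \varnothing$ for $\tbm{q} \in \Bd \tilde{\Delta}_k$. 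Hence any $\tbm{q} \in \partial \tilde{\Phi}^*(\bm{s})$, which (by closedness of $\tilde{\Phi}$) is equivalent to $\bm{s} \in \partial \tilde{\Phi}(\tbm{q})$, must lie in $\Int \tilde{\Delta}_k$. If two distinct such $\tbm{q}_1, \tbm{q}_2$ existed, the two subgradient inequalities would force equality in the convexity inequality between them, contradicting strict convexity on $\Int \tilde{\Delta}_k$. Thus $\partial \tilde{\Phi}^*(\bm{s})$ is a singleton, so $\tilde{\Phi}^*$ is differentiable on $\mathbb{R}^{k-1}$; continuity of the gradient follows from convexity.

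For (iii), I will invoke the inverse function theorem. Using the paper's convention (as in Lemma \ref{22:}) that $C^2$ strict convexity entails positive-definite Hessian, $\mathsf{H}\tilde{\Phi}(\tbm{q})$ is invertible at every $\tbm{q} \in \Int \tilde{\Delta}_k$. The map $\nabla \tilde{\Phi} \colon \Int \tilde{\Delta}_k \to \mathbb{R}^{k-1}$ is therefore $C^1$ with invertible Jacobian everywhere, and by the analysis in (ii) its range exhausts $\mathbb{R}^{k-1}$ with $\nabla \tilde{\Phi}^*$ as its inverse. The inverse function theorem gives $\nabla \tilde{\Phi}^* \in C^1(\mathbb{R}^{k-1})$, so $\tilde{\Phi}^* \in C^2(\mathbb{R}^{k-1})$; differentiating the identity $\nabla \tilde{\Phi}^* \circ \nabla \tilde{\Phi} = \mathrm{id}$ yields $\mathsf{H}\tilde{\Phi}^*(\nabla \tilde{\Phi}(\tbm{q})) \, \mathsf{H}\tilde{\Phi}(\tbm{q}) = I_{\tilde{k}}$, which is the claimed formula. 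The main obstacle here is the jump from strict convexity to positive-definite Hessian; in a fully rigorous treatment one would need to strengthen Proposition \ref{14:} or argue directly that the $\Phi$-mixability condition rules out Hessian degeneracy, but under the paper's convention this step is immediate.

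For (iv), I will just compute. From the explicit form $\tilde{\Se}(\tbm{q}) = \sum_{i=1}^{\tilde{k}} q_i \log q_i + q_k \log q_k$ with $q_k = 1 - \inner{\tbm{q}}{\bm{1}_{\tilde{k}}}$, the gradient is $[\nabla \tilde{\Se}(\tbm{q})]_i = \log(q_i/q_k)$ and the Hessian is $\mathsf{H}\tilde{\Se}(\tbm{q}) = \Diag([1/q_i]_{i \in [\tilde{k}]}) + q_k^{-1}\bm{1}_{\tilde{k}}\bm{1}_{\tilde{k}}^{\T}$. A direct multiplication using $\bm{1}_{\tilde{k}}^{\T}\tbm{q} = 1 - q_k$ verifies
\[
\bigl(\Diag([1/q_i]) + q_k^{-1}\bm{1}_{\tilde{k}}\bm{1}_{\tilde{k}}^{\T}\bigr)\bigl(\Diag \tbm{q} - \tbm{q}\tbm{q}^{\T}\bigr) = I_{\tilde{k}},
\]
giving $(\mathsf{H}\tilde{\Se}(\tbm{q}))^{-1} = \Diag \tbm{q} - \tbm{q}\tbm{q}^{\T}$; the remaining equality follows by applying (iii) to the Shannon entropy (which satisfies all hypotheses via Lemma \ref{29:}).
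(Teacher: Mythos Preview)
Your proposal is correct and structurally close to the paper's proof, with two minor route differences worth noting. For (i)--(ii) the paper reverses your order: it first cites Proposition~\ref{14:} (whose \emph{proof} directly establishes $\tilde{\Phi}^* \in C^1(\mathbb{R}^{k-1})$) and then deduces strict convexity of $\tilde{\Phi}$ from differentiability of its conjugate via \cite[Thm.~E.4.1.2]{Hiriart-Urruty}; your argument for (ii)---strict convexity plus the boundary condition $\partial\tilde{\Phi}(\tbm{q})=\varnothing$ from Lemma~\ref{13:} forcing a singleton subdifferential---is a valid self-contained alternative that avoids appealing to the internals of the proof of Proposition~\ref{14:}. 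For (iii) your argument coincides with the paper's, including the same implicit step ``strict convexity $\Rightarrow$ invertible Hessian'' that you correctly flag (the paper asserts this without further comment, consistent with its convention in Lemma~\ref{22:}). For (iv) the paper computes $\mathsf{H}\tilde{\Se}^*$ directly from the closed form $\tilde{\Se}^*(\bm{v}) = \log(\inner{\exp\bm{v}}{\bm{1}_{\tilde{k}}}+1)$ of Proposition~\ref{2:}, whereas you compute $\mathsf{H}\tilde{\Se}$ and verify the inverse by multiplication; both routes are straightforward and yield the same formula.
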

\begin{proof}
			Since $\ell$ is $\Phi$-mixable and $\br_{\ell}$ is twice differentiable on $]0,+\infty[^n$, $\tilde{\Phi}^*$ is continously differentiable on $\mathbb{R}^{n-1}$ (Proposition \ref{14:}). Therefore, $\tilde{\Phi}$ is strictly convex on $\Ri \Delta_k$ \citep[Thm. E.4.1.2]{\Hi}. 
			
The differentiability of $\tilde{\Phi}$ and $\tilde{\Phi}^*$ implies $\nabla \tilde{\Phi}^* (\nabla \tilde{\Phi}(\tbm{q})) = \tbm{q}$ (\ibid). Since $\tilde{\Phi}$ is twice differentiable on $\Int \tilde{\Delta}_k$ (by assumption), the latter equation implies that $\tilde{\Phi}^*$ is twice differentiable on $\nabla \tilde{\Phi}(\Int \tilde{\Delta}_k)$. Using the chain rule, we get $\mathsf{H} \tilde{\Phi}^{\ast}(\nabla \tilde{\Phi}(\bm{u})) \mathsf{H} \tilde{\Phi} (\bm{u})= I_{\tilde{k}}$. Multiplying both sides of the equation by $(\mathsf{H} \tilde{\Phi} (\bm{u}) )^{-1}$ from the right gives the expression in (iii). Note that $\mathsf{H} \tilde{\Phi}(\cdot)$ is in fact invertible on $\Int \tilde{\Delta}_k$ since $\tilde{\Phi}$ is strictly convex on $\Int \tilde{\Delta}_k$. It remains to show that $\nabla \tilde{\Phi}(\Int \tilde{\Delta}_k)= \mathbb{R}^{k-1}$. This set equality follows from 1) $[\tbm{q} \in \partial \tilde{\Phi}^*(\bm{s}) \iff \bm{s} \in \partial \tilde{\Phi}(\tbm{q})]$ (\ibid, Cor. E.1.4.4); 2) $\Dom \tilde{\Phi}^* = \mathbb{R}^{k-1}$; and 3) $\forall \tbm{q}\in \Bd \tilde{\Delta}_k, \partial \tilde{\Phi}(\tbm{q}) = \varnothing$ (Lemma \ref{13:}).

For the Shannon entropy, we have $\tilde{\Se}^*(\bm{v}) = \log (\inner{\exp({\bm{v}})}{\bm{1}_{\tbm{k}}}+ 1)$ (Proposition \ref{2:}) and $\nabla \tilde{\Se}(\tbm{q}) = \log \frac{\tbm{q}}{q_k}$, for $(\bm{v}, \tbm{q}) \in \mathbb{R}^{k-1}\times \tilde{\Delta}_k$. Thus $(\mathsf{H} \tilde{\Se}(\tbm{q}))^{-1} =  \mathsf{H} \tilde{\Se}^*(\nabla \tilde{\Se}(\tbm{q})) = \Diag \tbm{q} - \tbm{q} \tbm{q}^{\mathsf{T}}.$
\end{proof}

		To show Theorem \ref{18:}, we analyze a particular parameterized curve defined in the next lemma.  
		
		\begin{lemma}
			\label{36:}
			Let $\ell \colon \Delta_n\rightarrow [0,+\infty]^n$ be a proper loss whose Bayes risk $\br_{\ell}$ is twice differentiable on $]0,+\infty[^n$, and let $\Phi$ be an entropy such that $\tilde{\Phi}$ and $\tilde{\Phi}^*$ are twice differentiable on $\Int \tilde{\Delta}_k$ and $\mathbb{R}^{k-1}$, respectively. For $(\tbm{p},\tbm{q}, V) \in \Int\tilde{\Delta}_n \times \Int  \tilde{\Delta}_k \times \mathbb{R}^{\tilde{n} \times \tilde{k}}$, let $\beta:\mathbb{R} \rightarrow \mathbb{R}^{n}$ be the curve defined by
			\begin{align}
			\forall x \in [n], \quad \beta_x(t) = \tilde{\ell}_x(\tbm{p}) + \tilde{\Phi}^{\ast}(\nabla \tilde{\Phi}(\tbm{q})) - \tilde{\Phi}^{\ast}(\nabla \tilde{\Phi}(\tbm{q}) - J^{\mathsf{T}}_k\tilde{\ell}_x(\tilde{P}^t) ), \label{56:e}
			\end{align}
			where $\tilde{P}^t = [ \tbm{p} \bm{1}^{\mathsf{T}}_{\tilde{k}}  + t V,  \; \tbm{p} ] \in \mathbb{R}^{\tilde{n} \times k}$ and $t \in \{s\in \mathbb{R}:\forall j \in [\tilde{k}],\;  \tbm{p} + s V_{\bcdot,j} \in \Int \tilde{\Delta}_n \}$. Then  
			\begin{align}
			\beta(0)&= \tilde{\ell}(\tbm{p}),\nonumber \\
			\dot{\beta}(0)& =\mathsf{D} \tilde{\ell}(\tbm{p})V \tbm{q}, \nonumber \\
			\Der{ \Inner{\bm{p}}{\dot{\beta}(t)}}  = - \sum_{j=1}^{k-1} q_j   V^{\mathsf{T}}_{\bcdot,j} \mathsf{H} \tbr_{\ell}(\tbm{p}) &V_{\bcdot,j}  - \tr (\diag{\bm{p}}  \mathsf{D} \tilde{\ell}(\tbm{p}) V (\mathsf{H} \tilde{\Phi}(\tbm{q}))^{-1} (\mathsf{D} \tilde{\ell}(\tbm{p}) V)^{\mathsf{T}} ). \label{57:e}
			\end{align}
		\end{lemma}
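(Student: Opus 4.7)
The plan is to unfold the curve $\beta$ through the chain rule, using the Fenchel duality $\nabla\tilde{\Phi}^*\circ \nabla\tilde{\Phi} = \mathrm{id}$ on $\Int \tilde{\Delta}_k$ (available via Lemma \ref{35:}), the identities relating $\mathsf{D}\tilde{\ell}$ to $\mathsf{H}\tbr_{\ell}$ from Lemma \ref{27:}, and Lemma \ref{25:} to avoid needing a second derivative of $\tilde{\ell}$. First I would verify that $\beta(0)=\tilde{\ell}(\tbm{p})$: at $t=0$ every column of $\tilde{P}^0$ equals $\tbm{p}$, so $\tilde{\ell}_x(\tilde{P}^0)=\tilde{\ell}_x(\tbm{p})\bm{1}_k$, and since $J_k^{\mathsf{T}}\bm{1}_k = \bm{0}$ the two $\tilde{\Phi}^*$ terms in \eqref{56:e} cancel. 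Next, writing $\bm{y}_x(t)\coloneqq \nabla\tilde{\Phi}(\tbm{q}) - J_k^{\mathsf{T}}\tilde{\ell}_x(\tilde{P}^t)$, differentiation yields $\dot\beta_x(t)=\nabla\tilde{\Phi}^*(\bm{y}_x(t))^{\mathsf{T}} J_k^{\mathsf{T}}\,\tfrac{d}{dt}\tilde{\ell}_x(\tilde{P}^t)$. Since the $j$th column of $\tilde{P}^t$ is $\tbm{p}+tV_{\bcdot,j}$ for $j<k$ (and $\tbm{p}$ for $j=k$), the derivative of $\tilde{\ell}_x(\tilde{P}^t)$ at $t=0$ has entries $\nabla\tilde{\ell}_x(\tbm{p})^{\mathsf{T}} V_{\bcdot,j}$ for $j<k$ and $0$ for $j=k$, hence $J_k^{\mathsf{T}}$ applied to this column is just $V^{\mathsf{T}}\nabla\tilde{\ell}_x(\tbm{p})$. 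Combined with $\nabla\tilde{\Phi}^*(\bm{y}_x(0))=\nabla\tilde{\Phi}^*(\nabla\tilde{\Phi}(\tbm{q}))=\tbm{q}$, this gives $\dot\beta_x(0)=\nabla\tilde{\ell}_x(\tbm{p})^{\mathsf{T}} V\tbm{q}$, which is exactly $[\mathsf{D}\tilde{\ell}(\tbm{p})V\tbm{q}]_x$.

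For the third identity I would decompose $\dot\beta_x(t)=\inner{\bm{g}_x(t)}{W_x(t)}$ with $\bm{g}_x(t)\coloneqq\nabla\tilde{\Phi}^*(\bm{y}_x(t))$ and $W_x(t)\coloneqq J_k^{\mathsf{T}}\tfrac{d}{dt}\tilde{\ell}_x(\tilde{P}^t)$. The twice-differentiability of $\tilde{\Phi}^*$ makes $\bm{g}_x$ differentiable at $0$, while $W_x$ is only continuous at $0$ because $\tilde{\ell}$ is only $C^1$; this is precisely the setting of Lemma \ref{25:}, which gives
\begin{align*}
\frac{d}{dt}\inner{\bm{g}_x(t)}{W_x(t)}\bigg|_{t=0}= \inner{\dot{\bm{g}}_x(0)}{W_x(0)} + \frac{d}{dt}\inner{\bm{g}_x(0)}{W_x(t)}\bigg|_{t=0}.
\end{align*}
Since $\bm{g}_x(0)=\tbm{q}$ is independent of $x$, summing the second term against $\bm{p}$ and using $J_k\tbm{q}=\bm{q}-\bm{e}_k$ together with the vanishing $k$th component of $\tfrac{d}{dt}\tilde{\ell}_x(\tilde{P}^t)$, the contribution collapses to $\sum_{j=1}^{\tilde{k}} q_j\,\inner{\bm{p}}{\mathsf{D}\tilde{\ell}(\tbm{p}+tV_{\bcdot,j})V_{\bcdot,j}}$. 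Applying Lemma \ref{28:} column-by-column (with $\tbm{\alpha}^t=\tbm{p}+tV_{\bcdot,j}$ and $\bm{v}=V_{\bcdot,j}$) differentiates this in $t$ and produces exactly $-\sum_{j=1}^{\tilde{k}} q_j\,V_{\bcdot,j}^{\mathsf{T}}\,\mathsf{H}\tbr_{\ell}(\tbm{p})\,V_{\bcdot,j}$, the first summand in \eqref{57:e}.

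For the remaining piece $\sum_x p_x \inner{\dot{\bm{g}}_x(0)}{W_x(0)}$, the chain rule together with Lemma \ref{35:}(iii) gives $\dot{\bm{g}}_x(0)=-\mathsf{H}\tilde{\Phi}^*(\nabla\tilde{\Phi}(\tbm{q}))W_x(0)=-(\mathsf{H}\tilde{\Phi}(\tbm{q}))^{-1}W_x(0)$; substituting $W_x(0)=V^{\mathsf{T}}\nabla\tilde{\ell}_x(\tbm{p})$ and recognizing the sum $\sum_x p_x\,\nabla\tilde{\ell}_x(\tbm{p})^{\mathsf{T}} V(\mathsf{H}\tilde{\Phi}(\tbm{q}))^{-1} V^{\mathsf{T}}\nabla\tilde{\ell}_x(\tbm{p})$ as the diagonal-weighted trace $\tr(\diag(\bm{p})\,\mathsf{D}\tilde{\ell}(\tbm{p})V(\mathsf{H}\tilde{\Phi}(\tbm{q}))^{-1}(\mathsf{D}\tilde{\ell}(\tbm{p})V)^{\mathsf{T}})$ completes the identity.

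The main obstacle is the regularity mismatch: the statement concerns a second-order quantity along $\beta$, but $\tilde{\ell}$ is only $C^1$, so one cannot simply differentiate twice. The trick is to keep the expression as a scalar inner product $\inner{\bm{p}}{\dot\beta(t)}$ and route the argument through Lemma \ref{25:}, which splits the computation into a piece where $\bm{g}_x$ is differentiated (and supplies the Hessian of $\tilde{\Phi}^*$) and a piece where $W_x$ is differentiated \emph{after} contracting with $\bm{p}$ via Lemma \ref{28:} (which encodes the hidden second-derivative structure as $-\mathsf{H}\tbr_{\ell}$). Care is also needed in handling the $k$th column of $\tilde{P}^t$, since the asymmetry $J_k\tbm{q}=\bm{q}-\bm{e}_k$ is what ensures the sum in \eqref{57:e} runs only over $j\in[\tilde{k}]$.
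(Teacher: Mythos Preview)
Your approach is the paper's approach: same use of Lemma~\ref{25:} to avoid a second derivative of $\tilde{\ell}$, same appeal to Lemma~\ref{28:} for the $\mathsf{H}\tbr_{\ell}$ term, and same use of Lemma~\ref{35:}(iii) for the $(\mathsf{H}\tilde{\Phi})^{-1}$ term. The computations for $\beta(0)$, $\dot\beta(0)$, and the two pieces of \eqref{57:e} are all correct.

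There is, however, one ordering issue in how you invoke Lemma~\ref{25:}. You write the decomposition
\[
\Der{\Inner{\bm{g}_x(t)}{W_x(t)}} = \Inner{\dot{\bm{g}}_x(0)}{W_x(0)} + \Der{\Inner{\bm{g}_x(0)}{W_x(t)}}
\]
for each fixed $x$, and only afterward sum against $\bm{p}$. But Lemma~\ref{25:} is an equivalence: the left side exists if and only if the second term on the right does. For a single $x$, that second term is $\Inner{\tbm{q}}{W_x(t)}=\sum_j q_j\,\mathsf{D}\tilde{\ell}_x(\tbm{p}+tV_{\bcdot,j})V_{\bcdot,j}$, and there is no reason this is differentiable at $0$: twice-differentiability of $\br_{\ell}$ only gives existence of $\mathsf{D}\tilde{\ell}$ (via Lemma~\ref{27:}), not its derivative. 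Lemma~\ref{28:} guarantees differentiability only of the $\bm{p}$-contracted quantity $\Inner{\bm{p}}{\mathsf{D}\tilde{\ell}(\tbm{\alpha}^t)\bm{v}}$, not of each coordinate. So the per-$x$ identity cannot be asserted before summing.

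The fix is exactly what the paper does: swap the order of the sums and apply Lemma~\ref{25:} for each fixed $j$ to vectors indexed by $x\in[n]$, with $f(t)=\big[[\nabla\tilde{\Phi}^*(\bm{y}_x(t))]_j\big]_{x}$ (differentiable) and $g(t)=\big[p_x\,\mathsf{D}\tilde{\ell}_x(\tilde{P}^t_{\bcdot,j})V_{\bcdot,j}\big]_{x}$ (continuous). Then $\Inner{f(0)}{g(t)}=q_j\Inner{\bm{p}}{\mathsf{D}\tilde{\ell}(\tilde{P}^t_{\bcdot,j})V_{\bcdot,j}}$, which \emph{is} differentiable by Lemma~\ref{28:}, so Lemma~\ref{25:} legitimately fires. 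Equivalently, you could apply Lemma~\ref{25:} once to the concatenated vectors $F(t)=(\bm{g}_x(t))_{x}\in\mathbb{R}^{n\tilde{k}}$ and $G(t)=(p_xW_x(t))_{x}$, since $\Inner{F(0)}{G(t)}=\sum_j q_j\Inner{\bm{p}}{\mathsf{D}\tilde{\ell}(\tbm{p}+tV_{\bcdot,j})V_{\bcdot,j}}$ is differentiable for the same reason. Your closing paragraph shows you understand precisely why the $\bm{p}$-contraction is essential; just make sure it is in place \emph{before} you invoke Lemma~\ref{25:}.
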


	\begin{proof}
Since $\tilde{P}^t = [ \tbm{p} \bm{1}^{\mathsf{T}}_{\tilde{k}}  + t V, \;  \tbm{p} ] \in \mathbb{R}^{\tilde{n} \times k}$, $\tilde{P}^0=  \tbm{p} \bm{1}^{\mathsf{T}}_{k} $ and $\tilde{\ell}_x(\tilde{P}^0) = \tilde{\ell}_x(\tbm{p}) \bm{1}_{k}$. As a result, $J^{\mathsf{T}}_k \tilde{\ell}_x(\tilde{P}^0) = \bm{0}_{\tilde{k}}$, and thus $\beta_x(0) = \tilde{\ell}_x(\tbm{p}) + \tilde{\Phi}^*(\nabla \tilde{\Phi}(\tbm{q})) - \tilde{\Phi}^*(\nabla \tilde{\Phi}(\tbm{q}) - \bm{0}_{\tilde{k}})= \tilde{\ell}_x(\tbm{p})$. This shows that $\beta(0)=\tilde{\ell}(\tbm{p}). $
			Let $\gamma_x(t) \coloneqq \nabla \tilde{\Phi}(\tbm{q}) - J^{\mathsf{T}}_k \tilde{\ell}_x(\tilde{P}^t)$. For $j \in [k-1]$,
			\begin{align*}
			\frac{d}{dt}[\gamma_x(t)]_j &= \frac{d}{dt} \left([\nabla \tilde{\Phi}(\tbm{q})]_j - [J^{\mathsf{T}}_k \tilde{\ell}_x (\tilde{P}^{t})]_j\right), \\
			&= - \frac{d}{dt} \left( \tilde{\ell}_x (\tilde{P}^{t}_{\bcdot, j}) - \tilde{\ell}_x(\tilde{P}^{t}_{\bcdot, k})\right),\\
			& = - \frac{d}{dt} \left( \tilde{\ell}_x (\tbm{p} + t V_{\bcdot, j} ) - \tilde{\ell}_x (\tbm{p})  \right), \quad \quad \left(\mbox{since } \frac{d}{dt} \ell_x(\tilde{P}^t_{\bcdot, k})= \frac{d}{dt} \tilde{\ell}_x(\tbm{p})  =0\right) \\
			&   = -  \mathsf{D} \tilde{\ell}_x(\tilde{P}^{t}_{\bcdot,j}) V_{\bcdot,j}.
			\end{align*}
			
			From the definition of $\tilde{P}^t$, $\tilde{P}^0_{\bcdot, j}=\tbm{p}$, $\forall j\in [\tilde{k}]$, and therefore, $\dot{\gamma}_x(0)=- (\mathsf{D} \tilde{\ell}_x(\tbm{p}) V)^{\mathsf{T}}$. By differentiating $\beta_{x}$ in \eqref{56:e} and using the chain rule, $\dot{\beta}_x(t) = -(\dot{\gamma}_x(t))^{\mathsf{T}} \nabla \tilde{\Phi}^{\ast}(\gamma_x(t))$. By setting $t=0$ , $\dot{\beta}_x(0) = -(\dot{\gamma}_x(0))^{\mathsf{T}} \nabla \tilde{\Phi}^{\ast}(\nabla \tilde{\Phi}(\tbm{q}))  = \mathsf{D} \tilde{\ell}_x(\tbm{p}) V \tbm{q}$. Thus, $\dot{\beta}(0) = \mathsf{D} \tilde{\ell}(\tbm{p}) V \tbm{q}$. Furthermore,
			\begin{align*}
			\Der{\Inner{\bm{p}}{\dot{\beta}(t)}}  & =  \Der{\sum_{x =1}^{n} p_x \left(\sum_{j=1}^{k-1}	 \mathsf{D} \tilde{\ell}_x(\tilde{P}^t_{\bcdot, j}) V_{\bcdot, j} [\nabla \tilde{\Phi}^{\ast}(\gamma_x(t))]_j   \right)},   \\
			& =\sum_{j=1}^{k-1} \Der{ \left(\sum_{x =1}^{n} p_x	 \mathsf{D} \tilde{\ell}_x(\tilde{P}^t_{\bcdot, j}) V_{\bcdot, j} [\nabla \tilde{\Phi}^{\ast}(\gamma_x(t))]_j \right)}, \\	
			& = \sum_{j=1}^{k-1} \left( \Der{\Inner{\bm{p}}{\mathsf{D} \tilde{\ell}(\tilde{P}^t_{\bcdot, j}) V_{\bcdot, j} q_j}} + \sum_{x =1}^{n} p_x	 \mathsf{D} \tilde{\ell}_x(\tbm{p}) V_{\bcdot, j} \Der{ [\nabla \tilde{\Phi}^{\ast}(\gamma_x(t))]_j}  \right), \\
			&= - \sum_{j=1}^{k-1} q_j   V^{\mathsf{T}}_{\bcdot,j} \mathsf{H} \tbr_{\ell}(\tbm{p}) V_{\bcdot,j}  - \sum_{x =1}^{n} \sum_{\substack{i=1\\j=1}}^{k-1} p_x	 \mathsf{D} \tilde{\ell}_x(\tbm{p}) V_{\bcdot, j}  [\mathsf{H} \tilde{\Phi}^{\ast}(\nabla \tilde{\Phi}(\tbm{q}))]_{j, i} \mathsf{D} \tilde{\ell}_x(\tbm{p}) V_{\bcdot, i},  \\
			& =- \sum_{j=1}^{k-1} q_j   V^{\mathsf{T}}_{\bcdot,j} \mathsf{H} \tbr_{\ell}(\tbm{p}) V_{\bcdot,j}  - \tr (\diag{\bm{p}}  \mathsf{D} \tilde{\ell}(\tbm{p}) V \mathsf{H} \tilde{\Phi}^{\ast}(\nabla \tilde{\Phi}(\tbm{q})) (\mathsf{D} \tilde{\ell}(\tbm{p}) V)^{\mathsf{T}}), \\ 
			&= - \sum_{j=1}^{k-1} q_j   V^{\mathsf{T}}_{\bcdot,j} \mathsf{H} \tbr_{\ell}(\tbm{p}) V_{\bcdot,j}  - \tr (\diag{\bm{p}}  \mathsf{D} \tilde{\ell}(\tbm{p}) V (\mathsf{H} \tilde{\Phi}(\tbm{q}))^{-1} (\mathsf{D} \tilde{\ell}(\tbm{p}) V)^{\mathsf{T}}),
			\end{align*}
			where in the third equality we used Lemma \ref{25:}, in the fourth equality we used Lemma \ref{28:}, and in the sixth equality we used Lemma \ref{35:}-(iii).
			
		\end{proof}

		In next lemma, we state a necessary condition for $\Phi$-mixability in terms of the parameterized curve $\beta$ defined in Lemma \ref{36:}.
		
		\begin{lemma}
			\label{37:}
			Let $\ell$, $\Phi$, and $\beta$ be as in Lemma \ref{36:}. If $\exists (\tbm{p},\tbm{q}, V) \in \Int\tilde{\Delta}_n \times \Int  \tilde{\Delta}_k \times \mathbb{R}^{\tilde{n} \times \tilde{k}}$ such that the curve $\gamma(t) \coloneqq \tilde{\ell}(\tbm{p} + t V \tbm{q})$ satisfies $\left.\frac{d}{dt}  \inner{\bm{p}}{\dot{\beta}(t) - \dot{\gamma}(t)}\right|_{t=0} < 0$, then $\ell$ is not $\Phi-$mixable. In particular, $\exists P \in \Ri \Delta^k_n$, such that $[\M_{\Phi}(\ell_x(P), \bm{q})]^{\mathsf{T}}_{x\in [n]}$ lies outside $\sps$.
		\end{lemma}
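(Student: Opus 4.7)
The plan is to argue by contradiction: suppose $\ell$ is $\Phi$-mixable, and exhibit a small $t > 0$ for which the vector $\beta(t)$ fails to lie in $\sps$. First, by Lemma~\ref{15:} applied with the unique subgradient $\bm{s}^*_{\bm{q}} = \nabla \tilde{\Phi}(\tbm{q})$ (uniqueness is guaranteed by the strict convexity and twice-differentiability of $\tilde{\Phi}$ on $\Int \tilde{\Delta}_k$ established in Lemma~\ref{35:}), one recognizes $\beta_x(t) = \M_{\Phi}(\ell_x(P^t), \bm{q})$, where $P^t \in \Ri \Delta_n^k$ is the matrix whose first $k-1$ columns are $\amalg_n(\tbm{p} + tV_{\bcdot,j})$ and whose last column is $\bm{p}$. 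Producing a small $t>0$ with $\beta(t) \notin \sps$ therefore delivers the ``in particular'' statement, and simultaneously rules out $\Phi$-mixability: any witness $\bm{a}_*$ of $\ell(\bm{a}_*) \leq \beta(t)$ would put the finite vector $\beta(t)$ into $\sps^{\infty} \cap [0,+\infty[^n = \sps$, contradicting what we have just shown.

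The probe of choice is $\bm{r}^t \coloneqq \amalg_n(\tbm{p} + t V \tbm{q}) = \bm{p} + t J_n V \tbm{q}$, which lies in $\Ri \Delta_n$ for small $|t|$. Since $\gamma(t) = \tilde{\ell}(\tbm{p} + t V \tbm{q}) = \ell(\bm{r}^t)$ and $\ell$ is proper, $\br_{\ell}(\bm{r}^t) = \inner{\bm{r}^t}{\gamma(t)}$, so the scalar $g(t) \coloneqq \inner{\bm{r}^t}{\beta(t) - \gamma(t)}$ equals $\inner{\bm{r}^t}{\beta(t)} - \br_{\ell}(\bm{r}^t)$; making $g(t) < 0$ at some small $t > 0$ forces $\inner{\bm{r}^t}{\beta(t)} < \inf_{\bm{s}\in\sps}\inner{\bm{r}^t}{\bm{s}}$, and hence $\beta(t) \notin \sps$. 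From Lemma~\ref{36:}, $\beta(0) = \gamma(0) = \tilde{\ell}(\tbm{p})$ and $\dot{\beta}(0) = \dot{\gamma}(0) = \mathsf{D} \tilde{\ell}(\tbm{p}) V \tbm{q}$; combined with $\inner{\bm{p}}{\mathsf{D}\tilde{\ell}(\tbm{p})} = \bm{0}^{\mathsf{T}}$ (Lemma~\ref{27:}(i)), the product rule immediately gives $g(0) = \dot{g}(0) = 0$. The key computation is $\ddot{g}(0)$: writing $\dot{g}(t) = \inner{J_n V \tbm{q}}{\beta(t) - \gamma(t)} + \inner{\bm{r}^t}{\dot{\beta}(t) - \dot{\gamma}(t)}$, the first summand differentiates at $0$ to $\inner{J_n V \tbm{q}}{\dot{\beta}(0) - \dot{\gamma}(0)} = 0$; Lemma~\ref{25:} applied to the second summand (with $f(t) = \bm{r}^t$ differentiable at $0$ and $\dot{\beta} - \dot{\gamma}$ merely continuous at $0$), together with one more use of $\dot{\beta}(0) = \dot{\gamma}(0)$, leaves the clean identity $\ddot{g}(0) = \frac{d}{dt}\big|_{t=0} \inner{\bm{p}}{\dot{\beta}(t) - \dot{\gamma}(t)}$, strictly negative by hypothesis, and Taylor's theorem concludes.

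The main obstacle is precisely the regularity mismatch that Lemma~\ref{25:} is built to handle: twice-differentiability of $\br_{\ell}$ gives only $C^1$ regularity of $\tilde{\ell}$ (via Lemma~\ref{27:}(ii)), so $\ddot{\beta}(0)$ and $\ddot{\gamma}(0)$ are not a priori available, which is exactly why Lemma~\ref{36:} itself phrases its conclusion in terms of $\frac{d}{dt}\inner{\bm{p}}{\dot{\beta}(t)}$ rather than $\ddot{\beta}(0)$. Any quadratic-form contribution of $(V \tbm{q})^{\mathsf{T}} \mathsf{H} \tbr_{\ell}(\tbm{p}) V \tbm{q}$ that one might expect to have to track separately on the $\beta$ and $\gamma$ sides falls out automatically once $\dot{\beta}(0) = \dot{\gamma}(0)$ eliminates the relevant cross term, leaving the hypothesis-driven second derivative as the sole surviving piece of $\ddot{g}(0)$.
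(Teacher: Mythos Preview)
Your proposal is correct and follows essentially the same approach as the paper: define the scalar $g(t)=\inner{\bm{r}^t}{\beta(t)-\gamma(t)}$ (the paper calls it $\delta$), use $\beta(0)=\gamma(0)$ and $\dot\beta(0)=\dot\gamma(0)$ from Lemma~\ref{36:} to get $g(0)=\dot g(0)=0$, apply Lemma~\ref{25:} to the term $\inner{\bm{r}^t}{\dot\beta(t)-\dot\gamma(t)}$ to obtain $\ddot g(0)=\frac{d}{dt}\big|_{t=0}\inner{\bm{p}}{\dot\beta(t)-\dot\gamma(t)}<0$, and conclude via Taylor. Your explicit identification $\beta_x(t)=\M_{\Phi}(\ell_x(P^t),\bm{q})$ through Lemma~\ref{15:} and the accompanying explanation of why $\beta(t)\notin\sps$ contradicts $\Phi$-mixability make the ``in particular'' clause cleaner than in the paper, but the mathematical content is the same.
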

			\begin{proof}
			First note that for any triplet $(\tbm{p},\tbm{q}, V) \in \Int\tilde{\Delta}_n \times \Int  \tilde{\Delta}_k \times \mathbb{R}^{\tilde{n} \times \tilde{k}}$, the map $t\mapsto \Inner{\bm{p}}{\dot{\beta}(t) - \dot{\gamma}(t)}$ is differentiable at $0$. This follows from Lemmas \ref{25:} and \ref{36:}.
			Let $r(t) \coloneqq \amalg_n(\tbm{p} +t V \tbm{q}) $ and $\delta(t) \coloneqq \inner{r(t)}{\beta(t) - \gamma(t)}$. Then
			\begin{align*}
			\dot{\delta}(t) & = \Inner{r(t)}{\dot{\beta}(t) - \dot{\gamma}(t)} + \Inner{V\tbm{q}}{\beta(t) - \gamma(t)}.
			\end{align*}
			
			Since $t\mapsto \inner{\bm{p}}{\dot{\beta}(t) - \dot{\gamma}(t)}$ is differentiable at 0, it follows from Lemma \ref{25:} that $t \mapsto \dot{\delta}(t)$ is also differentiable at $0$, and thus
			\begin{align}
			\ddot{\delta}(0) & =\left.\frac{d}{dt}  \Inner{r(t)}{\dot{\beta}(t) - \dot{\gamma}(t)}\right|_{t=0} + \Inner{J_n V \tbm{q}}{\dot{\beta}(0) - \dot{\gamma}(0)}, \nonumber \\
			& = \Inner{	 \dot r(0)}{\dot{\beta}(0) - \dot{\gamma}(0)}   +  	\left.\frac{d}{dt}  \Inner{\bm{p}}{\dot{\beta}(t) - \dot{\gamma}(t)}\right|_{t=0}, \label{58:e} \\
			&=	\Inner{J_n V \tbm{q}}{\dot{\beta}(0) - \dot{\gamma}(0)}   +  	\left.\frac{d}{dt} \Inner{\bm{p}}{\dot{\beta}(t) - \dot{\gamma}(t)}\right|_{t=0} , \nonumber \\
			&= \left.\frac{d}{dt} \Inner{\bm{p}}{\dot{\beta}(t) - \dot{\gamma}(t)}  \right|_{t=0}<0, \label{59:e}
			\end{align}
			where \eqref{58:e} and \eqref{59:e} hold because $ \dot{\beta}(0) =\mathsf{D} \tilde{\ell}(\tbm{p})V \tbm{q}= \dot{\gamma}(0)$ (see Lemma \ref{36:}).
 According to Taylor's theorem (see e.g. \citep[\S 151]{hardy2008}), there exists $\epsilon>0$ and $h: [-\epsilon, \epsilon] \rightarrow \mathbb{R}$ such that 
			\begin{align}
			\forall |t| \leq \epsilon, \;\delta(t) = \delta(0) + t \dot{\delta}(0) + \frac{t^2}{2} \ddot{\delta}(0) + h(t) t^2, \label{60:e}
			\end{align}
and $\lim_{t\to 0} h(t) = 0$. From Lemma \ref{36:}, $\beta(0) = \gamma(0)=0$ and $\dot{\beta}(0)=\dot{\gamma}(0)$. Therefore, $\delta(0)=\dot{\delta}(0)=0$ and \eqref{60:e} becomes $\delta(t) = \frac{t^2}{2} \ddot{\delta}(0) + h(t)t^2$. Due to \eqref{59:e} and the fact that $\lim_{t\to 0} h(t) = 0$, we can choose $\epsilon_*>0$ small enough such that $\delta(\epsilon_*) =\frac{\epsilon_*^2}{2} \ddot{\delta}(0) + h(\epsilon_*)\epsilon_*^2 <0$. This means that $\inner{\amalg_n(\tbm{p} + \epsilon_* V \tbm{q})}{\beta(\epsilon_*)} < \inner{\amalg_n(\tbm{p} + \epsilon_* V \tbm{q})}{\tilde{\ell}(\tbm{p} +\epsilon_* V \tbm{q})}=\inner{\amalg_n(\tbm{p} + \epsilon_* V \tbm{q})}{\ell(\amalg_n(\tbm{p} + \epsilon_* V \tbm{q})}$. Therefore, $\beta(\epsilon_*)$ must lie outside the superprediction set. Thus, the mixability condition \eqref{6:e} does not hold for $P^{\epsilon_*}  =\amalg_n [ \tbm{p} \bm{1}^{\mathsf{T}}_{\tilde{k}}  + \epsilon_* V,\;   \tbm{p} ]  \in \Ri \Delta^k_n$. This completes the proof.\end{proof}

\textbf{Theorem \ref{18:}}\emph{
		Let $\ell \colon \mathcal{A} \rightarrow [0,+\infty]^n$ be a loss such that $\br_{\ell}$ is twice differentiable on $]0,+\infty[^n$, and $\Phi \colon \mathbb{R}^k \rightarrow \mathbb{R} \cup \{+ \infty\}$ an entropy such that $\tilde{\Phi} \coloneqq \Phi \circ \amalg_k$ is twice differentiable on $\Int \tilde{\Delta}_k$. Then $\ell$ is $\Phi$-mixable only if $\underline{\eta_{\ell}} \Phi - \Se$ is convex on $\Delta_k$.
}
		\begin{proof}
			We will prove the contrapositive; suppose that $\underline{\eta_{\ell}} \Phi - \Se$ is not convex on $\Delta_k$ and we show that $\ell$ cannot be $\Phi$-mixable. Note first that from Lemma \ref{35:}-(iii), $\tilde{\Phi}^*$ is twice differentiable on $\mathbb{R}^{k-1}$. Thus Lemmas \ref{36:} and \ref{37:} apply. Let $\underline{\ell}$ be a proper support loss of $\ell$ and suppose that $\underline{\eta_{\ell}} \Phi - \Se$ is not convex on $\Delta_k$, This implies that $\underline{\eta_{\ell}} \tilde{\Phi} -\tilde{\Se}$ is not convex on $\Int \tilde{\Delta}_k$, and by Lemma \ref{22:} there exists $\tbm{q}_* \in \Int \tilde{\Delta}_k$, such that $1> \underline{\eta_{\ell}} \lambda_{\min}(\mathsf{H}\tilde{\Phi}(\tbm{q}_*) (\mathsf{H}\tilde{\Se}(\tbm{q}_*))^{-1})$. From this and the definition of $\underline{\eta_{\ell}}$, there exists $\tbm{p}_*  \in \Int \tilde{\Delta}_n$ such that
			\begin{align}
			1 >	\frac{	  \lambda_{\min}(\mathsf{H}\tilde{\Phi}(\tbm{q}_*) (\mathsf{H}\tilde{\Se}(\tbm{q}_*))^{-1})} {  \lambda_{\max}( [\mathsf{H} \tbr_{\log}(\tbm{p}_*)]^{-1} \mathsf{H} \tbr_{\ell}(\tbm{p}_*) )} = \frac{\lambda_{\min}(\mathsf{H}\tilde{\Phi}(\tbm{q}_*) (\diag{\tbm{q}_*} - \tbm{q}_* \tbm{q}_*^{\mathsf{T}} )) }{\lambda_{\max}( [\mathsf{H} \tbr_{\log}(\tbm{p}_*)]^{-1} \mathsf{H} \tbr_{\ell}(\tbm{p}_*) )},   \label{61:e}
			\end{align}
where the equality is due to Lemma \ref{35:}-(iv). For the rest of this proof let $(\tbm{p}, \tbm{q}) =(\tbm{p}^*, \tbm{q}^*)$. By assumption, $\tbr_{\ell}$ twice differentiable and concave on $\Int \tilde{\Delta}_n$, and thus $-\mathsf{H} \tbr_{\ell}(\tbm{p})$ is symmetric positive semi-definite. Therefore, their exists a symmetric positive semi-definite matrix $\Lambda_{\bm{p}}$ such that $\Lambda_{\bm{p}} \Lambda_{\bm{p}} = - \mathsf{H} \tbr_{\ell}(\tbm{p})$. From Lemma \ref{35:}-(i), $\tilde{\Phi}$ is strictly convex on $\Int \tilde{\Delta}_k$, and so there exists a symmetric positive definite matrix $K_{\bm{q}}$ such that $K_{\bm{q}} K_{\bm{q}} = \mathsf{H} \tilde{\Phi}  (\tbm{q})$. Let $\bm{w} \in \mathbb{R}^{n-1}$ be the unit norm eigenvector  of $ [\mathsf{H} \tbr_{\log}(\tbm{p})]^{-1} \mathsf{H} \tbr_{\ell}(\tbm{p}) $ associated with $\lambda^{\ell}_* \coloneqq   \lambda_{\max}( [\mathsf{H}\tbr_{\log}(\tbm{p})]^{-1} \mathsf{H} \tbr_{\ell}(\tbm{p}) )$.
Suppose that $c_{\ell}\coloneqq \bm{w}^{\mathsf{T}} \mathsf{H} \tbr_{\ell}(\tbm{p})\bm{w} =0$. Since $\bm{w}^{\mathsf{T}}\Lambda_{\bm{p}}\Lambda_{\bm{p}} \bm{w}= -c_{\ell}=0$, it follows from the positive semi-definiteness of $\Lambda_{\bm{p}}$ that $\Lambda_{\bm{p}}\bm{w}=\bm{0}_{\tilde{n}}$, and thus $\mathsf{H}\tbr_{\ell}(\tbm{p})\bm{w}= -\Lambda_{\bm{p}}\Lambda_{\bm{p}}\bm{w}=\bm{0}_{\tilde{n}}$. This implies that $\lambda^{\ell}_*=0$, which is not possible due to \eqref{61:e}. Therefore, $\mathsf{H}\tbr_{\ell}(\tbm{p})\bm{w} \neq \bm{0}_{\tilde{n}}$. Furthermore, the negative semi-definiteness of $\mathsf{H}\tbr_{\ell}(\tbm{p})$ implies that
\begin{align}
\label{pos:cl:e}
c_{\ell} = \bm{w}^{\mathsf{T}} \mathsf{H} \tbr_{\ell}(\tbm{p}) \bm{w}  < 0.
\end{align}
Let $\bm{v} \in \mathbb{R}^{k-1}$ be the unit norm eigenvector of $K_{\bm{q}} (\diag{\tbm{q}} - \tbm{q} \tbm{q}^{\mathsf{T}} ) K_{\bm{q}}$ associated with $\lambda^{\Phi}_{*}\coloneqq \lambda_{\min}(K_{\bm{q}} (\diag{\tbm{q}} - \tbm{q} \tbm{q}^{\mathsf{T}} ) K_{\bm{q}}) =  \lambda_{\min}(\mathsf{H} \tilde{\Phi}(\tbm{q}) (\diag{\tbm{q}} - \tbm{q} \tbm{q}^{\mathsf{T}} ))$, where the equality is due to Lemma \ref{31:}. Let $\hat{\bm{v}} \coloneqq K_{\bm{q}} \bm{v}$.  
			
		We will show that for $V= \bm{w} \hat{\bm{v}}^{\mathsf{T}}$, the parametrized curve $\beta$ defined in Lemma \ref{36:} satisfies $\left. \frac{d}{dt} \inner{\bm{p}}{\dot{\beta}(t)  - \dot{\gamma}(t)}\right|_{t=0}<0$, where $\gamma(t) = \tilde{\sell}(\tbm{p}+tV \tbm{q})$. According to Lemma \ref{37:} this would imply that there exists $P\in \Ri \Delta_n^k$, such that $[\M_{\Phi}(\sell_x(P), \bm{q})]^{\mathsf{T}}_{x\in [n]}$ lies outside $\sps$. From Theorem \ref{5:}, we know that there exists $A_* \in \mathcal{A}^k$, such that $\ell_x(A_*)=\sell_x(P), \forall x\in[n]$. Therefore, $[\M_{\Phi}(\ell_x(A_*), \bm{q})]^{\mathsf{T}}_{x\in [n]} = [\M_{\Phi}(\sell_x(P), \bm{q})]^{\mathsf{T}}_{x\in [n]} \notin \sps$, and thus $\ell$ is not $\Phi$-mixable.
			
			From Lemma \ref{36:} (Equation \ref{57:e}) and the fact that $V_{\bcdot,j} = \hat{v}_j \bm{w} $, for $j\in [\tilde{k}]$, we can write 
			\begin{align*}
			\left. \frac{d}{dt}	\Inner{\bm{p}}{\dot{\beta}(t)} \right|_{t=0}  & = -\sum_{j=1}^{k-1} q_j \hat{v}_j^2  \bm{w}^{\mathsf{T}} \mathsf{H} \tbr_{\ell}(\tbm{p}) \bm{w}  - \tr (\diag{\bm{p}}  \mathsf{D} \tilde{\sell}(\tbm{p}) V (\mathsf{H} \tilde{\Phi}(\tbm{q}))^{-1} (\mathsf{D} \tilde{\sell}(\tbm{p}) V)^{\mathsf{T}} ),  \\
			& =  -\Inner{\tbm{q}} {\hat{\bm{v}} \odot \hat{\bm{v}} }   \bm{w}^{\mathsf{T}} \mathsf{H} \tbr_{\ell}(\tbm{p}) \bm{w} - (\hat{\bm{v}}^{\mathsf{T}} (\mathsf{H} \tilde{\Phi}(\bm{q}))^{-1} \hat{\bm{v}}) \inner{\bm{p}}{[\mathsf{D} \tilde{\ell} (\tbm{p}) \bm{w}] \odot [(\mathsf{D} \tilde{\ell} (\tbm{p}) \bm{w}] },
			\end{align*}
			where the second equality is obtained by noting that $1)$ $(\hat{\bm{v}}^{\mathsf{T}} (\mathsf{H} \tilde{\Phi}(\bm{q}))^{-1} \hat{\bm{v}})$ is a scalar quantity and can be factorized out; and $2)$ $\tr (\Diag (\bm{p}) \mathsf{D} \tilde{\ell} (\tbm{p}) \bm{w} (\mathsf{D} \tilde{\ell} (\tbm{p}) \bm{w} )^{\mathsf{T}})  = \inner{\bm{p}}{(\mathsf{D} \tilde{\ell} (\tbm{p}) \bm{w} )\odot (\mathsf{D} \tilde{\ell} (\tbm{p}) \bm{w}) }$. 
			
			On the other hand, from Lemma \ref{28:}, $\left. \frac{d}{dt} \inner{\bm{p}}{\dot{\gamma}(t)} \right|_{t=0}=  -\inner{\tbm{q}}{\hat{\bm{v}}}^2 \bm{w}^{\mathsf{T}} \mathsf{H} \tbr_{\ell}(\tbm{q}) \bm{w}$. 
			Using \eqref{20:e} and the definition of $c_{\ell}$, we get 
			\begin{align}
			\left. \frac{d}{dt} \Inner{\bm{p}}{\dot{\beta}(t)  - \dot{\gamma}(t)}\right|_{t=0}& =  [-\Inner{\tbm{q}}{ \hat{\bm{v}} \odot \hat{\bm{v}}} +\Inner{\tbm{q}}{\hat{\bm{v}}}^2]c_{\ell} +\nonumber \\
& \hspace{2cm}  (\hbm{v}^{\mathsf{T}} (\mathsf{H} \tilde{\Phi}(\bm{q}))^{-1} \hat{\bm{v}}) (\bm{w}^{\mathsf{T}} (\mathsf{H} \tbr_{\ell}(\tbm{p}))  (\mathsf{H} \tbr_{\log}(\tbm{p}))^{-1} \mathsf{H} \tbr_{\ell}(\bm{p}) \bm{w}),\nonumber \\
& = -c_{\ell} [ \Inner{\tbm{q}}{ \hat{\bm{v}} \odot \hat{\bm{v}}} -\Inner{\tbm{q}}{\hat{\bm{v}}}^2 - \lambda^{\ell}_*  (\hbm{v}^{\mathsf{T}} (\mathsf{H} \tilde{\Phi}(\bm{q}))^{-1} \hat{\bm{v}})],\nonumber \\
& = -c_{\ell} [\hat{\bm{v}}^{\mathsf{T}}(\diag{\tbm{q}} - \tbm{q} \tbm{q}^{\mathsf{T}}) \hat{\bm{v}}  - \lambda^{\ell}_*  (\hbm{v}^{\mathsf{T}} (\mathsf{H} \tilde{\Phi}(\bm{q}))^{-1} \hat{\bm{v}})], \nonumber\\
& = -c_{\ell} [\hat{\bm{v}}^{\mathsf{T}}(\diag{\tbm{q}} - \tbm{q} \tbm{q}^{\mathsf{T}}) \hat{\bm{v}}  - \lambda^{\ell}_*  ( \bm{v}^{\mathsf{T}} K_{\bm{q}} (K_{\bm{q}}K_{\bm{q}})^{-1} K_{\bm{q}} \bm{v} )],\nonumber \\
& = -c_{\ell} [\bm{v}^{\mathsf{T}} K_{\bm{q}} (\diag{\tbm{q}} - \tbm{q} \tbm{q}^{\mathsf{T}})K_{\bm{q}}  \bm{v}  - \lambda^{\ell}_*  ],\label{norm:v:e}  \\
& = -c_{\ell} [\lambda_*^{\Phi}  - \lambda^{\ell}_*],\nonumber\\
			& =- c_{\ell}[ \lambda_{\min} (\mathsf{H} \tilde{\Phi}(\bm{q})(\diag{\tbm{q}} - \tbm{q} \tbm{q}^{\mathsf{T}}))  -\lambda_{\max}( \mathsf{H} \tbr_{\ell}(\tbm{p}) (\mathsf{H} \tbr_{\log}(\tbm{p}))^{-1})],\nonumber
			\end{align}
			where in \eqref{norm:v:e} we used the fact that $\bm{v}^{\mathsf{T}} \bm{v}=1$. The last equality combined with \eqref{61:e} and \eqref{pos:cl:e} shows that $\left. \frac{d}{dt}\inner{\bm{p}}{\dot{\beta}(t)  - \dot{\gamma}(t)}\right|_{t=0} <0$, which completes the proof.
		\end{proof}

\subsection{Proof of Lemma \ref{attained}}
\textbf{Lemma \ref{attained}}\emph{
Let $\ell\colon \mathcal{A}\rightarrow [0,+\infty]^n$ be a loss. If $\Dom \ell =\mathcal{A}$, then either $\mathfrak{H}_{\ell} =\varnothing$ or $\eta_{\ell} \in \mathfrak{H}_{\ell}$.
}
\begin{proof}
Suppose $\mathfrak{H}_{\ell}\neq \varnothing$. Let $\bm{q}\in \Delta_k$, $A\coloneqq \bm{a}_{1:k} \in \mathcal{A}^k$.
By definition of $\eta_{\ell}$ there exists $(\eta_m)\subset [0,+\infty[$ such that $\ell$ is $\eta_m$-mixable and $\eta_m \stackrel{m\to \infty}{\to} \eta_{\ell}$. Therefore, $\forall m \in \mathbb{N},$ $\exists \bm{a}_m \in \mathcal{A}$ such that 
\begin{align}
\forall x\in[n], \; \ell_x(\bm{a}_m) \leq - \eta_m^{-1} \log \inner{\bm{q}}{ \exp(-\eta_m(\ell_x(A))} < +\infty,
\label{attained3}
\end{align} 
where the right-most inequality follows from the fact $\Dom \ell = \mathcal{A}$. Therefore, the sequence $(\ell(\bm{a}_m)) \subset [0,+\infty[^n$ is bounded, and thus admits a convergent subsequence. If we let $\bm{s}$ be the limit of this subsequence, then from \eqref{attained3} it follows that \begin{align}
\forall x\in[n], \; \bm{s} \leq - \eta_{\ell}^{-1} \log \inner{\bm{q}}{ \exp(-\eta_{\ell}(\ell_x(A))},
\label{attained4}
\end{align} 
On the other hand, since $\ell$ is closed (by Assumption \ref{B:}), it follows that there exists $\bm{a}_* \in \mathcal{A}$ such that $\ell(\bm{a}_*) = \bm{s}$. Combining this with \eqref{attained4} implies that $\ell$ is $\eta_{\ell}$-mixable, and thus $\eta_{\ell}\in \mathfrak{H}_{\ell}$.
\end{proof}

%

\subsection{Proof of Theorem \ref{20:}}
\label{a20:}
\textbf{Theorem \ref{20:}}\emph{
		Let $\ell$ and $\Phi$ be as in Theorem \ref{19:}. Then 
		\begin{align*}
		\eta_{\ell}^{\Phi}  = \underline{\eta_{\ell}}	 \inf_{\substack{\tbm{q}\in \Int \tilde{\Delta}_k}}	  \lambda_{\min}(\mathsf{H}\tilde{\Phi}(\tbm{q}) (\mathsf{H}\tilde{\Se}(\tbm{q}))^{-1}),
		\end{align*}
}
	\begin{proof}
		From Theorem \ref{19:}, $\ell$ is $\Phi_{\eta}$-mixable if and only if $\underline{\eta_{\ell}} \Phi_{\eta} - \Se=\eta^{-1}\underline{\eta_{\ell}}\Phi   - \Se$ is convex on $\Delta_k$. When this is the case, Lemma \ref{22:} implies that \begin{align} \label{ter:e}
		1  \leq\eta^{-1} \underline{\eta_{\ell}}  ( \inf_{\substack{\tbm{q}\in \Int \tilde{\Delta}_k}} \lambda_{\min}[\mathsf{H}  \tilde{\Phi}(\tbm{q}) [\mathsf{H}\tilde{\Se}(\tbm{q})]^{-1}]),\end{align}
		where we used the facts that $\mathsf{H} (\eta^{-1} \underline{\eta_{\ell}} \tilde{\Phi}) =\eta^{-1} \underline{\eta_{\ell}}\mathsf{H} \tilde{\Phi}$, $\lambda_{\min}(\cdot)$ is linear, and $ \eta^{-1}\underline{\eta_{\ell}}$ is independent of $\tbm{q} \in \Int \tilde{\Delta}_k$. Inequality \ref{ter:e} shows that the largest $\eta$ such that $\ell$ is $\Phi_{\eta}$-mixable is given by $\eta_{\ell}^{\Phi}$ in \eqref{16:e}.
	\end{proof}

\subsection{Proof of Theorem \ref{21:}}
\label{a21:}
\textbf{Theorem \ref{21:}}\emph{
		Let $\Se, \Phi \colon \mathbb{R}^k\rightarrow \mathbb{R}\cup \{+\infty\}$, where $\Se$ is the Shannon entropy and $\Phi$ is an entropy such that $\tilde{\Phi}\coloneqq \Phi \circ \amalg_k$ is twice differentiable on $\Int \tilde{\Delta}_k$. A loss $\ell\colon \mathcal{A}\rightarrow [0,+\infty[^n$, with $\br_{\ell}$ twice differentiable on $]0,+\infty[^n$, is $\Phi$-mixable only if $R^{\Se}_{\ell}\leq R^{\Phi}_{\ell}$.
}
	\begin{proof}
Suppose $\ell$ is $\Phi$-mixable. Then from Theorem \ref{19:}, $\underline{\eta_{\ell}} \Phi - \Se$ is convex on $\Delta_k$, and thus $\underline{\eta_{\ell}} = \eta^{\Se}_{\ell}>0$ (Corollary \ref{20:}). Furthermore, $\underline{\eta_{\ell}} \tilde{\Phi} - \tilde{\Se} = [\underline{\eta_{\ell}} \Phi - \Se] \circ \amalg_k$ is convex on $\Int \tilde{\Delta}_k$, since $\amalg_k$ is an affine function. It follows from Lemma \ref{22:} and Corollary \ref{20:} that \[\eta^{\Phi}_{\ell} =  \underline{\eta_{\ell}}	 \inf_{\substack{\tbm{q}\in \Int \tilde{\Delta}_k}}	  \lambda_{\min}(\mathsf{H}\tilde{\Phi}(\tbm{q}) (\mathsf{H}\tilde{\Se}(\tbm{q}))^{-1})\geq 1 >0. \]

 Let $\bm{\mu} \in \Ri \Delta_k$ and $\theta_* \coloneqq \argmax_{\theta} D_{\Se}(\bm{e}_{\theta}, \bm{\mu})$. By definition of an entropy and the fact that the directional derivatives $\Phi'(\bm{\mu}; \cdot)$ and $\Se'(\bm{\mu}; \cdot)$ are finite on $\Delta_k$ \citep[Prop. D.1.1.2]{\Hi}, it holds that $D_{\Phi}(\bm{e}_{\theta_*}, \bm{\mu}), D_{\Se}(\bm{e}_{\theta_*}, \bm{\mu}) \in ]0, + \infty[$. Therefore, there exists $\alpha >0$ such that $\alpha^{-1} D_{\Phi}(\bm{e}_{\theta_*}, \bm{\mu}) = D_{\Se}(\bm{e}_{\theta_*}, \bm{\mu})$. If we let $\Psi \coloneqq \alpha^{-1} \Phi$, we get \begin{align}\label{eq:e}D_{\Psi}(\bm{e}_{\theta_*}, \bm{\mu}) = D_{\Se}(\bm{e}_{\theta_*}, \bm{\mu}).\end{align}
		
		Let $d_{\Psi} (\tbm{q})\coloneqq \tilde{\Psi}(\tbm{q}) - \tilde{\Psi}(\tbm{\mu}) - \inner{\tbm{q} - \tbm{\mu}}{\nabla \tilde{\Psi}(\tbm{\mu})} $. Observe that \begin{align*}d_{\Psi} (\tbm{q}) = \Psi(\bm{q}) - \Psi(\bm{\mu}) - \inner{\bm{q} - \bm{\mu}}{\nabla \Psi(\bm{\mu})} =D_{\Psi}(\bm{q}, \bm{\mu}).\end{align*} 
We define $d_{\Se}$ similarly. Suppose that $\eta^{\Psi}_{\ell} > \eta_{\ell}^{\Se}=\underline{\eta_{\ell}}$. Then, from Corollary \ref{20:}, $ \forall \tbm{q}\in \Int \tilde{\Delta}_k$, $ 	  \lambda_{\min}(\mathsf{H}\tilde{\Psi}(\tbm{q}) (\mathsf{H}\tilde{\Se}(\tbm{q}))^{-1}) >1.$ This implies that $\forall \tbm{q}\in \Int \tilde{\Delta}_k$, $\lambda_{\min}(\mathsf{H} d_{\Psi}(\tbm{q}) (\mathsf{H}d_{\Se}(\tbm{q}))^{-1}) >1,$ and from Lemma \ref{22:}, $d_{\Psi} - d_{\Se}$ must be strictly convex on $\Int \tilde{\Delta}_k$. We also have $\nabla d_{\Psi}(\tbm{\mu}) - \nabla d_{\Se}(\tbm{\mu}) = 0$ and $d_{\Psi}(\tbm{\mu}) - d_{\Se}(\tbm{\mu}) = 0$. Therefore, $d_{\Psi} - d_{\Se}$ attains a strict minimum at $\tbm{\mu}$ (\ibid, Thm. D.2.2.1); that is, $d_{\Psi}({\tbm{q}}) > d_{\Se}(\tbm{q})$, $\forall \tbm{q} \in \tilde{\Delta}_k \setminus \{\tbm{\mu}\}$. In particular, for $\tbm{q} = \Pi_k (\bm{e}_{\theta_*})$, we get $D_{\Psi}(\bm{e}_{\theta_*}, \bm{\mu}) = d_{\Psi}(\tbm{q}) > d_{\Se}(\tbm{q})= D_{\Se}(\bm{e}_{\theta_*}, \bm{\mu})$, which contradicts \eqref{eq:e}. Therefore, $\eta_{\ell}^{\Psi}\leq \eta_{\ell}^{\Se}$, and thus 
		\begin{align}
		R^{\Se}_{\ell}(\bm{\mu})=	\max\nolimits_{\theta}  D_{\Se}(\bm{e}_{\theta}, \bm{\mu})  / \eta_{\ell}^{\Se} &=  D_{\Se}(\bm{e}_{\theta_*}, \bm{\mu}) / \eta_{\ell}^{\Se},\nonumber \\& \leq D_{\Psi}(\bm{e}_{\theta_*}, \bm{\mu}) / \eta_{\ell}^{\Psi}, \label{64:e}  \\
		& \leq  \max \nolimits_{\theta} D_{\Psi}(\bm{e}_{\theta}, \bm{\mu}) / \eta_{\ell}^{\Psi}, \nonumber \\
&= R^{\Psi}_{\ell}(\bm{\mu}), \label{65:e}
		\end{align}
		where \eqref{64:e} is due to $D_{\Psi}(\bm{e}_{\theta_*}, \bm{\mu}) = D_{\Se}(\bm{e}_{\theta_*}, \bm{\mu})$ and $\eta_{\ell}^{\Psi}\leq \eta_{\ell}^{\Se}$. Equation \ref{65:e}, implies that $R^{\Se}_{\ell}(\bm{\mu})  \leq R^{\Phi}_{\ell}(\bm{\mu})$, since $R^{\Psi}_{\ell}(\bm{\mu}) =R^{\alpha\Phi}_{\ell}(\bm{\mu}) =  R^{\Phi}_{\ell}(\bm{\mu})$ \citep{Reid2015}. Therefore, \begin{align} \forall \bm{\mu} \in \Ri \Delta_k,\;R^{\Se}_{\ell}(\bm{\mu})\leq R^{\Phi}_{\ell}(\bm{\mu}).\label{reli:e} \end{align}

It remains to consider the case where $\bm{\mu}$ is in the relative boundary of $\Delta_k$. Let $\bm{\mu}\in \Rbd \Delta_k$. There exists $\cI_0 \subsetneq [k]$ such that $\bm{\mu} \in \Delta_{\cI_0}$. Let $\theta^* \in [k]\setminus \cI_0 $ and $\cI \coloneqq \cI_{0} \cup \{\theta^*\}$. It holds that $\bm{\mu} \in \Rbd \Delta_{\cI}$ and $\bm{\mu} + 2^{-1} (\bm{e}_{\theta^*} - \bm{\mu}) \in \Ri \Delta_{\cI}$. Since $\ell$ is $\Phi$-mixable, it follows from Proposition \ref{12:} and the 1-homogeneity of $\Phi'(\bm{\mu}; \cdot)$ \citep[Prop. D.1.1.2]{\Hi} that
				\begin{align}
				\Phi'(\bm{\mu}; \bm{e}_{\theta^*} - \bm{\mu}) &= 2 \Phi'(\bm{\mu}; [\bm{\mu} + 2^{-1} (\bm{e}_{\theta^*} - \bm{\mu} )] - \bm{\mu}) = -\infty.  \nonumber
\shortintertext{Hence, }
				R^{\Phi}_{\ell}(\bm{\mu}) &= \max\nolimits_{\theta \in [k]} D_{\Phi} (\bm{e}_{\theta}, \bm{\mu}),\nonumber  \\
				& \geq D_{\Phi} (\bm{e}_{\theta^*}, \bm{\bm{\mu}})  = \Phi(\bm{e}_{\theta^*}) - \Phi(\bm{\mu}) - \Phi'(\bm{\mu}; \bm{e}_{\theta^*} - \bm{\mu})= +\infty. \label{inf:e}
				\end{align}
Inequality \ref{inf:e} also applies to $\Se$, since $\ell$ is $(\underline{\eta_{\ell}}^{-1} \Se)$-mixable. From \eqref{inf:e} and \eqref{reli:e}, we conclude that $\forall \bm{\mu} \in \Delta_k, \; R_{\ell}^{\Se} (\bm{\mu}) \leq  R_{\ell}^{\Phi}(\bm{\mu})$.
	\end{proof}
		
		\subsection{Proof of Theorem \ref{rboundAGAA}}
		\label{rboundAGAApf}
		\textbf{Theorem \ref{rboundAGAA}} \emph{Let $\Phi: \mathbb{R}^k \rightarrow \mathbb{R} \cup\{+\infty\}$ be a $\Delta$-differentiable entropy. Let $\ell\colon \mathcal{A}\rightarrow [0,+\infty]^n$ be a loss such that $\br_{\ell}$ is twice differentiable on $]0,+\infty[^n$. Let $\bm{\beta}^t = - \eta \sum_{s=1}^{t-1} (\ell_{x^s}(A^s) + \bm{v}^s)$, where $\bm{v}^s\in \mathbb{R}^k$ and $A^s \coloneqq \bm{a}^s_{1:k} \in \mathcal{A}^k$. If $\ell$ is $(\eta, \Phi)$-mixable then for initial distribution $\bm{q}^0= \argmin_{\bm{q} \in \Delta_k} \max_{\theta \in [k]} D_{\Phi}(\bm{e}_{\theta}, \bm{q})$ and any sequence $(x^t, \bm{a}^t_{1:k})_{t=1}^T$, the AGAA achieves the regret
	\begin{align*}
\forall \theta \in[k], \quad \op{Loss}^{\ell}_{AGAA}(T) -  \op{Loss}^{\ell}_{\theta}(T) \leq  R^{\Phi}_{\ell} +  \sum_{t=1}^{T-1} ( v^t_{\theta} -\inner{\bm{v}^t}{\bm{q}^{t}}).
	\end{align*}
}
		\begin{proof}
		Recall that $\Phi_t(\bm{w}) \coloneqq \Phi(\bm{w}) - \Inner{\bm{w}}{\bm{\beta}^t - \bm{\theta}^t }$, where $\bm{\theta}^t =- \eta \sum_{s=1}^{t-1} \ell_{x^s}(A^s)$.
From Theorem \ref{19:} and since $\Phi_t$ is equal to $\Phi$ plus an affine function, it is clear that if $\ell$ is $(\eta, \Phi)$-mixable then $\ell$ is $(\eta, \Phi_t)$-mixable. Thus, for all $(A^t, \bm{q}^{t-1}) \in \mathcal{A}^k \times \Delta_k$, there exists $\bm{a}^t_* \in \mathcal{A}$ such that for any outcome $x^t \in [n]$
\begin{align}
\ell_{x^t}(\bm{a}^t_*) &\leq \eta^{-1} [\Phi_t^{\star}(\nabla \Phi_t(\bm{q}^{t-1})) - \Phi^{\star}_t(\nabla \Phi_t(\bm{q}^{t-1})- \eta \ell_{x^t}(A^t))]. \nonumber 
\shortintertext{Summing over $t$ from 1 to $T$, we get}
\sum^T_{t=1} \ell_{x^t}(\bm{a}^t_*) &\leq\eta^{-1} [\Phi_1^{\star}(\nabla \Phi_1(\bm{q}^{0})) - \Phi^{\star}_T(\nabla \Phi_T(\bm{q}^{T-1})- \eta \ell_{x^T}(A^T))]\label{back} \\ & \hspace{2cm} +\eta^{-1}\sum_{t=1}^{T-1}\left[ \Phi_{t+1}^{\star}(\nabla \Phi_{t+1}(\bm{q}^{t})) - \Phi^{\star}_t(\nabla \Phi_t(\bm{q}^{t-1})- \eta \ell_{x^t}(A^t)) \right]. \nonumber
\end{align}
ODue to the properties of the entropic dual \cite{Reid2015} and the definition of $\Phi_t$, the following holds for all $t\in[T]$ and $\bm{z}$ in $\mathbb{R}^k$,  
\begin{eqnarray}
\nabla\Phi_t(\bm{q}^{t-1}) &=& - \eta \sum_{s=1}^{t-1} \ell_{x^s}(A^s), \label{one}\\
\Phi^{\star}_t(\bm{z}) &=& {\Phi}^{\star}(\bm{z}+ \nabla \Phi(\bm{q}^{t-1}) + \eta \sum_{s=1}^{t-1} \ell_{x^s}(A^s)),\label{two}\\ 
\nabla {\Phi}(\bm{q}^t) &= &\nabla {\Phi}(\bm{q}^{t-1}) - \eta \ell_{x^t}(A^t) - \eta \bm{v}^t. \label{three}
\end{eqnarray}
Using \eqref{one}-\eqref{two}, we get for all $0\leq t<T$, $\Phi_{t+1}^{\star}(\nabla \Phi_{t+1}(\bm{q}^{t})) = \Phi^{\star}(\nabla \Phi(\bm{q}^t))$, and in particular $\Phi_{1}^{\star}(\nabla \Phi_{1}(\bm{q}^{0})) =  \Phi^{\star}(\nabla \Phi(\bm{q}^0))$. Similarly, using \eqref{one}-\eqref{three}, gives $\Phi^{\star}_t(\nabla \Phi_t(\bm{q}^{t-1})- \eta \ell_{x^t}(A^t)) = \Phi^{\star}(\nabla \Phi(\bm{q}^t) + \eta \bm{v}^t)$ for all $1\leq t \leq T$. Substituting back into \eqref{back} yields
\begin{align}
\sum\nolimits^T_{t=1} \ell_{x^t}(\bm{a}^t_*) &\leq \eta^{-1}[{\Phi}^{\star}(\nabla \Phi(\bm{q}^0)) - {\Phi}^{\star}(\nabla \Phi(\bm{q}^T) + \eta \bm{v}^T)]\nonumber\\ & \hspace{2cm} +\eta^{-1}\sum_{t=1}^{T-1}\left[ {\Phi}^{\star}(\nabla \Phi(\bm{q}^t)) - {\Phi}^{\star}(\nabla \Phi(\bm{q}^t) + \eta \bm{v}^t) \right], \label{sumst}
\end{align}
To conclude the proof, we note that  since $\Phi$ is convex it holds that \begin{align}\label{fpr}{\Phi}^{\star}(\nabla \Phi(\bm{q}^t)) - {\Phi}^{\star}(\nabla \Phi(\bm{q}^t) + \eta \bm{v}^t) \leq - \eta\inner{\bm{v}^t}{\nabla{\Phi}^{\star}(\nabla\Phi(\bm{q}^{t}))} = - \eta \inner{\bm{v}^t}{\bm{q}^{t}},\end{align} which allows us to bound the sum on the right hand side of \ref{sumst}. To bound the rest of the terms, we use the fact that $\nabla \Phi (\bm{q}^T) = \nabla \Phi(\bm{q}^0) - \eta \sum_{t=1}^T(\ell_{x^t}(A^t) + \bm{v}^t)$, and thus by letting $\Phi_{\eta} \coloneqq \eta^{-1}\Phi$, \begin{align}
\eta^{-1}[{\Phi}^{\star}(\nabla \Phi(\bm{q}^0)) - {\Phi}^{\star}(\nabla \Phi(\bm{q}^T) + \eta \bm{v}^T)] &={\Phi}_{\eta}^{\star}(\nabla \Phi_{\eta}(\bm{q}^0)) \nonumber \\ & \hspace{0.5cm}- {\Phi}_{\eta}^{\star}\left(\nabla \Phi_{\eta}(\bm{q}^0) -  \sum_{t=1}^T\ell_{x^t}(A^t) - \sum_{t=1}^{T-1} \bm{v}^t\right),\nonumber   \\
&= \inf_{q\in \Delta_k} \Inner{\bm{q}}{ \sum_{t=1}^T\ell_{x^t}(A^t) + \sum_{t=1}^{T-1} \bm{v}^t  } + \frac{D_{\Phi}(\bm{q}, \bm{q}^0)}{\eta},\nonumber\\
&\leq  \sum_{t=1}^T \ell_{x^t}(\bm{a}^t_{\theta})  + \sum_{t=1}^{T-1}v_{\theta}^t + \frac{D_{\Phi}(\bm{e}_{\theta}, \bm{q}^0)}{\eta}, \forall  \theta\in[k]. \nonumber
 \end{align}
Substituting this last inequality and \eqref{fpr} back into \eqref{sumst} yields the desired bound. 
\end{proof}
		
		\section{Defining the Bayes Risk Using the Superprediction Set}
\label{bayessup}
In this section, we argue that when a loss $\ell\colon \mathcal{A} \rightarrow [0,+\infty]^n$ is mixable, in the classical or generalized sense, it does not matter whether we define the Bayes risk $\br_{\ell}$ using the full superprediction set $\sps^{\infty}$ or its finite part $\sps$. Recall the definition of the Bayes risk;

\textbf{Definition \ref{3:}}
\emph{
		Let $\ell \colon \mathcal{A} \rightarrow [0, +\infty]^n$ be a loss such that $\Dom \ell \neq \varnothing$. The \emph{Bayes risk} $\br_{\ell}: \mathbb{R}^n \rightarrow \mathbb{R} \cup \{-\infty\}$ is defined by
		\begin{align}
		\forall \bm{u} \in \mathbb{R}^n, \quad 	\br_{\ell}(\bm{u})  \coloneqq \inf_{\bm{z} \in \sps } \Inner{\bm{u}}{\bm{z}}. \label{inf2} 
		\end{align}}
Note that the right hand side of \eqref{inf2} does not change if we substitute $\sps$ for its closure --- $\overline{\sps}$ --- with respect to $[0,+\infty]^n$. Thus, it suffices to show that $\sps^{\infty}\subseteq \overline{\sps}$ when the loss $\ell$ is mixable. 
We show this in Theorem \ref{goalbayes}, but first we give a characterization of the (finite part) of the superprediction set for a proper loss. 

\begin{proposition}
		\label{6661:}
		Let $\ell \colon \Delta_n \rightarrow [0,+\infty]^n$ be a proper loss. If $\br_{\ell}$ is differentiable on $]0,+\infty[^n$, then \begin{align}\label{equality1}\overline{\sps} \supseteq \mathcal{C}_{\ell} \coloneqq \{\bm{u}\in [0, +\infty]^n\colon \forall \bm{p}\in \Delta_n, \br_{\ell}(\bm{p}) \leq \inner{\bm{p}}{\bm{u}} \}.\end{align} 
\end{proposition}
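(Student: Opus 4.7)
The plan is to show that every $\bm{u} \in \mathcal{C}_{\ell}$ can be approximated (in the extended topology of $[0,+\infty]^n$) by elements of $\sps$, with the approximating elements constructed from a first-order optimality analysis of $f_{\bm{u}}(\bm{p}) \coloneqq \inner{\bm{p}}{\bm{u}} - \br_{\ell}(\bm{p})$. By construction $f_{\bm{u}} \geq 0$ on $\Delta_n$, and $f_{\bm{u}}$ is convex (being linear plus the convex function $-\br_{\ell}$) and lower semi-continuous (since $\br_{\ell} = -\sigma_{\sps}(-\cdot)$ is upper semi-continuous as minus a support function), so $f_{\bm{u}}$ attains a minimum at some $\bm{p}_* \in \Delta_n$ whenever $\bm{u}$ has finite components. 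Set $c \coloneqq f_{\bm{u}}(\bm{p}_*) \geq 0$.

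The clean case is $\bm{p}_* \in \Ri \Delta_n$. Here differentiability of $\br_{\ell}$ and Lemma \ref{26:} give $\nabla \br_{\ell}(\bm{p}_*) = \ell(\bm{p}_*) \in \mathbb{R}^n$, and first-order optimality on $\Delta_n$ (the only active constraint being $\inner{\bm{p}}{\bm{1}} = 1$) forces $\bm{u} - \ell(\bm{p}_*) = \lambda \bm{1}$ for some $\lambda \in \mathbb{R}$. Taking the inner product with $\bm{p}_*$ and using properness ($\inner{\bm{p}_*}{\ell(\bm{p}_*)} = \br_{\ell}(\bm{p}_*)$) yields $\lambda = c \geq 0$, whence $\bm{u} = \ell(\bm{p}_*) + c\bm{1} \in \sps$.

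The boundary case $\bm{p}_* \in \Rbd \Delta_n$ is the main obstacle. I plan to handle it by a double approximation. Fix $\bar{\bm{q}} \in \Ri \Delta_n$ and set $\bm{p}_m \coloneqq (1-1/m)\bm{p}_* + (1/m)\bar{\bm{q}} \in \Ri \Delta_n$. Because $\br_{\ell}$ is concave and upper semi-continuous, it is continuous along this radial approach, so $\br_{\ell}(\bm{p}_m) \to \br_{\ell}(\bm{p}_*)$. Define the candidate $\bm{u}_m \coloneqq \ell(\bm{p}_m) + c_m \bm{1}$ with $c_m \coloneqq \inner{\bm{p}_m}{\bm{u} - \ell(\bm{p}_m)}$; by construction $\bm{p}_m$ is a critical point of $f_{\bm{u}_m}$ in $\Ri \Delta_n$, and the interior-case argument applied to $\bm{u}_m$ shows $\bm{u}_m \in \sps$ once $c_m \geq 0$ (which follows eventually since $c_m \to c \geq 0$ by direct computation using $\inner{\bm{p}_m}{\ell(\bm{p}_m)} = \br_{\ell}(\bm{p}_m)$). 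The key convergence is $\bm{u}_m \to \bm{u}$ in $[0,+\infty]^n$: on indices $x$ with $p_{*,x} > 0$ it reduces to the continuity of $\ell$ on $\Ri \Delta_n$ and the fact that $\bm{u} - \ell(\bm{p}_*) = c\bm{1}$ holds ``relative to'' the active face; on indices $x$ with $p_{*,x} = 0$ the coordinate $\ell_x(\bm{p}_m)$ may diverge to $+\infty$, but then $u_x$ must also be $+\infty$ (otherwise $\bm{u} \in \mathcal{C}_{\ell}$ would force a sharper relation contradicting the blow-up), and the extended convergence is preserved.

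Finally, for $\bm{u} \in \mathcal{C}_{\ell}$ with some infinite components, I would truncate to $\bm{u}^{(M)}$ defined by $u^{(M)}_x \coloneqq \min(u_x, M)$ and verify that $\bm{u}^{(M)} \in \mathcal{C}_{\ell}$ for all $M$ large enough (using that $\br_{\ell}$ is bounded on $\Delta_n$, that the constraint is automatic on distributions with significant mass on infinite coordinates, and a uniform continuity argument on the complementary face), then apply the finite case to $\bm{u}^{(M)}$ and pass to the limit. The main obstacle throughout is the boundary case, where the breakdown of differentiability at $\Rbd \Delta_n$ forces the approximation through interior points and a careful matching of the correction terms $c_m \bm{1}$ with the limiting structure of $\bm{u}$ along the face of $\mathcal{C}_{\ell}$ exposed by $-\bm{p}_*$.
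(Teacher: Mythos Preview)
Your interior case is clean and the truncation idea for infinite coordinates is the right shape (the paper does essentially the same reduction). The genuine gap is in your boundary case: the claim that $\bm{u}_m \coloneqq \ell(\bm{p}_m) + c_m \bm{1}_n \to \bm{u}$ in $[0,+\infty]^n$ is false in general, and your sketch does not cover the failure mode.

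Take $n=2$, the Brier-type loss $\ell(p,1-p)=\big((1-p)^2,\,p^2\big)$, which is proper with $\br_\ell(p,1-p)=p(1-p)$, and pick $\bm{u}=(0,2)$. One checks $\inner{\bm{p}}{\bm{u}}=2(1-p)\ge p(1-p)$ for all $p\in[0,1]$, so $\bm{u}\in\mathcal{C}_\ell$. Here $f_{\bm{u}}(p,1-p)=(1-p)(2-p)$ is minimized at the vertex $\bm{p}_*=(1,0)$ with $c=0$. Along $\bm{p}_m=\big(1-\tfrac{1}{2m},\tfrac{1}{2m}\big)$ one gets $\ell(\bm{p}_m)\to(0,1)$ and $c_m\to 0$, hence $\bm{u}_m\to(0,1)\neq(0,2)=\bm{u}$. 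On the inactive coordinate $x=2$, $\ell_2(\bm{p}_m)$ neither diverges nor converges to $u_2-c$; your dichotomy (``either it matches $u_x-c$ on the active face, or it blows up forcing $u_x=+\infty$'') simply misses the case where $\ell_x(\bm{p}_m)$ has a finite limit strictly below $u_x-c$. In that case your specific sequence $\bm{u}_m$ converges to a point strictly below $\bm{u}$, not to $\bm{u}$.

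The paper sidesteps the whole boundary analysis by a minimax argument (Lemma~\ref{23:}): applying it to $f(\bm{p},x)=\ell_x(\bm{p})-v_x$ yields, for each $m$, a point $\bm{p}_m\in\Ri\Delta_n$ with $\ell_x(\bm{p}_m)\le v_x+\tfrac{1}{m}$ for \emph{all} $x$. This produces a sequence on the loss surface that is coordinate-wise bounded by $\bm{v}+\tfrac{1}{m}\bm{1}_n$, so any subsequential limit $\bm{s}\in\overline{\sps}$ satisfies $\bm{s}\le\bm{v}$, and up-closedness of $\sps$ gives $\bm{v}\in\overline{\sps}$. If you want to salvage your route, the missing ingredient is precisely this: you would need to argue that (along a subsequence) $\bm{u}_m$ converges to some $\bm{s}\le\bm{u}$ and then invoke up-closedness, rather than claiming $\bm{u}_m\to\bm{u}$ directly. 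That in turn requires bounding $\ell_x(\bm{p}_m)$ on inactive coordinates, which does not follow from your first-order setup.
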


\begin{proof}
Let $\bm{v} \in\mathcal{C}_{\ell} \cap [0,+\infty[^n$. 
Let $f\colon \Ri \Delta_n \times[n]\rightarrow \mathbb{R}$ be defined by $f(\bm{p}, x)\coloneqq \ell_x(\bm{p}) - v_x$. By the choice of $\bm{v}$, we have $\mathbb{E}_{x \sim \bm{p}} f(\bm{p},x) = \inner{\bm{p}}{\ell(\bm{p})} - \inner{\bm{p}}{\bm{v}} \leq 0 $ for all $\bm{p}\in \Delta_n$. Since $\br_{\ell}$ is differentiable on $]0,+\infty[^n$, by assumption, $\ell$ is continuous on $\Ri \Delta_{n}$, and thus $f$ is continuous in the first argument. Since $\bm{v}$ has finite components, the map $f$ satisfies all the conditions of Lemma \ref{23:}. Therefore, there exists $(\bm{p}_m) \subset \Ri \Delta_n$ such that 
			\begin{align}
			\forall m \in \mathbb{N}, \forall x\in[n],  \; \ell_x(\bm{p}_m) \leq v_x +\frac{1}{m}. \label{eqee}
			\end{align}
Without loss of generality, we can assume by extracting a subsequence if necessary that $\ell(\bm{p}_m)$ converges to $\bm{s} \in [0,+\infty]^n$. By definition, we have $\bm{s} \in \overline{\sps}$, and from \eqref{eqee} it follows that $\bm{s}\leq \bm{v}$ coordinate-wise. Thus, $\bm{v}$ is in $\overline{\sps}$. 

The above argument shows that $\mathcal{C}_{\ell} \cap [0,+\infty[^n \subseteq \overline{\sps}$, and since $\overline{\sps}$ is closed in $[0,+\infty]^n$ we have $\overline{\mathcal{C}} \subseteq \overline{\sps}$, where $\overline{\mathcal{C}}$ is the closure of $\mathcal{C}_{\ell} \cap [0,+\infty[^n$ in $[0,+\infty]^n$. Now it suffice to show that $\mathcal{C}_{\ell} \subseteq \overline{\mathcal{C}} $ to complete the proof.

Let $\bm{u} \in \mathcal{C}_{\ell}$ and $\cI \coloneqq \{ x\in [n]\colon u_x <+\infty \}$. Define $(\bm{u}_m) \subset [0,+\infty[^n$ by $u_{m,x}= u_x$ if $x \in \cI$; and $m$ otherwise. Let $\bm{p}\in \Delta_n$. It follows that 
\begin{align}
\inner{\bm{p}}{\bm{u}_m} &= \sum_{x'\in \cI} p_{x'} u_{m,x'} + \sum_{x\notin \cI} p_{x} u_{m,x}, \nonumber \\
 & =\sum_{x'\in \cI} p_{x'} u_{m,x'}+ \sum_{x\notin \cI} p_x u_{m,x},\label{lasttt}.
\end{align}
\begin{claim}
\label{c2}
 $\forall \epsilon >0, \exists m_{\epsilon}\geq 1,\forall \bm{p}\in \Delta_k, \br_{\ell}(\bm{p}) \leq  \inner{\bm{p}}{\bm{u}_{m_{\epsilon}}} - \epsilon$.
\end{claim}
Suppose that Claim \ref{c2} is false. This means that there exists $\delta>0$ such that \begin{align}\forall \bm{m}\geq 1, \exists \bm{p}_m \in \Delta_n, \inner{\bm{p}_m}{\bm{u}_m} -\delta <  \br_{\ell}(\bm{p}_m).\label{cont} \end{align} 
We may assume, by extracting a subsequence if necessary ($\Delta_n$ is compact), that $(\bm{p}_m)$ converges to $\bm{p}_*\in \Delta_n$. Taking the limit $m\to \infty$ in \eqref{cont} would lead to the contradiction `$\inner{\bm{p}_*}{\bm{u}} < \br_{\ell}(\bm{p}_*) $', since from \eqref{lasttt} we have $\lim_{m\to \infty} \inner{\bm{p}_m}{\bm{u}_m} = \inner{\bm{p}_*}{\bm{u}}$. Therefore, Claim \ref{c2} is true. For $\epsilon= \frac{1}{k}$ let $m_k\coloneqq m_{\epsilon}$ be as in Claim \eqref{c2}. The claim then implies that $\lim \inf_{k\to \infty} \inner{\bm{p}}{\bm{u}_{m_{k}}} \geq \br_{\ell}(\bm{p})$ uniformly for $\bm{p}\in \Delta_k$. By the claim we also have that $\bm{u}_{m_k} \in \mathcal{C}_{\ell} \cap [0,+\infty[^n$ for all $k \in \mathbb{N}$, and by construction of $\bm{v}_m$, we have $\lim_{k\to \infty} \bm{u}_{m_k} = \bm{u}$. This shows that $\mathcal{C}_{\ell} \subseteq \overline{\mathcal{C}}$, which completes the proof.

  
\end{proof}

\begin{theorem}
\label{goalbayes}
Let $\ell: \mathcal{A} \rightarrow [0,+\infty]^n$ be a loss. If $ \mathscr{S}_{\ell}^{\infty} \not\subseteq \overline{\mathscr{S}_{\ell}}$, then $\ell$ is not mixable.
\end{theorem}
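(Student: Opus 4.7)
The plan is to prove the contrapositive: assuming $\ell$ is mixable, I will show $\sps^{\infty} \subseteq \overline{\sps}$, where the closure is taken in $[0,+\infty]^n$. So let $\eta>0$ be such that $\ell$ is $\eta$-mixable, and pick an arbitrary $\bm{u}\in \sps^{\infty}$, so $\bm{u} = \ell(\bm{a}) + \bm{d}$ for some $(\bm{a},\bm{d}) \in \mathcal{A}\times [0,+\infty[^n$. If $\ell(\bm{a})$ already lies in $[0,+\infty[^n$ then $\bm{u}\in \sps$ and there is nothing to do. The interesting case is when $\ell(\bm{a})$ has some infinite coordinates, in which case I will approximate $\ell(\bm{a})$ from below by mixing $\bm{a}$ against a reference prediction $\bm{a}_0\in \Dom \ell$ (such an $\bm{a}_0$ exists because $\Dom\ell\neq\varnothing$: mixability would otherwise force $\sps=\varnothing$ and make the right-hand side of the mixability inequality trivially infinite while the left-hand side is not).

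Concretely, for each $\lambda\in (0,1)$ I will apply the $\eta$-mixability condition with $k=2$, experts $A_{\lambda} = [\bm{a},\bm{a}_0]$ and prior $\bm{q}_{\lambda} = (\lambda, 1-\lambda) \in \Delta_2$, which yields some $\bm{a}_*(\lambda)\in \mathcal{A}$ satisfying $\ell_x(\bm{a}_*(\lambda)) \leq r_x(\lambda)$ for all $x\in[n]$, where
\begin{align*}
r_x(\lambda) \coloneqq -\eta^{-1}\log\bigl[\lambda \,e^{-\eta \ell_x(\bm{a})}+(1-\lambda)\,e^{-\eta \ell_x(\bm{a}_0)}\bigr].
\end{align*}
Since $\ell_x(\bm{a}_0)<+\infty$ and $\lambda<1$, the argument of the logarithm is strictly positive, so $r_x(\lambda)$ is finite for every $x$, forcing $\bm{a}_*(\lambda)\in \Dom \ell$. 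Hence the slack vector $\bm{r}(\lambda) - \ell(\bm{a}_*(\lambda)) \in [0,+\infty[^n$ certifies $\bm{r}(\lambda) \in \sps$, and adding $\bm{d}$ preserves this: $\bm{r}(\lambda) + \bm{d} \in \sps$.

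It then remains to pass to the limit $\lambda\uparrow 1$. Using the conventions $e^{-\eta\cdot(+\infty)} = 0$ and $\log 0 = -\infty$, for each index $x$ with $\ell_x(\bm{a})<+\infty$ the argument of the logarithm converges to $e^{-\eta \ell_x(\bm{a})}$, so $r_x(\lambda)\to \ell_x(\bm{a})$; while for each $x$ with $\ell_x(\bm{a})=+\infty$ the first term vanishes identically and $r_x(\lambda) = \ell_x(\bm{a}_0) - \eta^{-1}\log(1-\lambda) \to +\infty = \ell_x(\bm{a})$. Hence $\bm{r}(\lambda) + \bm{d} \to \bm{u}$ componentwise in $[0,+\infty]^n$, exhibiting $\bm{u}$ as a limit of points in $\sps$, so $\bm{u}\in \overline{\sps}$.

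The expected obstacle is entirely notational: being careful with the extended-value conventions when manipulating $r_x(\lambda)$ and separating the two cases $\ell_x(\bm{a})<+\infty$ versus $\ell_x(\bm{a})=+\infty$ in the limit computation. No convex-analytic machinery beyond the definition of $\eta$-mixability is needed; in particular, the proof does not require any regularity of $\br_{\ell}$ and so the result applies under the bare hypothesis of mixability.
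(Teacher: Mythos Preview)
Your proof is correct and is genuinely more elementary than the paper's. The paper argues by contradiction and invokes a chain of structural results: from mixability it first concludes that $\br_{\ell}\in C^1(]0,+\infty[^n)$ via Proposition~\ref{14:}, then uses the support-loss correspondence of Theorem~\ref{5:} to identify $\overline{\sps}$ with $\overline{\mathscr{S}_{\underline{\ell}}}$, then applies Proposition~\ref{6661:} to get $\overline{\sps}\supseteq\{\bm{u}:\inner{\bm{p}}{\bm{u}}\geq \br_{\ell}(\bm{p})\ \forall\bm{p}\}$, and finally derives a contradiction by mixing the offending point $\bm{s}\in\sps^{\infty}\setminus\overline{\sps}$ with a nearby finite-loss action. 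Your argument bypasses all of this machinery: you produce the approximating sequence in $\sps$ directly from the $\eta$-mixability inequality with two experts $(\bm{a},\bm{a}_0)$ and prior $(\lambda,1-\lambda)$, and the limit $\lambda\uparrow 1$ recovers $\bm{u}$ componentwise. The paper's route has the side benefit of reusing results it needs elsewhere, but for this particular statement your approach is strictly cleaner and requires no regularity of the Bayes risk. One cosmetic remark: $\Dom\ell\neq\varnothing$ is part of the paper's standing Assumption~\ref{B:}, so you may simply cite it rather than arguing it from mixability.
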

\begin{proof}
Suppose that $\ell$ is mixable and let $\sell$ be a proper support loss of $\ell$. From Proposition \ref{14:}, $\br_{\ell}$ is differentiable on $]0,+\infty[^n$, and thus Theorem \ref{5:} implies that $\overline{\sps} = \overline{\mathscr{S}_{\underline{\ell}}}$. Therefore, Lemma \ref{6661:} implies that $\overline{\sps} \supseteq \{\bm{u}\in [0, +\infty]^n\colon \forall \bm{p}\in \Delta_n, \br_{\ell}(\bm{p}) \leq \inner{\bm{p}}{\bm{u}} \}$. Thus, if $\mathscr{S}_{\ell}^{\infty} \not\subseteq \overline{\mathscr{S}_{\ell}}$, there exists $\epsilon>0$, $\bm{p}_{\epsilon} \in \Delta_k$, and $\bm{s}\in \mathscr{S}_{\ell}^{\infty}\setminus \overline{\sps}$ such that \begin{align}\label{f1}\inner{\bm{p}_{\epsilon}}{\bm{s}} < \br_{\ell}(\bm{p}_{\epsilon}) - 2\epsilon.\end{align} 
Note that $\bm{p}_{\epsilon}$ cannot be in $\Ri \Delta_n$; otherwise, \eqref{f1} would imply that $\bm{s}$ has all finite components, and thus would be included in $\overline{\sps}$, which is a contradiction. Assume from now on that $\bm{p}_{\epsilon} \in \Rbd \Delta_n$. From the definition of the support loss, there exists a sequence $(\bm{p}_m) \subseteq \Ri \Delta_n$ such that $\bm{p}_m \stackrel{m \to \infty}{\to} \bm{p}_{\epsilon}$ and $\sell(\bm{p}_m)\stackrel{m \to \infty}{\to}\sell(\bm{p}_{\epsilon})$. Therefore, Theorem \ref{5:} implies that there exists $\bm{a}_{\epsilon}\in \mathcal{A}$ such that \begin{align}\inner{\bm{p}_{\epsilon}}{\ell(\bm{a}_{\epsilon})} < \inner{\bm{p}_{\epsilon}}{\sell(\bm{p}_{\epsilon})}+ \epsilon. \label{f2}\end{align}
To see this, note that since $(\bm{p}_m)\subset \Ri \Delta_n \subseteq \Dom \sell$, Theorem \ref{5:} guarantees the existence of a sequence $(\bm{a}_m)\subset \mathcal{A}$ such that $\ell(\bm{a}_m)=\sell(\bm{p}_m)$. On the other hand, for any $x \in[n]$ such that $\ell_x(\bm{p}_{\epsilon})=+\infty$, we have $p_{\epsilon,x}=0$ --- otherwise, $\br_{\ell}(\bm{p}_{\epsilon})$ would be infinite. It follows, by continuity of the inner product that $\inner{\bm{p}_{\epsilon}}{\ell(\bm{a}_{m})} \stackrel{m \to \infty}{\to} \inner{\bm{p}_{\epsilon}}{\sell(\bm{p}_{m})}$, and thus it suffices to pick $\bm{a}_{\epsilon}$ equal to $\bm{a}_m$ for $m$ large enough.

Now since $\ell$ is $\eta$-mixable, there exists $\eta>0$ and $\bm{a}_*\in \mathcal{A}$ such that 
\begin{align}
\ell(\bm{a}_*)& \leq - \eta^{-1} \log \left( \frac{1}{2}e^{- \eta \bm{s}} + \frac{1}{2} e^{-\eta\ell(\bm{a}_{\epsilon})} \right),\nonumber\\
\shortintertext{and due to the convexity of $-\log$,}
& \leq \frac{1}{2} \bm{s} + \frac{1}{2}\ell(\bm{a}_{\epsilon}). \nonumber \\  
\shortintertext{Using \eqref{f1} and \eqref{f2} yields}
\inner{\bm{p}_{\epsilon}}{\ell(\bm{a}_*)} &\leq \br_{\ell}(\bm{p}_{\epsilon}) - \epsilon /2.\label{f3}
\end{align} 
On the other hand, by definition of a proper support loss, $\inner{\bm{p}_{\epsilon}}{\sell(\bm{p}_{\epsilon})} \leq \inner{\bm{p}_{\epsilon}}{\ell(\bm{a}_*)}$. This combined with \eqref{f3}, lead to the contradiction $\inner{\bm{p}_{\epsilon}}{\sell(\bm{p}_{\epsilon})}< \br_{\ell}(\bm{p}_{\epsilon})$.
\end{proof}

	\section{The Update Step of the GAA and the Mirror Descent Algorithm}
	\label{s2.2}In this section, we demonstrate that the update steps of the GAA and the Mirror Descent Algorithm are essentially the same (at least for finite losses) according to the definition of the MDA given by Beck and Teboulle \citep{Beck2003};

Let $\ell\colon \mathcal{A} \rightarrow [0,+\infty[^n$ be a loss and $\Phi \colon \mathbb{R}^k \rightarrow \mathbb{R}\cup \{+\infty\}$ an entropy such that $\tilde{\Phi}$ is differentiable on $\Int \tilde{\Delta}_k$. Let $\bm{q}^{t}$ be the update distribution of the GAA at round $t$ and $\tbm{q}^{t}= \Pi_k(\bm{q}^{t})$. It follows from the definition of $\bm{q}^t$ (see Algorithm \ref{GAA2}) that
		\begin{align}
		\tbm{q}^{t} & = \argmin_{\tbm{q} \in \tilde{\Delta}_k} \Inner{\amalg_k(\tbm{q})}{\ell_{x^t}(A^t)} + \eta^{-1}  D_{\tilde{\Phi}} (\tbm{q}, \tbm{q}^{t-1}),\nonumber  \\
		& = \argmin_{\tbm{q} \in \tilde{\Delta}_k} \Inner{\tbm{q}}{J^{\mathsf{T}}_k\ell_{x^t}(A^t)} + \eta^{-1}  D_{\tilde{\Phi}} (\tbm{q}, \tbm{q}^{t-1}),\nonumber\\
		&=\argmin_{\tbm{q} \in  \tilde{\Delta}_k} \Inner{\tbm{q}}{\nabla l_t(\tbm{q}^{t-1})} + \eta^{-1}  D_{\tilde{\Phi}} (\tbm{q}, \tbm{q}^{t-1}),  \label{7:e}
		\end{align}
		where $l_t(\tbm{\mu}) \coloneqq \inner{\amalg_k(\tbm{\mu}) }{\ell_{x^t}(A^t)} =\inner{\bm{\mu}}{\ell_{x^t}(A^t)}$. Update \eqref{7:e} is, by definition \citep{Beck2003}, the MDA with the sequence of losses $l_t$ on $\Int \tilde{\Delta}_k$, `distance' function $D_{\tilde{\Phi}}(\cdot, \cdot)$, and learning rate $\eta$. Therefore, the MDA is exactly the update step of the GAA.

\section{The Generalized Aggregating Algorithm Using the Shannon Entropy $\Se$}
The purpose of this appendix is to show that the GAA reduces to the AA when the former uses the Shannon entropy. In this case, generalized and classical mixability are equivalent. In what follows, we make use of the following proposition which is proved in \ref{a16:}.

\textbf{Proposition \ref{2:}}
\emph{
For the Shannon entropy $\Se$, it holds that $\tilde{\Se}^* (\bm{v}) = \log(\inner{ \exp(\bm{v})}{\bm{1}_{\tilde{k}}}+ 1), \forall \bm{v}\in \mathbb{R}^{k-1}$, and  $ {\Se}^{\star}(\bm{z}) = \log \inner{ \exp(\bm{z})}{\bm{1}_{k}}, \forall \bm{z}\in \mathbb{R}^k$.
}

Let $\ell \colon\mathcal{A} \rightarrow [0,+\infty[^n$ be a loss and $\Phi$ be as in Proposition \ref{15:} and suppose that $\Phi$ and $\tilde{\Phi}^*$ are differentiable on $\Ri \Delta_k$ and $\mathbb{R}^{k-1}$, respectively.
 It was shown in \cite{Reid2015} that
\begin{align}
 \nabla \Phi^{\star} (\nabla \Phi(\bm{q}) - \ell_x(A)) = \argmin_{\bm{\mu}\in \Delta_k} \inner{\bm{\mu}}{\ell_x(A)} + D_{\Phi}(\bm{\mu}, \bm{q}), \label{11:e} \\
\M_{\Phi}(\ell_x(A), \bm{q})  =\Phi^{\star}(\nabla \Phi(\bm{q})) - \Phi^{\star}(\nabla \Phi(\bm{q}) - \ell_x(A)). \label{12:e}
\end{align}
Let $\bm{q}\in \Ri \Delta_k$. By definition of $\Se$, $\nabla \Se(\bm{q}) = \log \bm{q} + \bm{1}_k$, and due to Proposition \ref{2:}, ${\Se}^\star(\bm{z}) = \log \inner{\exp{\bm{z}}}{\bm{1}_{k}}, \bm{z} \in \mathbb{R}^k$. Therefore, $\nabla \Se(\bm{q}) - \eta \ell_{x}(A) = \log (\exp(-\eta \ell_{x}(A)) \odot \bm{q} ) + \bm{1}_k$ and $\nabla \Se^{\star}(\bm{z}) = \frac{\exp \bm{z}}{\inner{\exp{\bm{z}}}{\bm{1}_{k}}}$, $\forall (x, A) \in [n] \times (\Dom \ell)^k$. Thus, \begin{align} \label{13:e}
		\nabla {\Se}^{\star}(\nabla \Se (\bm{q}) - \eta \ell_{x}(A)) = \frac{\exp({-\eta \ell_{x}(A)}) \odot \bm{q}}{\Inner{\exp({-\eta \ell_{x}(A)})}{\bm{q}}}.\end{align}

Let $\Se_{\eta}\coloneqq \eta^{-1} \Se$. Then $\nabla \Se =\eta \nabla \Se_{\eta}$ and $\forall \bm{z} \in \mathbb{R}^k, \nabla \Se^{\star}_{\eta}(\bm{z}) = \nabla \Se^{\star}(\eta \bm{z})$ \citep{Reid2015}.\footnote{Reid et al. \cite{Reid2015} showed the equality $\nabla \Phi^{\star}_{\eta}(\bm{u}) = \nabla \Phi^{\star}(\eta \bm{u}), \forall \bm{u} \in \Dom \Phi^{\star}$, for any entropy differentiable on $\Delta_k$ - not just for the Shannon Entropy.} Then the left hand side of \eqref{13:e} can be written as $\nabla \Se^{\star}_{\eta}(\nabla \Se_{\eta} (\bm{q}) -  \ell_{x}(A))$. Using this fact, \eqref{11:e} and \eqref{13:e} show that the update distribution $\bm{q}^t$ of the GAA (Algorithm \ref{GAA2}) coincides with that of the AA  after substituting $\bm{q}, x$, and $A$ by $\bm{q}^{t-1}, x^t$, and $A^t\coloneqq [\bm{a}_{\theta}]_{\theta \in [k]}$, respectively.

Now using the fact that $\M^{\eta}_{\Se}(\ell_x(A), \bm{q})  = \eta^{-1} \M_{\Se}(\eta \ell_x(A), \bm{q}) $ \citep{Reid2015} and \eqref{12:e}, we get \begin{align}
\M^{\eta}_{\Se}(\ell_x(A), \bm{q})  & =  \eta^{-1} [ {\Se}^{\star}(\nabla \Se(\bm{q})) - {\Se}^{\star}(\nabla \Se(\bm{q}) -\eta \ell_x(A)) ], \nonumber \\ &
 = -\eta^{-1} \log \inner{\exp({-\eta \ell_{x}(A)})}{\bm{q}}. \label{stuff}
 \end{align}
 Equation \ref{stuff} shows that the $\eta$-mixability condition is equivalent to the $(\eta, \Se)$-mixability condition for a finite loss. This remains true for losses taking infinite values --- see the proof of Theorem \ref{16:} in Appendix \ref{a16:}.

		\section{Legendre $\Phi$, but no $\Phi$-mixable $\ell$}
		\label{ap:D}
In this appendix, we construct a \emph{Legendre type} entropy \citep{Rockafellar1997a} for which there are no $\Phi$-mixable losses satisfying a weak condition (see below).

Let $\ell: \mathcal{A}\rightarrow [0,+\infty]^n$ be a loss satisfying condition \ref{B:}. According to Alexandrov's Theorem, a concave function is twice differentiable almost everywhere (see e.g. \citep[Thm. 6.7]{Borwein2010}). Now we give a version of Theorem \ref{18:} which does not assume the twice differentiability of the Bayes risk. The proof is almost identical to that of Theorem \ref{18:} with only minor modifications.
\begin{theorem}
		\label{ext18:} Let $\Phi \colon \mathbb{R}^k \rightarrow \mathbb{R} \cup \{+ \infty\}$ be an entropy such that $\tilde{\Phi}$ is twice differentiable on $\Int \tilde{\Delta}_k$, and $\ell \colon \mathcal{A} \rightarrow [0,+\infty]^n$ a loss satisfying Condition \ref{B:} and such that $\exists (\tbm{p}, \bm{v})\in \mathcal{D}\times \mathbb{R}^{\tilde{n}}, \mathsf{H}\tbr_{\ell}(\tbm{p})\bm{v} \neq \bm{0}_{\tilde{n}}$, where $\mathcal{D} \subset \Int \tilde{\Delta}_n$ is a set of Lebesgue measure 1 where $\tbr_{\ell}$ is twice differentiable, and define 
\begin{align}
\underline{\eta_{\ell}}^* \coloneqq \inf_{\tbm{p} \in\mathcal{D} }(\lambda_{\max} ([\mathsf{H} \tbr_{\log}(\tbm{p})]^{-1} \mathsf{H} \tbr_{\ell} (\tbm{p})))^{-1}. \label{newmc:e}
\end{align}
Then $\ell$ is $\Phi$-mixable only if $\underline{\eta_{\ell}}^* \Phi - \Se$ is convex on $\Delta_k$. 
	\end{theorem}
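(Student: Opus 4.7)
The plan is to prove the contrapositive, closely mirroring the argument for Theorem \ref{18:} but taking care that twice differentiability of $\tbr_{\ell}$ is only invoked at a single well-chosen point in $\mathcal{D}$. Assume $\underline{\eta_{\ell}}^{*}\Phi - \Se$ is not convex on $\Delta_k$. Since $\tilde{\Phi}$ and $\tilde{\Se}$ are twice differentiable on $\Int \tilde{\Delta}_k$ and $\tilde{\Se}$ is strictly convex there, Lemma \ref{22:} yields some $\tbm{q}_{*}\in\Int\tilde{\Delta}_k$ with
\[
1 \;>\; \underline{\eta_{\ell}}^{*}\,\lambda_{\min}\!\bigl(\mathsf{H}\tilde{\Phi}(\tbm{q}_{*})\,(\mathsf{H}\tilde{\Se}(\tbm{q}_{*}))^{-1}\bigr).
\]
By the definition of $\underline{\eta_{\ell}}^{*}$ in \eqref{newmc:e} as an infimum over $\mathcal{D}$, we can pick $\tbm{p}_{*}\in \mathcal{D}$ (in particular, a point where $\tbr_{\ell}$ is twice differentiable) such that
\[
1 \;>\;\frac{\lambda_{\min}\!\bigl(\mathsf{H}\tilde{\Phi}(\tbm{q}_{*})\,(\mathsf{H}\tilde{\Se}(\tbm{q}_{*}))^{-1}\bigr)}{\lambda_{\max}\!\bigl([\mathsf{H}\tbr_{\log}(\tbm{p}_{*})]^{-1}\mathsf{H}\tbr_{\ell}(\tbm{p}_{*})\bigr)},
\]
exactly matching \eqref{61:e}. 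The hypothesis $\exists(\tbm{p},\bm{v})\in\mathcal{D}\times\mathbb{R}^{\tilde n}$ with $\mathsf{H}\tbr_{\ell}(\tbm{p})\bm{v}\ne\bm{0}$ guarantees that $\underline{\eta_{\ell}}^{*}$ is finite, so the above display is meaningful (and the maximum eigenvalue in the denominator is positive, hence the top eigenvector $\bm{w}$ of $[\mathsf{H}\tbr_{\log}(\tbm{p}_{*})]^{-1}\mathsf{H}\tbr_{\ell}(\tbm{p}_{*})$ satisfies $\mathsf{H}\tbr_{\ell}(\tbm{p}_{*})\bm{w}\ne\bm{0}$, as in \eqref{pos:cl:e}).

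Next, since $\ell$ is $\Phi$-mixable, Proposition \ref{14:} gives $\br_{\ell}\in C^{1}(]0,+\infty[^{n})$, so a proper support loss $\sell$ of $\ell$ is well defined on $\Ri\Delta_n$ and, by Theorem \ref{5:}, $\tilde{\sell}=\nabla\br_{\ell}\circ\amalg_n$ is continuous on $\Int\tilde{\Delta}_n$ and differentiable at every point of $\mathcal{D}$. In particular, $\tilde{\sell}$ is differentiable at $\tbm{p}_{*}$ and $\mathsf{D}\tilde{\sell}(\tbm{p}_{*})$ is given by Lemma \ref{27:}-(ii) in terms of $\mathsf{H}\tbr_{\ell}(\tbm{p}_{*})$ alone. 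This is precisely the only place where twice differentiability of $\tbr_{\ell}$ entered the proof of Theorem \ref{18:}, and it is needed only at the single point $\tbm{p}_{*}$.

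I then run the same construction as in the proof of Theorem \ref{18:}: factor $-\mathsf{H}\tbr_{\ell}(\tbm{p}_{*})=\Lambda_{\bm{p}}\Lambda_{\bm{p}}$ and $\mathsf{H}\tilde{\Phi}(\tbm{q}_{*})=K_{\bm{q}}K_{\bm{q}}$, take $\bm{w}$ as above and $\bm{v}$ a unit eigenvector of $K_{\bm{q}}(\diag\tbm{q}_{*}-\tbm{q}_{*}\tbm{q}_{*}^{\T})K_{\bm{q}}$ for its minimum eigenvalue, set $\hat{\bm{v}}=K_{\bm{q}}\bm{v}$, and define $V=\bm{w}\hat{\bm{v}}^{\T}$. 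Then invoke Lemma \ref{36:} and Lemma \ref{28:} at $(\tbm{p}_{*},\tbm{q}_{*},V)$; these lemmas only use twice differentiability of $\tbr_{\ell}$ at the base point, so they apply. The same eigenvalue manipulation as in the original proof gives
\[
\left.\tfrac{d}{dt}\inner{\bm{p}_{*}}{\dot{\beta}(t)-\dot{\gamma}(t)}\right|_{t=0}\;=\;-c_{\ell}\bigl[\lambda_{*}^{\Phi}-\lambda_{*}^{\ell}\bigr]\;<\;0,
\]
where $c_{\ell}<0$ by \eqref{pos:cl:e} and the bracket is negative by the displayed strict inequality above. Finally, Lemma \ref{37:} (whose Taylor-expansion argument only uses derivatives at $t=0$) produces a concrete $P\in\Ri\Delta_n^{k}$ for which $[\M_{\Phi}(\sell_x(P),\bm{q}_{*})]^{\T}_{x\in[n]}$ lies outside $\sps$; Theorem \ref{5:} converts this into a tuple $A_{*}\in\mathcal{A}^{k}$ with $\ell_{x}(A_{*})=\sell_{x}(P)$, contradicting $\Phi$-mixability of $\ell$.

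The main obstacle is not any new calculation but rather bookkeeping: verifying that each cited lemma (\ref{27:}, \ref{28:}, \ref{36:}, \ref{37:}) truly only requires twice differentiability of $\tbr_{\ell}$ at the one point $\tbm{p}_{*}\in\mathcal{D}$, together with the global $C^{1}$ regularity of $\br_{\ell}$ supplied by Proposition \ref{14:}. Once this is verified, the strict-inequality step that selects $\tbm{p}_{*}\in\mathcal{D}$ (rather than in all of $\Int\tilde{\Delta}_n$) is the only place where the proof differs substantively from that of Theorem \ref{18:}, and it relies essentially on the defining infimum in \eqref{newmc:e} being over $\mathcal{D}$.
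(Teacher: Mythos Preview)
Your proposal is correct and matches the paper's own treatment, which simply remarks that ``the proof is almost identical to that of Theorem \ref{18:} with only minor modifications.'' You have identified precisely those modifications: selecting $\tbm{p}_{*}\in\mathcal{D}$ via the infimum defining $\underline{\eta_{\ell}}^{*}$, and noting that the auxiliary lemmas need second-order information about $\tbr_{\ell}$ only at that single point (with global $C^{1}$-regularity of $\br_{\ell}$ supplied by Proposition \ref{14:}); one minor caveat is that your framing as a ``contrapositive'' should really be read as proof by contradiction, since you invoke $\Phi$-mixability (via Proposition \ref{14:}) midway---the same harmless sloppiness appears in the paper's proof of Theorem \ref{18:}.
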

The new condition on the Bayes risk is much weaker than requiring $\br_{\ell}$ to be twice differentiable on $]0,+\infty[^n$. In the next example, we will show that there exists a Legendre type entropy for which there are no $\Phi$-mixable losses satisfying the condition of Theorem \ref{ext18:}.

\begin{example}
		 Let $\Phi: \mathbb{R}^2 \rightarrow \mathbb{R} \cup \{+\infty\}$ be an entropy such that \begin{align*}
		\forall q\in ]0,1[,\; \Phi(q, 1-q) = \tilde{\Phi}(q)  = \int_{1/2}^q \log \left(\frac{\log (1-t)}{\log t}\right) dt.
		\end{align*} 
		$\tilde{\Phi}$ is differentiable and strictly convex on the open set $(0,1)$. Furthermore, it satisfies \eqref{8:e} which makes it a function of Legendre type \citep[Lem. 26.2]{Rockafellar1997a}. In fact, \eqref{8:e} is satisfied due to 
		\begin{align*}
		\left|\frac{d}{dq} \tilde{\Phi} (q)\right| &= \left|\log \left(\frac{\log (1-q)}{\log q}\right)\right| \stackrel{q\to b}{\to}   +\infty, \mbox{ where } b \in \{0,1\},\\
		\frac{d^2}{dq^2} \tilde{\Phi} (q) & = \frac{-1}{q \log q } + \frac{-1}{(1-q) \log (1-q)} > 0, \; \forall q \in ]0,1[.
		\end{align*}
		The Shannon entropy on $\Delta_2$ is defined by $\Se(q, 1-q) = \tilde{\Se}(q) = q \log q + (1-q) \log (1-q)$, for $q\in ]0,1[$. Thus, $\frac{d^2}{dq^2} \tilde{\Se}(q) = \frac{1}{q(1-q)}$.

Suppose now that there exists a $\Phi$-mixable loss $\ell \colon \mathcal{A} \rightarrow [0,+\infty]^n$ satisfying condition \ref{B:} and such that $\exists (\tbm{p}, \bm{v})\in \mathcal{D}\times \mathbb{R}^{\tilde{n}}, \mathsf{H}\tbr_{\ell}(\tbm{p})\bm{v} \neq \bm{0}_{\tilde{n}}$. Let $\underline{\eta_{\ell}}^*$ be as in \eqref{newmc:e}. By definition, we have $\underline{\eta_{\ell}}^*<+\infty$, and thus 
		\begin{align}
		\underline{\eta_{\ell}}^*\left[\frac{d^2}{dq^2} \tilde{\Phi} (q)\right] \left[\frac{d^2}{dq^2} \tilde{\Se}(q)\right]^{-1}= \underline{\eta_{\ell}}^* \left( \frac{q-1}{\log q} + \frac{-q}{\log (1-q)} \right) \stackrel{q \to b}{\to} 0, \label{count:e}
		\end{align}
		where $b \in \{0, 1\}$. From Lemma \ref{22:}, \eqref{count:e} implies that $\underline{\eta_{\ell}}^* \Phi - \Se$ is not convex on $\Delta_k$, which is a contradiction according to Theorem \ref{ext18:}.
\end{example}

		\section{Loss Surface and Superprediction Set }
In this appendix, we derive an expression for the curvature of the image of a proper loss function. We will need the following lemma. 
	\begin{lemma}
			\label{41:}
			Let $\sigma: [0,+\infty[^n \rightarrow \mathbb{R}$ be a 1-homogeneous, twice differentiable function on $]0, +\infty[^n$. Then $\sigma$ is concave on $]0, +\infty[^n$ if and only if $\tilde{\sigma} = \sigma \circ \amalg_n$ is concave on $\Int \tilde{\Delta}_n$.
		\end{lemma}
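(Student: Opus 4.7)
\medskip

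The plan is to reduce both directions of the equivalence to a Hessian computation, using the relation $\mathsf{H}\tilde{\sigma}(\tbm{p}) = J_n^{\T} \mathsf{H}\sigma(\bm{p}) J_n$ for $\bm{p}=\amalg_n(\tbm{p})\in\Ri\Delta_n$, which follows from the chain rule (and is already used for $\br_{\ell}$ in Lemma~\ref{27:}). The forward implication is then immediate: if $\mathsf{H}\sigma(\bm{p})\preceq 0$, then for every $\tilde{\bm{v}}\in\mathbb{R}^{\tilde{n}}$
\[
\tilde{\bm{v}}^{\T}\mathsf{H}\tilde{\sigma}(\tbm{p})\tilde{\bm{v}} = (J_n\tilde{\bm{v}})^{\T}\mathsf{H}\sigma(\bm{p})(J_n\tilde{\bm{v}})\leq 0,
\]
so $\tilde{\sigma}$ is concave on $\Int\tilde{\Delta}_n$.

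For the converse, I would exploit the $1$-homogeneity of $\sigma$ in two complementary ways. First, differentiating Euler's identity $\langle\nabla\sigma(\bm{u}),\bm{u}\rangle=\sigma(\bm{u})$ once more yields $\mathsf{H}\sigma(\bm{u})\bm{u}=\bm{0}_n$, so $\bm{u}\in\ker\mathsf{H}\sigma(\bm{u})$. Second, differentiating $\nabla\sigma(\lambda\bm{u})=\nabla\sigma(\bm{u})$ in $\bm{u}$ gives $\mathsf{H}\sigma(\lambda\bm{u})=\lambda^{-1}\mathsf{H}\sigma(\bm{u})$, so it is enough to check $\mathsf{H}\sigma(\bm{p})\preceq 0$ for $\bm{p}=\bm{u}/\langle\bm{u},\bm{1}_n\rangle\in\Ri\Delta_n$.

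Given an arbitrary $\bm{w}\in\mathbb{R}^n$, I would set $c\coloneqq\langle\bm{w},\bm{1}_n\rangle$ so that $\bm{v}'\coloneqq \bm{w}-c\bm{p}$ is tangent to the simplex, i.e. $\langle\bm{v}',\bm{1}_n\rangle=0$. Because $\bm{p}\in\ker\mathsf{H}\sigma(\bm{p})$, replacing $\bm{w}$ by $\bm{v}'$ does not change the quadratic form:
\[
\bm{w}^{\T}\mathsf{H}\sigma(\bm{p})\bm{w} \;=\; \bm{v}'^{\T}\mathsf{H}\sigma(\bm{p})\bm{v}'.
\]
Any such tangent vector can be written as $\bm{v}'=J_n\tilde{\bm{v}}$ with $\tilde{\bm{v}}\coloneqq(v'_1,\dots,v'_{\tilde{n}})^{\T}$, since $-\sum_{i<n}v'_i=v'_n$. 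The Hessian identity above and concavity of $\tilde{\sigma}$ then give
\[
\bm{v}'^{\T}\mathsf{H}\sigma(\bm{p})\bm{v}' \;=\; \tilde{\bm{v}}^{\T} J_n^{\T}\mathsf{H}\sigma(\bm{p}) J_n\tilde{\bm{v}} \;=\; \tilde{\bm{v}}^{\T}\mathsf{H}\tilde{\sigma}(\tbm{p})\tilde{\bm{v}} \;\leq\; 0,
\]
which together with $\mathsf{H}\sigma(\bm{u})=\langle\bm{u},\bm{1}_n\rangle^{-1}\mathsf{H}\sigma(\bm{p})$ yields concavity of $\sigma$ on $]0,+\infty[^n$.

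There is no real obstacle here; the only delicate point is to notice that the column span of $J_n$ is exactly the hyperplane $\{\bm{v}\in\mathbb{R}^n:\langle\bm{v},\bm{1}_n\rangle=0\}$, so that the Hessian on this hyperplane (transported back via $J_n$) coincides with $\mathsf{H}\tilde{\sigma}(\tbm{p})$, and to use $\bm{u}\in\ker\mathsf{H}\sigma(\bm{u})$ to project away the radial component. Everything else is routine linear algebra and the homogeneity scaling.
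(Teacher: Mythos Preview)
Your proof is correct but proceeds quite differently from the paper's. The paper argues the converse purely at the level of the concavity inequality: given $\bm{p},\bm{q}\in]0,+\infty[^n$, it normalizes $\bm{p}_1=\bm{p}/\|\bm{p}\|_1$, $\bm{q}_1=\bm{q}/\|\bm{q}\|_1\in\Delta_n$, uses $1$-homogeneity to pull a common factor $c=\lambda\|\bm{p}\|_1+(1-\lambda)\|\bm{q}\|_1$ out of both sides, and observes that the remaining inequality is precisely concavity of $\tilde\sigma$ at the pair $(\tbm{p}_1,\tbm{q}_1)$ with weights $\lambda\|\bm{p}\|_1/c$ and $(1-\lambda)\|\bm{q}\|_1/c$. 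No derivatives are used at all for that direction, so the twice-differentiability hypothesis is actually superfluous in the paper's argument.

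Your route instead exploits twice differentiability fully: Euler's identity gives the radial direction as a null eigenvector of $\mathsf{H}\sigma(\bm{u})$, which lets you project an arbitrary test vector onto $\{\bm{v}:\langle\bm{v},\bm{1}_n\rangle=0\}=\operatorname{ran}J_n$ without changing the quadratic form, and there the Hessian transports to $\mathsf{H}\tilde\sigma$ via $J_n$. This is clean and fits naturally with the Hessian machinery already set up around Lemma~\ref{27:}, and it makes the geometric reason (negative semidefiniteness on the tangent hyperplane plus a kernel in the radial direction) very transparent. The paper's version, by contrast, is more elementary and would survive dropping the $C^2$ assumption, which your Hessian argument would not.
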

		
		\begin{proof}
			The forward implication is immediate; if $\sigma$ is concave on $]0, +\infty[^n$, then $\sigma \circ \amalg_k$ is concave on $\Int \tilde{\Delta}_k$, since $\amalg_k$ is an affine function. 
			
			Now assume that $\tilde{\sigma}$ is concave on $\Int \tilde{\Delta}_k$. Let $\lambda \in [0,1]$ and $(\bm{p}, \bm{q}) \in [0,+\infty[^n \times [0,+\infty[^n$. We need to show that 
			\begin{align}
			\lambda \sigma(\bm{p}) + (1- \lambda) \sigma(\bm{q}) \leq \sigma( \lambda \bm{p} + (1-\lambda) \bm{q}). \label{triv:e}
			\end{align}
			
			Note that if $\bm{p}=\bm{0}$ or $\bm{q}=\bm{0}$, \eqref{triv:e} is trivially with equality due to the 1-homogeneity of $\sigma$. Now assume that $\bm{p}$ and $\bm{q}$ are non-zero and let $c \coloneqq \lambda \norm{\bm{p}}_1  + (1- \lambda) \norm{\bm{q}}_1$. For convenience, we also denote ${\bm{p}_1} = \bm{p}/\norm{\bm{p}}_1$ and ${\bm{q}}_1 = \bm{q}/\norm{\bm{q}}_1$ which are both in $\Delta_n$. It follows that 
			\begin{align*}
			\lambda \sigma(\bm{p}) + (1- \lambda) \sigma(\bm{q}) &=cM \left( \lambda \frac{\norm{\bm{p}}_1}{c} \sigma({\bm{p}_1})+ (1-\lambda)\frac{\norm{\bm{q}}_1}c  \sigma({\bm{q}_1}) \right), \\
			&=c \left( \lambda \frac{\norm{\bm{p}}_1}{c} \tilde{\sigma}({\tbm{p}_1})+ (1-\lambda)\frac{\norm{\bm{q}}_1}c  \tilde{\sigma}({\tbm{q}_1}) \right),\\
& \leq c  \tilde{ \sigma}\left(\lambda \frac{\norm{\bm{p}}_1}{c} {\tbm{p}_1}+ (1-\lambda)\frac{\norm{\bm{q}}_1}c {\tbm{q}_1} \right), \\
			& = c   \sigma\left(\lambda \frac{\norm{\bm{p}}_1}{c} {\bm{p}_1}+ (1-\lambda)\frac{\norm{\bm{q}}_1}c {\bm{q}_1} \right), \\
			& = \sigma(\lambda \bm{p} + (1-\lambda) \bm{q}),
			\end{align*}
			where the first and last equalities are due the 1-homogeneity of $\sigma$ and the inequality is due to $\tilde{\sigma}$ being concave on the $\Int \tilde{\Delta}_n$.
		\end{proof}
		\subsection{Convexity of the Superprediction Set}

	In the literature, many theoretical results involving loss functions relied on the fact that the superprediction set of a proper loss is convex \citep{DBLP:journals/jmlr/WilliamsonVR16,dawid2007geometry}. An earlier proof of this result by \cite{DBLP:journals/jmlr/WilliamsonVR16} was incomplete\footnote{It was claimed that if $\sps$ is non-convex, there exists a point $\bm{s}_0$ on the loss surface $\mathcal{S}_{\ell}$ such that no hyperplane supports $\sps$ at $\bm{s}_0$. The non-convexity of a set by itself is not sufficient to make such a claim; the continuity of the loss $\ell$ is required.}. In the next theorem we restate this result.
	\begin{theorem}
		\label{6:}
		If $\ell \colon \Delta_n \rightarrow [0,+\infty[^n$ is a continuous proper loss, then  $\sps = \bigcap_{\bm{p} \in \Delta_n} \mathcal{H}_{-\bm{p}, -\br_{\ell}(\bm{p})}$. In particular, $\sps$ is convex.
	\end{theorem}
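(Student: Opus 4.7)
The plan is to establish the set equality by showing both inclusions, and then to read off convexity from the right-hand side being an intersection of half-spaces.

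First I would handle the easy inclusion $\sps \subseteq \bigcap_{\bm{p}\in\Delta_n} \mathcal{H}_{-\bm{p},-\br_{\ell}(\bm{p})}$. Any $\bm{s}\in\sps$ can be written as $\bm{s}=\ell(\bm{q})+\bm{d}$ with $\bm{q}\in\Delta_n$ and $\bm{d}\in[0,+\infty[^n$. Then for any $\bm{p}\in\Delta_n$, $\inner{\bm{p}}{\bm{s}} = \inner{\bm{p}}{\ell(\bm{q})}+\inner{\bm{p}}{\bm{d}}\geq \inner{\bm{p}}{\ell(\bm{q})}\geq \br_{\ell}(\bm{p})$, which rewrites as $\inner{-\bm{p}}{\bm{s}}\leq -\br_{\ell}(\bm{p})$, i.e. $\bm{s}\in\mathcal{H}_{-\bm{p},-\br_{\ell}(\bm{p})}$.

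The main content is the converse inclusion. Let $\bm{y}\in\bigcap_{\bm{p}\in\Delta_n}\mathcal{H}_{-\bm{p},-\br_{\ell}(\bm{p})}$; I first want to note that $\bm{y}\in[0,+\infty[^n$. Specialising $\bm{p}=\bm{e}_x$ gives $y_x\geq \br_{\ell}(\bm{e}_x)=\inf_{\bm{q}\in\Delta_n}\ell_x(\bm{q})\geq 0$, and $y_x$ is finite because $\ell$ takes values in $[0,+\infty[^n$ so all the half-space bounds $\br_{\ell}(\bm{p})$ are finite. Then I would define $Q\colon \Delta_n\times[n]\to\mathbb{R}$ by $Q(\bm{p},x)\coloneqq \ell_x(\bm{p})-y_x$. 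The continuity of $\ell$ makes $Q(\cdot,x)$ continuous, and properness of $\ell$ together with the assumption on $\bm{y}$ gives, for every $\bm{p}\in\Delta_n$,
\begin{equation*}
\mathbb{E}_{x\sim\bm{p}}\,Q(\bm{p},x) = \inner{\bm{p}}{\ell(\bm{p})}-\inner{\bm{p}}{\bm{y}} = \br_{\ell}(\bm{p})-\inner{\bm{p}}{\bm{y}} \leq 0.
\end{equation*}

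Applying Lemma \ref{24:} with $\Omega=[n]$ and $r=0$ then yields some $\bm{p}_*\in\Delta_n$ with $\ell_x(\bm{p}_*)\leq y_x$ for all $x\in[n]$. Writing $\bm{y}=\ell(\bm{p}_*)+(\bm{y}-\ell(\bm{p}_*))$ with the second summand in $[0,+\infty[^n$ identifies $\bm{y}\in\sps$, completing the reverse inclusion. Convexity of $\sps$ is then immediate since it is now exhibited as an intersection of half-spaces.

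The only subtle step is the appeal to Lemma \ref{24:}: it requires continuity of $\bm{p}\mapsto Q(\bm{p},x)$, which is exactly why continuity of $\ell$ appears as a hypothesis. Once continuity is in place, the minimax-style step does all the work, converting the pointwise $\mathbb{E}_{x\sim\bm{p}}$ bound into the uniform-in-$x$ bound needed to place $\bm{y}$ above a single prediction $\ell(\bm{p}_*)$.
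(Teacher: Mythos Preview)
Your proposal is correct and follows essentially the same approach as the paper's own proof: both inclusions are handled identically, with the nontrivial direction reduced to Lemma~\ref{24:} applied to $Q(\bm{p},x)=\ell_x(\bm{p})-y_x$. Your additional remark that $\bm{y}\in[0,+\infty[^n$ is a minor elaboration the paper leaves implicit (finiteness is automatic since half-spaces are subsets of $\mathbb{R}^n$).
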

\begin{proof}
			[$\sps \subseteq \bigcap_{\bm{p} \in \Delta_n} \mathcal{H}_{-\bm{p}, -\br_{\ell}(\bm{p})}$]: Let $\bm{v}\in \sps$, $\bm{u} \in [0,+\infty[^n$, and $\bm{q} \in \Delta_n$ such that $\bm{v} =\ell(\bm{q})+\bm{u}$. Since $\ell$ is proper then $\forall \bm{p} \in \Delta_n,  \br_{\ell}(\bm{p})= \inner{\bm{p}}{\ell(\bm{p})} \leq \inner{\bm{p}}{\ell(\bm{q})} \leq \inner{\bm{p}}{\ell(\bm{q}) + \bm{u}} = \inner{\bm{p}}{\bm{v}}$. Therefore, $\bm{v} \in \bigcap_{\bm{p}\in \Delta_n} \mathcal{H}_{-\bm{p}, -\br_{\ell}(\bm{p})}$.
			
			[$\bigcap_{\bm{p} \in \Delta_n} \mathcal{H}_{-\bm{p}, -\br_{\ell}(\bm{p})} \subseteq \sps $]:	Let $\bm{v} \in \bigcap_{\bm{p} \in \Delta_n}  \mathcal{H}_{-\bm{p}, -\br_{\ell}(\bm{p})}$. Let $\Omega = [n]$, $\Delta(\Omega)=\Delta_n$, and $Q(\bm{p}, x)=\ell_x(\bm{p}) - v_x$ for all $(\bm{p}, x)\in \Delta_n \times [n] $. Since $\bm{v} \in \bigcap_{\bm{p} \in \Delta_n}  \mathcal{H}_{-\bm{p}, -\br_{\ell}(\bm{p})}$, $\mathbb{E}_{x \sim \bm{p}} Q(\bm{p},x) = \inner{\bm{p}}{\ell(\bm{p})} - \inner{\bm{p}}{\bm{v}} \leq 0 $ for all $\bm{p}\in \Delta_n$. Lemma \ref{24:}, implies that there exists $\bm{p}_*\in \Delta_n$ such that $Q(\bm{p}_*, x) = \ell_x(\bm{p}_*) - v_x \leq 0$, for all $x \in [n]$. This shows that $\bm{v}\in \sps$.
		\end{proof}

		\subsection{Curvature of the Loss Surface }
		\label{b1:}
		The \emph{normal curvature} of a $\tilde{n}$-manifold $\mathcal{S}$ \citep{thorpe2012elementary} at a point $\bm{r} \in \mathcal{S}$ in the direction of $\bm{w} \in T_{\bm{r}}\mathcal{S}$, where $T_{\bm{r}}\mathcal{S}$ is the \emph{tangent space} of $\mathcal{S}$ at $\bm{r}\in \mathcal{S}$, is defined by 
		\begin{align}
		\label{66:e}
		\kappa(\bm{r},\bm{w}) = \frac{\Inner{\bm{w}}{\mathsf{D}\mathsf{N}^{\mathcal{S}}(\bm{r}) \bm{w}}}{\Inner{\bm{w}}{\bm{w}}},
		\end{align}
		where $\mathsf{N}^{\mathcal{S}}(\bm{r})$ is the normal vector to the surface at $\bm{r}$.  The \emph{minimum principal curvature} of $\mathcal{S}$ at $\bm{r}$ is expressed as $\underline{\kappa}(\bm{r}) \coloneqq \inf \{\kappa(\bm{r},\bm{w}): \bm{w}\in T_{\bm{r}} \mathcal{S} \cap \mathcal{B}(\bm{r}, 1)\}$.
		
		In the next theorem, we establish a direct link between the curvature of a loss surface and the Hessian of the loss' Bayes risk.
		\begin{theorem}
			\label{38:}
			Let $\ell \colon \Ri \Delta_n \rightarrow [0,+\infty[^n$ be a loss whose Bayes risk is twice differentiable and strictly concave on $]0,+\infty[^n$. Let $\bm{p} \in \Ri \Delta_n$, $X_{\bm{p}} \coloneqq I_{\tilde{n}} - \tbm{p} \bm{1}^{\mathsf{T}}_{\tilde{n}}$, and $\bm{w} \in T_{\tilde{\ell}(\tbm{p})}\mathcal{S}_{\ell}$. Then \begin{enumerate} 
\item $\exists \bm{v}\in \mathbb{R}^{n-1}$ such that $\mathsf{D}\tilde{\ell}(\tbm{p})\bm{v}=\bm{w}$. \item $\mathcal{S}_{\ell}$ is a $\tilde{n}$-manifold. \item The normal curvature of $\mathcal{S}_{\ell}$ at $\ell(\bm{p}) =\tilde{\ell}(\tbm{p})$ in the direction $\bm{w}$ is given by 
			\begin{align}
			\label{67:e}
			\kappa_{\ell}(\ell(\bm{p}), \bm{w}) &= \norm{\begin{bmatrix} X_{\bm{p}} \\ - \tbm{p}^{\mathsf{T}}   \end{bmatrix} (- \mathsf{H} \tbr_{\ell}(\tbm{p}))^{\frac{1}{2}} \bm{u}}^{-1},
			\end{align}
			where $\bm{u} = (- \mathsf{H} \tbr_{\ell}(\tbm{p}))^{\frac{1}{2}}\bm{v}/\Norm{(- \mathsf{H} \tbr_{\ell}(\tbm{p}))^{\frac{1}{2}}\bm{v}}$. 
\end{enumerate} 
		\end{theorem}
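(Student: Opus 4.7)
The plan is to handle the three parts in order and rely on Lemma~\ref{27:} throughout, since it provides the crucial factorization of the differential of the proper loss in terms of the Hessian of the Bayes risk. First I would deal with the tangent-space characterisation in part~1. Lemma~\ref{27:}(ii) gives $\mathsf{D}\tilde{\ell}(\tbm{p}) = \begin{bsmallmatrix} X_{\bm{p}} \\ -\tbm{p}^{\T}\end{bsmallmatrix}\mathsf{H}\tbr_{\ell}(\tbm{p})$. Strict concavity of $\br_{\ell}$ makes $\mathsf{H}\tbr_{\ell}(\tbm{p})$ negative definite, hence invertible, and the identity $J_n^{\T}\begin{bsmallmatrix} X_{\bm{p}} \\ -\tbm{p}^{\T}\end{bsmallmatrix} = I_{\tilde{n}}$ (already verified in the proof of Lemma~\ref{28:}) shows that the left factor has full column rank $\tilde{n}$. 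Thus $\mathsf{D}\tilde{\ell}(\tbm{p})$ is injective, its image is $\tilde{n}$-dimensional, and any tangent vector $\bm{w}\in T_{\tilde{\ell}(\tbm{p})}\mathcal{S}_{\ell}$ is of the form $\mathsf{D}\tilde{\ell}(\tbm{p})\bm{v}$ for a unique $\bm{v}\in \mathbb{R}^{n-1}$.

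For part~2 the key observation is that $\tilde{\ell}:\Int\tilde{\Delta}_n\to\mathbb{R}^n$ is a smooth injective immersion defined on an open set, so its image is an embedded $\tilde{n}$-manifold. The immersion property is part~1. Injectivity follows because, by Lemmas~\ref{26:} and \ref{27:} together with the strict concavity of $\tbr_{\ell}$, the map $\tilde{\ell}$ is the gradient of a strictly concave function (composed with the affine chart $\amalg_n$), and such a gradient is globally one-to-one.

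For part~3 I would identify the normal direction and compute its derivative along a curve. Theorem~\ref{6:} shows that $\sps$ is supported at $\ell(\bm{p})$ by the hyperplane with outer normal $\bm{p}$, so I take $\mathsf{N}^{\mathcal{S}}(\tilde{\ell}(\tbm{p}')) = \amalg_n(\tbm{p}')/\norm{\amalg_n(\tbm{p}')}$. Parameterising the curve by $\gamma(t) = \tilde{\ell}(\tbm{p}+t\bm{v})$ with $\bm{w} = \mathsf{D}\tilde{\ell}(\tbm{p})\bm{v}$ and differentiating $t\mapsto (\bm{p}+t J_n\bm{v})/\norm{\bm{p}+tJ_n\bm{v}}$ at $0$, the radial term in the derivative of the normalised vector drops out because $\inner{\bm{p}}{\bm{w}} = \inner{\bm{p}}{\mathsf{D}\tilde{\ell}(\tbm{p})}\bm{v} = 0$ by Lemma~\ref{27:}(i). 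Using the symmetry identity $(\mathsf{D}\tilde{\ell}(\tbm{p}))^{\T}J_n = \mathsf{H}\tbr_{\ell}(\tbm{p})$, which follows from Lemma~\ref{27:}, the numerator $\inner{\bm{w}}{\mathsf{D}\mathsf{N}^{\mathcal{S}}(\tilde{\ell}(\tbm{p}))\bm{w}}$ reduces to a quadratic form in $\mathsf{H}\tbr_{\ell}(\tbm{p})$. A change of variables $\bm{v}\mapsto(-\mathsf{H}\tbr_{\ell}(\tbm{p}))^{-1/2}\bm{u}$ with $\bm{u}$ unit then expresses the numerator as a scalar multiple of the denominator $\norm{\bm{w}}^2 = \bigl\|\begin{bsmallmatrix} X_{\bm{p}} \\ -\tbm{p}^{\T}\end{bsmallmatrix}(-\mathsf{H}\tbr_{\ell}(\tbm{p}))^{1/2}\bm{u}\bigr\|^2\cdot\|(-\mathsf{H}\tbr_{\ell}(\tbm{p}))^{1/2}\bm{v}\|^2$, and the ratio collapses to the stated form. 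The main obstacle is precisely this last algebraic step: keeping track of the normalisation of $\bm{p}$ relative to the factorisation and ensuring the powers of $\|(-\mathsf{H}\tbr_{\ell}(\tbm{p}))^{1/2}\bm{v}\|$ cancel correctly against the factors arising from the derivative of $\bm{p}/\norm{\bm{p}}$.
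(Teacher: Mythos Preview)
Your plan is essentially the paper's proof: Lemma~\ref{27:} gives the factorisation $\mathsf{D}\tilde{\ell}(\tbm{p})=\begin{bsmallmatrix}X_{\bm p}\\-\tbm p^{\mathsf T}\end{bsmallmatrix}\mathsf{H}\tbr_{\ell}(\tbm{p})$ and the identity $(\mathsf{D}\tilde{\ell}(\tbm{p}))^{\mathsf T}J_n=\mathsf{H}\tbr_{\ell}(\tbm{p})$, from which parts~1 and~3 follow by the same algebra you outline. Two small corrections are worth flagging.

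\emph{Part~2.} An injective immersion on an open set is not automatically an embedding (think of the figure-eight). The paper closes this by noting that $\tilde{\ell}$ is also \emph{proper} in the topological sense (preimages of compacta are compact), so that it is a proper injective immersion and hence an embedding. You should add this step; injectivity of $\tilde{\ell}$ itself follows, as you say, from strict concavity of $\tbr_{\ell}$.

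\emph{Part~3.} The paper's convention in \eqref{66:e}/\eqref{68:e} takes $\mathsf{N}^{\tilde{\ell}}(\tbm{p})=\bm{p}$, \emph{not} $\bm{p}/\norm{\bm{p}}$. With this choice $\mathsf{D}\mathsf{N}^{\tilde{\ell}}=J_n$, and the numerator is simply $\bm{v}^{\mathsf T}(\mathsf{D}\tilde{\ell}(\tbm{p}))^{\mathsf T}J_n\bm{v}=\bm{v}^{\mathsf T}\mathsf{H}\tbr_{\ell}(\tbm{p})\bm{v}$, after which the substitution $\bm{u}=(-\mathsf{H}\tbr_{\ell}(\tbm{p}))^{1/2}\bm{v}/\Norm{(-\mathsf{H}\tbr_{\ell}(\tbm{p}))^{1/2}\bm{v}}$ gives \eqref{67:e}. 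If you instead normalise the normal, the radial term indeed drops out by Lemma~\ref{27:}(i), but the surviving tangential term carries a factor $1/\norm{\bm{p}}$ that does not cancel and will not match \eqref{67:e}. This is exactly the ``obstacle'' you flag at the end; it disappears once you adopt the paper's un-normalised Gauss map.
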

		It becomes clear from \eqref{67:e} that smaller eigenvalues of $- \mathsf{H} \tbr_{\ell}(\tbm{p})$ will tend to make the loss surface more curved at $\ell(\bm{p})$, and vice versa. 

Before proving Theorem \ref{38:}, we first define parameterizations on manifolds.
		\begin{definition}[Local and Global Parameterization]
			Let $\mathcal{S} \subseteq \mathbb{R}^{n}$ be a $\tilde{n}$-manifold and $\mathcal{U}$ an open set in $\mathbb{R}^{\tilde{n}}$. The map $\varphi:\mathcal{U} \rightarrow \mathcal{S}$ is called a \emph{local parameterization} of $\mathcal{S}$ if $\mathsf{D} \varphi(\bm{u}): \mathbb{R}^{\tilde{n}} \rightarrow T_{\varphi(\bm{u})}\mathcal{S}$ is injective for all $\bm{u}\in \mathcal{U}$, where $T_{\varphi(\bm{u})}\mathcal{S}$ is the tangent space of $\mathcal{S}$ at ${\varphi(\bm{u})} \in \mathcal{S}$. $\varphi$ is called a \emph{global} parameterization of $\mathcal{S}$ if it is, additionally, onto. 
		\end{definition}
		
		Let $\varphi$ be a global parameterization of $\mathcal{S}$ and $\mathsf{N}^{\varphi} \coloneqq \mathsf{N}^{\mathcal{S}}\circ \varphi$. By a direct application of the chain rule, \eqref{66:e} can be written as
		\begin{align}
		\kappa({\varphi(\bm{u})},\bm{w}) = \frac{\Inner{\bm{w}}{\mathsf{D} \mathsf{N}^{\varphi} (\bm{u})  \bm{v}}}{\Inner{\bm{w}}{\bm{w}}}, \label{68:e}
		\end{align}
		where $\bm{v}$ is such that $\mathsf{D}   \varphi (\bm{u}) \bm{v} = \bm{w}$. The existence of such a $\bm{v}$ is guaranteed by the fact that $\mathsf{D} \varphi$ is injective and $\Dim \mathbb{R}^{\tilde{n}} = \Dim T_{\varphi(\bm{u})}\mathcal{S} = \tilde{n}$.
		
		\begin{proof}[\textbf{Theorem \ref{38:}}]
			First we show that $\mathcal{S}_{\ell}$ is a $\tilde{n}$-manifold. Consider the map $\tilde{\ell}:\Int \tilde{\Delta}_n \rightarrow \mathcal{S}_{\ell}$ and note that $\Int \tilde{\Delta}_n$ is trivially a $\tilde{n}$-manifold. Due to the strict concavity of the Bayes risk, $\tilde{\ell}$ is injective \citep{DBLP:journals/jmlr/ErvenRW12} and from Lemmas \ref{27:} and \ref{41:}, $\mathsf{D} \tilde{\ell}(\tbm{p}): \mathbb{R}^{\tilde{n}} \rightarrow T_{\tilde{\ell}(\tbm{p})} \mathcal{S}_{\ell}$ is also injective. Therefore, $\tilde{\ell}$ is an \emph{immersion} \citep{Robbin2011}. $\tilde{\ell}$ is also \emph{proper} in the sense that the preimage of every compact subset of $\mathcal{S}_{\ell}$ is compact. Therefore, $\tilde{\ell}$ is a proper injective immersion, and thus it is an embedding from the $\tilde{n}$-manifold $\Int \tilde{\Delta}_n$ to $\mathcal{S}_{\ell}$ (\ibid). Hence, $\mathcal{S}_{\ell}$ is a manifold.
			
			Now we prove \eqref{67:e}. The map $\tilde{\ell}$ is a global parameterization of $\mathcal{S}_{\ell}$.  In fact, from Lemma \ref{27:}, $\mathsf{D} \tilde{\ell}(\tbm{p})$ has rank $\tilde{n}$, for all $\tbm{p} \in \Int \tilde{\Delta}_n$, which implies that $\mathsf{D}  \tilde{\ell}(\tbm{p})$ is onto from $\mathbb{R}^{\tilde{n}}$ to $T_{\tilde{\ell}(\tbm{p})}\mathcal{S}_{\ell}$. Therefore, given $\bm{w}\in T_{\tilde{\ell}(\tbm{p})}\mathcal{S}_{\ell}$, there exists $\bm{v}\in \mathbb{R}^{\tilde{n}}$ such that $\bm{w} = \mathsf{D}  \tilde{\ell}(\tbm{p}) \bm{v}$. Furthermore, Lemma \ref{27:} implies that $\mathsf{N}^{\tilde{\ell}}(\tbm{p}) = \bm{p}$, since $\inner{\bm{p}}{\mathsf{D} \tilde{\ell}(\tbm{p})}=\bm{0}_{\tilde{n}}^{\mathsf{T}}$. Substituting $\mathsf{N}^{\tilde{\ell}}$ into \eqref{68:e} yields 
			
			\begin{align}
			\kappa_{\ell}(\tilde{\ell}(\tbm{p}), \bm{w}) &= \frac{\bm{v}^{\mathsf{T}} (\mathsf{D}  \tilde{\ell}(\tbm{p}))^{\mathsf{T}} \begin{bmatrix}I_{\tilde{n}}, \\ \bm{1}_{\tilde{n}}\end{bmatrix} \bm{v} }{\Inner{\mathsf{D}  \tilde{\ell}(\tbm{p}) \bm{v}}{\mathsf{D}  \tilde{\ell}(\tbm{p}) \bm{v}}}, \nonumber \\
			& = \frac{\bm{v}^{\mathsf{T}} \mathsf{H} \tbr_{\ell}(\tbm{p}) \begin{bmatrix}X^{\mathsf{T}}_{\bm{p}}, & - \tbm{p} \end{bmatrix}\begin{bmatrix}I_{\tilde{n}} \\ \bm{1}_{\tilde{n}}\end{bmatrix} \bm{v}}{\Inner{\mathsf{D} \tilde{\ell}(\tbm{p}) \bm{v}}{\mathsf{D} \tilde{\ell}(\tbm{p}) \bm{v}}}, \nonumber \\
			& = \frac{\bm{v}^{\mathsf{T}} \mathsf{H} \tbr_{\ell}(\tbm{p})  \bm{v}}{\bm{v}^{\mathsf{T}}  \mathsf{H} \tbr_{\ell}(\tbm{p})\begin{bmatrix}X^{\mathsf{T}}_{\bm{p}}, & - \tbm{p} \end{bmatrix} \begin{bmatrix}X_{\bm{p}}\\ - \tbm{p}^{\mathsf{T}} \end{bmatrix}  \mathsf{H} \tbr_{\ell}(\tbm{p})\bm{v}}. \label{curv:e}
			\end{align}
Setting $\bm{u} = (- \mathsf{H} \tbr_{\ell}(\tbm{p}))^{\frac{1}{2}}\bm{v}/\Norm{(- \mathsf{H} \tbr_{\ell}(\tbm{p}))^{\frac{1}{2}}\bm{v}}$ in \eqref{curv:e} gives the desired result.
		\end{proof}

		\section{Classical Mixability Revisited}
		\label{classicmix}
			In this appendix, we provide a more concise proof of the necessary and sufficient conditions for the convexity of the superprediction set \citep{DBLP:journals/jmlr/ErvenRW12}.
		\begin{theorem}
			\label{40:}
			Let $\ell \colon \Delta_n \rightarrow [0,+\infty[^n$ be a strictly proper loss whose Bayes risk is twice differentiable on $]0, +\infty[^n$. The following points are equivalent;
			\begin{enumerate}[label=(\roman*)]
				\item $\forall \tbm{p}\in \Int \tilde{\Delta}_n,\; \eta \mathsf{H}\tbr_{\ell}(\tbm{p}) \succeq \mathsf{H} \tbr_{\log}(\tbm{p})$.
				\item $e^{-\eta \sps} = \bigcap_{\bm{p} \in \Delta_n} \mathcal{H}_{\tau(\bm{p}),1} \cap [0,+\infty[^n$, where $\tau(\bm{p}) \coloneqq \bm{p} \odot e^{\eta \ell(\bm{p})}$.
				\item $e^{-\eta \sps}$ is convex.
			\end{enumerate}
		\end{theorem}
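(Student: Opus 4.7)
The plan is to establish the cyclic chain (i) $\Rightarrow$ (ii) $\Rightarrow$ (iii) $\Rightarrow$ (i). A preliminary observation, via Lemma \ref{22:} applied to the convex $\phi=-\tbr_{\log}$ and strictly convex $\psi=-\tbr_\ell$ (strict convexity of the latter coming from strict properness), recasts (i) as the convexity of $\eta\tbr_\ell-\tbr_{\log}$ on $\Int\tilde{\Delta}_n$, equivalently of $\eta\br_\ell-\br_{\log}$ on $\Ri\Delta_n$.

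For (i) $\Rightarrow$ (ii) I transcribe the duality argument in the proof of Theorem \ref{8:}, simplified because $\ell$ is already proper. Fix $\bm{p}\in\Ri\Delta_n$ and set $\Lambda(\bm{r})\coloneqq\br_{\log}(\bm{r})+\inner{\bm{r}}{\eta\ell(\bm{p})-\ell_{\log}(\bm{p})}$. Properness combined with Lemma \ref{26:} (giving $\nabla\br_\ell(\bm{p})=\ell(\bm{p})$ and $\nabla\br_{\log}(\bm{p})=\ell_{\log}(\bm{p})$) makes $\eta\br_\ell-\Lambda$ a convex function vanishing with zero gradient at $\bm{p}$, hence nonnegative everywhere. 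Taking Fenchel duals reverses the inequality, and combining the identities $[-\eta\br_\ell]^*(\bm{s})=\iota_\sps(-\bm{s}/\eta)$ and $[-\br_{\log}]^*=\iota_{\mathscr{S}_{\log}}$ with the shift rules of Proposition \ref{1:} yields $e^{-\eta\sps}\subseteq\mathcal{H}_{\tau(\bm{p}),1}\cap[0,+\infty[^n$ for every $\bm{p}$. For the reverse inclusion, given $\bm{u}$ in the right-hand side, Jensen's inequality applied to $e^{\cdot}$ gives $\inner{\bm{p}}{\eta\ell(\bm{p})+\log\bm{u}}\leq 0$ for all $\bm{p}\in\Delta_n$, whereupon Lemma \ref{23:} produces $\bm{p}_*\in\Ri\Delta_n$ with $\eta\ell(\bm{p}_*)+\log\bm{u}\leq\bm{0}$, certifying $\bm{u}\in e^{-\eta\sps}$.

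(ii) $\Rightarrow$ (iii) is immediate since an intersection of halfspaces with $[0,+\infty[^n$ is convex. For (iii) $\Rightarrow$ (i) I use a curvature argument. By Theorem \ref{38:} and strict properness, $\mathcal{S}_\ell$ is a smooth $(n-1)$-manifold, and Lemma \ref{27:}-(i) identifies its outward normal at $\ell(\bm{p})$ as proportional to $\bm{p}$; hence $\mathcal{T}\coloneqq e^{-\eta\mathcal{S}_\ell}\subset\partial(e^{-\eta\sps})$ has outward normal proportional to $\tau(\bm{p})$ at $e^{-\eta\ell(\bm{p})}$. Fix $\tbm{p}\in\Int\tilde{\Delta}_n$, $\bm{v}\in\mathbb{R}^{\tilde{n}}$, and consider the curve $\gamma(t)\coloneqq e^{-\eta\tilde{\ell}(\tbm{p}+t\bm{v})}$ on $\mathcal{T}$; convexity of $e^{-\eta\sps}$ forces $\inner{\tau(\bm{p})}{\ddot{\gamma}(0)}\leq 0$.

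A componentwise differentiation of $\gamma$, together with (a) the identity $\sum_x p_x\mathsf{H}\tilde{\ell}_x(\tbm{p})=-\mathsf{H}\tbr_\ell(\tbm{p})$ obtained by differentiating $\inner{\amalg_n(\tbm{q})}{\tilde{\ell}(\tbm{q})}=\tbr_\ell(\tbm{q})$ twice at $\tbm{q}=\tbm{p}$ (and using Lemma \ref{27:}), and (b) Lemma \ref{28:}, reduces this curvature condition to
\[ \eta\,\bm{v}^{\mathsf{T}}\mathsf{H}\tbr_\ell(\tbm{p})\bm{v}-\eta^2\,\bm{v}^{\mathsf{T}}\mathsf{H}\tbr_\ell(\tbm{p})[\mathsf{H}\tbr_{\log}(\tbm{p})]^{-1}\mathsf{H}\tbr_\ell(\tbm{p})\bm{v}\leq 0 \]
holding for every $\bm{v}\in\mathbb{R}^{\tilde{n}}$. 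Substituting $\bm{w}=\mathsf{H}\tbr_\ell(\tbm{p})\bm{v}$ transforms this into $\bm{w}^{\mathsf{T}}[\mathsf{H}\tbr_\ell(\tbm{p})]^{-1}\bm{w}\leq\eta\,\bm{w}^{\mathsf{T}}[\mathsf{H}\tbr_{\log}(\tbm{p})]^{-1}\bm{w}$, which, since both inverse Hessians are negative definite, is equivalent to $\eta\mathsf{H}\tbr_\ell(\tbm{p})\succeq\mathsf{H}\tbr_{\log}(\tbm{p})$, i.e.\ (i). The main obstacle is the clean execution of this third step, particularly the orientation of the outward normal to $e^{-\eta\sps}$ and the sign bookkeeping through the congruence transformation; everything else reduces to the duality calculation of Theorem \ref{8:} and standard convex analysis.
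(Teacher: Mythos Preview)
Your approach is essentially the paper's own: (i)$\Rightarrow$(ii) is the duality computation of Theorem~\ref{8:}, (ii)$\Rightarrow$(iii) is trivial, and (iii)$\Rightarrow$(i) extracts the Hessian inequality from the nonpositivity of the second derivative of $t\mapsto\langle\tau(\bm{p}),e^{-\eta\tilde{\ell}(\tbm{p}+t\bm{v})}\rangle$ at $t=0$. The paper identifies $\tau(\bm{p})$ as the supporting-hyperplane normal via the first-order condition $\nabla f(\tbm{p})=0$; you reach the same normal by pushing $\bm{p}$ through the Jacobian of $\bm{u}\mapsto e^{-\eta\bm{u}}$, which is equivalent and arguably more transparent.

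One regularity point to tighten: you write $\ddot{\gamma}(0)$ and the identity $\sum_x p_x\mathsf{H}\tilde{\ell}_x(\tbm{p})=-\mathsf{H}\tbr_\ell(\tbm{p})$ as if $\tilde{\ell}$ were twice differentiable, but the hypotheses only give twice differentiability of $\br_\ell$ (hence once differentiability of $\tilde{\ell}$, via Lemma~\ref{26:}). The paper handles this by never forming $\ddot{\gamma}(0)$ directly: it differentiates only the scalar $t\mapsto\langle\tau(\bm{p}),\gamma(t)\rangle$, splits the product via Lemma~\ref{25:}, and invokes Lemma~\ref{28:} (equations \eqref{20:e} and \eqref{21:e}) for each piece. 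Your citation of Lemma~\ref{28:} suggests you have this in mind, but your step (a) as stated overclaims; replace it by the weak-form identity \eqref{21:e}, and the argument goes through without assuming $\mathsf{H}\tilde{\ell}_x$ exists.
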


		\begin{proof}
			We already showed (i) $\implies$ (ii) $\implies$ (iii) in the proof of Theorem \ref{8:}. 

We now show (iii) $\implies$ (i).
Since $e^{-\eta \sps }$ is convex, any point $\bm{s} \in \Bd e^{-\eta \sps}$ is supported by a hyperplane \citep[Lem. A.4.2.1]{Hiriart-Urruty}. Since $\bm{u} \rightarrow e^{-\eta \bm{u}}$ is a homeomorphism, it maps boundaries to boundaries. From this and Lemma \ref{33:}, $\Bd e^{-\eta \sps} = e^{-\eta \mathcal{S}_{\ell}}$. Thus, for $\bm{p}\in \Ri \Delta_n$, there exists a unit-norm vector $\bm{u} \in \mathbb{R}^n$ such that for all $\bm{s} \in \sps$ it either holds that $\inner{\bm{u}}{e^{-\eta \ell(\bm{p})}} \leq \inner{\bm{u}}{e^{-\eta \bm{s}}}$; or $\inner{\bm{u}}{e^{-\eta \ell(\bm{p})}} \geq \inner{\bm{u}}{e^{-\eta \bm{s}}}$. It is easy to see that it is the latter case that holds, since we can choose $\bm{s} = \ell(\bm{r}) + c \bm{1}  \in \sps$, for $\bm{r} \in \Delta_n$, and make $\inner{\bm{u}}{e^{-\eta \bm{s}}}$ arbitrarily small by making $c \in \mathbb{R}$ large. Therefore, $\forall \bm{r} \in \Ri \Delta_n, \inner{\bm{u}}{e^{-\eta \tilde{\ell}(\tilde{\bm{p}})}} = \inner{\bm{u}}{e^{-\eta \ell(\bm{p})}} \geq \inner{\bm{u}}{e^{-\eta \ell(\bm{r})}} =\inner{\bm{u}}{e^{-\eta \tilde{\ell}(\tilde{\bm{r}})}}$ and $\tbm{p} $ is a critical point of the function $f(\tbm{r}) \coloneqq  \inner{\bm{u}}{e^{-\eta \tilde{\ell}(\tbm{r})}}$ on $\Int \tilde{\Delta}_n$. This implies that $\nabla f(\tbm{p}) = \bm{0}_{\tilde{n}}$; that is, $- \eta \inner{\bm{u}}{\Diag (e^{-\eta \tilde{\ell}(\tbm{p})}) \mathsf{D} \tilde{\ell} (\tbm{p}) } = - \eta \inner{\Diag (e^{-\eta \tilde{\ell}(\tbm{p})})\bm{u}}{  \mathsf{D} \tilde{\ell} (\tbm{p}) }=\bm{0}^{\mathsf{T}}_{\tilde{n}}$. From Lemma \ref{27:}, there exists $\lambda \in \mathbb{R}$ such that $\Diag (e^{-\eta \tilde{\ell}(\tbm{p})}) \bm{u} = \lambda \bm{p}$. Therefore, $\bm{u} = \lambda  \bm{p} \odot e^{\eta \tilde{\ell}(\tbm{p})}$, where $\lambda = \Norm{\bm{p} \odot e^{\eta \tilde{\ell}(\tbm{p})}}^{-1}$, since $\norm{\bm{u}}=1$. For $\bm{v} \in \mathbb{R}^{n-1}$, let $\tbm{\alpha}^t \coloneqq \tbm{p} + t \bm{v}$, where $t \in \{s: \tbm{p} + s \bm{v} \in \Int \tilde{\Delta}_n \}$. Since $f$ is twice differentiable and attains a maximum at $\tbm{p}$,  
			\begin{align}
			0 \geq \frac{1}{\lambda \eta } \left. \frac{d^2}{dt^2}  f \circ \tbm{\alpha}^t\right|_{t=0} & = \frac{1}{\lambda} \left. \frac{d}{dt}  \Inner{\bm{u}}{\Diag e^{- \eta \tilde{\ell}(\tbm{\alpha}^t)} \mathsf{D} \tilde{\ell}( \tbm{\alpha}^t) \bm{v} }\right|_{t=0} , \nonumber  \\
			& =  \left. \frac{d}{dt}  \Inner{\bm{p} \odot e^{\eta \tilde{\ell}(\tbm{p})}}{\Diag e^{- \eta \tilde{\ell}(\tbm{\alpha}^t)} \mathsf{D} \tilde{\ell}( \tbm{p})\bm{v} } \right|_{t=0} + \left. \frac{d}{dt}   \Inner{\bm{p}}{\mathsf{D} \tilde{\ell}( \tbm{\alpha}^t) \bm{v}}\right|_{t=0}, \nonumber \\
			& = \eta \bm{v}^{\mathsf{T}} \mathsf{H} \tbr_{\ell}(\tbm{p})  (\mathsf{H} \tbr_{\log}(\tbm{p}))^{-1} \mathsf{H} \tbr_{\ell}(\tbm{p}) \bm{v} - \bm{v}^{\mathsf{T}} \mathsf{H} \tbr_{\ell}(\tbm{p}) \bm{v}, \label{70:e}
			\end{align}
			where in the second equality we substituted $\bm{u}$ by $\lambda \bm{p} \odot e^{\eta \tilde{\ell}(\tbm{p})}$ and in \eqref{70:e} we used \eqref{20:e} and \eqref{21:e} from Lemma \ref{28:}. Note that by the assumptions on $\ell$ it follows that the Bayes risk $\tbr_{\ell}$ is strictly concave  \citep[Lemma~6]{DBLP:journals/jmlr/ErvenRW12} and $- \mathsf{H} \tbr_{\ell}(\tbm{p})$ is symmetric negative-definite. In particular, $\mathsf{H} \tbr_{\ell}(\tbm{p})$ is invertible. Setting $\hat{\bm{v}} \coloneqq  \mathsf{H} \tbr_{\ell}(\tbm{p})  \bm{v}$ in \eqref{70:e} yields   
			\begin{align*}
			0 \geq  \eta \hat{\bm{v}} (\mathsf{H} \tbr_{\log}(\tbm{p}))^{-1} \hat{\bm{v}} - \hat{\bm{v}} (\mathsf{H} \tbr_{\ell}(\tbm{p}))^{-1} \hat{\bm{v}}. 
			\end{align*}
			Since $\bm{v} \in \mathbb{R}^{n-1}$ was chosen arbitrarily, $ (\mathsf{H} \tbr_{\ell}(\tbm{p}))^{-1} \succeq \eta (\mathsf{H} \tbr_{\log}(\tbm{p}))^{-1}, \forall \tbm{p} \in \Int \tilde{\Delta}_n$. This is equivalent to the condition $\forall \tbm{p}\in \Int \tilde{\Delta}_n,\; \eta \mathsf{H}\tbr_{\ell}(\tbm{p}) \succeq \mathsf{H} \tbr_{\log}(\tbm{p})$.
		\end{proof}

			\section{An Experiment on Football Prediction Dataset}
			\label{Experiment}

\begin{figure}[h]
\centering
\begin{minipage}[c]{.49\textwidth}
\hspace{-0.1cm}
    \includegraphics[width=1.05\textwidth]{./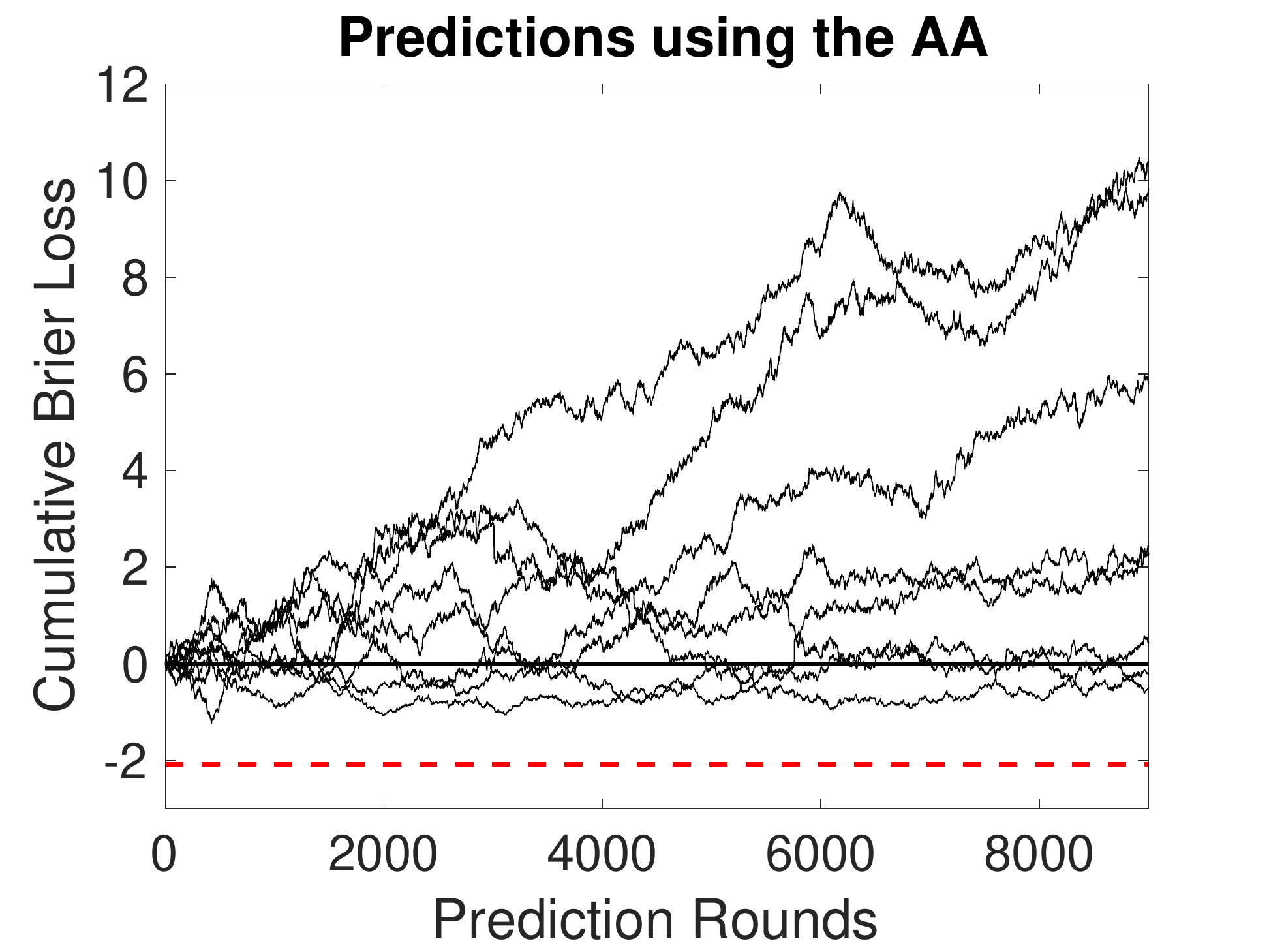}
\end{minipage}
\begin{minipage}[c]{.49\textwidth}
\hspace{-0.1cm}
    \includegraphics[width=1.05\textwidth]{./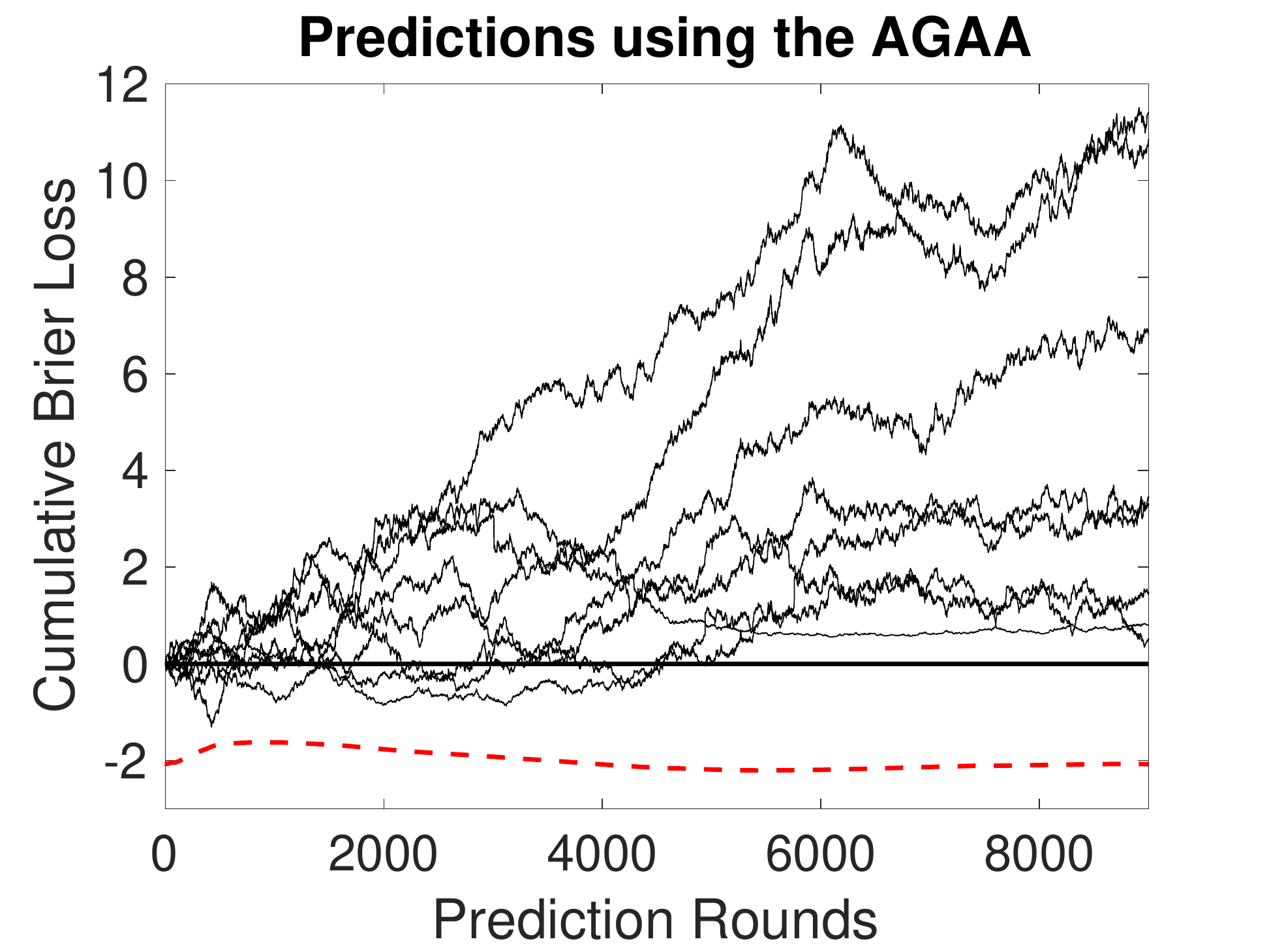}
\end{minipage}
    \caption{The figure corresponds to the 2005/2006, 2006/2007, 2007/2008, and 2008/2009 seasons. The solid lines represent, at each round $t$, the difference between the cumulative losses of the experts and that of the learner who uses either the AA (left) or the AGAA (right); that is, $\op{Loss}^{\ell_{\op{Brier}}}_{\theta}(t) - \op{Loss}^{\ell_{\op{Brier}}}_{\mathfrak{M}}(t)$, for $\mathfrak{M}\in \{\op{AA}, \op{AGAA}\}$. The red dashed lines represent the negative of the regret bound in \eqref{98:e} with respect to the best expert $\theta^*$; that is, $- R^{\Se}_{\ell_{\op{Brier}}} - \Delta R_{\theta^*}(t)$ at each round $t$.}
    \label{fig:0509}
\end{figure}

\begin{figure}[h]
\centering
\begin{minipage}[c]{.49\textwidth}
\hspace{-0.1cm}
    \includegraphics[width=1.05\textwidth]{./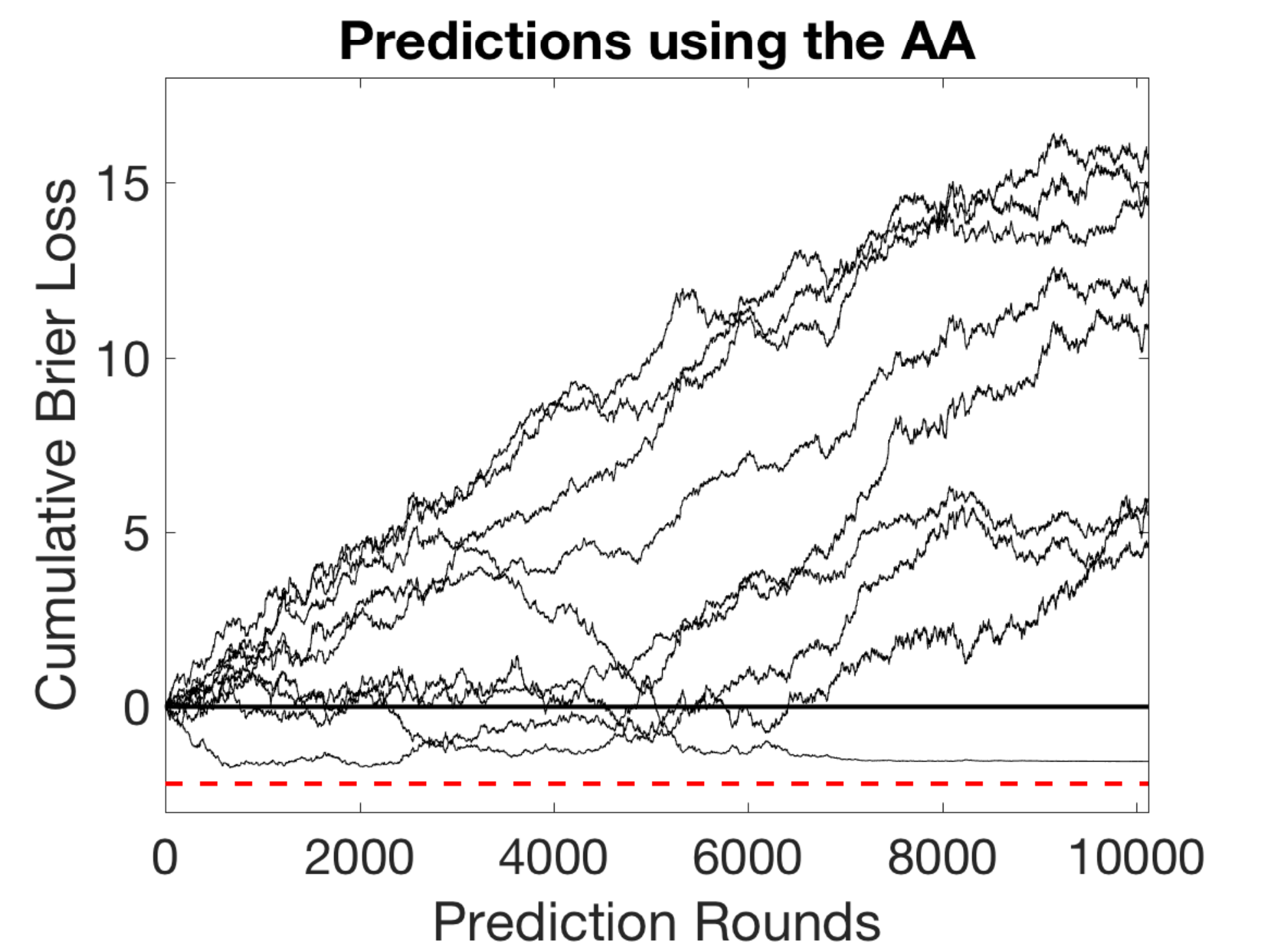}
\end{minipage}
\begin{minipage}[c]{.49\textwidth}
\hspace{-0.1cm}
    \includegraphics[width=1.05\textwidth]{./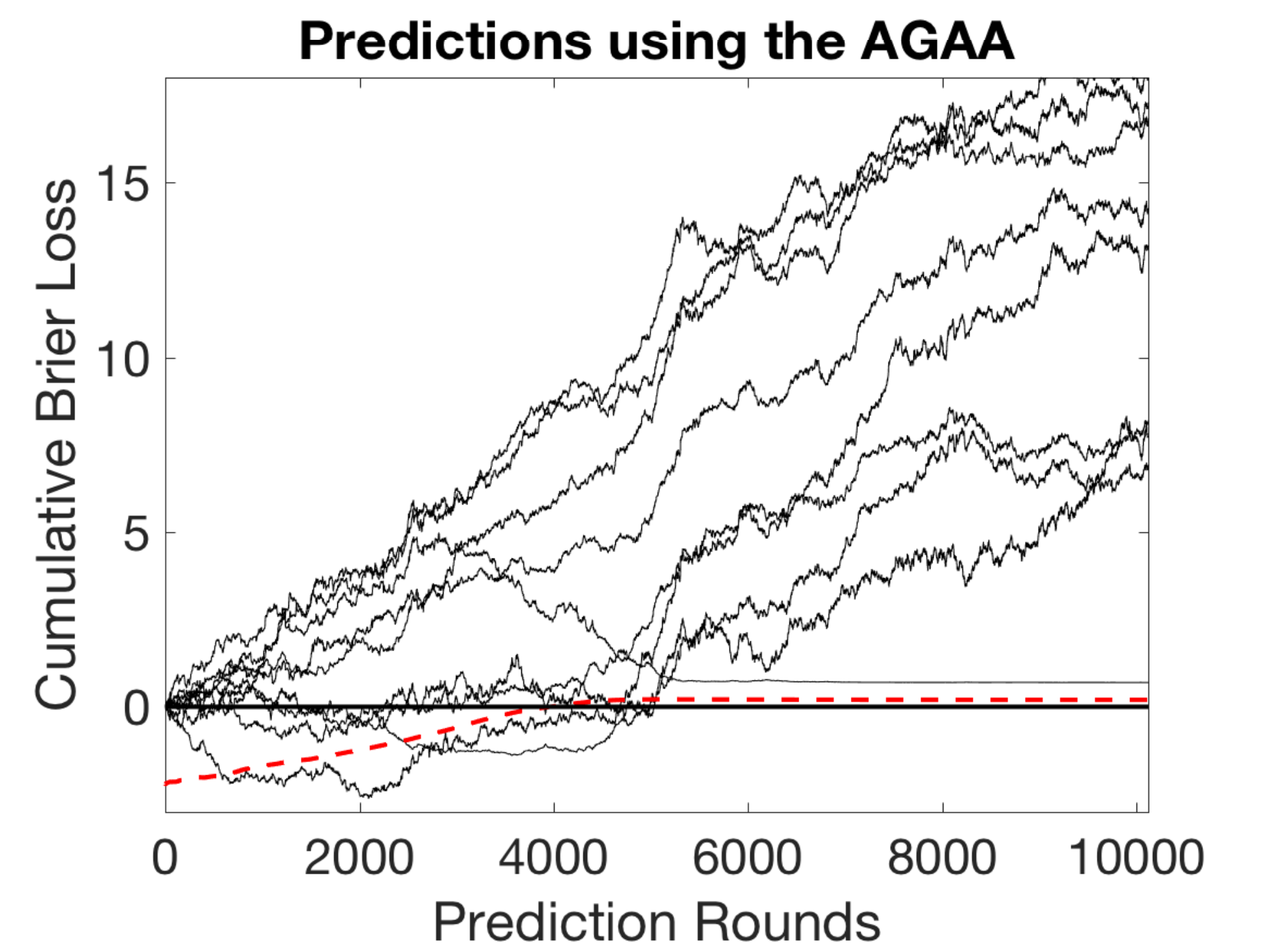}
\end{minipage}
    \caption{The figure corresponds to the 2009/2010, 2010/2011, 2011/2012, and 2012/2013 seasons The solid lines represent, at each round $t$, the difference between the cumulative losses of the experts and that of the learner who uses either the AA (left) or the AGAA (right); that is, $\op{Loss}^{\ell_{\op{Brier}}}_{\theta}(t) - \op{Loss}^{\ell_{\op{Brier}}}_{\mathfrak{M}}(t)$, for $\mathfrak{M}\in \{\op{AA}, \op{AGAA}\}$. The red dashed lines represent the negative of the regret bound in \eqref{98:e} with respect to the best expert $\theta^*$; that is, $- R^{\Se}_{\ell_{\op{Brier}}} - \Delta R_{\theta^*}(t)$ at each round $t$.}
    \label{fig:0913}
\end{figure}

\begin{figure}[h]
\centering
\begin{minipage}[c]{.49\textwidth}
\hspace{-0.1cm}
    \includegraphics[width=1.05\textwidth]{./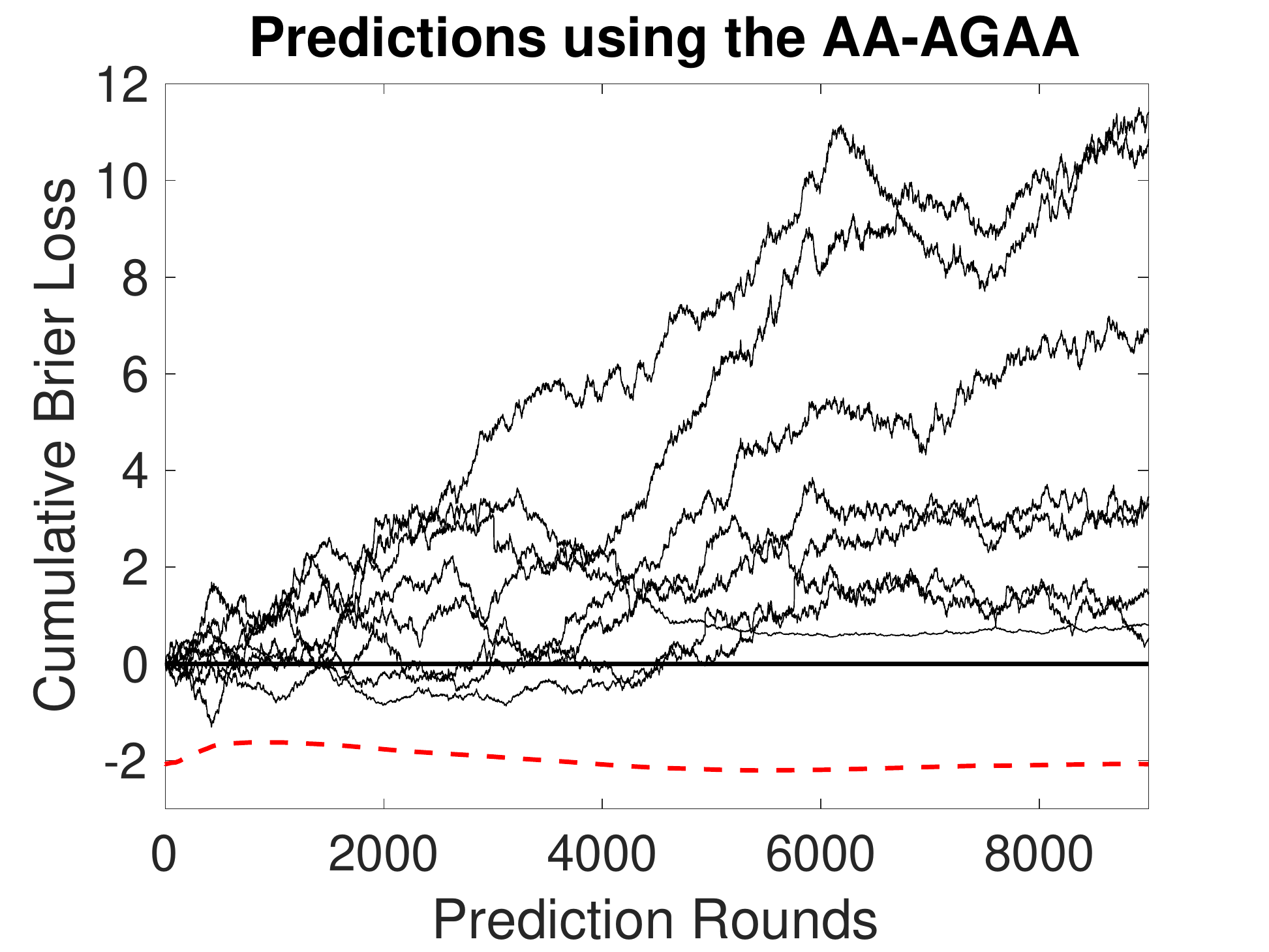}
\end{minipage}
\begin{minipage}[c]{.49\textwidth}
\hspace{-0.1cm}
    \includegraphics[width=1.05\textwidth]{./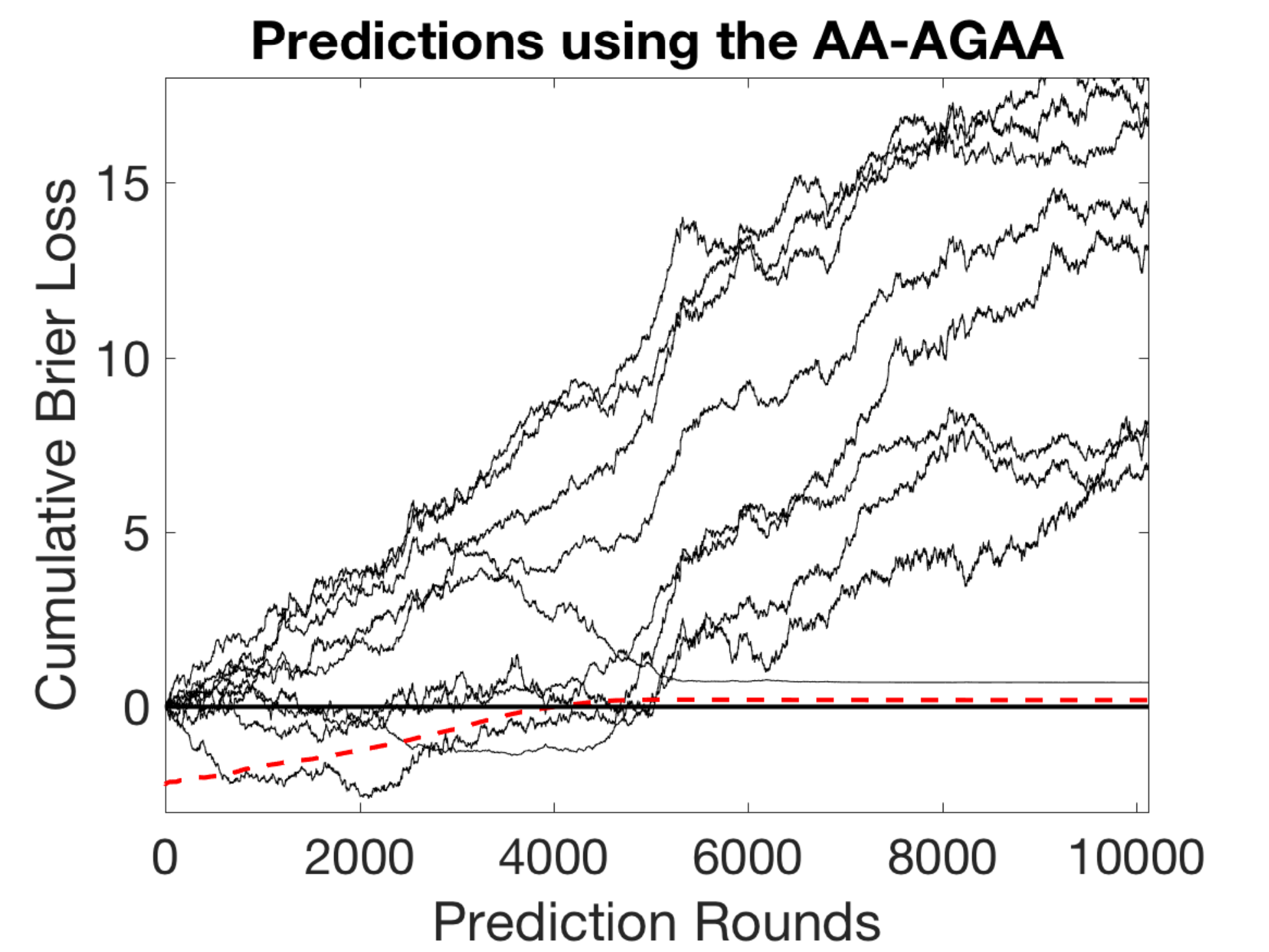}
\end{minipage}
    \caption{The figure on the left [resp. right] hand side corresponds to the football seasons from 2005 to 2009  [resp. 2009 to 2013]. The solid lines represent, at each round $t$, the difference between the cumulative losses of the experts and that of the learner using the AA-AGAA meta algorithm (refer to text); that is, $\op{Loss}^{\ell_{\op{Brier}}}_{\theta}(t) - \op{Loss}^{\ell_{\op{Brier}}}_{\op{AA-AGAA}}(t)$. The red dashed lines represent the negative of the regret bound in \eqref{98:e} with respect to the best expert $\theta^*$; that is, $- R^{\Se}_{\ell_{\op{Brier}}} - \Delta R_{\theta^*}(t)$ at each round $t$.}
    \label{fig:050913}
\end{figure}

\label{experiment}
\subsection{Testing the AGAA}
To test the AGAA empirically, we used prediction data\footnote{The data was collected from http://www.football-data.co.uk/.} from the British football leagues, including the Premier Leagues, Championships, Leagues 1-2, and Conferences. The first dataset contains predictions for the 2005/2006, 2006/2007, 2007/2008, and 2008/2009 seasons, matching the dataset used in \cite{Vovk2009}. The second dataset contains predictions for the 2009/2010, 2010/2011, 2011/2012, and 2012/2013 seasons. For this set, we considered predictions from 9 bookmakers; Bet365, Bet\&Win, Blue Square, Gamebookers, Interwetten, Ladbrokes, Stan James, VC Bet, and William Hill.

On each dataset, we compared the performance of the AGAA with that of the AA using the Brier score (the Brier loss is 1-mixable). For the AGAA, we chose $\bm{\beta}^t$ according to Theorem \ref{rboundAGAA} with $\bm{v}^t \coloneqq  -  \frac{1}{2t} \sum_{s=1}^{t} \ell_{x^s}(A^s)$ and we set $\Phi=\Se$, i.e. the Shannon entropy. The results in Figure \ref{fig:0509} [resp. Figure \ref{fig:0913}] correspond to the seasons from 2005 to 2009 [resp. 2009 to 2013].  For fair comparison with the results of Vovk \cite{Vovk2009}, we 1) used the same substitution function as \cite{Vovk2009}; 2) used the same method for converting odds to probabilities; and 3) sorted the data first by date then by league and then by name of the host team (For more detail see \cite{Vovk2009}). 

In all figures the solid lines represent, at each round $t$, the difference /between the cumulative losses of the experts and that of the learners; that is, $\op{Loss}^{\ell_{\op{Brier}}}_{\theta}(t) - \op{Loss}^{\ell_{\op{Brier}}}_{\mathfrak{M}}(t)$, for $\mathfrak{M}= \op{AA}, \op{AGAA}$. The red dashed lines represent the negative of the regret bound in \eqref{98:e} with respect to the best expert $\theta^*$; that is, $- R^{\Se}_{\ell_{\op{Brier}}} - \Delta R_{\theta^*}(t)= - R^{\Se}_{\ell_{\op{Brier}}}-  \sum_{s=1}^{t-1} ( v^s_{\theta} -\inner{\bm{v}^s}{\bm{q}^{s}})$ at each round $t$, where $(\bm{q}^s)$ are the distributions over experts.

From Figures \ref{fig:0509} and \ref{fig:0913} it can be seen that the learners using the AGAA perform better than the best expert (and better than the AA) at the end of the games.

		\subsection{Testing a AA-AGAA Meta-Learner}
Consider the algorithm (referred to as AA-AGAA) that takes the outputs of the AGAA and the AA as in the previous section and aggregates them using the AA to yield a \emph{meta prediction}. The worst case regret of this algorithm is guaranteed not to exceed that of the original AA and AGAA by more than $\eta^{-1} \log 2$ for an $\eta$-mixable loss. Figure \ref{fig:050913} shows the results for this algorithm for the same datasets as the previous section. The AA-AGAA still achieves a negative regret at the end of the game.

\end{appendix}

\end{document}